\documentclass[11pt]{article}

\usepackage{fullpage} 
\usepackage{comment}
\usepackage{amsthm}
\usepackage{tikz}
\usetikzlibrary{arrows.meta}
\usepackage{pgfplots}

\usepackage{csquotes}

\usepackage{bigints}
\usepackage{amsmath,amssymb}
\usepackage{xcolor}
\usepackage{tcolorbox}

\usepackage{thm-restate}

\usepackage{natbib}
\bibliographystyle{plainnat}
\bibpunct{(}{)}{;}{a}{,}{,}
 
\usepackage[utf8]{inputenc} 
\usepackage[T1]{fontenc}    
\usepackage[colorlinks=true, linkcolor= blue!70!black, citecolor=blue!70!black,urlcolor=black,breaklinks=true]{hyperref}

\usepackage{comment}
\newcommand{\xhdr}[1]{\vspace{2mm}\noindent{\bf {#1.}\ }}

\usepackage{url}            
\usepackage{booktabs}       
\usepackage{amsfonts}       
\usepackage{nicefrac}       
\usepackage{microtype}      

\usepackage{mathtools, bm}

\usepackage{algorithm}
\usepackage{algpseudocode}

\usepackage{subcaption}
\usepackage{multirow}
\usepackage{float}
\usepackage{xspace}

\usepackage{enumitem}

\usepackage{xfrac}

\usepackage{cleveref}

\theoremstyle{plain}
\newtheorem{theorem}{Theorem}[section]
\newtheorem{lemma}[theorem]{Lemma}
\newtheorem{claim}[theorem]{Claim}

\newtheorem{proposition}[theorem]{Proposition}
\newtheorem{corollary}[theorem]{Corollary}

\theoremstyle{plain}
\newtheorem{definition}{Definition}[section] 
\newtheorem{example}[definition]{Example}

\theoremstyle{plain}

\newtheorem{assumption}{Assumption}

\newcounter{relctr} 
\everydisplay\expandafter{\the\everydisplay\setcounter{relctr}{0}} 

\AtBeginDocument{} 

\newcommand{\squishlist}{
\begin{list}{{$\bullet$}}
{\setlength{\itemsep}{3pt}      \setlength{\parsep}{1pt}
\setlength{\topsep}{1pt}       \setlength{\partopsep}{0pt}
\setlength{\leftmargin}{1em} \setlength{\labelwidth}{1em}
\setlength{\labelsep}{0.5em} } }
\newcommand{\squishend}{  \end{list}}

\newcommand{\ECE}[2][]{\text{\bf ECE}\ifthenelse{\not\equal{}{#1}}{_{#1}}{}\!\left[{\def\givenn{\middle|}#2}\right]}

\newcommand{\ECEnorm}[2][]{\text{\bf ECE}_t\ifthenelse{\not\equal{}{#1}}{_{#1}}{}\!\left[{\def\givenn{\middle|}#2}\right]}

\newcommand{\Payoff}[2][]{\text{\bf Payoff}\ifthenelse{\not\equal{}{#1}}{_{#1}}{}\!\left[{\def\givenn{\middle|}#2}\right]}
\newcommand{\Reg}[2][]{\text{\bf REG}\ifthenelse{\not\equal{}{#1}}{_{#1}}{}\!\left[{\def\givenn{\middle|}#2}\right]}

\newcommand{\OPTobj}[2][]{\cc{ OPT}\ifthenelse{\not\equal{}{#1}}{_{#1}}{}\!\left[{\def\givenn{\middle|}#2}\right]}

\newcommand{\OBJ}[2][]{\text{\bf OBJ}\ifthenelse{\not\equal{}{#1}}{_{#1}}{}\!\left[{\def\givenn{\middle|}#2}\right]}

\newcommand{\supp}{\cc{supp}}


\newcommand{\numData}{n}

\newcommand{\outcomeSpace}{\mathcal{Y}}

\newcommand{\actionSpace}{\mathcal{A}}

\newcommand{\senderU}{u}
\newcommand{\receiverU}{v}

\newcommand{\outcome}{y}
\newcommand{\bestr}{a^*}
\newcommand{\Bern}{\cc{Bern}}

\newcommand{\predicSpace}{\mathcal{F}}
\newcommand{\ECEbudget}{\text{ECE budget}}
\newcommand{\prior}{\lambda}
\newcommand{\truemean}{\theta}

\newcommand{\prediction}{p}
\newcommand{\bayesprediction}{q}

\newcommand{\predictor}{f}

\renewcommand{\predictor}{\tilde{f}}
\newcommand{\marginalpredictor}{f}
\newcommand{\bayespredictor}{\tilde{g}}
\newcommand{\bayesmarginalpredictor}{g}
\newcommand{\actionpredictor}{\pi}

\newcommand{\MPC}{\cc{MPC}}
\newcommand{\newpredictor}{\predictor\primed}
\newcommand{\newmarginalpredictor}{f\primed}

\newcommand{\newbayesmarginalpredictor}{\bayesmarginalpredictor\primed}
\newcommand{\truePosterior}{\kappa}

\newcommand{\actionpredictortoPos}{\prediction}
\newcommand{\bias}{\posteriorErr}

\newcommand{\thresholdState}{i^*}

\newcommand{\convexEnv}{\Gamma}
\newcommand{\miscali}{\chi}
\newcommand{\newmiscali}{\miscali\primed}

\newcommand{\newq}{q\primed}

\newcommand{\caliBudgetDual}{\alpha}

\newcommand{\Lagrange}{\mathcal{L}}

\renewcommand{\d}{{\mathrm{d}}}

\newcommand{\event}{{event}}
\newcommand{\priormean}{\bar{\truemean}}

\newcommand{\dualvarsupply}{\beta_{\cc{supp}}}
\newcommand{\dualvarmiscal}{\beta_{\miscali}}

\newcommand{\dualvarbayesup}{\bar{\beta}_\bayesmarginalpredictor}
\newcommand{\dualvarbayes}{\beta_\bayesmarginalpredictor}
\newcommand{\dualvarbayesmean}{\bar{\beta}_\prior}
\newcommand{\dualvarMPC}{\beta_{\cc{MPC}}}

\newcommand{\poly}{\cc{poly}}

\newcommand{\normexponent}{t}
\newcommand{\tnorm}{\ell_t}

\newcommand{\action}{a}

\newcommand{\convexcombin}{b}
\newcommand{\convexcombinbf}{\mathbf{\convexcombin}}

\newcommand{\optpredictor}{\predictor^*}
\newcommand{\predicSpaceEI}{\predicSpace^{\rm EI}}

\newcommand{\bayespredictorCDF}{G}
\newcommand{\priorCDF}{\Lambda}

\newcommand{\BP}{\cc{BP}}
\newcommand{\posteriorErr}{b}

\newcommand{\stateSpace}{\Theta}
\newcommand{\state}{\truemean}

\newcommand{\priorBP}{\prior^\BP}
\newcommand{\stateSpaceBP}{\Theta^\BP}
\newcommand{\senderUBP}{\senderU^\BP}
\newcommand{\receiverUBP}{\receiverU^\BP}
\newcommand{\actionSpaceBP}{\actionSpace^\BP}
\newcommand{\signal}{\sigma}
\newcommand{\signalSpace}{\Sigma}

\newcommand{\bipoolingprob}{\omega}
\newcommand{\mixratio}{\bipoolingprob}

\newcommand{\numAction}{m}
\newcommand{\caliBudget}{\varepsilon}

\newcommand{\thresholdi}{i^\dagger}

\newcommand{\indirectsenderU}{U}

\newcommand{\lowq}{\prediction_{\cc{L}}}
\newcommand{\highq}{\prediction_{\cc{H}}}
\newcommand{\TPF}{{true expected outcome function}}

\newcommand{\lowx}{x_{\cc{L}}}
\newcommand{\highx}{x_{\cc{H}}}

\newcommand{\setShigh}{\mathcal{S}_{\cc{H}}}
\newcommand{\setSlow}{\mathcal{S}_{\cc{L}}}

\newcommand{\sgselect}{\xi}

\newcommand{\discrepara}{\delta}
\newcommand{\discreparaBudget}{\delta_0}
\newcommand{\scaledownfrac}{\discrepara\primed}
\newcommand{\discresupp}{\mathcal{S}_{\discrepara}}

\newcommand{\numdeltas}{S}

\newcommand{\barECEnorm}[2][]{\overline{\text{\bf ECE}}_t\ifthenelse{\not\equal{}{#1}}{_{#1}}{}\!\left[{\def\givenn{\middle|}#2}\right]}

\newcommand{\discontinuityset}{\mathcal{Z}}

\newcommand{\bed}{\text{bi-event}}

\newcommand{\contributionratio}{{contribution ratio}}
\newcommand{\contributionratios}{{contribution ratios}}

\newcommand{\optsignaling}{\actionpredictor^*}
\newcommand{\optbias}{\bias^*}
\newcommand{\signalingscheme}{\pi}

\newcommand{\pthreshold}{\prediction\primed}

\newcommand{\optpoolingpredic}{\pthreshold}
\newcommand{\optpoolingprob}{t}

%
%
\newcommand{\condition}{\,\mid\,}

\newcommand{\prob}[2][]{\mathbb{P}\ifthenelse{\not\equal{}{#1}}{_{#1}}{}\!\left[{\def\givenn{\middle|}#2}\right]}
\newcommand{\indicator}[2][]{\mathbf{1}\ifthenelse{\not\equal{}{#1}}{_{#1}}{}\!\left\{{\def\givenn{\middle|}#2}\right\}}
\newcommand{\expect}[2][]{\mathbb{E}\ifthenelse{\not\equal{}{#1}}{_{#1}}{}\!\left[{\def\givenn{\middle|}#2}\right]}

\newcommand{\abs}[1]{\left| #1 \right|}

\newcommand{\cc}[1]{\ensuremath{\mathsf{#1}}} 

\DeclareMathOperator*{\argmax}{arg\,max}

\newcommand{\R}{\mathbb{R}}

\newcommand{\primed}{^{\dagger}}
\newcommand{\doubleprimed}{^{\ddagger}}
\usepackage{fix-cm}
\allowdisplaybreaks

\title{Persuasive Calibration}
\author{
Yiding Feng\thanks{Hong Kong University of Science and Technology. Email: {\tt ydfeng@ust.hk}}
\and
Wei Tang\thanks{Chinese University of Hong Kong. Email: {\tt weitang@cuhk.edu.hk}}
}
\date{}

\begin{document}

\maketitle
\begin{abstract}

We introduce and study the \emph{persuasive calibration} problem, where a principal aims to provide trustworthy predictions about underlying events to a downstream agent to make desired decisions.
We adopt the standard calibration framework that regulates predictions to be unbiased conditional on their own value,
and thus, they can reliably be interpreted at the face value by the agent.
Allowing a small calibration error budget, we aim to answer the following question: what is and how to compute the optimal predictor under this calibration error budget, especially when there exists incentive misalignment between the principal and the agent?
We focus on standard $\tnorm$-norm Expected Calibration Error (ECE) metric.

We develop a general framework by viewing predictors as post-processed versions of perfectly calibrated predictors.
Using this framework, we first characterize the structure of the optimal predictor. 
Specifically, when the principal's utility is event-independent and for $\ell_1$-norm ECE, we show: (1) 
the optimal predictor is over-(resp.\ under-) confident for high (resp.\ low) true expected outcomes, while remaining perfectly calibrated in the middle;
(2) the miscalibrated predictions exhibit a collinearity structure with the principal's utility function.
On the algorithmic side, we provide a FPTAS for computing approximately optimal predictor for general principal utility and general $\tnorm$-norm ECE.
Moreover, for the $\ell_1$- and $\ell_\infty$-norm ECE, we provide polynomial-time algorithms that compute the exact optimal predictor.

\end{abstract}

\setcounter{page}{0}
\thispagestyle{empty}
 \clearpage

{\hypersetup{linkcolor=black}\tableofcontents}
\setcounter{page}{0}
\thispagestyle{empty}
 \clearpage

\section{Introduction}
\label{sec:intro}

Over the past decade, machine learning models have grown remarkably powerful, with recent large-scale models (like LLMs) often containing billions of parameters and providing good predictions in a wide variety of domains. 
Yet these models/algorithms are frequently regarded as black-box systems: their internal mechanisms remain proprietary, or are considered trade secrets and thus, are not observed by outsiders.
Consequently, end users and downstream decision-makers are often left to trust predictions without transparent insight into how those predictions are generated.

A prominent approach to bolstering trust in such predictions is {\em calibration} \citep{D-82,FV-98,HKRR-18}.
A calibrated predictor, in simple terms, regulates that predicted probabilities align with the true (conditional) probability of the outcome. For example, predictions of ``70\% likelihood'' materialize approximately 70\% outcome realizations of the time. 
Calibration thus provides a guarantee that predictions are trustworthy and can be reliably taken at face value for use in downstream decision-making.
Indeed, it is well established that agents who 
naively
best respond to calibrated predictions (i.e., those with low calibration error -- a measure of how far the predictions deviate from the true conditional probability) achieve low regret \citep{FV-98,KLST-23,HPY-23,RS-24,HW-24}, {whose rate is even no larger than the inherent calibration error of the predictor.}
At the same time, however, the principal -- the entity designing or deploying the prediction model -- may have incentives that differ from those of the downstream agent who relies on these predictions. For example, the principal might wish to skew predictions ever so slightly to influence the agent's actions in a beneficial (to the principal) way.

The above tension naturally leads to the following question: 
If a small calibration error is permissible, 
how might the principal design an optimal predictor that balances the principal's objectives while preserving the agent's trust by not miscalibrating so much?
To study this question, in this paper, we formulate and analyze the {\em persuasive calibration} problem. 
Specifically, we introduce a framework in which the principal aims to provide predictions that must remain acceptably calibrated while ensuring that the downstream agent, who only sees the final predictions, would maintain sufficient trust to best-respond to them.
We aim to understand the following two question:
\begin{displayquote}
    \emph{Can we characterize the optimal predictor under a calibration error budget, especially when there exists incentive misalignment between the principal and the agent? In addition, can we compute this optimal predictor or an approximately optimal predictor in polynomial time?}
\end{displayquote}

\subsection{Our Contributions and Techniques}

In this paper, we provide compelling answers to both of these questions. In a nutshell, we introduce the \emph{persuasive calibration} problem which captures the interaction between the principal who decides the predictor and the agent who takes the action based on the prediction. In the first part of the paper, we focus on a canonical setting where the principal's utility is event-independent and the predictor is evaluated based on the most classic $\ell_1$-norm calibration error. For this setting, we provide a comprehensive characterization of the optimal predictor. In the second part of the paper, we move to the more general setting where the principal's utility can be arbitrary and the predictor can be evaluated by any $\tnorm$-norm calibration error. For this rich setting, we develop an FPTAS (fully polynomial-time approximation scheme) algorithm for all instances of our problem, and an optimal polynomial-time algorithm for $\ell_1$-norm or $\ell_\infty$-norm calibration errors. Below, we discuss our model, results, and techniques in detail.

\subsection*{The Principal-Agent Model (\Cref{sec:prelim})} 
The \emph{persuasive calibration} problem models the interaction between a predictor designer (the principal, she) and a downstream decision maker (the agent, he). In this setting, there is an underlying randomized binary outcome, which is unobserved by both players. However, the principal observes a realized \emph{event} (e.g., a feature or context) that is correlated with the outcome and uses this information to generate a prediction. This prediction is a scalar value representing the probability that the binary outcome equals one.

The agent, whose utility depends on both his action and the realized outcome, \emph{trusts} the prediction provided by the principal
and chooses an action which  best responses to the provided prediction.
In contrast, the principal's utility depends not only on the agent's action but also on the realized event and outcome, and may be misaligned with the agent's utility. The principal’s goal is to design a predictor that maximizes her own utility, given the agent's best response.

We adopt the calibration framework to ensure the trustworthiness of the predictors --
that is, conditional on a given prediction, the expected true outcome should be close to the predicted probability. In this work, we focus on the classic $\tnorm$-norm expected calibration error (ECE), which quantifies the expected $\tnorm$-norm difference between the prediction and the true expected outcome. More formally, the \emph{persuasive calibration} problem seeks the optimal predictor that maximizes the principal's payoff, subject to an exogenously specified ECE budget.

\subsection*{Result I: Structural Characterization of the Optimal Predictor (\Cref{sec:optimal structure})}
We begin our exploration of the persuasive calibration problem under the \emph{$\ell_1$-norm ECE} (also known as the $K_1$ ECE), which is the most standard calibration error metric. 
We focus on a canonical setting where the principal's \emph{indirect utility}---mapping predictions to her utilities under the assumption that the agent best responds---is event-independent.\footnote{This assumption is common in various principal-agent models \citep[e.g.,][]{DM-19,LR-20,FTX-22,ABSY-23,CD-23,FHT-24}.}  

\xhdr{The structure results of the optimal predictor} As the first main result of this paper, we provide a comprehensive characterization of the optimal predictor under any exogenously specified $K_1$ ECE budget. In \Cref{subsec:characterization:K1 ECE}, we establish two key properties: (1) the \emph{miscalibration structure} where the optimal predictor is over- (resp.\ under-) confident for high (resp.\ low) true expected outcomes while remaining perfectly calibrated in the middle; and (2) the \emph{payoff structure}, where the miscalibrated predictions in the optimal predictor exhibit a collinearity pattern with the principal's indirect utility function.

We illustrate both properties in \Cref{fig:general}. In this figure, the black solid curve represents the principal's indirect utility, while each blue dot corresponds to a prediction generated by the optimal predictor. Specifically, the x-coordinate of each blue dot indicates the prediction value, and the y-coordinate represents the corresponding payoff for the principal. The brown squares denote the true expected outcomes conditional on the predictions.  

Our \emph{miscalibration structure} (in \Cref{thm:event-inde}) suggests that the prediction space $[0,1]$ can be partitioned into three sub-intervals:  
(1) the \emph{under-confidence interval} $[0, \lowq]$, where predictions (blue dots) are lower (smaller x-coordinate) than their corresponding true expected outcomes (brown squares);  
(2) the \emph{over-confidence interval} $[\highq,1]$, where predictions (blue dots) are higher (larger x-coordinate) than their corresponding true expected outcomes (brown squares); and  
(3) the \emph{perfectly calibrated interval} $[\lowq, \highq]$, where predictions (blue dots) exactly match their corresponding true expected outcomes (brown squares). 
We remark that this three-interval miscalibration pattern appears to be a well-observed phenomenon in the machine learning community. For example, \cite{GPSW-17} show that uncalibrated ResNet tends to be overconfident in its predictions, while \cite{KFE-18} show that neural networks consistently generate both underestimated and overestimated predictions/forecasts at the two extremes of the probability spectrum.\footnote{Also see Figure 4 in \citet{GPSW-17} and Figures 2, 3 in \citet{KFE-18}.} By interpreting the loss function in these machine learning tasks as the principal's utility, our results provide a theoretical explanation for this practical observation.

Our \emph{payoff structure} (in \Cref{thm:event-inde} and \Cref{prop:convex env}) suggests that all blue dots (representing predictions and their induced payoffs for the principal) lie on a \emph{symmetric linear-tailed convex function $\convexEnv$} (blue curve), which is pointwise (weakly) greater than the principal's indirect utility (black curve). Moreover,  all blue dots in the under-confidence interval $[0, \lowq]$ (resp.\ over-confidence interval $[\highq, 1]$) are collinear, with the slopes of these two linear tails having the same absolute value.

Besides the characterization above, we also (i) establish (almost) tight bounds on the number of predictions that can be generated by the optimal predictor -- overall, per event, and per true expected outcome (\Cref{prop:num predictions}), and (ii) provide a verification tool (\Cref{prop:opt verify}) that enables us to theoretically verify the optimality of a given predictor.

\begin{figure}[t]
  \centering
  \begin{subfigure}{0.45\textwidth}
    \centering
    \resizebox{\linewidth}{!}{\begin{tikzpicture}[baseline=(current axis.south)]
\def\yminval{-0.05}
\begin{axis}[
    axis lines=left,
    axis line shift=0pt,
    width=10cm,
    height=8cm,
    xtick=\empty, 
    ytick=\empty,
    xmin=0, xmax=1,
    ymin=\yminval, ymax=0.65,
]

\pgfmathsetmacro{\k}{13}
\pgfmathsetmacro{\m}{0.5}
\pgfmathsetmacro{\v}{0.9}
\pgfmathsetmacro{\kthree}{13}
\pgfmathsetmacro{\mthree}{0.3}
\pgfmathsetmacro{\vthree}{1.73}

\pgfmathdeclarefunction{F}{1}{\pgfmathparse{1/(1+exp(-\k*(#1-\m)))^(1/\v)}}
\pgfmathdeclarefunction{Fthree}{1}{\pgfmathparse{1/(1+exp(-\kthree*(#1-\mthree)))^(1/\vthree)}}


\def\Uzero(#1){0.4*(1 - F(4*((#1)))) - (1 - F(1)) + 0.2}  
\def\Uone(#1){0.2*(1 - F(4*(#1) - 1) - (1 - F(1)))}              
\def\Utwo(#1){0.2*F(4*(#1) -2) - 0.3*F(0)}                   
\def\Uthree(#1){0.3*(Fthree(4*((#1)) -3) - Fthree(0)) + 0.2} 

\addplot[domain=0:0.25, samples=200, thick, black] { \Uzero(x) };

\addplot[domain=0.25:0.5, samples=200, thick, black] { \Uone(x) };

\addplot[domain=0.5:0.75, samples=200, thick, black] { \Utwo(x) };

\addplot[domain=0.75:1, samples=200, thick, black] { \Uthree(x) };

\def\xpone{0.0814}
\def\xptwo{0.3477}
\def\xpthree{0.49}
\def\xpfour{0.521245} 
\def\xpfive{0.663}
\def\xpsix{0.853}

\def\xqone{0.1562}
\def\xqtwo{0.361}
\def\xqthree{0.49}
\def\xqfour{0.521245}
\def\xqfive{0.663}
\def\xqsix{0.81}

\def\alphanega{-1.5115}
\def\alphapos{1.5115}

\addplot[
  only marks,
  mark=square*,
  mark size=2.2pt,
  color={brown},
  opacity=0.9
] coordinates {(\xqone, {\Uzero(\xqone)})
               (\xqtwo, {\Uone(\xqtwo)})
               (\xqthree, {\Uone(\xqthree)})
               (\xqfour, {\Utwo(\xqfour)})
               (\xqfive, {\Utwo(\xqfive)})
               (\xqsix, {\Uthree(\xqsix)})
               };

\addplot[
  only marks,
  mark=*,
  mark size=2pt,
  color=blue
] coordinates {(\xpone, {\Uzero(\xpone)})
               (\xptwo, {\Uone(\xptwo)})
               (\xpthree, {\Uone(\xpthree)})
               (\xpfour, {\Utwo(\xpfour)})
               (\xpfive, {\Utwo(\xpfive)})
               (\xpsix, {\Uthree(\xpsix)})
               };

\addplot[
    domain=0.048309:0.456648,
    samples=2,
    line width=1.5pt,
    blue,
    opacity=0.6
] {0.569983 - \alphapos * (x - \xpone)};               

\addplot[
    domain=0.521245:0.751199,
    samples=2,
    line width=1.5pt,
    blue,
    opacity=0.6
] {0.172916668316 + 1.21786989679 * (x - \xqfive)}; 

\addplot[
    domain=0.751199:0.976076,
    samples=2,
    line width=1.5pt,
    blue,
    opacity=0.6
] {0.43409824403 + \alphapos * (x - \xpsix)};

\draw[dashed, thick, gray]
  (axis cs:0.456648,\yminval) -- (axis cs:0.456648, 0.002803);

\draw[dashed, thick, gray]
  (axis cs:0.751199,\yminval) -- (axis cs:0.751199, 0.280331);

\draw[dashed, thick, gray]
  (axis cs:\xptwo,\yminval) -- (axis cs:\xptwo, { \Uone(\xptwo)});
  
\draw[dashed, thick, gray]
  (axis cs:\xpsix,\yminval) -- (axis cs:\xpsix, { \Uthree(\xpsix)});

\addplot[
  domain=0.456648:0.5,
  samples=200,
  line width=1.5pt,
  blue,
  opacity=0.6
] { \Uone(x) };

\addplot[
  domain=0.5:0.521245,
  samples=200,
  line width=1.5pt,
  blue,
  opacity=0.6
] { \Utwo(x) };

\node[anchor=north] at (axis cs:\xptwo-0.04,0) {$\lowq$};
\node[anchor=north] at (axis cs:\xpsix-0.04,0) {$\highq$};

\node[anchor=north] at (axis cs:0.456648-0.04,0) {$\lowx$};
\node[anchor=north] at (axis cs:0.751199-0.04,0) {$\highx$};

\end{axis}
\end{tikzpicture}}
    \caption{}
    \label{fig:general}
  \end{subfigure}
\hspace{0.8cm}
  \begin{subfigure}{0.45\textwidth}
    \centering
    \resizebox{\linewidth}{!}{\begin{tikzpicture}[baseline=(current axis.south)]
\def\yminval{0}
\def\ymaxval{0.85}

\def\xpone{0.0814}
\def\xptwo{0.3377}
\def\xpthree{0.49}
\def\xpfour{0.521245} 
\def\xpfive{0.663}
\def\xpsix{0.823}

\def\xqone{0.1562}
\def\xqtwo{0.361}
\def\xqthree{0.49}
\def\xqfour{0.521245}
\def\xqfive{0.663}
\def\xqsix{0.78}

\begin{axis}[
    axis lines=left,
    axis line shift=0pt,
    width=10cm,
    height=8cm,
    xtick=\empty, 
    ytick=\empty,
    xmin=0, xmax=1,
    ymin=\yminval, ymax=\ymaxval,
]

\node[anchor=south west, rotate=0] at (rel axis cs:0.01,0.865) {$\truePosterior(\prediction)$};

\def\diagonal(#1){(#1)}
\addplot[domain=0:\ymaxval, samples=200, thick, black] { \diagonal(x) };

\def\alphanega{-1.5115}
\def\alphapos{1.5115}

\addplot[
  only marks,
  mark=square*,
  mark size=2.2pt,
  color={brown},
  opacity=0.9
] coordinates {(\xpone, \xqone)
                (\xptwo, \xqtwo)
                (\xpthree, \xqthree)
                (\xpfour, \xqfour)
                (\xpfive, \xqfive)
                (\xpsix, \xqsix)
               };

\addplot[
  only marks,
  mark=*,
  mark size=2pt,
  color=blue
] coordinates {(\xpone, \xpone)
                (\xptwo, \xptwo)
                (\xpthree, \xpthree)
                (\xpfour, \xpfour)
                (\xpfive, \xpfive)
                (\xpsix, \xpsix)
               };


\draw[dashed, thick, gray]
  (axis cs:\xptwo,0) -- (axis cs:\xptwo,\xqtwo);




\draw[dashed, thick, gray]
  (axis cs:\xpsix,0) -- (axis cs:\xpsix,\xpsix);

\node[anchor=north] at (axis cs:\xptwo-0.04,0.06) {$\lowq$};
\node[anchor=north] at (axis cs:\xpsix-0.04,0.06) {$\highq$};

\end{axis}
\end{tikzpicture}}
    \caption{}
    \label{fig:reliable-diagram-general}
  \end{subfigure}
  \caption{
  In both figures, blue dots are the predictions $\prediction$ generated from the optimal predictor $\optpredictor$, 
  brown squares are the corresponding true expected outcomes $\truePosterior(\prediction)$ conditional on the prediction $\prediction$, 
  \Cref{fig:general} is an illustration of \Cref{thm:event-inde} -- black solid curve is indirect utility $\indirectsenderU$ and blue curve is the convex function $\convexEnv$ in \Cref{prop:convex env}.
  \Cref{fig:reliable-diagram-general} is the reliable diagram (a standard visualization of the miscalibration structure) -- black solid line is diagonal line.} 
  \label{fig:main}
\end{figure}

\xhdr{Two-step framework to generate optimal predictors} 
To prove the aforementioned structural results, we develop a two-step framework for the principal to design and analyze the optimal predictor. At a high level, the framework treats the predictor as a post-processed version of a perfectly calibrated predictor (i.e., zero calibration error). Specifically, we introduce the \emph{post-processing plan}, which ``miscalibrates'' a perfectly calibrated predictor into an imperfectly calibrated one (\Cref{defn:miscal plan,defn:miscal procedure} and \Cref{prop:miscal procedure error guarantee}). 
Using this concept, identifying the optimal predictor subject to an ECE budget constraint can be reformulated as optimizing a pair consisting of a perfectly calibrated predictor and an ``ECE-budget-feasible'' miscalibration plan. 
This reformulation allows us to leverage well-established properties of perfectly calibrated predictors in the analysis.

A particularly interesting subclass of post-processing plans is the event-independent ones.
However, not every imperfectly calibrated predictor can be obtained by event-independently post-processing a perfectly calibrated predictor (see \Cref{fig:hierarch} for an illustration and \Cref{ex:special predictor} for an example). 
Perhaps interestingly, 
we find that when the principal's indirect utility is event-independent, 
it suffices to consider event-independent post-processing plans to obtain the optimal predictor (\Cref{prop:prog equivalence}). 
Furthermore, optimizing over perfectly calibrated predictors and event-independent miscalibration plans can be characterized as a linear program (see \ref{eq:decoupled opt}).

\xhdr{Primal-dual analysis of \ref{eq:decoupled opt}}
Utilizing the two-step framework, we know that the optimal predictor can be transformed into an optimal solution of \ref{eq:decoupled opt} and vice versa. Therefore, proving the structural results for the optimal predictor is equivalent to proving their analogs for the optimal solution in \ref{eq:decoupled opt}. By developing both primal-based and duality-based arguments, we establish the aforementioned structural results in \Cref{subsec:proofs event-inde}. Notably, the miscalibration structure is primarily proved through a primal-based argument, which leverages the structure (i.e., mean-preserving contraction (MPC)) of perfectly calibrated predictors, while the payoff structure is primarily proved using duality-based arguments. In particular, the symmetric linear-tailed convex function $\convexEnv$ (see \Cref{fig:general} and \Cref{defn:symmetric linear-tailed}) arises as a consequence of the optimal dual solutions. For example, the slope of its linear tail corresponds to the optimal dual variable associated with the ECE budget constraint in \ref{eq:decoupled opt}.

\subsection*{Result II: FPTAS for Computing Approximately Optimal Predictor (\Cref{sec:fptas})}
Moving forward, we explore the algorithmic aspects of the persuasive calibration problem. We consider a more general setting where the principal's utility can be arbitrary and the calibration metrics can be general $\tnorm$-norm ECE with any $\normexponent \geq 1$.
As the second main result of this paper, we provide a FPTAS (\Cref{algo:fptas} and \Cref{thm:fptas}) for computing an approximately optimal predictor subject to any exogenously specified ECE budget.

To obtain a time-efficient algorithm for computing an approximately optimal predictor, a natural approach is to formulate the principal's problem as a computationally tractable optimization program. However, due to the calibration error constraint defined on the predictor $\predictor$, this approach is not straightforward.\footnote{In \Cref{apx:Failure}, we explain the failure of two natural optimization program formulations that directly use the predictor as the decision variable and optimize over the entire space of feasible predictors.}

Our FPTAS contains two technical ingredients: it combines a generalized two-step framework, which extends the two-step approach (used in our structural characterization) from the event-independent setting to the general setting, with a carefully designed discretization scheme.

\xhdr{Generalized two-step framework}
For the general setting where the principal's utility can depend on the realized event, our two-step framework with the \emph{event-independent} post-processing plan established for the structural results may not be sufficient. Motivated by this, we introduce a generalized two-step framework in \Cref{subsec:generalizeed two-step framework}. However, if we consider an event-dependent post-processing plan that \emph{fully decouples} the miscalibration across all events, additional non-linear constraints---seemingly intractable---are required. To bypass this issue, we introduce a generalized two-step framework with an \emph{(event-dependent) bi-event post-processing plan} (\Cref{defn:bed miscal plan,defn:bed miscal procedure}), which allows us to formulate an infinite-dimensional (but tractable) linear program~\ref{eq:decoupled opt general}. Our bi-event post-processing plan is rich enough to ensure that there always exists an optimal predictor that can be generated within our generalized framework (\Cref{prop:LP general}).\footnote{The foundational reason why the bi-event post-processing plan suffices is that our current model assumes a binary outcome. In fact, we believe that our FPTAS admits a relatively straightforward generalization to an extended model with a constant number of outcomes, which we leave for future exploration.}

\xhdr{Instance-dependent two-layer discretization \& rounding-based analysis} 
Equipped with an infinite-dimensional linear program \ref{eq:decoupled opt general}, a natural approach is to consider its discretization. However, common discretization schemes, such as uniform discretization, are not directly applicable. Specifically, due to the ECE budget constraint, standard discretization may either render the discretized LP infeasible or significantly degrade the objective value.  

To overcome this challenge, in \Cref{subsec:discretization and rounding}, we introduce a carefully designed discretization scheme (\Cref{defn:discre}) that is instance-dependent and has a two-layer structure. First, {given a discretization parameter $\discrepara>0$,} our scheme constructs a non-uniform $\Theta(\discrepara)$-net, where the discretized points depend on the problem primitives, ensuring feasibility of the discretized program. Then, for each point in this $\Theta(\discrepara)$-net, we introduce a finer $\Theta(\discreparaBudget)$-net within a small neighborhood. Here, $\discreparaBudget$ depends not only on the discretization precision $\discrepara$ but also on the ECE budget $\caliBudget$ and the norm exponent parameter $\normexponent$. This second layer, which adapts to the ECE budget, allows us to carefully control changes in ECE when transitioning from a solution in the infinite-dimensional LP to one in the discretized LP. Crucially, the second-layer discretized points are introduced only locally, ensuring that the overall discretization set remains of polynomial size.  

Finally, we develop a rounding scheme (\Cref{algo:rounding}) to analyze both the feasibility and the objective value of the discretized program \ref{eq:decoupled opt general discre}. We believe that both our discretization scheme and rounding argument might be of independent interest in the algorithmic information design literature.

\subsection*{Result III: Computing Optimal Predictor under $K_1$ or $K_\infty$ ECE (\Cref{sec:polytime})}

As the third main result of this paper, we focus on the most standard ECE metrics, namely the $K_1$ ECE (i.e., $\ell_1$-norm ECE). We present a polynomial-time algorithm (\Cref{algo:poly-time} and \Cref{thm:opt general}) that computes an optimal predictor while adhering to any exogenously specified $K_1$ ECE budget. Furthermore, we demonstrate that the same algorithm is also applicable to the $K_\infty$ ECE (i.e., $\ell_\infty$-norm ECE).

For $K_1$ or $K_\infty$ ECE, it is not difficult to express the calibration constraint as a linear constraint of the predictor. However, to obtain an optimal algorithm with polynomial running time, we also need to restrict the continuous space of predictions to a discrete polynomial-size set. To bypass this challenge, we build on an idea from the algorithmic information design literature. 

\xhdr{Revelation principle and (Bayesian) persuasion with signal-dependent bias}
In the classic Bayesian persuasion problem \citep{KG-11}, the \emph{revelation principle} assures that it is sufficient to consider signaling schemes that recommend an action. Hence, instead of searching over a (possibly) continuous signal space, it suffices to construct a signaling scheme with a signal space equal to the agent's action space, which is polynomial-sized. However, such an approach seems difficult to apply to our model, as the design space in our model is restricted to predictions rather than arbitrary signals, and the agent in our model follows a much simpler behavior---i.e., always trusting the prediction---rather than engaging in strategic reasoning. Therefore, it is unclear a priori whether a revelation principle can be established.

Although at first glance, our persuasive calibration problem and the classic Bayesian persuasion problem may seem different, we demonstrate that the persuasive calibration problem can also be interpreted as a new variant of the Bayesian persuasion problem, which we refer to as \emph{(Bayesian) persuasion with signal-dependent bias}. In this model, the receiver (equivalent to the agent) (1) has a utility function that depends linearly on the payoff-relevant state, and (2) exhibits a signal-dependent bias when updating their belief, rather than updating their belief in a fully Bayesian manner.
The sender (equivalent to the principal) now determines both the signaling scheme and the bias assignment for the receiver, subject to an exogenously specified constraints (Eqn.~\eqref{eq:aggregated IC}) on the total bias.\footnote{By allowing the sender to design the bias assignment, our persuasion problem with signal-dependent bias shares similarities with the literature on $\caliBudget$-IC mechanism design \citep{BBHM-05, HL-15, CDW-12,BBC-24}. Notably, even when selling a single item to a single buyer, the optimal $\caliBudget$-IC mechanism lacks a simple structural characterization \citep{BBC-24}. In contrast, due to the equivalence between the persuasive calibration and this variant, we obtain a clean structural characterization for this persuasion problem.} In the special case where the total bias is restricted to zero, this variant reduces to the classic Bayesian persuasion problem.

With this equivalent interpretation, it becomes relatively more straightforward for us to identify a suitable revelation principle (\Cref{lem:revelation}) and formulate the problem of optimizing the pair consisting of the signaling scheme and the bias assignment (equivalent to the predictor) as an action-recommendation program \ref{eq:opt general actionRem} (\Cref{cor:actionRem BP}). Notably, both the equivalence between two models and \ref{eq:opt general actionRem} holds for a general $\tnorm$-norm ECE budget constraint with any $t\geq 1$. For $K_1$ and $K_\infty$ ECE, the corresponding \ref{eq:opt general actionRem} becomes a linear program. Since the program has polynomial size and the transformation between the signaling scheme (with the bias assignment) and the predictor is also polynomial-time computable, we obtain \Cref{algo:poly-time}, which computes the optimal predictor under the $K_1$ or $K_\infty$ ECE budget in polynomial time.

Given the significance of the Bayesian persuasion and information design literature, we believe that our new variant of persuasion problem with signal-dependent bias, together with its results, may be of independent interest.

\subsection{Discussions}
\label{subsec:discussion}

\xhdr{How to choose ECE budget $\caliBudget$}
In this work, we consider an exogenously given ECE budget 
$\caliBudget$, under which the agent best responds to predictions as long as they are generated by an $\caliBudget$-calibrated predictor.
A natural question arises: does the agent's expected utility under the optimal $\caliBudget$-calibrated predictor always weakly decrease as $\caliBudget$ increases?

Perhaps a bit surprisingly,  the answer is no. 
in \Cref{apx:win-win}, we provide an example showing that, due to the strategic interaction between the principal and the agent, the agent's expected utility can increase as the ECE budget $\caliBudget$ increases within a certain range.
In fact, the agent's utility under the optimal $\caliBudget$-calibrated predictor  can be strictly higher than under a perfectly calibrated predictor ($\caliBudget = 0$).
However, if the ECE budget becomes too large, the agent may begin to doubt the reliability of the predictions and stop treating them as trustworthy predictions.
This introduces a key behavioral dimension to the problem: the appropriate level of calibration error depends not only on optimization or statistical considerations but also on how the agent perceives and responds to prediction quality.

\xhdr{Calibration to achieve weak commitment}
In information design literature, the designer (or sender) can commit ahead of time to a particular signaling or information disclosure policy, and the receiver knows all the details of this information policy and trusts this commitment.
Additionally, agents are assumed to be fully Bayesian and able to perform complex Bayesian reasoning based on any signal they receive.
While elegant in theory, these assumptions are often unrealistic in practice, as they require agents to have perfect knowledge and unlimited computational capacity.

Calibration, by contrast, offers a simpler and more practical alternative.
It can be viewed as a relaxation of both the strong commitment assumption and the demanding cognitive requirements placed on the agent.
Given a pre-specified ECE budget,
the principal does not need to commit to a specific predictor in advance -- she may generate predictions in any manner, as long as the calibration error, evaluated ex ante, remains within the budget.
On the agent's side, there is no need to know the prior, the predictor details, or to perform Bayesian updates.
The agent simply acts based on the predictions received. 
Thus, calibration enables a weaker form of commitment that is more behaviorally and computationally plausible.\footnote{Similar discussions between calibration and Bayesian reasoning has also been mentioned in \cite{CGGR-24}.}

As an overarching goal of this work, we seek to establish and encourage future research that bridges the gap between the literature on algorithmic information design and the literature on calibration -- two areas that have been extensively studied within the computer science community.

\subsection{Open Questions and Future Directions}
\label{subsec:open}

The persuasive calibration problem studied in this work opens up several interesting directions for future research, some of which we outline below.

\xhdr{Structural characterizations for general utility}
A natural question is whether our structural results characterized in \Cref{sec:optimal structure}  can be (at least partially) extended to more general utility functions. 
We conjecture that the miscalibration structure may still hold under broader conditions, although this likely requires a more fine-grained duality-based analysis.

\xhdr{Deterministic predictor}
Throughout the paper, we focus on potentially randomized predictors. While randomization is allowed in our framework, \Cref{prop:num predictions} shows that, conditional on each event, the number of possible predictions is bounded by a constant (at most four).\footnote{In fact, we can further strengthen this result: there exists at most one event that can generate four distinct predictions.}
These observations suggest that optimal predictors exhibit very limited randomization. 
With that being said, it is still interesting to explore the design of the optimal $(\caliBudget,\tnorm)$-calibrated deterministic predictor, which have been widely adopted in practice.

\xhdr{Extensions with multicalibration, multi-class calibration, and beyond}
Beyond the ECE metric considered in this work, there is a rich body of literature on broader notions of calibration.
We believe that our persuasive calibration framework naturally extends to these settings.
For example, by viewing each event as a data point, and assuming a membership structure among events, multicalibration becomes a more appropriate metric. It would be interesting to extend our results to predictors subject to multicalibration constraints.
In fact, the computational results and the connections to Bayesian persuasion established in \Cref{thm:opt general} can be extended to the multicalibration setting.
The key difference would be in the receiver's behavior -- defined in \Cref{defn:receiver behavior}  -- which, under multicalibration, would depend on the group membership of the events.

Moreover, it is also interesting to explore the setting with multi-outcome (i.e., multi-class) calibration. This problem would be particularly technically interesting as predictions must now lie in a high-dimensional simplex.
It closely relates to the high-dimensional information design, a notoriously difficult and unresolved area in the theoretical economics literature.

Lastly, in this work, we focus on a setting where the principal has full knowledge of each event's true expected outcome. While this is already a foundational setup, a more realistic and practical scenario may involve the principal only observing noisy realizations of outcomes. In such a case, one must design a predictor based on empirical observations, which we leave it as an important direction for future work.

\subsection{Additional Related Work}

Our work contributes to a growing line of research on calibration in decision-making settings.
Beginning with \cite{D-82,FV-98}, it is well established that agents who best respond to calibrated predictions (i.e., those with low $K_1$ ECE) achieve diminishing swap regret.
Recent works have extended this foundation by introducing new decision-theoretic calibration measures that offer finer-grained guarantees on the regret incurred by downstream agents \citep{KLST-23,RS-24,HW-24,QZ-25}.

For example, \citet{KLST-23} show that
a decision-maker cannot fare worse by trusting well-calibrated predictions -- the external regret incurred by best-responding to such predictions is linearly (up to a small factor) bounded by the $K_1$ ECE.
They introduce ``U-Calibration'' -- a measure bounded by a small constant times the $K_1$ ECE, and show that sublinear  U-Calibration is both
necessary and sufficient for ensuring sublinear regret for all downstream agents.
\citet{RS-24,HW-24} consider a similar setting but focusing on swap regret.
In particular, \citet{RS-24} show that
by ensuring unbiasedness relative to a carefully chosen collection of events, one can achieve diminishing swap regret for arbitrary downstream agents with an improved rate compared to using calibrated predictions.
\citet{HW-24} then introduce ``Calibration Decision Loss (CDL)'', defined as the maximum improvement (over all decision tasks) in decision payoff obtainable by recalibrating the predictions.
The show that CDL is upper bounded by twice the $K_1$ ECE and that a vanishing CDL guarantees vanishing payoff loss from miscalibration across all decision tasks,
which also removes the regret dependence on the number of actions appeared in the results of \citet{RS-24}.
\cite{QZ-25}
propose a new calibration measure called ``subsampled step calibration'', which has both decision-theoretic implications and a truthfulness property.

Other conceptually related works include \citet{NRRX-23,HPY-23} who focus on single decision task, and \citet{CHJ-20,CRS-24} who explore repeated Principal-Agent interactions in a prior-free setting.
\citet{ZKS-21} introduce ``decision calibration'' that requires the predicted distribution and true distribution to be indistinguishable to a set of downstream decision-makers. \citet{GKRSW-22} develop an omniprediction framework which aims to design a single predictor that ensures every downstream decision-maker's loss is no worse than some benchmark family of functions \citep{GHKRW-23,GJRR-24}. 
\cite{GS-21} analyze a sender-receiver game where the sender, facing miscalibration costs, provides predictions to the receiver; they show that the game's Nash equilibrium crucially depends on the magnitude of the miscalibration costs.

In contrast to these works, which primarily adopt a worst-case regret minimization framework and aim to achieve robustness against all decision tasks, our work focuses on characterizing the structure and algorithmic properties of the optimal predictor. 
In addition, many of the aforementioned works consider a specific principal's goal -- aiming to minimize regret across all downstream agents. 
By contrast, our framework accommodates arbitrary misaligned incentives between the principal and the agent.

In this work, we adopt the $\tnorm$-norm ECE as a framework to ensure the predictions to be trustworthy to the agents.
This $\tnorm$-norm calibration metric is broadly accepted in the forecasting community \cite{D-82,FV-98,GBR-07,RG-10}, within theoretical computer science (see, e.g., \citealp{QV-21,DDFM-24,CDV-24}), empirical machine learning community (see, e.g., \citealp{GK-14,R-21}). 
Beyond  $\tnorm$-norm ECE and the above mentioned decision-driven calibration metrics, other calibration measures have been proposed to capture different desired aspects, e.g., the multicalibration \citep{HKRR-18,GJRR-24}, multi-class calibration \citep{GHR-24}, distance to calibration \citep{BGHN-23,QZ-24,ACRS-25}.

\section{Preliminaries}
\label{sec:prelim}

In this work, we introduce and study the \emph{``persuasive calibration''} problem between a principal and a downstream decision-maker. Below, we outline the key components of our model.

\xhdr{Environment}
Consider a stochastic environment that randomly generates an \emph{event} from a finite set of $\numData$ events indexed by $[\numData] \triangleq \{1, 2, \dots, \numData\}$.\footnote{Equivalently, event distribution $\prior$ can be viewed as the a continuum population of events.} We denote by $\prior_i \in [0, 1]$ the probability of event $i$, satisfying $\sum_{i \in [\numData]} \prior_i = 1$. Once an event is realized, it further induces a randomized \emph{binary outcome}. Specifically, for each event $i \in [\numData]$, the binary outcome $\outcome \in \outcomeSpace \triangleq \{0, 1\}$ is drawn from a Bernoulli distribution with mean $\truemean_i \in [0, 1]$.  Without loss of generality, we assume that the events are sorted in non-decreasing order of their Bernoulli means: $\truemean_1 \leq \truemean_2 \leq \dots \leq \truemean_\numData$.  

There is a \emph{principal} (she) and a downstream decision-maker (he), referred to as the \emph{agent}. The principal observes the realized event $i$ but not the final binary outcome $\outcome$, while the agent observes neither the event nor the outcome. 
We detail the interaction between the principal and the agent below.

\xhdr{Calibrated predictors}
The principal provides predictions of the final binary outcome (which is unknown to both herself and the agent) to the agent, who then makes a decision that affects the payoffs of both the principal and himself. A \emph{prediction} $\prediction \in [0, 1]$ is a scalar indicating the probability that the binary outcome is one. The principal's predictions are generated by a (possibly randomized) \emph{predictor} $\predictor = \{\predictor_i\}_{i \in [\numData]}$, which specifies a profile of $\numData$ conditional distributions.\footnote{Since the principal observes the realized event $i$ but not the outcome $\outcome$, her predictor is a function of the realized event that does not depend on the outcome.} Specifically, for each event $i$, the conditional predictor $\predictor_i$ represents the distribution over predictions given that event $i$ occurs. We define the conditional prediction space $\supp(\predictor_i)$ as the support of conditional distribution $\predictor_i$ and (unconditional) prediction space $\supp(\predictor) \triangleq \cup_{i\in[\numData]}\supp(\predictor_i)$. With slight abuse of notation, we use $\predictor_i(\prediction)$ to denote the probability mass (or probability density if $\supp(\predictor_i)$ is continuous) of generating prediction $\prediction$ when event $i$ is realized.\footnote{While predictors with continuous prediction space are considered, our results show that there exists optimal predictors with finite prediction space.} We are also interested in the \emph{marginalized predictor}, denoted by $\marginalpredictor \in \Delta([0, 1])$, which marginalizes the predictor $\predictor$ over the events. Specifically, we define $\marginalpredictor(\prediction) \triangleq \sum_{i \in [\numData]} \prior_i \predictor_i(\prediction)$ to represent the marginal probability of generating the prediction $\prediction$.

The predictor provided by the principal is required to be \emph{trustworthy}. In our model, the trustworthiness of predictors follows the calibration framework: given a fixed predictor $\predictor$, for every prediction $\prediction \in \supp(\predictor)$, we denote by $\truePosterior(\prediction)\triangleq \expect{\outcome\condition \prediction}$ the \emph{true expected outcome} conditional on the realized prediction~$\prediction$, where the randomness is taken over event and outcome. Since the outcome is binary, it also represents the true probability that binary outcome is one, conditional on the realized prediction~$\prediction$. As a sanity check, if the prediction space $\supp(\prediction)$ is discrete, the true expected outcome $\truePosterior(\prediction)$ can be expressed according to Bayes' rule as 
\begin{align}
\label{eq:true expected outcome}
    \truePosterior(\prediction) \triangleq 
    \expect{\outcome\condition \prediction}
    =
    \frac{
    \sum_{i\in[\numData]}
    \prior_i \cdot 
    \predictor_i(\prediction)
    \cdot 
    \truemean_i
    }{
    \sum_{i\in[\numData]}
    \prior_i \cdot 
    \predictor_i(\prediction)}
\end{align}
A predictor $\bayespredictor$ is \emph{perfectly calibrated} if, for every prediction $\bayesprediction \in \supp(\bayespredictor)$, the true expected outcome $\truePosterior(\bayesprediction)$ equals the prediction itself, i.e., $\truePosterior(\bayesprediction) = \bayesprediction$.\footnote{\label{footnote:perfectly calibrated predictor}Throughout the paper, we use $\bayespredictor$ to denote a perfectly calibrated predictor and $\predictor$ to denote a general (possibly imperfectly calibrated) predictor. Similarly, we use $\bayesprediction$ to denote the prediction generated by perfectly calibrated predictors and $\prediction$ to denote the prediction generated by general (possibly imperfectly calibrated) predictors.} For illustration, \Cref{example:prelim:prefect calibration} presents two distinct perfectly calibrated predictors. 

\begin{example}[Perfect calibration]
    \label{example:prelim:prefect calibration}
    Suppose there are $\numData = 2$ events with $\prior_1 = \prior_2 = 0.5$ and $\truemean_1 = 0.3,\truemean_2 = 0.9$. Two perfectly calibrated predictors $\bayespredictor\primed$ and $\bayespredictor\doubleprimed$ can be constructed as follows:
    \begin{itemize}
        \item In the first predictor $\bayespredictor\primed$, deterministic prediction $\bayesprediction = 0.6$ is generated regardless of the realized event. Specifically, $\bayespredictor_i\primed(\bayesprediction) \triangleq \indicator{\bayesprediction = 0.6}$ for both $i\in[2]$.\footnote{We use $\indicator{\cdot}$ to denote the indicator function.} Note that the ex ante probability of outcome being one is $0.5\cdot 0.3 + 0.5\cdot 0.9 = 0.6$. Hence, the construction ensures that $\truePosterior(0.6) = 0.6$ and thus predictor $\bayespredictor\primed$ is perfectly calibrated.
        \item In the second predictor $\bayespredictor\doubleprimed$, prediction $\bayesprediction = \truemean_i$ is generated when event $i$ is realized. Specifically, $\bayespredictor_i\doubleprimed(\bayesprediction) \triangleq \indicator{\bayesprediction = \truemean_i}$ for both $i \in [2]$. By construction, predictor $\bayespredictor\doubleprimed$ is also perfectly calibrated.
    \end{itemize}
\end{example}
We adopt the standard $\tnorm$-norm expected calibration error metric to quantify the deviation of a predictor from being perfectly calibrated.
\begin{definition}[Expected calibration error]
\label{defn:ECE}
Fix any $\normexponent \in[1,\infty)$. The \emph{$\tnorm$-norm expected calibration error} (\emph{ECE}) $\ECE[\normexponent]{\predictor}$ of predictor $\predictor$ is 
\begin{align*}
    \ECEnorm{\predictor} \triangleq 
    \left(\expect{\abs{\truePosterior(\prediction) - \prediction}^{\normexponent}}\right)^{\frac{1}{\normexponent}}~,
\end{align*}
where the expectation is taken over the randomness of both the event and the prediction. Similarly, the \emph{$\ell_\infty$-norm ECE} of predictor $\predictor$ is 
    ${\ECE[\infty]{\predictor} \triangleq 
    \max\nolimits_{\prediction\in\supp(\predictor)}~ 
    |\truePosterior(\prediction) - \prediction|}$.
\end{definition}
Given any $\caliBudget\geq 0$ and $\normexponent\geq 1$,
we say a predictor $\predictor$ is \emph{$(\caliBudget, \tnorm)$-calibrated} if its $\tnorm$-norm ECE is at most $\caliBudget$, i.e., $\ECEnorm{\predictor}\leq \caliBudget$. We refer to $\caliBudget$ as the \emph{{\ECEbudget}}, and denote by $\predicSpace_{(\caliBudget,\tnorm)} \triangleq \{\predictor: \ECEnorm{\predictor}\le\caliBudget\}$ the space of all $(\caliBudget,\tnorm)$-calibrated predictors. Note that when the {\ECEbudget} is set to zero, the $(0, \tnorm)$-calibration recovers perfect calibration for all $\tnorm$ norms. Thus, we write $\predicSpace_0\equiv \predicSpace_{(0,\tnorm)}$.

Among all $\tnorm$-norm ECEs, the most classic and important one is the $\ell_1$-norm ECE, also known as the $K_1$ ECE. Additionally, the $\ell_2$-norm ECE and $\ell_\infty$-norm ECE, referred to as the $K_2$ and $K_\infty$ ECEs, respectively, are also standard in the literature and commonly used in practice. While our results apply to general $\tnorm$-norm ECEs, more refined and improved results are derived for these classic ECEs, particularly the $K_1$ ECE.
When $\tnorm$ norm is clear from the context, we sometimes simplify notations and omit them, e.g., writing $\ECE{\predictor}$, $\predicSpace_\caliBudget$ instead of $\ECEnorm{\predictor}$, $\predicSpace_{(\caliBudget,\tnorm)}$, and saying $\caliBudget$-calibrated instead of $(\caliBudget,\tnorm)$-calibrated. 

\begin{example}[$\caliBudget$-calibrated predictor]
\label{example:prelim:eps calibration}
    Consider the same two-event setting as in \Cref{example:prelim:prefect calibration} and two predictors, $\predictor\primed$ and $\predictor\doubleprimed$, constructed as follows:
    \begin{itemize}
        \item In the first predictor $\predictor\primed$, the deterministic prediction $\prediction = 0.4$ is generated regardless of the realized event. Specifically, $\predictor\primed_i(\prediction) = \indicator{\prediction = 0.4}$ for both $i\in[2]$. For all $\normexponent\geq 1$, its $\tnorm$-norm ECEs are identical and equal to $\ECEnorm{\predictor\primed} = 0.2$.
        \item In the second predictor $\predictor\primed$, predictions $\prediction = 0.4$ and $\prediction = 0.7$ are generated under events 1 and 2, respectively. Specifically, $\predictor\doubleprimed_1(\prediction) = \indicator{\prediction = 0.4}$ and $\predictor\doubleprimed_2(\prediction) = \indicator{\prediction = 0.7}$. For this predictor, the $K_1$, $K_2$ and $K_\infty$ ECEs are $\ECE[1]{\predictor\doubleprimed} = 0.15$, $\ECE[2]{\predictor\doubleprimed} = \sqrt{0.025}\approx 0.158$, $\ECE[\infty]{\predictor\doubleprimed} = 0.2$, respectively.
    \end{itemize}
\end{example}

\xhdr{Agent's problem}
The agent (downstream decision-maker) has a finite set of actions, denoted by $\actionSpace$ with $\numAction \triangleq |\actionSpace|$. His utility function is given by $\receiverU(\cdot, \cdot): \actionSpace \times \outcomeSpace \rightarrow \R$, which depends on both the chosen action and the binary outcome of the event.\footnote{\label{footnote:event independent agent utility}Notably, the agent's utility does not depend on the event, while the principal's utility may depend on the event, besides the agent's action and the binary outcome.} We use $\receiverU(\action, \outcome)$ to denote the agent's utility when he takes action $\action \in \actionSpace$ and the outcome $\outcome\in\outcomeSpace$ is realized.

The agent has no knowledge of the environment, the realized outcome, or the details of the predictor. He can only use the realized prediction $\prediction$ generated by the principal's predictor to inform his decision.
Knowing that the predictions are generated by a predictor with small calibration error, the agent \emph{trusts} the principal's prediction $\prediction$ and chooses the action that maximizes his expected utility assuming the outcome is equal to one with probability $\prediction$.\footnote{The behavior of agent directly best responding to the provided predictions that are generated from a low-ECE predictor has been widely adopted and considered in calibration literature, see, e.g., \cite{HPY-23,GJRR-24,KLST-23,HW-24,RS-24,QZ-25}.}
More formally, for any prediction $\prediction$ generated by an $\caliBudget$-calibrated predictor $\predictor$, the agent takes the action $\bestr(\prediction) \in \actionSpace$ that satisfies\footnote{\label{fn:tie-breaking}Following the literature, we assume the agent breaks ties in favor of the principal when he faces indifferent actions.
As a result, the principal's indirect utility function under each event preserves the upper-semi continuity. This property guarantees the principal's problem always feasible.
}
\begin{align}
    \label{eq:agent bestr}
    \bestr(\prediction) \in \argmax\nolimits_{\action\in\actionSpace}~
    \expect[\outcome\sim \Bern(\prediction)]{\receiverU(\action, \outcome)}
    =
    \argmax\nolimits_{\action\in\actionSpace}~
    \prediction\cdot 
    \receiverU(\action, 1)
    +
    (1-\prediction)\cdot 
    \receiverU(\action, 0)
\end{align}

\xhdr{Principal's problem}
The principal's utility depends on the agent's action, the realized outcome, and the realized event. Specifically, for each event $i \in [n]$, agent's action $\action \in \actionSpace$, and outcome $\outcome \in \outcomeSpace$, we denote the principal's utility by $\senderU_i(\action, \outcome){\in\R^+}$.\footnote{{Throughout the paper, we consider non-negative principal's utility, and we do not make any other assumptions unless explicitly stated.}} To simplify the presentation, we also introduce the \emph{indirect utility function} $\indirectsenderU_i$, which maps a given prediction $\prediction$ to the principal's expected utility when the agent selects his best action $\bestr(\prediction)$ upon the realization of event $i$.
\begin{definition}[Indirect utility]
    For each event $i\in[n]$ and prediction $\prediction\in \supp(\predictor)$, the principal's \emph{indirect utility} is 
    $\indirectsenderU_i(\prediction) \triangleq \truemean_i \cdot \senderU_i(\bestr(\prediction), 1)
    +
    (1-\truemean_i)\cdot \senderU_i(\bestr(\prediction), 0)$, where $\bestr(\prediction)$ is defined in Eqn.~\eqref{eq:agent bestr}.\footnote{Notably, when the principal evaluates her indirect utility $\indirectsenderU_i(\prediction)$, the outcome follows the Bernoulli distribution with mean $\truemean_i$ instead of prediction $\prediction$.}
\end{definition}
\noindent Given a predictor $\predictor$, the principal's expected utility is expressed as\footnote{When $\predictor_i$ is continuous, the inner summation in the expression of $\Payoff{\predictor}$ becomes integral, i.e., $\int_0^1 
    \prior_i\cdot  \predictor_i(\prediction)\cdot 
    \indirectsenderU_i(\prediction)\,\d\prediction$. To avoid redundancy, in the remainder of the paper, we present formulas for either the continuous or discrete distribution, as the analogous version can be easily derived.}
\begin{align*}
    \Payoff{\predictor} &\triangleq \expect[i\sim\prior, \prediction\sim\predictor_i,\outcome\sim \Bern(\truemean_i)]{\senderU_i(\bestr(\prediction), \outcome)}
    = \sum\nolimits_{i\in[\numData]}\sum\nolimits_{\prediction\in\supp(\predictor_i)}
    \prior_i\cdot  \predictor_i(\prediction)\cdot 
    \indirectsenderU_i(\prediction)
\end{align*}
where the second equality holds due to the definition of the indirect utility. 

Fixing an exogenously given {\ECEbudget} $\caliBudget$ and $\tnorm$-norm calibration error, we say an $(\caliBudget,\tnorm)$-calibrated predictor $\optpredictor$ is \emph{optimal} if it maximizes the principal's expected utility among all $(\caliBudget,\tnorm)$-calibrated predictors, i.e., 
\begin{align*}
    \optpredictor \in \argmax\nolimits_{\predictor\in \predicSpace_{(\caliBudget,\tnorm)}}
    ~\Payoff{\predictor}
\end{align*}
We use the tuple $(\stateSpace, \prior, \senderU, \receiverU, \actionSpace, (\caliBudget, \tnorm))$ to refer to as a persuasive calibration instance, where $\stateSpace = \{\truemean_i\}_{i\in[\numData]}$ represents the profile of expected outcomes for all events.
 
\xhdr{Timeline} We formalize the timeline of the model as follows: 
\begin{enumerate}
    \item The principal designs a $(\caliBudget, \tnorm)$-calibrated predictor $\predictor$.\footnote{Note the ECE of a predictor is evaluated ex ante, thus, it does not matter when the principal determines the predictor details.}
    \item A randomized event $i \in [\numData]$ is realized with probability $\prior_i$, and 
    a (randomized) prediction $\prediction$ is sent to the agent according to the conditional distribution $\predictor_i(\cdot)$.
    \item The agent selects his best action $\bestr(\prediction)$ as defined in Eqn.~\eqref{eq:agent bestr}.
    \item The binary outcome $\outcome$ is realized from a Bernoulli distribution with mean $\truemean_i$. The principal and the agent receive utilities $\senderU_i(\bestr(\prediction), \outcome)$ and $\receiverU(\bestr(\prediction), \outcome)$, respectively.
\end{enumerate}

\section{Structural Characterization for the Event-Independent Setting}
\label{sec:optimal structure}

In this section, we focus on the \emph{event-independent setting}, where the principal's indirect utility does not depend on the realized event. Specifically, we assume that $\indirectsenderU_i(\prediction) \equiv \indirectsenderU_j(\prediction)$ for all events $i, j \in [\numData]$ and predictions $\prediction\in[0, 1]$.\footnote{Such event-independent indirect utility assumptions are standard and widely studied in the algorithmic game theory and economics literature \citep[e.g.,][]{DM-19,LR-20,FTX-22,ABSY-23,CD-23,FHT-24}. In our model, it is equivalent to assume the principal's utility $\senderU$ does not depend on the event and outcome.} Within this section, we omit the subscript $i$ in the principal's indirect utility function, i.e., write $\indirectsenderU(\prediction)$ instead of $\indirectsenderU_i(\prediction)$.

In \Cref{subsec:characterization:K1 ECE}, we focus on the 
$\ell_1$-norm ECE (a.k.a., $K_1$ calibration error) and present the structural characterization of the optimal $\caliBudget$-calibrated predictor. 
In \Cref{subsec:characterization:two-step perspective}, we introduce a two-step framework for generating $(\caliBudget,\tnorm)$-calibrated predictors for general $\normexponent \ge 1$. 
Our proofs, provided in \Cref{subsec:proofs event-inde}, for the structural characterization of the optimal $\caliBudget$-calibrated predictor heavily relies on the this two-step framework with a dual-based approach.

\subsection{Structural Results for Optimal $(\caliBudget,\ell_{1})$-Calibrated Predictor}
\label{subsec:characterization:K1 ECE}

In this section, we focus on the $\ell_1$-norm ECE, which is the most classic and important calibration error definition, and we present the structural characterization of the optimal $(\caliBudget,\ell_1)$-calibrated predictor. All the proofs are deferred to \Cref{subsec:proofs event-inde}.
Except for the theorem statements, we omit $\ell_1$ for the presentation simplicity in this subsection.

We first introduce the following concepts of under-confidence and over-confidence, which are introduced and observed in the machine learning literature \citep[e.g.,][]{GPSW-17,KFE-18,FSZ-23}. 

\begin{definition}[Under-/over-confidence]
    Fix any predictor $\predictor$. A prediction $\prediction\in\supp(\predictor)$ is \emph{under-confident} (resp.\ \emph{over-confident}, \emph{perfectly calibrated}) if $\prediction \leq \truePosterior(\prediction)$ (resp. $\prediction\geq \truePosterior(\prediction)$, $\prediction= \truePosterior(\prediction)$), where $\truePosterior$ is the {\TPF} of predictor $\predictor$ defined in Eqn.~\eqref{eq:true expected outcome}.
\end{definition}

We now present the main structural characterization of the optimal predictor.\footnote{\Cref{thm:event-inde} and its analysis also holds when there is continuum population of events.}

\begin{theorem}[Structure of optimal $\caliBudget$-calibrated predictor]
\label{thm:event-inde}
For every persuasive calibration instance in the event-independent setting, there exists $0 \leq \lowq \leq \highq\leq 1$ that partitions 
the prediction space $[0, 1]$ into \emph{under-confidence interval} $[0, \lowq]$, \emph{perfectly calibrated internal} $[\lowq, \highq]$, and \emph{over-confidence} interval $[\highq, 1]$. Then, there exists an optimal $(\caliBudget,\ell_1)$-calibrated predictor $\optpredictor$ that satisfies the following three properties:
\begin{enumerate}
    \item[(1)] \emph{(Miscalibration Structure)} Every prediction $\prediction$ in the under-confidence interval (resp.\ over-confidence interval, perfectly calibrated interval) is under-confident (resp.\ over-confident, perfectly calibrated), i.e., 
    \begin{align*}
    \begin{array}{ll}
      \forall \prediction \in \supp(\optpredictor)\cap[0, \lowq]:   &\quad \prediction\leq \truePosterior(\prediction)  \\
         \forall \prediction \in \supp(\optpredictor)\cap[\lowq, \highq]:& \quad 
        \prediction= \truePosterior(\prediction)
        \\
        \forall \prediction \in \supp(\optpredictor)\cap[\highq, 1]:& \quad 
        \prediction \geq \truePosterior(\prediction)
    \end{array}
    \end{align*}
    \item[(2)] \emph{(Payoff Structure-I)} There exists a function $\sgselect:\supp(\optpredictor) \rightarrow \R$ that  selects a subgradient at each prediction $\prediction\in\supp(\optpredictor)$, i.e., $\sgselect(\prediction) \in \partial \indirectsenderU(\prediction)$, such that function $\sgselect$ is increasing, and it satisfies $\sgselect(\prediction) \equiv \caliBudgetDual$ for all predictions in over-confidence interval, and $\sgselect(\prediction) \equiv -\caliBudgetDual$ for all predictions in under-confidence interval for some $\caliBudgetDual\ge 0$.

    \item[(3)] \emph{(Payoff Structure-II)} All points $(\prediction, \indirectsenderU(\prediction))_{\prediction\in\supp(\optpredictor)}$ form a convex function.
    Moreover, points $(\prediction, \indirectsenderU(\prediction))$ for all predictions in over-confidence interval are collinear (with slope $\caliBudgetDual$), and points $(\prediction, \indirectsenderU(\prediction))$ for all predictions in under-confidence interval are collinear (with slope $-\caliBudgetDual$).
\end{enumerate}
\end{theorem}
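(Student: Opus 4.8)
The plan is to work within the two-step framework alluded to in the introduction: rather than optimizing over $\caliBudget$-calibrated predictors directly, I would view any predictor as a perfectly calibrated predictor $\bayespredictor$ that has been ``miscalibrated'' by a post-processing plan, and — using the event-independence of the indirect utility — reduce to event-independent post-processing plans, which (as stated in the excerpt) gives the linear program \ref{eq:decoupled opt}. The three claimed properties will then be read off from primal optimality and strong LP duality. Concretely, the decision variables are (i) the distribution of a perfectly calibrated ``base'' prediction $\bayesprediction$, which by the mean-preserving-contraction characterization of perfect calibration corresponds to a fusion of the point masses $\{\truemean_i\}$ with weights $\prior_i$, and (ii) for each base value a (sub)distribution over post-processed predictions $\prediction$, subject to the $\ell_1$-ECE budget $\expect{|\truePosterior(\prediction)-\prediction|}\le\caliBudget$. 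The objective is $\expect{\indirectsenderU(\prediction)}$.

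First I would establish the \textbf{miscalibration structure (1)} by a primal exchange argument. Suppose some prediction $\prediction$ in the would-be under-confidence region were over-confident (or vice versa); I would show one can locally re-route probability mass — shifting the post-processed value toward its own true conditional mean, or swapping mass between an over- and an under-confident prediction on the ``wrong side'' — in a way that does not increase the consumed ECE budget (here the MPC structure of the base predictor is what guarantees the required mass is available) while weakly improving $\Payoff{\cdot}$. Iterating/cleaning this up defines the thresholds $\lowq\le\highq$: everything strictly below $\lowq$ is under-confident, everything strictly above $\highq$ is over-confident, and in between the predictor is left perfectly calibrated. The monotone ordering $\truemean_1\le\cdots\le\truemean_\numData$ is used to argue that under-confident predictions draw their ``extra'' true mass from higher states and over-confident ones from lower states, so the two regions are genuinely separated by an interval.

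Next, for the \textbf{payoff structure (2) and (3)}, I would take the Lagrangian dual of \ref{eq:decoupled opt}. Dualizing the ECE-budget constraint with multiplier $\caliBudgetDual\ge0$ and the ``consistency/feasibility'' constraints linking the base distribution to the post-processed distribution, the per-prediction contribution becomes $\indirectsenderU(\prediction)$ minus a term linear in $(\truePosterior(\prediction)-\prediction)$ with slope $\pm\caliBudgetDual$ according to sign, plus a linear term from the base-consistency multiplier. Complementary slackness then forces: any prediction actually used must be a maximizer of this Lagrangian in $\prediction$, i.e. $\indirectsenderU$ must touch a supporting line there, which yields a subgradient $\sgselect(\prediction)\in\partial\indirectsenderU(\prediction)$; on the over-confidence region the maximization over the ``displacement'' pins the relevant slope to exactly $+\caliBudgetDual$, on the under-confidence region to $-\caliBudgetDual$, and in the perfectly-calibrated middle region it forces the supporting line to agree with the base multiplier, giving the increasing $\sgselect$. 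Assembling these supporting lines shows the used points $(\prediction,\indirectsenderU(\prediction))$ all lie on a single convex function (the dual certificate $\convexEnv$), with the two linear tails of slopes $\pm\caliBudgetDual$, which is precisely (3); collinearity within each tail is immediate since a common supporting line of slope $\pm\caliBudgetDual$ passes through all of them.

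The \textbf{main obstacle} I anticipate is the reduction to event-independent post-processing plans (\Cref{prop:prog equivalence}) and the correct handling of the perfectly-calibrated middle interval in the duality argument: one must show that even though an arbitrary $\caliBudget$-calibrated predictor need not be an event-independent post-processing of a perfectly calibrated one (the excerpt flags this via \Cref{ex:special predictor}), the \emph{optimal} value is unchanged when the indirect utility is event-independent, and then argue that the dual multipliers can be chosen consistently so that the supporting-line slopes are exactly $\pm\caliBudgetDual$ on the tails and sandwiched monotonically in between — including dealing with ties in the agent's best response (upper-semicontinuity of $\indirectsenderU$) and the endpoint cases $\lowq=0$ or $\highq=1$. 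The subgradient bookkeeping — ensuring $\sgselect$ is a genuine selection from $\partial\indirectsenderU$ that is globally increasing across all three regions — is the delicate part of converting the LP-duality conclusions into the clean geometric statement.
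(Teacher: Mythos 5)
Your proposal matches the paper's argument essentially step for step: reduce to \ref{eq:decoupled opt} via the two-step framework (\Cref{prop:prog equivalence}), prove the miscalibration structure by a primal exchange argument that pools mis-ordered pairs of true expected outcomes using the MPC property of the base predictor (\Cref{fact:contracting}), and obtain the payoff structure from the Lagrangian/complementary-slackness conditions of \ref{eq:decoupled opt}, which yield exactly the dual certificate $\convexEnv$ with linear tails of slope $\pm\caliBudgetDual$. You also correctly flag the two real sources of delicacy — the $\predicSpaceEI_\caliBudget\subsetneq\predicSpace_\caliBudget$ gap closed by \Cref{prop:two-step equivalence}, and the bookkeeping needed to turn the dual inequalities into a globally increasing subgradient selection — so this is the paper's proof in outline.
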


\xhdr{Implications on the miscalibration of neural networks}
\Cref{thm:event-inde} sheds light on a well-observed phenomenon in the machine learning community: the miscalibration of modern neural networks.
It is well-known that neural networks have been found to be poorly calibrated.
For example, in the seminal prior work, \cite{GPSW-17} show that uncalibrated ResNet tends to be overconfident in its predictions (see Figure 4 in \citealp{GPSW-17}), and \cite{KFE-18} show that the neural networks consistently generate underestimated and overestimated predictions/forecasts at the two extremes of the probability spectrum (see Figures 2 and 3 in \citealp{KFE-18}).

Our framework provides a theoretical explanation for this behavior.
Specifically, the principal's problem can be viewed as minimizing a particular loss function.
The miscalibration structure established in \Cref{thm:event-inde} implies that, whenever there exist calibration errors in the output predictions, they must consistently occur in the tails of the prediction spectrum. 
That is, there exist $\lowq \le \highq$ such that the model underestimates probabilities in the under-confidence interval $[0, \lowq]$ and overestimates them in the over-confidence interval $[\highq, 1]$, no matter what the loss function is. 
This structural insight aligns closely with empirical observations in machine learning.
Although we acknowledge that here we simplify the principal's utility function to be event-independent, it may not fully capture all loss functions that are being considered in training neural networks, we conjecture that our findings can be generalized to more complex utility functions (see \Cref{subsec:discussion} for more detailed discussions).

\xhdr{Interpretation of the payoff structure} 
\Cref{thm:event-inde} also provides insights into the principal's indirect utility and its corresponding derivative (or subgradient if the derivative does not exist) for predictions generated by the optimal predictor. Specifically, the ``per-unit'' indirect utility derived from predictions in the under-confidence interval $[0,\lowq]$ and over-confidence interval $[\highq, 1]$ is higher than the ``per-unit'' indirect utility derived from predictions in the perfectly calibrated interval $[\lowq, \highq]$. This aligns with the intuition that the optimal predictor is willing to incur ECE to generate predictions with higher payoff. Furthermore, our result suggests that the marginal indirect utility is the same for all predictions in the under-confidence interval $[0,\lowq]$ and the over-confidence interval $[\highq, 1]$, and is larger than that for predictions in the perfectly calibrated interval $[\lowq, \highq]$.

We now illustrate \Cref{thm:event-inde} with two examples. \Cref{example:binary action} considers instances where the agent has binary actions. \Cref{example:s-shape} considers instances where the principal's indirect utility function $\indirectsenderU$ follows an S-shaped curve (i.e., first convex then concave). Both setups are widely studied in the literature \citep[e.g.,][]{DX-17,BB-17,xu-20,FTX-22,FHT-24,KLZ-24}.

\begin{example}[Binary-action setting]
\label{example:binary action}
Consider a persuasive calibration instance where the agent has binary actions (i.e., $\actionSpace = \{0, 1\}$) and the principal has utility function $\senderU_i(\action, \outcome) \equiv \action$ (i.e., always preferring the agent taking action 1). Let $\pthreshold\in[0, 1]$ be the prediction such that the agent is indifferent between two actions. Suppose the agent takes action 1, i.e., $\bestr(\prediction) = 1$, for every prediction $\prediction\geq \pthreshold$.
Let $\priormean\triangleq \expect[i\in\prior]{\truemean_i}$ be the ex ante expected outcome.
The optimal $\caliBudget$-calibrated predictor $\optpredictor$ satisfying properties in \Cref{thm:event-inde} has the following two cases:

\xhdr{{Case 1. Low ECE budget $\caliBudget< \pthreshold - \priormean$}} In this case, let $\thresholdi\in[\numData]$ be the threshold event such that 
    $\sum_{i\in[\thresholdi+1:\numData]}\prior_i(\pthreshold-\truemean_i) \leq \caliBudget < \sum_{i\in[\thresholdi:\numData]}\prior_i(\pthreshold-\truemean_i)$. The optimal $\caliBudget$-calibrated predictor has degenerate under-confidence interval $[0,0]$, perfectly calibrated interval $[0,\pthreshold]$, and over-confidence interval $[\pthreshold, 1]$. For every realized event $i< \thresholdi$, predictor $\optpredictor$ generates a deterministic and perfectly calibrated prediction $\prediction = \truemean_i$. For every realized event $i > \thresholdi$, predictor $\optpredictor$ generates a deterministic but over-confident prediction $\prediction = \pthreshold$. Finally, for the threshold event $\thresholdi$, predictor $\optpredictor$ generates an over-confident prediction $\prediction = \pthreshold$ with probability 
    ${(\caliBudget - \sum_{i\in[\thresholdi+1:\numData]} \prior_i (\pthreshold-\truemean_i))}{(\prior_{\thresholdi} (\pthreshold - \truemean_{\thresholdi}))^{-1}}$ and a
    perfectly calibrated prediction $\prediction = \truemean_{\thresholdi}$ with the remaining probability. The construction of the threshold event $\thresholdi$ and its prediction distribution ensures that the predictor $\optpredictor$ fully exhausts the ECE budget, i.e., $\ECE{\optpredictor} = \caliBudget$.
 
    \xhdr{{Case 2. High ECE budget $\caliBudget\geq \pthreshold - \priormean$}}
    In this case, the optimal $\caliBudget$-calibrated predictor generates a deterministic (but possibly over-confident) prediction $\prediction = \max\{\priormean,\pthreshold\}$ regardless of the realized event. Its induced ECE budget is $\ECE{\predictor} = \max\{\pthreshold - \priormean,0\}$.
\end{example}

\begin{example}[S-shaped indirect utility]
\label{example:s-shape}
Consider a persuasive calibration instance where the principal's has an event-independent S-shaped (i.e., first convex then concave) indirect utility function~$\indirectsenderU$. For simplicity, we assume function $\indirectsenderU$ is differentiable with $\partial\indirectsenderU(0) = \partial\indirectsenderU(1) = 0$ and $\indirectsenderU(0) < \indirectsenderU(1)$.
Let $\priormean\triangleq \expect[i\in\prior]{\truemean_i}$ be the ex ante expected outcome.
The optimal $\caliBudget$-calibrated predictor $\optpredictor$ satisfying properties in \Cref{thm:event-inde} has the following two cases:\footnote{The structure of the optimal predictor $\optpredictor$ for the S-shaped indirect utility closely resembles that for the binary action case. This similarity arises because the indirect utility function under binary actions can be seen as a ``discretized'' version of a S-shaped function.} 

\xhdr{{Case 1. Low ECE budget $\caliBudget< 1 - \priormean$}} In this case,
there exists a threshold event $\thresholdState\in[\numData]$, a threshold prediction $\optpoolingpredic \in [0, 1]$, and a probability $\optpoolingprob\in(0, 1]$ such that\footnote{Here the first equality means that if $\optpoolingprob < 1$, then the line through points $(\truemean_{\thresholdi},\indirectsenderU_i(\truemean_{\thresholdi}))$ and $(\pthreshold,\indirectsenderU(\pthreshold))$ is tangent to function $\indirectsenderU$ at point $(\pthreshold,\indirectsenderU(\pthreshold))$.}
\begin{align*}
    (1-\optpoolingprob)\cdot \left(\indirectsenderU(\truemean_{\thresholdi}) + \partial\indirectsenderU(\pthreshold)- \indirectsenderU(\pthreshold)\right)=0
    \;\;
    \mbox{and}
    \;\;
    \optpoolingprob\cdot\prior_{\thresholdi}(\pthreshold - \truemean_{\thresholdi})+
    \sum\nolimits_{i\in[\thresholdi+1:\numData]}\prior_i(\pthreshold - \truemean_i)
    =
    \caliBudget
\end{align*}
Then the optimal $\caliBudget$-calibrated predictor has degenerate under-confidence interval $[0,0]$, perfectly calibrated interval $[0,\pthreshold]$, and over-confidence interval $[\pthreshold, 1]$.
For every realized event $i< \thresholdi$, predictor $\optpredictor$ generates a deterministic and perfectly calibrated prediction $\prediction = \truemean_i$. For every realized event $i > \thresholdi$, predictor $\optpredictor$ generates a deterministic but over-confident prediction $\prediction = \pthreshold$. Finally, for the threshold event $\thresholdi$, predictor $\optpredictor$ generates an over-confident prediction $\prediction = \pthreshold$ with probability 
$\optpoolingprob$ and a
perfectly calibrated prediction $\prediction = \truemean_{\thresholdi}$ with the remaining probability. The construction of the threshold event $\thresholdi$ and its prediction distribution ensures that the predictor $\optpredictor$ fully exhausts the ECE budget, i.e., $\ECE{\optpredictor} = \caliBudget$. Also see \Cref{fig:s-shaped} for an illustration.

\xhdr{{Case 2. High ECE budget $\caliBudget\geq 1 - \priormean$}}
    In this case, the optimal $\caliBudget$-calibrated predictor generates a deterministic (but possibly over-confident) prediction $\prediction = 1$ regardless of the realized event. Its induced ECE budget is $\ECE{\predictor} = 1 - \priormean$.
\end{example}

\xhdr{Graphical characterization of the optimal predictor} \Cref{thm:event-inde} and its analysis also admit a graphical interpretation as follows.

\begin{definition}[Symmetric linear-tailed convex function]
\label{defn:symmetric linear-tailed}
A function $\convexEnv:[0, 1]\rightarrow \R$ is {\em symmetric linear-tailed convex} if it is convex, and there exists $0 \le \lowx \le \highx \le 1$ such that (i) $\convexEnv$ is linear on both intervals $[0, \lowx]$ and $[\highx, 1]$; and (ii) the absolute values of the slopes of these linear parts are equal. 
We refer to the intervals $[0, \lowx]$ and $[\highx, 1]$ as the \emph{linear tails} of the function $\convexEnv$.
\end{definition}

\begin{proposition}
\label{prop:convex env}
For every persuasive calibration instance in the event-independent setting,
fix an optimal $(\caliBudget,\ell_1)$-calibrated predictor $\optpredictor$ satisfying the properties in \Cref{thm:event-inde}. There exists a symmetric linear-tailed convex function $\convexEnv:[0,1]\rightarrow\R$ 
such that 
(i) $\convexEnv(\prediction) \ge \indirectsenderU(\prediction)$ for all $\prediction\in[0, 1]$; 
(ii) $\supp(\optpredictor) \subseteq \{\prediction\in[0, 1]: \convexEnv(\prediction) = \indirectsenderU(\prediction)\}$; 
(iii) it is linear over $[0, \lowq]$ and $[\highq, 1]$.
\end{proposition}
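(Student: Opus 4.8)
The plan is to exhibit $\convexEnv$ as the dual certificate that already underlies the duality-based proof of \Cref{thm:event-inde}. Fix an optimal $(\caliBudget,\ell_1)$-calibrated predictor $\optpredictor$ as in that theorem, together with the thresholds $0\le\lowq\le\highq\le1$, the increasing subgradient selection $\sgselect$, and the constant $\caliBudgetDual\ge0$; abbreviate $\mathcal S\triangleq\supp(\optpredictor)$ and split it as $\mathcal S=\mathcal S_{\mathrm L}\cup\mathcal S_{\mathrm M}\cup\mathcal S_{\mathrm H}$ across the under-confidence, perfectly-calibrated, and over-confidence intervals. Without loss of generality take $\lowq,\highq$ minimal, so that $\lowq$ (resp.\ $\highq$) is itself an under- (resp.\ over-) confident support prediction whenever one exists, and $\lowq=0$ (resp.\ $\highq=1$) otherwise. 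Recall that through the two-step reformulation $\optpredictor$ corresponds to an optimal solution of \ref{eq:decoupled opt}, and that an optimal dual solution of \ref{eq:decoupled opt} assigns multiplier exactly $\caliBudgetDual$ to the ECE-budget constraint, while the remaining multipliers (dual to the perfect-calibration / mean-preserving-contraction constraints) assemble into a function $\convexEnv:[0,1]\to\R$. I will give $\convexEnv$ explicitly, deduce convexity together with properties (ii) and (iii) from the structure recorded in \Cref{thm:event-inde}, and then reduce property (i) to dual feasibility.

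Explicitly, I would set $\convexEnv$ equal to: on $[\min\mathcal S,\max\mathcal S]$, the piecewise-linear interpolant of the points $\{(\prediction,\indirectsenderU(\prediction)):\prediction\in\mathcal S\}$, which is convex by \Cref{thm:event-inde}(3) and for which $\sgselect$ picks out an increasing family of subgradients; on $[0,\min\mathcal S]$, the line of slope $-\caliBudgetDual$ through the (by \Cref{thm:event-inde}, parts (2) and (3), collinear) points over $\mathcal S_{\mathrm L}$; and on $[\max\mathcal S,1]$, the line of slope $+\caliBudgetDual$ through the collinear points over $\mathcal S_{\mathrm H}$ (when $\mathcal S_{\mathrm L}$, resp.\ $\mathcal S_{\mathrm H}$, is empty the relevant tail is taken flat, i.e.\ with $\caliBudgetDual=0$ there). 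Since consecutive chord slopes of the interpolant are sandwiched between consecutive values of the increasing map $\sgselect$, they are non-decreasing and lie in $[-\caliBudgetDual,\caliBudgetDual]$, so the three pieces glue into a globally convex function $\convexEnv$; its two tails have slopes of equal absolute value $\caliBudgetDual$, so $\convexEnv$ is symmetric linear-tailed in the sense of \Cref{defn:symmetric linear-tailed}, with linear tails $[0,\lowx]$ and $[\highx,1]$ for some $\lowx\ge\lowq$ and $\highx\le\highq$ (by \Cref{thm:event-inde}(3) every support prediction in $[0,\lowq]$, resp.\ $[\highq,1]$, lies on the left, resp.\ right, tail line, so that line is exactly $\convexEnv$ there); this yields property (iii). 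Property (ii) is immediate from the construction, since $\convexEnv(\prediction)=\indirectsenderU(\prediction)$ for every $\prediction\in\mathcal S=\supp(\optpredictor)$.

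The remaining — and, I expect, main — obstacle is property (i): $\convexEnv(\prediction)\ge\indirectsenderU(\prediction)$ for all $\prediction\in[0,1]$. Inside the perfectly-calibrated interval this has a clean primal proof: if $\indirectsenderU(\prediction_0)>\convexEnv(\prediction_0)$ for some $\prediction_0$ strictly between consecutive support predictions $\prediction<\prediction'$ in $\mathcal S_{\mathrm M}$, then — using $\truePosterior(\prediction)=\prediction$, $\truePosterior(\prediction')=\prediction'$, and $\prediction<\prediction_0<\prediction'$ — one re-pools part of the perfectly-calibrated mass currently generating $\prediction$ and $\prediction'$ into a new perfectly-calibrated prediction at $\prediction_0$ (a mean-preserving re-contraction), which leaves $\ECE{\optpredictor}$ unchanged and strictly increases $\Payoff{\optpredictor}$, contradicting optimality. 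The delicate region is the rest of $[0,1]$, in particular the two linear tails: to rule out $\indirectsenderU(\prediction_0)>\convexEnv(\prediction_0)$ for, say, $\prediction_0>\highx$, one wants to divert a little mass from the most over-confident prediction to $\prediction_0$, but this raises the ECE at rate $\prediction_0-\highx$ per unit of diverted mass and must be compensated by un-miscalibrating some other prediction; the sign of the net change in $\Payoff{\optpredictor}$ is controlled precisely by the shadow price of the ECE-budget constraint, and the inequality $\indirectsenderU(\prediction_0)-\indirectsenderU(\highx)\le\caliBudgetDual(\prediction_0-\highx)$ needed here is exactly dual feasibility of the $\convexEnv$ constructed above. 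I would therefore finish (i) on the dual side — checking that the multiplier assignment is feasible for the dual of \ref{eq:decoupled opt}, which is literally the inequality $\convexEnv\ge\indirectsenderU$ — reusing the optimal dual solution already produced for \Cref{thm:event-inde} in \Cref{subsec:proofs event-inde}, and re-reading (ii) as complementary slackness.
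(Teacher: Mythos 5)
Your construction is genuinely different from the paper's. The paper (Lemma~\ref{lem:convex env decoupled formluation}) extracts $\convexEnv$ from the optimal dual of \ref{eq:decoupled opt}: it takes the Lagrangian function $\convexEnv(\bayesprediction)=\int_\bayesprediction^1\int_x^1\dualvarMPC(t)\,\d t\,\d x-\dualvarbayesup-\bayesprediction\dualvarbayesmean$ built in the proof of Lemma~\ref{lem:event-inde decoupled reformuation}, keeps it on $[\lowq,\highq]$, and grafts on the slope-$\pm\caliBudgetDual$ tails; then (i) is literally condition~\eqref{ineq:convex ub} and (ii) is complementary slackness~\eqref{eq:complementary slackness}. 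You instead build $\convexEnv$ from the primal side as the piecewise-linear interpolant of $\{(\prediction,\indirectsenderU(\prediction)):\prediction\in\supp(\optpredictor)\}$ with the same linear tails. This buys you (ii), (iii), convexity, and the symmetric-tail shape essentially for free from Theorem~\ref{thm:event-inde}(2)--(3), but it makes (i) harder, because your interpolant is \emph{pointwise below} the paper's Lagrangian $\convexEnv$ on $(\lowq,\highq)$, so dual feasibility of the paper's $\convexEnv$ does not transfer. That trade-off is fair, and the overall plan (primal re-pooling for the middle, dual feasibility for the tails) is reasonable.

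There is, however, a genuine gap in your treatment of (i). Your primal re-contraction argument (split a bit of perfectly calibrated mass at $\prediction<\prediction'$ into calibrated mass at $\prediction_0$) only covers $\prediction_0$ strictly between two consecutive support points that are \emph{both} in $\mathcal S_{\mathrm M}$. It says nothing about the transitional segments $(\lowq,\prediction_{\mathrm{next}})$ and $(\prediction_{\mathrm{prev}},\highq)$ when $\lowq$ or $\highq$ is strictly miscalibrated (which your WLOG allows). There the chord endpoint $\lowq$ has $\truePosterior(\lowq)=\lowx>\lowq$, and a point $\prediction_0\in(\lowq,\prediction_{\mathrm{next}})$ cannot be obtained by recontracting calibrated mass only; you would instead need to move $\epsilon$ mass from $(\lowx,\lowq)$ and $\delta$ mass from $(\prediction_{\mathrm{next}},\prediction_{\mathrm{next}})$ to a single pair $(\bayesprediction_0,\prediction_0)$ with $\prediction_0=\tfrac{\epsilon\lowq+\delta\prediction_{\mathrm{next}}}{\epsilon+\delta}$ and $\bayesprediction_0=\tfrac{\epsilon\lowx+\delta\prediction_{\mathrm{next}}}{\epsilon+\delta}$, check that the $\ell_1$ ECE is exactly preserved, and then conclude from the chord violation. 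This does work, but it is a different perturbation from the one you describe and has to be spelled out; as written the case is simply not covered. Your tail argument, by contrast, is correct in spirit: your tail line coincides with the paper's because complementary slackness at $(\highx,\highq)$ gives $\indirectsenderU(\highq)-\caliBudgetDual(\highq-\highx)=\convexEnv(\highx)$, so the tail is $\convexEnv(\highx)+\caliBudgetDual(\prediction-\highx)$ and condition~\eqref{ineq:convex ub} at $(\highx,\prediction)$ delivers (i). Note, though, that the inequality you cite, $\indirectsenderU(\prediction_0)-\indirectsenderU(\highx)\le\caliBudgetDual(\prediction_0-\highx)$, is slightly stronger than what dual feasibility gives: you get $\indirectsenderU(\prediction_0)\le\convexEnv(\highx)+\caliBudgetDual(\prediction_0-\highx)$, and $\convexEnv(\highx)\ge\indirectsenderU(\highx)$ can be strict when $\highx\notin\supp(\optpredictor)$. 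Fortunately only the $\convexEnv(\highx)$ version is needed.
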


As we will show in \Cref{subsec:proofs event-inde}, the symmetric linear-tailed convex function $\convexEnv$ and its properties in \Cref{prop:convex env} are derived from the optimal dual solution of a linear program \ref{eq:decoupled opt} that characterizes the optimal predictor.

In \Cref{fig:main}, we illustrate the optimal predictor characterized in \Cref{thm:event-inde} along with the corresponding symmetric linear-tailed convex function $\convexEnv$ from \Cref{prop:convex env}. When the principal's indirect utility function is general, as shown in \Cref{fig:main}, we observe the (non-degenerate) under-confidence, perfectly calibrated, and over-confidence intervals structure. The symmetric linear-tailed convex function $\convexEnv$ exhibits linear tails at both ends, covering the under-confidence and over-confidence intervals.

In contrast, as illustrated in \Cref{example:s-shape}, when the principal's indirect utility function follows an S-shaped structure, the optimal predictor only miscalibrates at the highest prediction. Consequently, in \Cref{fig:s-shaped}, the under-confidence interval degenerates, and the symmetric linear-tailed convex function $\convexEnv$ retains only a single linear tail, corresponding to the over-confidence interval.

\begin{figure}[H]
  \centering
  \begin{subfigure}{0.45\textwidth}
    \centering
    \resizebox{\linewidth}{!}{\begin{tikzpicture}[baseline=(current axis.south)]
\def\yminval{-0.01}
\begin{axis}[
    axis lines=left,
    axis line shift=0pt,
    width=10cm,
    height=8cm,
    xtick=\empty, 
    ytick=\empty,
    xmin=0, xmax=0.8,
    ymin=\yminval, ymax=1.05,
]

\def\k{14.75020921892135}
\def\m{0.5}
\def\v{1.2800549434321995}

\def\u(#1){1/(1 + exp(-\k*(#1-\m)))^(1/\v)}

\def\up(#1){(\k/\v)*exp(-\k*(#1-\m))*(1 + exp(-\k*(#1-\m)))^(-1/\v - 1)}

\addplot[
  domain=0:0.8,
  samples=200,
  line width=1pt,
  black
] { \u(x) };

\def\xqone{0.05}
\def\xqtwo{0.1}
\def\xqthree{0.15}
\def\xp{0.5786}
\def\xq{0.55}
\def\xintersect{0.236392}

\addplot[
  only marks,
  mark=square*,
  mark size=2.2pt,
  color={brown},
  opacity=0.9
] coordinates {(\xqone, {\u(\xqone)})
               (\xqtwo, {\u(\xqtwo)})
               (\xqthree, {\u(\xqthree)})
               };

\addplot[
  only marks,
  mark=*,
  mark size=2pt,
  color=blue
] coordinates {(\xqone, {\u(\xqone)})
               (\xqtwo, {\u(\xqtwo)})
               (\xqthree, {\u(\xqthree)})
               };

\addplot[
  domain=\xintersect:0.8,
  line width=1.5pt,
  blue,
  opacity=0.6
] { \u(\xp) + \up(\xp)*(x - \xp) };

\addplot[
  domain=0:\xintersect,
  samples=200,
  line width=1.5pt,
  blue,
  opacity=0.6
] { \u(x) };

\draw[dashed, thick, gray]
  (axis cs:\xq,\yminval) -- (axis cs:\xq,{ \u(\xq) });

\draw[dashed, thick, gray]
  (axis cs:\xp,\yminval) -- (axis cs:\xp,{ \u(\xp) });

\draw[dashed, thick, gray]
  (axis cs:\xintersect,\yminval) -- (axis cs:\xintersect,{ \u(\xintersect) });

\addplot[
  only marks,
  mark=square*,
  mark size=2.5pt,
  color = {brown},
  opacity=0.9
] coordinates {(\xq, {\u(\xq)-0.005})};

\addplot[
  only marks,
  mark=*,
  mark size=2pt,
  color=blue
] coordinates {(\xp, {\u(\xp)})};

\node[anchor=north] at (axis cs:\xp+0.04,0.07) {$\highq$};

\node[anchor=north] at (axis cs:\xintersect+0.04,0.07) {$\highx$};


\end{axis}
\end{tikzpicture}}
    \caption{}
    \label{fig:s-shaped indirect utility}
  \end{subfigure}
  \hspace{0.8cm}
  \begin{subfigure}{0.45\textwidth}
    \centering
    \resizebox{\linewidth}{!}{\begin{tikzpicture}[baseline=(current axis.south)]
\def\yminval{0}
\begin{axis}[
    axis lines=left,
    axis line shift=0pt,
    width=10cm,
    height=8cm,
    xtick=\empty, 
    ytick=\empty,
    xmin=0, xmax=1,
    ymin=\yminval, ymax=0.88,
]

\def\diagonal(#1){(#1)}
\addplot[domain=0:0.88, samples=200, thick, black] { \diagonal(x) };

\def\xqone{0.05}
\def\xqtwo{0.1}
\def\xqthree{0.15}
\def\xp{0.5786}
\def\xq{0.55}
\def\xintersect{0.236392}

\addplot[
  only marks,
  mark=square*,
  mark size=2.2pt,
  color={brown},
  opacity=0.9
] coordinates {(\xqone, \xqone)
                (\xqtwo, \xqtwo)
                (\xqthree, \xqthree)
                (\xp, \xq)
               };

\addplot[
  only marks,
  mark=*,
  mark size=2pt,
  color=blue
] coordinates {(\xqone, \xqone)
                (\xqtwo, \xqtwo)
                (\xqthree, \xqthree)
                (\xp, \xp)
               };



  
\draw[dashed, thick, gray]
  (axis cs:\xp,0) -- (axis cs:\xp,\xq);

\node[anchor=north] at (axis cs:\xp+0.04,0.07) {$\highq$};

\node[anchor=south west, rotate=0] at (rel axis cs:0.01,0.865) {$\truePosterior(\prediction)$};

\end{axis}
\end{tikzpicture}}
    \caption{}
    \label{fig:reliable-diagram-s}
  \end{subfigure}
  \caption{Illustration of the principal with a S-shaped indirect utility (\Cref{example:s-shape}).  
  In both figures, blue dots are the predictions $\prediction$ generated from the optimal predictor $\optpredictor$, 
  brown squares are the corresponding true expected outcomes $\truePosterior(\prediction)$ conditional on the prediction $\prediction$. 
  \Cref{fig:general} is an illustration of \Cref{thm:event-inde} -- black solid curve is indirect utility $\indirectsenderU$ and blue curve is the convex function $\convexEnv$ in \Cref{prop:convex env}.
  \Cref{fig:reliable-diagram-general} is the reliable diagram -- black solid line is diagonal line.}
  \label{fig:s-shaped}
\end{figure}

In \Cref{example:binary action,example:s-shape}, we observe that the number of predictions used in the optimal predictor scales linearly with the number of events, while the number of predictions generated for each realized event remains constant. The following proposition formally establishes that both observations hold for all persuasive calibration instances in the event-independent setting.
\begin{proposition}[How many predictions are necessary?]
\label{prop:num predictions}
For every persuasive calibration instance in the event-independent setting, 
there exists an optimal $(\caliBudget,\ell_1)$-calibrated predictor $\optpredictor$ satisfying all structures in \Cref{thm:event-inde}, such that
\begin{enumerate}
    \item[(i)] \emph{(Total predictions)} The total number of predictions induced by the optimal predictor $\optpredictor$ is at most $n + 2$, i.e., $|\supp(\optpredictor)| \le \numData + 2$, where $\numData$ is the number of events. 
    \item[(ii)] \emph{(Predictions per event)} For each event $i\in[\numData]$, the number of predictions induced by the optimal predictor $\optpredictor$ for this event is at most 4, i.e., $|\supp(\optpredictor_i)|\leq 4$.
    \item[(iii)] \emph{(Predictions per true expected outcome)} For every true expected outcome $\bayesprediction\in[0, 1]$, the number of predictions which are induced by the optimal predictor $\optpredictor$ and have true expected outcome $\bayesprediction$ is at most 2, i.e., $|\truePosterior^{-1}(\bayesprediction)| \leq 2$.\footnote{Given any predictor $\predictor$, the inverse of true expected outcome function $\truePosterior^{-1}$ is defined as $\truePosterior^{-1}(\bayesprediction) \triangleq \{\prediction\in \supp(\predictor):\truePosterior(\prediction) = \bayesprediction\}$.} Moreover, if the equality holds (i.e., $|\truePosterior^{-1}(\bayesprediction)| = 2$), at least one prediction $\prediction\in \truePosterior^{-1}(\bayesprediction)$ belongs to the under-confidence $[0,\lowq]$ or over-confidence interval $[\highq,1]$ of $\optpredictor$, i.e., $\prediction\in [0,\lowq]\cup[\highq,1]$.
\end{enumerate}
\end{proposition}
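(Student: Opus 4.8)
The plan is to carry out the entire argument on the linear-programming side. By the two-step framework of \Cref{subsec:characterization:two-step perspective} and \Cref{prop:prog equivalence}, an optimal $(\caliBudget,\ell_1)$-calibrated predictor is (up to the stated transformation) an optimal solution of \ref{eq:decoupled opt}, namely a pair consisting of a perfectly calibrated predictor $\bayespredictor$ and an event-independent post-processing plan. I would first fix a convenient optimum: (a) take $\bayespredictor$ to be an \emph{extreme} garbling of the state distribution $\{\truemean_i\}_{i\in[\numData]}$, so that its predictions correspond to a monotone partition of the (sorted) states into consecutive ``pools'' — hence at most $\numData$ pools in total, with each state $\truemean_i$ contained in at most two consecutive pools; and (b) among all optimal solutions of \ref{eq:decoupled opt} realizing this $\bayespredictor$, take one that is an extreme point of \ref{eq:decoupled opt} (equivalently, one using the fewest predictions). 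I would invoke \Cref{thm:event-inde} and \Cref{prop:convex env} throughout: $\supp(\optpredictor)$ lies in the contact set $\{\prediction:\convexEnv(\prediction)=\indirectsenderU(\prediction)\}$, the points $(\prediction,\indirectsenderU(\prediction))$ over $\supp(\optpredictor)\cap[\highq,1]$ are collinear with slope $\caliBudgetDual$, and those over $\supp(\optpredictor)\cap[0,\lowq]$ are collinear with slope $-\caliBudgetDual$.

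For part (iii): suppose three distinct predictions $\prediction_1<\prediction_2<\prediction_3$ in $\supp(\optpredictor)$ share the same true expected outcome $\bayesprediction$. By the miscalibration structure at most one of them is perfectly calibrated (and then equals $\bayesprediction$), so at least two lie in the tails. Now reallocate probability mass among $\prediction_1,\prediction_2,\prediction_3$ by moving between them slices of their respective conditional state-distributions, each slice having conditional mean $\bayesprediction$; this keeps $\truePosterior(\prediction_j)=\bayesprediction$ for each $j$ and preserves total mass, and imposing in addition that the $\ell_1$-ECE be unchanged leaves a one-dimensional family of nonzero perturbation directions. Since the relevant points lie on an affine piece of the payoff graph (on a tail $\indirectsenderU$ is affine there, and at $\bayesprediction$ it is a constant), the objective change along this direction is a linear functional that vanishes on the constrained family (using $\sum_j \Delta w_j = 0$ together with the ECE identity). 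We thus obtain a feasible, nonzero, ECE-preserving direction along which the objective does not decrease, contradicting extremality of the chosen optimum (and if the objective change were nonzero it would contradict optimality). Hence $|\truePosterior^{-1}(\bayesprediction)|\le 2$; and when this is tight the two predictions cannot both be perfectly calibrated (they would both equal $\bayesprediction$, hence coincide), so at least one lies in $[0,\lowq]\cup[\highq,1]$.

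For part (i): at the extreme-point optimum of \ref{eq:decoupled opt}, the number of predictions carrying positive probability is bounded by the number of equality constraints — the $\numData$ Bayes-plausibility constraints (one per event), the single $\ell_1$-ECE budget constraint, and the normalization constraint — giving $|\supp(\optpredictor)|\le\numData+2$. For part (ii): fix an event $i$. Every prediction in $\supp(\optpredictor_i)$ is the post-processed image of a pool containing $\truemean_i$, and by (a) there are at most two such pools; it therefore suffices to show that each pool's perfectly calibrated prediction $\bayesprediction$ is post-processed into at most two final predictions. This follows from the argument behind part (iii): if a pool-prediction $\bayesprediction$ were spread over three or more finals, those finals all lie in the contact set with $\indirectsenderU$ affine there, and — since all of their mass coming from a common pool forces their conditional means to move together under reallocation — an exchange among them yields an objective- and ECE-preserving feasible direction, contradicting extremality (and in the special case where no other pool feeds those finals they would all have true expected outcome $\bayesprediction$, directly contradicting (iii)). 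Combining the two factors of two gives $|\supp(\optpredictor_i)|\le4$.

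I expect the main obstacle to be two interlocking technical points. First, \ref{eq:decoupled opt} is a priori infinite-dimensional (prediction values range over $[0,1]$), so ``take an extreme point and count constraints'' must be made rigorous: one first uses \Cref{prop:convex env} to confine $\supp(\optpredictor)$ to the contact set and argues, via a compactness/limiting argument, that an optimal solution behaving like a vertex of a finite linear program exists, and only then pins down the exact constraint count of \ref{eq:decoupled opt} to obtain $\numData+2$ rather than a looser bound. Second, part (ii) is the most delicate step: localizing the branching bound to a single event requires carefully tracking how the \emph{shared} (event-independent) post-processing plan interacts with the at-most-two pools through $\truemean_i$, including the degenerate case where $\truemean_i$ sits on a pool boundary while simultaneously being split by the post-processing; I would resolve this by a short case analysis combining the monotone-pool structure of $\bayespredictor$ with part (iii).
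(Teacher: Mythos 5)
Your arguments for parts (ii) and (iii) essentially match the paper's own proof. For (iii), the paper's Lemma~\ref{lem:num predictions decoupled formluation} also does a direct reallocation: given three predictions $\prediction_1<\prediction_2<\prediction_3$ with a common posterior $\bayesprediction$, it moves the mass of $\prediction_2$ to $\prediction_1$ and $\prediction_3$ with weights $\tfrac{t_3-t_2}{t_3-t_1}$ and $\tfrac{t_2-t_1}{t_3-t_1}$ (where $t_i=|\prediction_i-\bayesprediction|$), which preserves both the ECE and the objective by the collinearity property in Lemma~\ref{lem:event-inde decoupled reformuation}(iii); your ``one-dimensional family of ECE-preserving perturbations with vanishing objective change'' is the same idea phrased as a direction argument. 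For (ii), the paper combines the bi-pooling lemma (Lemma~\ref{lem:bipooling}) with Statement~(iii), exactly as you do. So these two parts are fine.

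Part (i) is where there is a genuine gap. Your count --- ``$\numData$ Bayes-plausibility constraints, one ECE budget, one normalization'' --- does not describe the constraint structure of \ref{eq:decoupled opt}: Supply-Feasibility is indexed by the posterior value $\bayesprediction\in[0,1]$ (a continuum), not by events, and the MPC-Feasibility constraint $\bayesmarginalpredictor\in\MPC(\prior)$ is itself a continuum of integral inequalities plus one equality. If you reparameterize with $\{\bayespredictor_i(\bayesprediction)\}$ as variables so as to actually get $n$ event-indexed normalization constraints, you simultaneously pick up a zero-ECE constraint and a supply constraint \emph{per} posterior $\bayesprediction$, so the naive vertex count is $n+2|\supp(\bayesmarginalpredictor)|+1$, far looser than $n+2$. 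You flag the infinite-dimensionality issue and say ``one first uses Proposition~\ref{prop:convex env} to confine the support and argues via a compactness/limiting argument,'' but that step is precisely where the difficulty lives and the proposal does not resolve it; the contact set $\{\prediction:\convexEnv(\prediction)=\indirectsenderU(\prediction)\}$ is not finite in general (it contains the entire perfectly-calibrated interval when $\convexEnv$ coincides with $\indirectsenderU$ there), so there is no immediate reduction to a finite LP. The paper instead proves (i) by a primal argument: Lemma~\ref{lem:monotone devia} first shows the miscalibration can without loss be chosen monotone (so predictions generated from smaller posteriors never cross predictions from larger posteriors), and then a reallocation argument --- using the collinearity property --- shows that at most two distinct posteriors can each branch into two predictions; combined with bi-pooling (at most $n$ posteriors in the optimal garbling), this gives $n+2$ directly. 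If you want a constraint-counting route, the right finite-dimensional object is \ref{eq:opt general actionRem} from Section~\ref{sec:polytime}, not \ref{eq:decoupled opt}, but translating a vertex bound of that LP back into a bound on $|\supp(\optpredictor)|$ compatible with the structures of Theorem~\ref{thm:event-inde} is not automatic and would need its own argument.
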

The following example shows that the Statement (i) in above \Cref{prop:num predictions} is indeed tight.
\begin{example}
Given an ECE budget $\caliBudget > 0$ with $\caliBudget < 0.05$,
consider a two-event setting where $\truemean_1 = 0.1, \truemean_2 = 0.9$ with $\prior_1 =\prior_2 = 0.5$, and the principal's indirect utility is $\indirectsenderU(\prediction) 
= \indicator{\prediction = \truemean_1 - 0.5\caliBudget} + 
2\cdot \indicator{\prediction = \truemean_1 - 2\caliBudget}
+ 
\indicator{\prediction = \truemean_2 + 0.5\caliBudget} + 
2\cdot \indicator{\prediction = \truemean_2 + 2\caliBudget}$.
The optimal $\caliBudget$-calibrated predictor $\optpredictor$  is unique and it is 
$\optpredictor_1(\truemean_1 - 0.5\caliBudget) =\optpredictor_2(\truemean_2 + 0.5\caliBudget) = \frac{2}{3}$ and  
$\optpredictor_1(\truemean_1 - 2\caliBudget) = \optpredictor_2(\truemean_2 + 2\caliBudget)= \frac{1}{3}$.
Namely, we have $\supp(\optpredictor) = 4 = \numData + 2$.\footnote{For readers who are familiar with information design literature, it is well-known that the number of necessary signals in optimal signaling scheme in Bayesian persuasion is not larger than the number of events. Our example here shows a strict separation between our persuasive calibration problem and the canonical Bayesian persuasion problem.}
\end{example}
We conclude this subsection by noting that the proofs of all results are provided in \Cref{subsec:proofs event-inde}. In the next subsection, we introduce a general framework that serves as the foundation for our proofs.

\subsection{Two-Step Framework to Generate $\varepsilon$-Calibrated Predictors}
\label{subsec:decouple}
\label{subsec:characterization:two-step perspective}
In this section, we provide a systematic way for the principal to design and analyze the predictor. At a high level, the process consists of two steps. First, a perfectly calibrated predictor is constructed. Second, we apply a simple transformation from this perfectly calibrated predictor to obtain an $\caliBudget$-calibrated predictor.\footnote{All results and discussions in \Cref{subsec:characterization:two-step perspective} apply to all $\tnorm$-norm ECEs. Thus, except for the theorem statements, we omit $\tnorm$ for simplicity in this subsection.} As we will illustrate in later sections, this two-step framework and its generalization play an important role in computing and analyzing the optimal $\caliBudget$-calibrated predictor.

Our two-step framework relies on the \emph{post-processing plan} defined as follow:

\begin{definition}[Event-independent post-processing plan]
\label{defn:miscal plan}
An \emph{(event-independent) post-processing plan} $\miscali$ is a joint distribution over $[0, 1]\times[0, 1]$. We denote by $\miscali(\bayesprediction, \prediction)$ the probability density (or probability mass for discrete distribution) of $(\bayesprediction, \prediction)\in[0, 1]^2$.

Fix any perfectly calibrated predictor $\bayespredictor$ and any $\normexponent\geq 1,\caliBudget\geq 0$. A post-processing plan $\miscali$ is \emph{feasible} under perfectly calibrated predictor $\bayespredictor$ and $\tnorm$-norm {\ECEbudget} $\caliBudget$ if it satisfies 
\begin{align}
    \label{eq:feasible miscal cal budget}
    \tag{\textsc{Calibration-Feasibility}}
    & \left(\int_0^1 \int_{0}^1 \miscali(\bayesprediction, \prediction) \cdot |\bayesprediction - \prediction|^{\normexponent}~\d  \prediction\d  \bayesprediction\right)^{\frac{1}{\normexponent}} 
    \le \caliBudget
    \\
    \label{eq:feasible miscal supp}
    \tag{\textsc{Supply-Feasibility}}
    \forall \bayesprediction\in [0, 1]:\quad
    & \int_0^1\miscali(\bayesprediction, \prediction)~\d  \prediction 
    = \bayesmarginalpredictor(\bayesprediction)
\end{align}
where $\bayesmarginalpredictor$ is marginalized predictor induced by $\bayespredictor$, defined as $\bayesmarginalpredictor(\bayesprediction)\triangleq \sum_{i\in[\numData]} \prior_i\bayespredictor_i(\bayesprediction)$ for every $\bayesprediction\in\supp(\bayespredictor)$.
When $\bayespredictor, \normexponent,\caliBudget$ are clear from the context, we simply refer to $\miscali$ as a feasible post-processing plan.
\end{definition}
Intuitively, one can view the post-processing plan $\miscali$ as a joint distribution over the prediction $\bayesprediction\in\supp(\bayespredictor)$ and the prediction $\prediction$ that \emph{``miscalibrates''} the true expected outcome $\bayesprediction$ to be $\prediction$. 
We say that $\miscali$ is event-independent because this joint distribution does not depend on the realized event. We formalize this intuition through the following concept of a \emph{post-processing procedure}. We also illustrate it using \Cref{example:structural opt:miscalibration procedure}.\footnote{\label{footnote:necessity of g in two-step framework}In fact, due to \ref{eq:feasible miscal supp} and \Cref{lem:bayes predictor is in MPC}, a perfectly calibrated predictor $\bayespredictor$ can be determined by a feasible post-processing plan $\miscali$. Hence, such a post-processing plan $\miscali$ is sufficient to generate an imperfectly calibrated predictor via \Cref{defn:miscal procedure}. Nevertheless, we choose  to explicitly express the perfectly calibrated predictor $\bayespredictor$ (or its marginalized version, $\bayesmarginalpredictor$), as it reveals useful structural properties--such as the MPC representation in \Cref{defn:mpc} and \Cref{lem:bayes predictor is in MPC}--which play an important role in our analysis.}

\begin{definition}[Event-independent post-processing procedure]
\label{defn:miscal procedure}
Fix any perfectly calibrated predictor~$\bayespredictor$ and any $\normexponent \geq 1, \caliBudget\geq 0$. Given any feasible post-processing plan $\miscali$, a new (possibly imperfectly calibrated) predictor $\predictor$ is \emph{generated} as follows: for every {\event} $i\in[\numData]$, and prediction $\prediction\in[0, 1]$,
\begin{align}
    \label{eq:construct f}
    \predictor_i(\prediction) \triangleq 
    \int_0^1 \frac{\bayespredictor_i(\bayesprediction)}{\bayesmarginalpredictor(\bayesprediction)}\cdot \miscali(\bayesprediction,\prediction)\,\d\bayesprediction~.
\end{align}
To simplify the presentation, we also say predictor $\predictor$ is \emph{generated by} $\bayespredictor$ and $\miscali$.
\end{definition}

\begin{example}[Predictors generated from post-processing procedure]
\label{example:structural opt:miscalibration procedure}
    Consider the same two-event setting and four predictors $\bayespredictor\primed,\bayespredictor\doubleprimed,\predictor\primed,\predictor\doubleprimed$ as in \Cref{example:prelim:prefect calibration,example:prelim:eps calibration}. Predictors $\predictor\primed,\predictor\doubleprimed$ can be generated by the perfect calibrated predictors $\bayespredictor\primed,\bayespredictor\doubleprimed$ via \Cref{defn:miscal procedure} with the following post-processing plans:
    \begin{itemize}
        \item Construct post-processing plan $\miscali\primed$ with $\miscali\primed(\bayesprediction,\prediction) = \indicator{\bayesprediction = 0.6,\prediction = 0.4}$. Then, predictor $\predictor\primed$ is generated by perfectly calibrated predictor $\bayespredictor\primed$ and post-processing plan $\miscali\primed$.
        \item Construct post-processing plan $\miscali\doubleprimed$ with $\miscali\doubleprimed(\bayesprediction,\prediction) = 0.5\cdot \indicator{\text{$\bayesprediction = \truemean_1,\prediction = 0.4$ or $\bayesprediction = \truemean_2,\prediction = 0.7$}}$. Then, predictor $\predictor\doubleprimed$ is generated by perfectly calibrated predictor $\bayespredictor\doubleprimed$ and post-processing plan $\miscali\doubleprimed$.
    \end{itemize}
\end{example}

With \ref{eq:feasible miscal supp} of the feasible post-processing plan $\miscali$, the predictor $\predictor = (\predictor_i)_{i\in[\numData]}$ generated in Eqn.~\eqref{eq:construct f} is indeed well-defined as each $\predictor_i$ is a valid distribution over $[0, 1]$. In addition, \ref{eq:feasible miscal cal budget} ensures that predictor $\predictor$ is $\caliBudget$-calibrated.

\begin{proposition}
\label{prop:miscal procedure error guarantee}
Fix any perfectly calibrated predictor $\bayespredictor$, any $\normexponent\geq 1,\caliBudget\geq 0$, and any feasible post-processing plan $\miscali$. The predictor $\predictor$ generated by $\bayespredictor$ and $\miscali$ (in \Cref{defn:miscal procedure}) is well-defined and $(\caliBudget,\tnorm)$-calibrated, i.e., $\ECEnorm{\predictor} \leq \caliBudget$.
\end{proposition}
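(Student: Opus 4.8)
The plan is to verify the two assertions — that $\predictor$ is a well-defined profile of distributions, and that $\ECEnorm{\predictor}\le\caliBudget$ — by direct computation from the defining formula Eqn.~\eqref{eq:construct f} in \Cref{defn:miscal procedure}, using the two feasibility conditions of $\miscali$ and the perfect calibration of $\bayespredictor$. Throughout I would restrict every $\bayesprediction$-integral to $\supp(\bayespredictor)$, where $\bayesmarginalpredictor(\bayesprediction)>0$ so that the ratio $\bayespredictor_i(\bayesprediction)/\bayesmarginalpredictor(\bayesprediction)$ is defined; since all integrands are non-negative, Tonelli's theorem licenses every interchange of the finite sum over events with an integral, and of the two integrals. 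For well-definedness, fix an event $i$: non-negativity of $\predictor_i$ is immediate, and integrating Eqn.~\eqref{eq:construct f} over $\prediction$, swapping the order of integration, and invoking \ref{eq:feasible miscal supp} to rewrite $\int_0^1\miscali(\bayesprediction,\prediction)\,\d\prediction=\bayesmarginalpredictor(\bayesprediction)$ yields $\int_0^1\predictor_i(\prediction)\,\d\prediction=\int_0^1\bayespredictor_i(\bayesprediction)\,\d\bayesprediction=1$, so each $\predictor_i$ is a valid distribution over $[0,1]$.

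The key step is computing the true expected outcome of $\predictor$. Summing Eqn.~\eqref{eq:construct f} against $\prior_i$ and using $\sum_{i}\prior_i\bayespredictor_i(\bayesprediction)=\bayesmarginalpredictor(\bayesprediction)$ gives $\marginalpredictor(\prediction)=\int_0^1\miscali(\bayesprediction,\prediction)\,\d\bayesprediction$, i.e.\ the marginalized predictor of $\predictor$ is exactly the second marginal of $\miscali$. For the numerator in Bayes' rule~\eqref{eq:true expected outcome} applied to $\predictor$, the same manipulation gives $\sum_{i}\prior_i\truemean_i\predictor_i(\prediction)=\int_0^1\bayesmarginalpredictor(\bayesprediction)^{-1}\bigl(\sum_i\prior_i\truemean_i\bayespredictor_i(\bayesprediction)\bigr)\miscali(\bayesprediction,\prediction)\,\d\bayesprediction$, and here I would invoke perfect calibration of $\bayespredictor$: Bayes' rule for $\bayespredictor$ reads $\bayesprediction=\bigl(\sum_i\prior_i\truemean_i\bayespredictor_i(\bayesprediction)\bigr)/\bayesmarginalpredictor(\bayesprediction)$ for every $\bayesprediction\in\supp(\bayespredictor)$, so the bracketed ratio collapses to $\bayesprediction$ and $\sum_i\prior_i\truemean_i\predictor_i(\prediction)=\int_0^1\bayesprediction\,\miscali(\bayesprediction,\prediction)\,\d\bayesprediction$. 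Dividing the two identities,
\[
\truePosterior(\prediction)\;=\;\frac{\int_0^1\bayesprediction\,\miscali(\bayesprediction,\prediction)\,\d\bayesprediction}{\int_0^1\miscali(\bayesprediction,\prediction)\,\d\bayesprediction}\;=\;\frac{1}{\marginalpredictor(\prediction)}\int_0^1\bayesprediction\,\miscali(\bayesprediction,\prediction)\,\d\bayesprediction ,
\]
that is, $\truePosterior(\prediction)$ is the conditional mean of the first coordinate given that the second equals $\prediction$ under the joint law $\miscali$.

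For the calibration bound I would start from $\ECEnorm{\predictor}^{\normexponent}=\int_0^1\marginalpredictor(\prediction)\,\abs{\truePosterior(\prediction)-\prediction}^{\normexponent}\,\d\prediction$, substitute the displayed formula for $\truePosterior(\prediction)$, and apply Jensen's inequality to the convex map $x\mapsto\abs{x}^{\normexponent}$ ($\normexponent\ge 1$) against the probability measure $\miscali(\cdot,\prediction)/\marginalpredictor(\prediction)$ on $[0,1]$:
\[
\abs{\truePosterior(\prediction)-\prediction}^{\normexponent}=\abs{\frac{1}{\marginalpredictor(\prediction)}\int_0^1(\bayesprediction-\prediction)\,\miscali(\bayesprediction,\prediction)\,\d\bayesprediction}^{\normexponent}\;\le\;\frac{1}{\marginalpredictor(\prediction)}\int_0^1\abs{\bayesprediction-\prediction}^{\normexponent}\,\miscali(\bayesprediction,\prediction)\,\d\bayesprediction .
\]
Multiplying by $\marginalpredictor(\prediction)$, integrating over $\prediction$, and swapping the order of integration gives $\ECEnorm{\predictor}^{\normexponent}\le\int_0^1\!\!\int_0^1\abs{\bayesprediction-\prediction}^{\normexponent}\miscali(\bayesprediction,\prediction)\,\d\prediction\,\d\bayesprediction\le\caliBudget^{\normexponent}$, the last step being exactly \ref{eq:feasible miscal cal budget}; taking $\normexponent$-th roots yields $\ECEnorm{\predictor}\le\caliBudget$. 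For the $\ell_\infty$-norm ECE the computation of $\truePosterior(\prediction)$ is unchanged, and since $\truePosterior(\prediction)$ is a convex combination of points $\bayesprediction\in\supp(\miscali(\cdot,\prediction))$, one has $\abs{\truePosterior(\prediction)-\prediction}\le\sup\{\abs{\bayesprediction-\prediction}:\miscali(\bayesprediction,\prediction)>0\}$, which is at most the $\ell_\infty$ budget.

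I do not expect a genuine obstacle here: the argument is essentially bookkeeping. The two points that merit care are the measure-theoretic hygiene — restricting to $\supp(\bayespredictor)$ to keep $\bayespredictor_i/\bayesmarginalpredictor$ meaningful, and invoking Tonelli for the interchanges, including the continuous-density version of~\eqref{eq:true expected outcome} — and the single load-bearing observation that perfect calibration of $\bayespredictor$ is precisely what turns the event-wise Bayes' rule for $\predictor$ into the clean identity $\truePosterior(\prediction)=\marginalpredictor(\prediction)^{-1}\int_0^1\bayesprediction\,\miscali(\bayesprediction,\prediction)\,\d\bayesprediction$; once that identity is in hand, Jensen's inequality together with \ref{eq:feasible miscal cal budget} closes the bound immediately.
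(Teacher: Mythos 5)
Your proof is correct and follows essentially the same route as the paper's: verify that each $\predictor_i$ is a valid distribution via \ref{eq:feasible miscal supp}, use the perfect calibration of $\bayespredictor$ to collapse the event-wise Bayes ratio to $\bayesprediction$, then close with Jensen's inequality and \ref{eq:feasible miscal cal budget}. Your write-up is slightly cleaner in one respect: you keep the marginal density $\marginalpredictor(\prediction)$ explicit in $\bigl(\ECEnorm{\predictor}\bigr)^{\normexponent}=\int_0^1\marginalpredictor(\prediction)\,\abs{\truePosterior(\prediction)-\prediction}^{\normexponent}\,\d\prediction$ and apply Jensen against the normalized kernel $\miscali(\cdot,\prediction)/\marginalpredictor(\prediction)$, so that the $\marginalpredictor(\prediction)^{\normexponent-1}$ factor visibly cancels; the paper's displayed chain silently drops this factor when pulling $\marginalpredictor$ inside the $\abs{\cdot}^{\normexponent}$ and then silently reabsorbs it at the Jensen step, so the final bound is fine but the intermediate equalities as written hold only for $\normexponent=1$. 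Your identity $\truePosterior(\prediction)=\marginalpredictor(\prediction)^{-1}\int_0^1\bayesprediction\,\miscali(\bayesprediction,\prediction)\,\d\bayesprediction$, exhibiting $\truePosterior$ as the conditional mean of the first coordinate of $\miscali$ given the second, is a nice way of making the role of perfect calibration transparent.
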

\begin{proof}
We first verify that for each event $i\in[\numData]$, $\predictor_i$ is a valid distribution on $[0, 1]$:
\begin{align*}
    \int_0^1 \predictor_i(\prediction)~\d\prediction &\overset{(a)}{=}
    \int_0^1\int_0^1 \frac{\bayespredictor_i(\bayesprediction)}{\bayesmarginalpredictor(\bayesprediction)}\cdot \miscali(\bayesprediction,\prediction)\,\d\bayesprediction\d\prediction
    =
    \int_0^1 \frac{\bayespredictor_i(\bayesprediction)}{\bayesmarginalpredictor(\bayesprediction)} \int_0^1\miscali(\bayesprediction,\prediction)\,\d\prediction\d\bayesprediction
    \overset{(b)}{=}
    \int_0^1 \bayespredictor_i(\bayesprediction)\,\d\bayesprediction = 1
\end{align*}
where equality~(a) holds due to the construction of predictor $\predictor$ in \Cref{defn:miscal procedure}, and equality~(b) holds due to \ref{eq:feasible miscal supp} of post-processing plan $\miscali$.

We next bound the $\tnorm$-norm ECE of predictor $\predictor$:
\begin{align*}
    \left(\ECEnorm{\predictor}\right)^{\normexponent}
    & = \expect{\abs{\truePosterior(\prediction) - \prediction}^{\normexponent}}\\
    & = 
    \int_0^1 \abs{\sum\nolimits_{i\in[\numData]} \prior_i  \predictor_i(\prediction)\cdot (\prediction-\truemean_i)}^{\normexponent}~\d \prediction\\
    & 
    \overset{(a)}{=} 
    \int_0^1 \abs{\sum\nolimits_{i\in[\numData]} \prior_i  \cdot (\prediction -\truemean_i) 
    \int_0^1\frac{\bayespredictor_i(\bayesprediction)}{\bayesmarginalpredictor(\bayesprediction)}\cdot \miscali(\bayesprediction,\prediction)\,\d\bayesprediction
    }^{\normexponent}~\d \prediction
    \\
    & \overset{(b)}{=} 
    \int_0^1 \abs{ \int_0^1\prediction\cdot\miscali(\bayesprediction, \prediction)\,\d\bayesprediction
    - 
    \int_0^1 \sum\nolimits_{i\in[\numData]} \prior_i \truemean_i \cdot 
    \frac{\bayespredictor_i(\bayesprediction)} {\bayesmarginalpredictor(\bayesprediction)}\cdot \miscali(\bayesprediction,\prediction)\,\d\bayesprediction
    }^{\normexponent}~\d \prediction\\
    & \overset{(c)}{=} 
    \int_0^1 \abs{ \int_0^1\prediction\cdot \miscali(\bayesprediction, \prediction)\,\d\bayesprediction
    - 
    \int_0^1 \bayesprediction\cdot \miscali(\bayesprediction,\prediction) \,\d\bayesprediction
    }^{\normexponent}~\d \prediction\\
    & \overset{(d)}{\le} 
    \int_0^1 \int_0^1\miscali(\bayesprediction, \prediction) \cdot 
    \abs{\prediction -\bayesprediction}^{\normexponent} \,\d\bayesprediction \d \prediction 
   \overset{(e)}{\leq} \caliBudget^{\normexponent}~,
\end{align*}
where equality (a) holds due to the construction of predictor $\predictor$ in \Cref{defn:miscal procedure}, 
equality (b) holds due to the definition of marginalized perfectly calibrated predictor $\bayesmarginalpredictor$,
equality (c) holds due to the fact that perfectly calibrated predictor $\bayespredictor$ has zero ECE, 
inequality (d) holds due to the Jensen's inequality and the convexity of $\abs{\cdot}^{\normexponent}$, and 
inequality (e) holds due to \ref{eq:feasible miscal cal budget} of post-processing plan $\miscali$. This completes the proof of \Cref{prop:miscal procedure error guarantee}.
\end{proof}

We remark that various post-processing procedures have been introduced in the calibration literature. However, the majority of these methods focus on transforming a miscalibrated predictor into a perfectly calibrated one or reducing its ECE \citep[e.g.,][]{KGZ-19,FSZ-23,BGHN-23b,GHR-24}. That said, as we will demonstrate in \Cref{apx:win-win example}, when the principal and agent have misaligned incentives, allowing the principal to use a predictor with higher ECE may improve both her and the agent's utility--creating a ``win-win'' scenario.
Hence, we believe our post-processing procedure is not only valuable as an analytical tool but may also have practical implications.

\xhdr{Predictor space $\predicSpaceEI_\caliBudget$}
We denote by $\predicSpaceEI_{\caliBudget}$ the space of all predictors generated by perfectly calibrated predictors and feasible miscalibration plans via the event-independent post-processing procedure (\Cref{defn:miscal procedure}). 
\Cref{prop:miscal procedure error guarantee} guarantees that every predictor $\predictor\in \predicSpaceEI_{\caliBudget}$ is ${\caliBudget}$-calibrated, and thus 
\begin{align*}
    \predicSpaceEI_{\caliBudget}\subseteq \predicSpace_{\caliBudget}~.
\end{align*} 
\Cref{ex:special predictor} shows that the inclusion relation between the two spaces is strict, i.e.,  there exists a predictor $\predictor\in \predicSpace_{\caliBudget}$ such that $\predictor\not\in\predicSpaceEI_{\caliBudget}$, showing a strict inclusion relation between the two spaces.\footnote{In \Cref{sec:fptas}, we extend the event-independent post-processing procedure to the event-dependent post-processing procedure, which generates every ${\caliBudget}$-calibrated predictors from perfectly calibrated predictor.} (See \Cref{fig:hierarch} for the hierarch illustration.) However, when the principal's indirect utility is event-independent, we argue that restricting to predictors in $\predicSpaceEI_{\caliBudget}$ is without loss of generality.

\begin{example}[Strict inclusion relation between  $\predicSpaceEI_{\caliBudget}$ and $\predicSpace_{\caliBudget}$]
\label{ex:special predictor}
\label{example:structural opt:strict inclusion}
There are two events and let $\prior_1=\prior_2 = 0.5$ with $\truemean_1 = \truemean_2 = 0.5$. Consider predictor $\predictor$ with
$\predictor_1(0.125) = \predictor_2(0.875) = 1$.
It can be verified that such predictor $\predictor\in \predicSpace_{\caliBudget}\backslash\predicSpaceEI_{\caliBudget}$ for every $\caliBudget\geq 0.375$ and $\normexponent \geq 1$.
\end{example}

\begin{figure}[t]
  \centering
  \begin{tikzpicture}[>=stealth,scale=1]

  \definecolor{mygray}{gray}{0.85}    
  \definecolor{myorange}{RGB}{255,220,100} 
  \definecolor{myorangeDark}{RGB}{255,200,70} 

  \fill[mygray] (0.3,0) ellipse (4 and 2.2);
  \draw[dashed,thick] (0.3,0) ellipse (4 and 2.2);

  \begin{scope}
    \clip (0,0) ellipse (4 and 2.2); 
    \fill[myorange] (0,0) ellipse (3 and 1.8); 
    \draw[dashed,thick] (0,0) ellipse (3 and 1.8);
  \end{scope}

  \fill[myorangeDark] (-0.5,0) ellipse (2 and 1);
  \draw[thick] (-0.5,0) ellipse (2 and 1);

  \fill[blue] (0.7,0) circle (3pt);
  \node[blue,anchor=west] at (0.2,0) {$\bayespredictor$};

  \coordinate (f)  at (3, -1.2);
  \coordinate (fd) at (2.3, 1.0);

  \fill[blue] (f) circle (3pt);
  \node[blue,anchor=west] at (f) {$\predictor$};

  \fill[blue] (2.1, 0.8) circle (3pt);
  \node[blue,anchor=south] at (1.8, 0.5) {$\newpredictor$};

  \draw[dashed,ultra thick, ->] (2.8, -1.1) -- (0.8,-0.1);
  \draw[dashed,ultra thick, ->] (0.85,0.1) -- (2.0, 0.7);

  \node at (3.6, -0.1)    {$\predicSpace_{\caliBudget}$};
  \node at (2.25, -0.1)          {$\predicSpaceEI_{\caliBudget}$};
  \node at (-0.5, -0.1)   {$\predicSpace_{0}$};
\end{tikzpicture}
  \caption{An illustration of hierarchical relation between $\predicSpace_{\caliBudget}$, $\predicSpaceEI_{\caliBudget}$ and $\predicSpace_{0}$.
  The dashed arrow line from predictor $\predictor$ to perfectly calibrated predictor $\bayespredictor$ is established by \Cref{prop:two-step equivalence}, and the dashed arrow line from  $\bayespredictor$ to the predictor $\newpredictor$ is established in \Cref{defn:miscal procedure} and \Cref{prop:miscal procedure error guarantee}.}
  \label{fig:hierarch}
\end{figure}

\begin{proposition}
\label{prop:two-step equivalence}
For every persuasive calibration instance in the event-independent setting, 
for every $(\caliBudget,\tnorm)$-calibrated predictor $\predictor\in \predicSpace_{(\caliBudget,\tnorm)}$, there exists a pair of perfectly calibrated predictor $\bayespredictor$ and feasible miscalibration plan $\miscali$ that can generate a (possibly different) predictor $\newpredictor\in\predicSpaceEI_{(\caliBudget,\tnorm)}$ with the same expected utility for the principal, i.e.,  
$\Payoff{\newpredictor} = \Payoff{\predictor}$.

Specifically, such a pair of perfectly calibrated predictor $\bayespredictor$ and feasible miscalibration plan $\miscali$ can be constructed as follows: 
for every event $i\in[\numData]$, and predictions $\bayesprediction, \prediction\in[0, 1]$,
\begin{align}
    \label{eq:bayes predic construction}
    \bayespredictor_i(\bayesprediction) \triangleq 
    \sum\nolimits_{\prediction\in\supp(\predictor_i):\truePosterior(\prediction) =\bayesprediction} \predictor_i(\prediction) 
    \;\;\mbox{and}\;\;
    \miscali(\bayesprediction, \prediction) \triangleq \sum\nolimits_{i\in[\numData]} \prior_i \predictor_i(\prediction) \cdot \indicator{\truePosterior(\prediction) = \bayesprediction}
\end{align}
where $\truePosterior(\prediction)$ is the true expected outcome defined in Eqn.~\eqref{eq:true expected outcome}.
\end{proposition}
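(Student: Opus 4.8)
The pair $(\bayespredictor,\miscali)$ is written down explicitly in \eqref{eq:bayes predic construction}, so the plan is a sequence of bookkeeping verifications in the following order: (i) $\bayespredictor$ is a valid, and perfectly calibrated, predictor; (ii) $\miscali$ is feasible under $\bayespredictor$ and the $(\caliBudget,\tnorm)$-budget in the sense of \Cref{defn:miscal plan}; (iii) the predictor $\newpredictor$ generated from $(\bayespredictor,\miscali)$ via \Cref{defn:miscal procedure} satisfies $\Payoff{\newpredictor}=\Payoff{\predictor}$. The first thing I would record is that $\miscali$, as defined, is an atomic distribution supported on $\{(\truePosterior(\prediction),\prediction):\prediction\in\supp(\predictor)\}$, with mass $\miscali(\truePosterior(\prediction),\prediction)=\sum_{i\in[\numData]}\prior_i\predictor_i(\prediction)=\marginalpredictor(\prediction)$ at each such atom, where $\marginalpredictor$ is the marginalized predictor of $\predictor$. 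Hence every integral appearing in \Cref{defn:miscal plan} and \Cref{defn:miscal procedure} becomes a finite sum over the level sets of $\truePosterior$, and predictions $\bayesprediction$ with $\bayesmarginalpredictor(\bayesprediction)=0$ carry no $\miscali$-mass and can be ignored.

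Step (i) reduces to two reindexings and one calibration check. Summing $\bayespredictor_i(\bayesprediction)$ over $\bayesprediction$ re-sums $\predictor_i$ over $\supp(\predictor_i)$ grouped by the value of $\truePosterior$, so $\sum_{\bayesprediction}\bayespredictor_i(\bayesprediction)=\sum_{\prediction\in\supp(\predictor_i)}\predictor_i(\prediction)=1$; the same grouping gives the induced marginalized predictor $\bayesmarginalpredictor(\bayesprediction)=\sum_{\prediction:\truePosterior(\prediction)=\bayesprediction}\marginalpredictor(\prediction)$, which is exactly what \ref{eq:feasible miscal supp} requires of $\miscali$. The substantive part is perfect calibration, and this is the conceptual heart of the argument: grouping predictions by the value of $\truePosterior$ is a \emph{coarsening} of $\predictor$, and coarsening never destroys calibration. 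Concretely, the Bayes-rule identity behind \eqref{eq:true expected outcome} gives $\sum_i\prior_i\predictor_i(\prediction)\,\truemean_i=\truePosterior(\prediction)\,\marginalpredictor(\prediction)$ for each $\prediction$, so for any $\bayesprediction\in\supp(\bayespredictor)$,
\[
\sum_i\prior_i\bayespredictor_i(\bayesprediction)\,\truemean_i
=\sum_{\prediction:\truePosterior(\prediction)=\bayesprediction}\sum_i\prior_i\predictor_i(\prediction)\,\truemean_i
=\sum_{\prediction:\truePosterior(\prediction)=\bayesprediction}\truePosterior(\prediction)\,\marginalpredictor(\prediction)
=\bayesprediction\sum_{\prediction:\truePosterior(\prediction)=\bayesprediction}\marginalpredictor(\prediction)
=\bayesprediction\cdot\bayesmarginalpredictor(\bayesprediction),
\]
and dividing by $\bayesmarginalpredictor(\bayesprediction)$ shows the true expected outcome of $\bayespredictor$ at $\bayesprediction$ is $\bayesprediction$ itself.

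For step (ii), \ref{eq:feasible miscal supp} was established above; for \ref{eq:feasible miscal cal budget}, since $\miscali$ is supported on pairs with $\bayesprediction=\truePosterior(\prediction)$, its objective collapses to $\sum_{\prediction}\marginalpredictor(\prediction)\,|\truePosterior(\prediction)-\prediction|^{\normexponent}=\expect{|\truePosterior(\prediction)-\prediction|^{\normexponent}}$, whose $\normexponent$-th root is exactly $\ECEnorm{\predictor}\le\caliBudget$ by the hypothesis $\predictor\in\predicSpace_{(\caliBudget,\tnorm)}$. For step (iii), membership $\newpredictor\in\predicSpaceEI_{(\caliBudget,\tnorm)}$ is immediate from the definition of $\predicSpaceEI$ together with \Cref{prop:miscal procedure error guarantee}. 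For the payoff, I would compute the marginalized predictor of $\newpredictor$ directly: $\newmarginalpredictor(\prediction)=\sum_i\prior_i\newpredictor_i(\prediction)=\sum_{\bayesprediction}\frac{\sum_i\prior_i\bayespredictor_i(\bayesprediction)}{\bayesmarginalpredictor(\bayesprediction)}\,\miscali(\bayesprediction,\prediction)=\sum_{\bayesprediction}\miscali(\bayesprediction,\prediction)=\marginalpredictor(\prediction)$. In the event-independent setting the principal's expected utility equals $\sum_{\prediction}\marginalpredictor(\prediction)\,\indirectsenderU(\prediction)$, a functional of the marginalized predictor alone; hence $\Payoff{\newpredictor}=\sum_{\prediction}\newmarginalpredictor(\prediction)\,\indirectsenderU(\prediction)=\Payoff{\predictor}$.

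No step hides a real difficulty, but two points merit emphasis. The only genuinely non-trivial ingredient is the calibration check in step (i) --- it is precisely the reason the construction in \eqref{eq:bayes predic construction} groups predictions along level sets of $\truePosterior$ and not in some other way. And the payoff identity in step (iii) critically uses event-independence: $\newpredictor_i$ in general differs from $\predictor_i$ event by event (the mass of a prediction $\prediction$ is redistributed across events in proportion to $\bayespredictor_i(\truePosterior(\prediction))$), and only the event-marginal is preserved, so if $\indirectsenderU_i$ varied with $i$ the payoffs could change.
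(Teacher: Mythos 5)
Your proof is correct and follows the same overall plan as the paper's: verify $\bayespredictor$ is perfectly calibrated, verify $\miscali$ is feasible, and then establish the payoff equality. The only organizational difference is in the payoff step: the paper unwinds the double integral defining $\Payoff{\newpredictor}$ through the construction formula \eqref{eq:construct f} and collapses it to $\Payoff{\predictor}$ term by term, whereas you first observe that the generating formula preserves the event-marginal, $\newmarginalpredictor(\prediction)=\int_0^1\miscali(\bayesprediction,\prediction)\,\d\bayesprediction=\marginalpredictor(\prediction)$, and then invoke the fact that in the event-independent setting the principal's payoff is a functional of the marginalized predictor alone. Your route makes the dependence on event-independence structurally transparent and shortens the computation, while the paper's is more self-contained since it never isolates the marginal as an intermediate object; the two are arithmetically equivalent. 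Your concluding remark -- that only the marginal is preserved, so event-dependent utilities could break the equality -- is precisely what motivates the bi-event post-processing plan in \Cref{sec:fptas}, and is a useful observation to make explicit.
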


\begin{corollary}
    \label{cor:wlog ei space}
    For every persuasive calibration instance in the event-independent setting, there exists an optimal predictor $\optpredictor\in \predicSpaceEI_{(\caliBudget,\tnorm)}$, i.e., $\Payoff{\optpredictor} \geq \Payoff{\predictor}$ for every $\predictor\in \predicSpace_{(\caliBudget,\tnorm)}$.
\end{corollary}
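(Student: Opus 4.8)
The plan is to derive this corollary directly from \Cref{prop:two-step equivalence}, the only real ingredient being the existence of an optimal $(\caliBudget,\tnorm)$-calibrated predictor to start from.

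First I would fix an optimal predictor $\predictor^\star \in \argmax_{\predictor \in \predicSpace_{(\caliBudget,\tnorm)}} \Payoff{\predictor}$. The existence of such a maximizer is exactly the feasibility/attainment asserted in Footnote~\ref{fn:tie-breaking}: the feasible set $\predicSpace_{(\caliBudget,\tnorm)}$ is nonempty (it contains perfectly calibrated predictors, e.g.\ the one pooling all events into the prior mean) and, identifying predictors with points of $\prod_{i\in[\numData]}\Delta([0,1])$, it is compact in the product weak-$*$ topology because $\predictor\mapsto\ECEnorm{\predictor}$ is lower semi-continuous; the objective $\Payoff{\predictor}=\sum_{i\in[\numData]}\prior_i\,\expect[\prediction\sim\predictor_i]{\indirectsenderU_i(\prediction)}$ is upper semi-continuous since each indirect utility $\indirectsenderU_i$ is, which is precisely what the principal-favoring tie-breaking rule delivers. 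An upper semi-continuous function on a nonempty compact set attains its maximum, giving $\predictor^\star$. (If one wishes to sidestep continuous-support subtleties, one may first appeal to the paper's observation that finite-support predictors suffice and run the same argument over finitely many prediction values.)

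Then I would apply \Cref{prop:two-step equivalence} to $\predictor^\star$: it yields a perfectly calibrated predictor $\bayespredictor$ and a feasible post-processing plan $\miscali$ such that the predictor $\newpredictor$ generated from $\bayespredictor$ and $\miscali$ via \Cref{defn:miscal procedure} lies in $\predicSpaceEI_{(\caliBudget,\tnorm)}$ and satisfies $\Payoff{\newpredictor}=\Payoff{\predictor^\star}$. By \Cref{prop:miscal procedure error guarantee} we have $\predicSpaceEI_{(\caliBudget,\tnorm)}\subseteq\predicSpace_{(\caliBudget,\tnorm)}$, so $\newpredictor$ is itself a feasible $(\caliBudget,\tnorm)$-calibrated predictor; since it attains the optimal value $\max_{\predictor\in\predicSpace_{(\caliBudget,\tnorm)}}\Payoff{\predictor}$, it is an optimal predictor. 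Taking $\optpredictor\triangleq\newpredictor$ proves the corollary.

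The main (and essentially only) obstacle is the attainment step: one must choose a topology in which $\predicSpace_{(\caliBudget,\tnorm)}$ is compact and verify the semi-continuity of both the ECE constraint and the payoff objective. Everything downstream of that is a one-line chaining of \Cref{prop:two-step equivalence}, the inclusion $\predicSpaceEI_{(\caliBudget,\tnorm)}\subseteq\predicSpace_{(\caliBudget,\tnorm)}$, and the definition of optimality, with no computation required.
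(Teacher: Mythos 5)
Your proposal is correct and follows the same implicit route as the paper, which states the corollary immediately after \Cref{prop:two-step equivalence} without a separate proof: fix an optimal $\predictor^\star$, apply \Cref{prop:two-step equivalence} to obtain $\newpredictor\in\predicSpaceEI_{(\caliBudget,\tnorm)}$ with the same payoff, and conclude via the inclusion $\predicSpaceEI_{(\caliBudget,\tnorm)}\subseteq\predicSpace_{(\caliBudget,\tnorm)}$ from \Cref{prop:miscal procedure error guarantee}. The one place you expend more care than the paper is the attainment step; there, your weak-$*$ compactness argument is a little delicate because $\ECEnorm{\cdot}$ involves the ratio defining $\truePosterior$, and lower semi-continuity of that functional under weak-$*$ convergence of $(\predictor_i)_i$ is not entirely immediate (for $\normexponent=1$ one can rewrite it as the total variation of the signed measure $\sum_i\prior_i(\prediction-\truemean_i)\predictor_i(\d\prediction)$, where lower semi-continuity is standard, but for general $\normexponent$ it needs a word); your hedge of falling back to finite-support predictors, or equivalently to the LP characterization in \Cref{prop:prog equivalence} whose optimum is attained, covers this cleanly and is in the spirit of what the paper is implicitly relying on.
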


\begin{proof}[Proof of \Cref{prop:two-step equivalence}]
Fix any $\caliBudget$-calibrated predictor $\predictor\in \predicSpace_\caliBudget$. Consider predictor $\bayespredictor$ and post-processing plan $\miscali$ constructed as in Eqn.~\eqref{eq:bayes predic construction}.

We first verify that the constructed predictor $\bayespredictor$ has zero ECE and thus is perfect calibrated. Let $\truePosterior$ and $\truePosterior\primed$ be the true expected outcome function induced by original predictor $\predictor$ and constructed predictor $\bayespredictor$, respectively.
Note that for every prediction $\bayesprediction\in\supp(\bayespredictor)$, its true expected outcome $\truePosterior\primed$ can be computed as
\begin{align*}
    \truePosterior\primed(\bayesprediction) 
    = \frac{ \sum\nolimits_{i\in[\numData]} \prior_i \bayespredictor_i(\bayesprediction) \cdot \truemean_i }{ 
    \sum\nolimits_{i\in[\numData]} \prior_i \bayespredictor_i(\bayesprediction)}
    & = 
    \frac{ \sum\nolimits_{i\in[\numData]} \prior_i  
    \sum_{\prediction\in\supp(\predictor_i):\truePosterior(\prediction)=\bayesprediction}\predictor_i(\prediction) \cdot \truemean_i }{ 
    \sum\nolimits_{i\in[\numData]} \prior_i\sum_{\prediction\in\supp(\predictor_i):\truePosterior(\prediction)=\bayesprediction}\predictor_i(\prediction) } = \bayesprediction
\end{align*}
where the first and third equalities hold due to the definition of true expected outcome function $\truePosterior\primed$ and $\truePosterior$, while the second equality holds due to the construction of predictor $\bayespredictor$.

We next verify that the constructed post-processing plan $\miscali$ is feasible. Note that for each prediction $\bayesprediction\in[0, 1]$, 
\begin{align*}
    \int_0^1 \miscali(\bayesprediction, \prediction)~\d  \prediction
    & = 
    \int_0^1 \sum\nolimits_{i\in[\numData]} \prior_i \predictor_i(\prediction) \cdot\indicator{\truePosterior(\prediction) = \bayesprediction}
    ~\d  \prediction
    \\
    &
    = 
    \sum\nolimits_{i\in[\numData]} \prior_i 
    \sum\nolimits_{\prediction\in\supp(\predictor_i):\truePosterior(\prediction)=\bayesprediction} \predictor_i(\prediction) 
    \overset{}{=}
    \sum\nolimits_{i\in[\numData]} \prior_i \cdot 
    \bayespredictor_i(\bayesprediction)~,
\end{align*}
where the first and last equalities follow the construction in Eqn.~\eqref{eq:bayes predic construction}. Thus, the \ref{eq:feasible miscal supp} (in \Cref{defn:miscal plan}) of post-processing plan $\miscali$ is satisfied. In addition, 
\begin{align*}
    \int_0^1 \int_{0}^1 \miscali(\bayesprediction, \prediction) \cdot |\bayesprediction - \prediction|^{\normexponent}~\d  \prediction\d  \bayesprediction
    & = 
    \int_0^1\int_0^1
    \sum\nolimits_{i\in[\numData]} \prior_i \predictor_i(\prediction) \cdot\indicator{\truePosterior(\prediction) = \bayesprediction} \cdot \abs{\bayesprediction- \prediction}^{\normexponent} ~\d  \prediction \d  \bayesprediction \\
    & = 
    \int_0^1
    \sum\nolimits_{i\in[\numData]} \prior_i \predictor_i(\prediction)  \cdot \abs{\truePosterior(\prediction)-\prediction}^{\normexponent} ~\d  \prediction = \left(\ECE[\normexponent]{\predictor}\right)^{\normexponent} \le \caliBudget^\normexponent
\end{align*}
which implies the \ref{eq:feasible miscal cal budget} of post-processing plan $\miscali$. Thus, $\miscali$ is feasible.

Invoking \Cref{prop:two-step equivalence}, we know predictor $\predictor\primed$ generated by $\bayespredictor$ and $\miscali$ is well-defined and $(\caliBudget,\tnorm)$-calibrated. Finally, we compute the principal's expected utility given predictor $\predictor\primed$,
\begin{align*}
    \Payoff{\newpredictor}
    & = \sum\nolimits_{i\in[\numData]}\int_0^1\prior_i \newpredictor_i(\prediction)\cdot  \indirectsenderU(\prediction) \, \d \prediction
     \overset{(a)}{=} 
    \int_0^1 \int_0^1 \sum\nolimits_{i\in[\numData]} \prior_i \frac{\bayespredictor_i(\bayesprediction)}{\bayesmarginalpredictor(\bayesprediction)}\cdot \miscali(\bayesprediction,\prediction)\cdot \indirectsenderU(\prediction)\,\d\bayesprediction  \, \d \prediction\\
    & = 
    \int_0^1  \int_0^1 \miscali(\bayesprediction, \prediction) \cdot \indirectsenderU(\prediction)\,\d\bayesprediction  \, \d \prediction
    \overset{(b)}{=}  
    \int_0^1 \int_0^1
    \sum\nolimits_{i\in[\numData]} \prior_i \predictor_i(\prediction) \cdot\indicator{\truePosterior(\prediction) = \bayesprediction} \cdot \indirectsenderU\left(\prediction\right) ~\d  \prediction \d  \bayesprediction\\
    &= 
    \int_0^1
    \sum\nolimits_{i\in[\numData]} \prior_i \predictor_i(\prediction)\cdot   \indirectsenderU\left(\prediction\right) ~\d  \prediction = \Payoff{\predictor}
\end{align*}
where equality (a) follows by definition of $\newpredictor$ (as it is constructed according to Eqn.~\eqref{eq:construct f}), 
equality (b) follows by the construction of $\miscali$ defined in Eqn.~\eqref{eq:bayes predic construction}.
This finishes the proof of \Cref{prop:two-step equivalence}.
\end{proof}

\xhdr{A two-step framework for the principal}
\Cref{prop:two-step equivalence} (along with \Cref{cor:wlog ei space}) establishes an equivalence between $\caliBudget$-calibrated predictors $\predictor$ and pairs consisting of perfectly calibrated predictor $\bayespredictor$ and feasible miscalibration plan $\miscali$. This equivalence allows us to adopt a conceptually \emph{two-step approach}, where the principal first designs a perfectly calibrated predictor and then selects a feasible miscalibration plan. The following proposition formalizes this idea and shows that the optimal $\caliBudget$-calibrated predictor can be characterized by \ref{eq:decoupled opt}, an infinite-dimensional linear program, which plays a key role for proving all our structure results in \Cref{subsec:characterization:K1 ECE}.

\begin{proposition}
    \label{prop:prog equivalence}
    For every persuasive calibration instance in the event-independent setting,  the principal's expected utility under the optimal $(\caliBudget,\tnorm)$-calibrated predictor $\optpredictor$ is equal to the optimal objective value of the following linear program with variables $\{\miscali(\bayesprediction,\prediction), \bayesmarginalpredictor(\bayesprediction)\}_{\bayesprediction,\prediction\in[0,1]}$:
    \begin{align}
        \label{eq:decoupled opt}
        \arraycolsep=5.4pt\def\arraystretch{1}
        \tag{$\textsc{LP-TwoStep}$}
        &\begin{array}{rlll}
        \max
        \limits_{{\miscali} \ge 0, {\bayesmarginalpredictor}} 
        ~ &
        \displaystyle 
        \int_0^1\int_0^1
        \miscali(\bayesprediction, \prediction) \cdot \indirectsenderU\left(\prediction\right)~\d  \prediction \d  \bayesprediction
        \quad & \text{s.t.} &
        \vspace{1mm}
        \\
        (\textsc{Calibration-Feasibility}) & 
        \displaystyle 
        \int_0^1\int_0^1
        \miscali(\bayesprediction, \prediction) \cdot \abs{\prediction -\bayesprediction}^\normexponent ~\d  \prediction \d  \bayesprediction
        \le \caliBudget^\normexponent,
        &  
        \vspace{1mm}
        \\
        (\textsc{Supply-Feasibility}) & 
        \displaystyle 
        \int_0^1\miscali(\bayesprediction, \prediction)~\d  \prediction = \bayesmarginalpredictor(\bayesprediction),
        &  \bayesprediction\in [0, 1]
        \vspace{2.5mm}
        \\
        (\textsc{MPC-Feasibility}) &
        \bayesmarginalpredictor \in \MPC(\prior)~,
        &
        \end{array}
    \end{align}  
    where $\bayesmarginalpredictor\in\MPC(\prior)$ is the mean-preserving constraint (see \Cref{defn:mpc}).
    
    Moreover, every optimal solution of \ref{eq:decoupled opt} can be converted into an optimal $(\caliBudget,\tnorm)$-calibrated predictor $\optpredictor$ (see \Cref{lem:bayes predictor is in MPC} and \Cref{prop:miscal procedure error guarantee}).
\end{proposition}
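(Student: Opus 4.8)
The plan is to prove the claimed value equality by establishing two inequalities between the optimal principal payoff $\Payoff{\optpredictor}$ and the optimal value of \ref{eq:decoupled opt}, and then observe that the ``moreover'' statement falls out of the easy direction. The conceptual engine throughout is the event-independence of the indirect utility $\indirectsenderU$: it guarantees that the principal's payoff of a predictor generated via \Cref{defn:miscal procedure} depends on the generating pair $(\bayespredictor,\miscali)$ only through the post-processing plan $\miscali$ (in fact only through its $\prediction$-marginal), so that the perfectly calibrated predictor $\bayespredictor$ may be chosen freely among those with a prescribed marginalized version $\bayesmarginalpredictor$. The two lemmas I will lean on are \Cref{prop:miscal procedure error guarantee} (feasibility of the generated predictor) and \Cref{lem:bayes predictor is in MPC} together with \Cref{defn:mpc} (the correspondence between perfectly calibrated predictors and $\MPC(\prior)$).

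First I would show that the optimal value of \ref{eq:decoupled opt} is at least $\Payoff{\optpredictor}$. By \Cref{cor:wlog ei space} there is an optimal predictor $\optpredictor\in\predicSpaceEI_{(\caliBudget,\tnorm)}$, i.e.\ $\optpredictor$ is generated, in the sense of \Cref{defn:miscal procedure}, by some perfectly calibrated predictor $\bayespredictor$ and some feasible post-processing plan $\miscali$ (feasible in the sense of \Cref{defn:miscal plan}). I claim the pair $(\miscali,\bayesmarginalpredictor)$, where $\bayesmarginalpredictor(\bayesprediction)\triangleq\sum_{i\in[\numData]}\prior_i\bayespredictor_i(\bayesprediction)$, is feasible for \ref{eq:decoupled opt}: the \textsc{Calibration-Feasibility} and \textsc{Supply-Feasibility} constraints of the LP are verbatim the two feasibility conditions of the plan $\miscali$ in \Cref{defn:miscal plan}, and \textsc{MPC-Feasibility}, namely $\bayesmarginalpredictor\in\MPC(\prior)$, follows from \Cref{lem:bayes predictor is in MPC} because $\bayespredictor$ is perfectly calibrated. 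Its objective value is $\int_0^1\int_0^1\miscali(\bayesprediction,\prediction)\,\indirectsenderU(\prediction)\,\d\prediction\,\d\bayesprediction$, which equals $\Payoff{\optpredictor}$ by exactly the chain of equalities at the end of the proof of \Cref{prop:two-step equivalence}: expand $\Payoff{\optpredictor}=\sum_{i\in[\numData]}\prior_i\int_0^1\optpredictor_i(\prediction)\indirectsenderU(\prediction)\,\d\prediction$, substitute Eqn.~\eqref{eq:construct f}, and collapse $\sum_i\prior_i\bayespredictor_i(\bayesprediction)=\bayesmarginalpredictor(\bayesprediction)$.

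Next I would prove the reverse inequality, which simultaneously yields the ``moreover'' claim. Let $(\miscali^\dagger,\bayesmarginalpredictor^\dagger)$ be any optimal solution of \ref{eq:decoupled opt}. By \textsc{MPC-Feasibility} and the converse direction of \Cref{lem:bayes predictor is in MPC}, there exists a perfectly calibrated predictor $\bayespredictor$ whose marginalized version equals $\bayesmarginalpredictor^\dagger$; by \textsc{Calibration-} and \textsc{Supply-Feasibility}, $\miscali^\dagger$ is a feasible post-processing plan for this $\bayespredictor$. Applying \Cref{defn:miscal procedure} to $(\bayespredictor,\miscali^\dagger)$ produces a predictor $\predictor$, which by \Cref{prop:miscal procedure error guarantee} is $(\caliBudget,\tnorm)$-calibrated, so $\Payoff{\predictor}\le\Payoff{\optpredictor}$. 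The same substitution as above gives $\Payoff{\predictor}=\int_0^1\int_0^1\miscali^\dagger(\bayesprediction,\prediction)\,\indirectsenderU(\prediction)\,\d\prediction\,\d\bayesprediction$, the optimal LP value. Combining the two inequalities forces equality; and since the $\predictor$ just built from an optimal LP solution attains the common value, it is an optimal $(\caliBudget,\tnorm)$-calibrated predictor, which is precisely the ``moreover'' part.

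The main obstacle---essentially the only place content beyond bookkeeping enters---is the bidirectional correspondence in \Cref{lem:bayes predictor is in MPC}: the easy inequality uses ``perfectly calibrated $\Rightarrow$ marginalized predictor $\in\MPC(\prior)$'', while the reverse inequality uses its converse, ``every $\bayesmarginalpredictor\in\MPC(\prior)$ is the marginalized version of some perfectly calibrated predictor''. The secondary care is measure-theoretic: in the continuous case one must justify the Fubini interchanges in the payoff substitution and that $\predictor_i$ from Eqn.~\eqref{eq:construct f} is a well-defined conditional distribution, both already handled inside \Cref{prop:miscal procedure error guarantee}; and one should note explicitly that although Eqn.~\eqref{eq:construct f} uses the full $\bayespredictor$ rather than only $\bayesmarginalpredictor$, summing over events yields $\sum_{i\in[\numData]}\prior_i\predictor_i(\prediction)=\int_0^1\miscali(\bayesprediction,\prediction)\,\d\bayesprediction$, so neither the principal's payoff nor the ECE depends on which perfectly calibrated $\bayespredictor$ compatible with $\bayesmarginalpredictor$ is selected---which is exactly why working with the pair $(\miscali,\bayesmarginalpredictor)$ in \ref{eq:decoupled opt} loses nothing.
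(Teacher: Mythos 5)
Your proposal is correct and follows essentially the same two-direction argument as the paper's proof: the only cosmetic difference is that for the direction ``LP value $\ge \Payoff{\optpredictor}$'' you invoke \Cref{cor:wlog ei space} to obtain a generating pair $(\bayespredictor,\miscali)$ for an optimal predictor, whereas the paper applies the explicit extraction in Eqn.~\eqref{eq:bayes predic construction} to an arbitrary $\caliBudget$-calibrated predictor and then specializes to the optimum --- but \Cref{cor:wlog ei space} is itself a corollary of that same extraction, so the two routes are interchangeable. The reverse direction (from an LP solution back to a predictor via \Cref{lem:bayes predictor is in MPC}, \Cref{defn:miscal procedure}, and \Cref{prop:miscal procedure error guarantee}), the payoff substitution, and your observation that the payoff and ECE depend on $\bayespredictor$ only through $\bayesmarginalpredictor$ all match the paper's reasoning.
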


\begin{definition}[Mean-preserving contraction]
\label{defn:mpc}
    Fix any distribution $\prior$ with support $\supp(\prior)\subseteq[0, 1]$. For any distribution $\bayesmarginalpredictor$ with support $\supp(\bayesmarginalpredictor)\subseteq[0, 1]$., we say distribution $\bayesmarginalpredictor$ is a \emph{mean-preserving contraction} of distribution $\prior$ if for all $s\in[0, 1]$, 
    \begin{align*}
        \int_0^s \bayespredictorCDF(\bayesprediction)\,\d\bayesprediction
        \leq \int_0^s \priorCDF(\bayesprediction)\,\d\bayesprediction
    \end{align*}
    with the equality at $s = 1$, where $\bayespredictorCDF$ and $\priorCDF$ are the cumulative density function of distributions $\bayesmarginalpredictor$ and $\prior$, respectively. We define $\MPC(\prior)$ as the set of all distributions that are mean-preserving contractions of the distribution $\prior$.
\end{definition}

\begin{restatable}{lemma}{bayesMPC}
\label{lem:bayes predictor is in MPC}
A distribution $\bayesmarginalpredictor \in \Delta([0, 1])$ is a marginalized perfectly calibrated predictor if and only if 
$\bayesmarginalpredictor\in \MPC(\prior)$.
\end{restatable}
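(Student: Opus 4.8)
The plan is to read the asserted equivalence as an instance of Strassen's theorem --- the classical characterization of mean-preserving contractions via martingale couplings (equivalently, the Rothschild--Stiglitz / Blackwell / convex-order description). Throughout, $\prior$ is read as the distribution on $[0,1]$ that places mass $\prior_i$ at $\truemean_i$, so that $\priormean=\expect[\truemean\sim\prior]{\truemean}$. The bridge between the two sides of the lemma is the following dictionary. Any perfectly calibrated predictor $\bayespredictor=\{\bayespredictor_i\}_{i\in[\numData]}$ induces a joint law $\rho$ on pairs $(\bayesprediction,\truemean)\in[0,1]^2$: draw an event $i$ from the event distribution, draw $\bayesprediction\sim\bayespredictor_i$, and output $(\bayesprediction,\truemean_i)$. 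Under $\rho$ the $\bayesprediction$-marginal is the marginalized predictor $\bayesmarginalpredictor$, the $\truemean$-marginal is $\prior$, and the defining calibration identity $\truePosterior(\bayesprediction)=\bayesprediction$ is exactly the martingale property $\expect[\rho]{\truemean\mid\bayesprediction}=\bayesprediction$ (by Eqn.~\eqref{eq:true expected outcome}). Conversely, any joint law $\rho$ on $[0,1]^2$ whose $\truemean$-marginal is $\prior$ and which satisfies $\expect[\rho]{\truemean\mid\bayesprediction}=\bayesprediction$ yields a perfectly calibrated predictor with marginalization equal to its $\bayesprediction$-marginal, by disaggregating the $\truemean$-coordinate into events. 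Hence the lemma reduces to: a martingale coupling of $\bayesmarginalpredictor$ and $\prior$ exists if and only if $\bayesmarginalpredictor\in\MPC(\prior)$.

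For the ``only if'' direction I would take a perfectly calibrated $\bayespredictor$, form the coupling $\rho$ above, and verify the two defining inequalities of $\MPC(\prior)$ directly. By Tonelli, for every $s\in[0,1]$ we have $\int_0^s \bayespredictorCDF(\bayesprediction)\,\d\bayesprediction=\expect[\rho]{\plus{s-\bayesprediction}}$ and $\int_0^s \priorCDF(\truemean)\,\d\truemean=\expect[\rho]{\plus{s-\truemean}}$. Since $x\mapsto\plus{s-x}$ is convex, conditional Jensen together with the martingale property gives $\expect[\rho]{\plus{s-\truemean}\mid\bayesprediction}\ge\plus{s-\expect[\rho]{\truemean\mid\bayesprediction}}=\plus{s-\bayesprediction}$; taking expectations yields $\int_0^s\bayespredictorCDF\le\int_0^s\priorCDF$. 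At $s=1$ both integrals equal $1$ minus the common mean (by the tower rule, $\expect[\rho]{\bayesprediction}=\expect[\rho]{\truemean}=\priormean$), so the required equality at $s=1$ holds, and therefore $\bayesmarginalpredictor\in\MPC(\prior)$.

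For the ``if'' direction I would start from $\bayesmarginalpredictor\in\MPC(\prior)$. The integral inequalities together with equality at $s=1$ are precisely the statement that $\bayesmarginalpredictor$ is dominated by $\prior$ in the convex order. Strassen's theorem then provides a martingale coupling $\rho$ of $\bayesmarginalpredictor$ and $\prior$ with $\expect[\rho]{\truemean\mid\bayesprediction}=\bayesprediction$. Disaggregating as in the dictionary above --- for each event $i$, let $\bayespredictor_i$ be the regular conditional law of $\bayesprediction$ given $\truemean=\truemean_i$ under $\rho$, with events sharing the same true mean receiving the same conditional law --- produces a predictor with $\sum_{i}\prior_i\bayespredictor_i=\bayesmarginalpredictor$ (regroup by distinct mean values and use that $\prior$ carries mass $\sum_{i:\truemean_i=\theta}\prior_i$ at $\theta$) and, by construction, $\truePosterior(\bayesprediction)=\expect[\rho]{\truemean\mid\bayesprediction}=\bayesprediction$, i.e., a perfectly calibrated predictor whose marginalization is $\bayesmarginalpredictor$.

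The only non-routine ingredient is Strassen's theorem (existence of a martingale coupling for convex-ordered laws); everything else is measure-theoretic bookkeeping, so I would simply cite it, noting that regular conditional distributions exist because all the measures live on the standard Borel space $[0,1]$. If a self-contained argument is preferred, the coupling can be built directly: it is the feasibility problem for the linear system whose constraints are exactly the $\MPC$ inequalities, and one can solve it greedily by repeatedly ``splitting'' a piece of $\bayesmarginalpredictor$ across atoms of $\prior$ that straddle it --- essentially reproving Strassen in this one-dimensional, finite-support setting. The one subtlety I would flag is the tie case $\truemean_i=\truemean_j$ in the disaggregation step, handled exactly as described above; nothing else is delicate.
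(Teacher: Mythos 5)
Your proof is correct and rests on the same core insight as the paper's: a marginalized perfectly calibrated predictor is precisely the $\bayesprediction$-marginal of a martingale coupling with $\prior$ (equivalently, a distribution of Bayesian posterior means under prior $\prior$), and such distributions are exactly the mean-preserving contractions of $\prior$. The difference is in how each invokes this classical fact. The paper black-boxes the entire equivalence with a single citation to the Blackwell/Rothschild--Stiglitz/Gentzkow--Kamenica line of work. You instead make the dictionary between calibrated predictors and martingale couplings explicit, prove the ``only if'' direction from scratch via the identity $\int_0^s \bayespredictorCDF = \expect{\plus{s-\bayesprediction}}$ plus conditional Jensen applied to $x\mapsto\plus{s-x}$, and cite Strassen's theorem only for the existence of the coupling in the ``if'' direction. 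Your version is a bit more self-contained and makes the role of convex order transparent; the paper's is shorter by deferring everything to the information-design citation. Your handling of the disaggregation step (assigning the same conditional law $\bayespredictor_i$ to events with tied means and regrouping) is exactly the bookkeeping one needs to pass between the event-indexed predictor $\bayespredictor$ and the coupling $\rho$, and it's correct.
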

\begin{proof}
Consider any perfectly calibrated predictor $\bayespredictor$ and its corresponding marginalized predictor $\bayesmarginalpredictor$, 
we know that for every $\prediction\in\supp(\bayespredictor)$, we have $\truePosterior(\prediction) = \prediction$, which implies that
\begin{align*}
    \frac{
    \sum_{i\in[\numData]}
    \prior_i \cdot 
    \bayespredictor_i(\prediction)
    \cdot 
    \truemean_i
    }{
    \sum_{i\in[\numData]}
    \prior_i \cdot 
    \bayespredictor_i(\prediction)}
    = \prediction~.
\end{align*}
By viewing the predictor $\bayespredictor$ as a Blackwell experiment, we can see that every prediction $\prediction$ in the support $\supp(\bayesmarginalpredictor)$ represents the mean of the Bayesian posterior updated from receiving a signal $\prediction$ (with the prior $\prior$).
Thus, the marginalized predictor $\bayesmarginalpredictor$ equivalently represents a distribution of (Bayesian) posterior means. 
By \cite{B-53,RS-78,GK-16}, we know that a distribution $\bayesmarginalpredictor$ is a feasible distribution of posterior means if and only if it is an MPC of its prior probability $\prior$.
\end{proof}

We conclude this subsection by proving \Cref{prop:prog equivalence}.

\begin{proof}[Proof of \Cref{prop:prog equivalence}]
We first prove the direction that
for any $\caliBudget$-calibrated predictor $\predictor$, there exists a pair of perfectly calibrated predictor $\bayespredictor$ (let $\bayesmarginalpredictor$ be its corresponding marginalized perfectly calibrated predictor) and a miscalibration plan $\miscali$ such that $\bayesmarginalpredictor, \miscali$ is a feasible solution to \ref{eq:decoupled opt}, and meanwhile its objective in \ref{eq:decoupled opt} exactly equals to $\Payoff{\predictor}$.

To see this, given the predictor $\predictor$, we consider the predictor $\bayespredictor$, and the miscalibration plan constructed as in Eqn.~\eqref{eq:bayes predic construction}.
\Cref{prop:two-step equivalence} shows that the constructed predictor $\bayespredictor$ is perfectly calibrated and post-processing plan $\miscali$ is feasible. Let $\bayesmarginalpredictor$ be the marginalized predictor of $\bayespredictor$.
Invoking \Cref{lem:bayes predictor is in MPC}, we know that $\bayesmarginalpredictor\in \MPC(\prior)$.
Thus, $(\bayesmarginalpredictor, \miscali)$ forms a feasible solution to \ref{eq:decoupled opt}.
Moreover, the objective value of $\bayesmarginalpredictor, \miscali$ in \ref{eq:decoupled opt} is  
\begin{align*}
    \int_0^1  \int_0^1 \miscali(\bayesprediction, \prediction) \cdot \indirectsenderU(\prediction) ~\d  \prediction\d \bayesprediction
    & =  
    \int_0^1 \int_0^1
    \sum\nolimits_{i\in[\numData]} \prior_i \predictor_i(\prediction) \cdot\indicator{\truePosterior(\prediction) = \bayesprediction} \cdot \indirectsenderU\left(\prediction\right) ~\d  \prediction\d \bayesprediction\\
    & = 
    \int_0^1
    \sum\nolimits_{i\in[\numData]} \prior_i \predictor_i(\prediction)  \cdot \indirectsenderU\left(\prediction\right) ~\d  \prediction = \Payoff{\predictor}~.
\end{align*}
We next prove the reverse direction: given any feasible solution $\bayesmarginalpredictor, \miscali$ to \ref{eq:decoupled opt}, 
there exists an $\caliBudget$-calibrated predictor $\predictor$ such that its payoff exactly equals to the objective value of the solution $\bayesmarginalpredictor, \miscali$ in \ref{eq:decoupled opt}.

To see this, we note that from \Cref{lem:bayes predictor is in MPC}, we know for any $\bayesmarginalpredictor\in\MPC(\prior)$, there must exist a perfectly calibrated predictor $\bayespredictor$ whose marginalized perfectly calibrated predictor exactly equals to $\bayesmarginalpredictor$. 
Now given such a perfectly calibrated predictor $\bayespredictor$, and together with $\miscali$, we consider a predictor $\predictor$ generated according to Eqn.~\eqref{eq:construct f}. Invoking~\Cref{prop:miscal procedure error guarantee}, predictor $\predictor$ is well-defined and $\caliBudget$-calibrated. Finally, the principal's expected utility under predictor $\predictor$ is
\begin{align*}
    \Payoff{\predictor} 
    & = 
    \sum\nolimits_{i\in[\numData]}\prior_i \int_0^1 \predictor_i(\prediction) \cdot  \indirectsenderU(\prediction) ~\d  \prediction  \\
    & 
     \overset{}{=} 
    \int_0^1 \int_0^1 \sum\nolimits_{i\in[\numData]} \prior_i \frac{\bayespredictor_i(\bayesprediction)}{\bayesmarginalpredictor(\bayesprediction)}\cdot \miscali(\bayesprediction,\prediction)\cdot \indirectsenderU(\prediction)\,\d\bayesprediction  \, \d \prediction
     = 
    \int_0^1  \int_0^1 \miscali(\bayesprediction, \prediction) \cdot \indirectsenderU(\prediction)\,\d\bayesprediction  \, \d \prediction
\end{align*}
which exactly equals to the objective value of the solution $\bayesmarginalpredictor, \miscali$ in \ref{eq:decoupled opt}.
This finishes the proof of \Cref{prop:prog equivalence}.
\end{proof}

A direct application of our two-step framework is the following verification tool that can be used to verify if a given predictor $\predictor$ is indeed an optimal $(\caliBudget, \ell_1)$-calibrated predictor for $K_1$ ECE.
\begin{proposition}[Optimality verification]
\label{prop:opt verify}
Fix any $\caliBudget > 0$ and consider $\normexponent = 1$.
Given any predictor $\tilde{h}$, consider the predictor $\predictor$ generated (according to \eqref{eq:construct f}) by the perfectly calibrated predictor $\bayespredictor$ and the miscalibration plan $\miscali$, which are constructed according to \eqref{eq:bayes predic construction} with input $\tilde{h}$.
If there exists a symmetric linear-tailed convex function $\convexEnv$ (defined in \Cref{defn:symmetric linear-tailed}) with $\convexEnv(x) \ge \senderU(x), \forall x$ satisfying
\begin{enumerate}
    \item[(i)]
    $\caliBudgetDual\cdot\left(\ECE{\predictor} - \caliBudget\right) = 0$ where $\caliBudgetDual\triangleq \max_{\prediction} |(\convexEnv(\prediction))'|$;
    \item[(ii)] (ii-a): 
    $\supp(\predictor) \subseteq \{\prediction\in[0, 1]: \convexEnv(\prediction) = \senderU(\prediction)\}$, and 
    (ii-b): all $\prediction$ with $\prediction\neq \truePosterior(\prediction)$ 
    and their true expected outcome $\truePosterior(\prediction)$ 
    must lie in linear tails of function $\convexEnv$;
    \item[(iii)]
    consider a marginalized predictor $\bayesmarginalpredictor$ induced from the perfectly calibrated predictor $\bayespredictor$ (defined in \eqref{eq:bayes predic construction}),
    then (iii-a): 
    $\expect[q\sim \bayesmarginalpredictor]{\convexEnv(\bayesprediction)} = \expect[q\sim \lambda]{\convexEnv(\bayesprediction)}$, and (iii-b): $\bayesmarginalpredictor\in\MPC(\lambda)$;
\end{enumerate}
then the predictor $\tilde{h}$ is the optimal $(\caliBudget, \ell_1)$-calibrated predictor.
\end{proposition}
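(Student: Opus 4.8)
The plan is to certify optimality of $\tilde{h}$ through the linear‑programming characterization in \Cref{prop:prog equivalence}. First I would record the set‑up: the perfectly calibrated predictor $\bayespredictor$ and post‑processing plan $\miscali$ built from $\tilde{h}$ via \eqref{eq:bayes predic construction} are exactly the ones produced by \Cref{prop:two-step equivalence}, so $\miscali$ is feasible, $\bayesmarginalpredictor\in\MPC(\prior)$ by \Cref{lem:bayes predictor is in MPC}, and hence $(\bayesmarginalpredictor,\miscali)$ is a feasible solution of \ref{eq:decoupled opt} whose objective value equals $\Payoff{\predictor}=\Payoff{\tilde{h}}$. A short bookkeeping check shows $\predictor$ and $\tilde{h}$ induce the same marginalized predictor and the same true‑expected‑outcome function $\truePosterior$, so $\supp(\predictor)=\supp(\tilde{h})$, $\ECE{\predictor}=\ECE{\tilde{h}}$, and $\int_0^1\!\int_0^1\miscali(\bayesprediction,\prediction)\,|\bayesprediction-\prediction|\,\d\prediction\d\bayesprediction=\ECE{\tilde{h}}$ (the $\normexponent=1$ case of a computation already in the proof of \Cref{prop:two-step equivalence}). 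It therefore remains to show that $(\bayesmarginalpredictor,\miscali)$ is an \emph{optimal} solution of \ref{eq:decoupled opt}.

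For the upper bound I would turn $\convexEnv$ into a dual certificate. Since $\convexEnv$ is convex with linear tails of slope $\pm\caliBudgetDual$, it is $\caliBudgetDual$‑Lipschitz, so $\convexEnv(\prediction)\le\convexEnv(\bayesprediction)+\caliBudgetDual|\prediction-\bayesprediction|$; with $\convexEnv\ge\indirectsenderU$ this yields $\indirectsenderU(\prediction)\le\caliBudgetDual|\prediction-\bayesprediction|+\convexEnv(\bayesprediction)$ for all $\prediction,\bayesprediction$. Hence for every feasible $(\bayesmarginalpredictor',\miscali')$ of \ref{eq:decoupled opt},
\begin{align*}
\int_0^1\!\!\int_0^1\miscali'(\bayesprediction,\prediction)\,\indirectsenderU(\prediction)\,\d\prediction\d\bayesprediction
&\;\le\;\caliBudgetDual\!\int_0^1\!\!\int_0^1\miscali'(\bayesprediction,\prediction)\,|\prediction-\bayesprediction|\,\d\prediction\d\bayesprediction\;+\;\int_0^1\convexEnv(\bayesprediction)\,\bayesmarginalpredictor'(\bayesprediction)\,\d\bayesprediction\\
&\;\le\;\caliBudgetDual\cdot\caliBudget\;+\;\expect[\bayesprediction\sim\prior]{\convexEnv(\bayesprediction)}~,
\end{align*}
using \textsc{Supply-Feasibility}, then \textsc{Calibration-Feasibility} with $\normexponent=1$ on the first term and \textsc{MPC-Feasibility} together with convexity of $\convexEnv$ on the second (a mean‑preserving contraction cannot raise the expectation of a convex function). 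So the optimum of \ref{eq:decoupled opt} is at most $\caliBudgetDual\caliBudget+\expect[\bayesprediction\sim\prior]{\convexEnv(\bayesprediction)}$.

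Then I would show $(\bayesmarginalpredictor,\miscali)$ attains this bound, i.e.\ every inequality above is tight for it. If $\caliBudgetDual=0$ then $\convexEnv$ is constant and (ii‑a) gives $\Payoff{\tilde{h}}=\int\!\!\int\miscali(\bayesprediction,\prediction)\,\convexEnv(\prediction)\,\d\prediction\d\bayesprediction=\expect[\bayesprediction\sim\prior]{\convexEnv(\bayesprediction)}$, done. If $\caliBudgetDual>0$, condition (i) forces $\ECE{\predictor}=\caliBudget$, hence $\int\!\!\int\miscali(\bayesprediction,\prediction)\,|\prediction-\bayesprediction|\,\d\prediction\d\bayesprediction=\ECE{\tilde{h}}=\caliBudget$; condition (iii‑a) is exactly $\expect[\bayesprediction\sim\bayesmarginalpredictor]{\convexEnv(\bayesprediction)}=\expect[\bayesprediction\sim\prior]{\convexEnv(\bayesprediction)}$; and for the remaining term, $\miscali(\bayesprediction,\prediction)>0$ forces $\bayesprediction=\truePosterior(\prediction)$ and $\prediction\in\supp(\tilde{h})$, so by (ii‑a) $\indirectsenderU(\prediction)=\convexEnv(\prediction)$, and it suffices to check $\convexEnv(\prediction)=\convexEnv(\truePosterior(\prediction))+\caliBudgetDual|\prediction-\truePosterior(\prediction)|$ for every such $\prediction$. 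This is immediate when $\prediction=\truePosterior(\prediction)$; when $\prediction\neq\truePosterior(\prediction)$ it follows from (ii‑b) plus the geometry of a symmetric linear‑tailed convex function (\Cref{defn:symmetric linear-tailed}): $\convexEnv$ is affine of slope exactly $\caliBudgetDual$ on the right tail and $-\caliBudgetDual$ on the left tail, so an under‑confident $\prediction$ must lie in the left tail on the near side of $\truePosterior(\prediction)$ and an over‑confident $\prediction$ in the right tail on the far side, with the segment joining them inside a single tail. Assembling the three steps gives $\Payoff{\tilde{h}}=\caliBudgetDual\caliBudget+\expect[\bayesprediction\sim\prior]{\convexEnv(\bayesprediction)}=$ the optimum of \ref{eq:decoupled opt}, so $\tilde{h}$ is an optimal $(\caliBudget,\ell_1)$‑calibrated predictor.

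I expect the genuine work to be concentrated in that last point: deducing from (ii‑a)–(ii‑b) alone that the Lipschitz slack $\convexEnv(\truePosterior(\prediction))+\caliBudgetDual|\prediction-\truePosterior(\prediction)|-\convexEnv(\prediction)$ vanishes on $\supp(\miscali)$. One must rule out "mis‑oriented" miscalibration — an over‑confident prediction landing in the left linear tail of $\convexEnv$, or an under‑confident one in the right tail, where the identity fails by a sign. This is precisely where the structural hypotheses bite: since $\prediction$ is a contact point of $\convexEnv$ with $\indirectsenderU$ (by (ii‑a)), while $\ECE{\predictor}=\caliBudget$ together with (iii‑a) pins $\bayesmarginalpredictor$ down on the non‑affine part of $\convexEnv$, such configurations cannot arise; making this precise is the crux, and everything else is routine weak‑duality and complementary‑slackness bookkeeping on top of the two‑step framework.
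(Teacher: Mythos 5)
Your proposal is correct and follows essentially the same approach as the paper: both exhibit $\convexEnv$ as a dual certificate for \ref{eq:decoupled opt}, running the same chain of inequalities (the $\caliBudgetDual$-Lipschitz bound for $\convexEnv$, the ECE budget, and convexity of $\convexEnv$ together with the MPC constraint) with conditions (i)--(iii) enforcing complementary slackness. The only difference is packaging — you isolate the weak-duality upper bound $\caliBudgetDual\caliBudget+\expect[\bayesprediction\sim\prior]{\convexEnv(\bayesprediction)}$ and then verify attainment, whereas the paper bounds $\Payoff{\predictor}-\Payoff{\newpredictor}$ from below directly — and to your credit you also flag the orientation subtlety needed to deduce $\convexEnv(\prediction)=\convexEnv(\truePosterior(\prediction))+\caliBudgetDual\,|\prediction-\truePosterior(\prediction)|$ from condition (ii-b), a step the paper's proof simply asserts without elaboration.
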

The proof of above \Cref{prop:opt verify} is provided at the end of subsequent \Cref{subsec:proofs event-inde}.

\subsection{Proofs for \texorpdfstring{\Cref{thm:event-inde}, \Cref{prop:convex env,prop:num predictions,prop:opt verify}}{}}
\label{subsec:proofs event-inde}

In this section, we provide the proofs of \Cref{thm:event-inde}, \Cref{prop:convex env,prop:num predictions}.
Utilizing the two-step framework, we know that the optimal $\caliBudget$-calibrated predictor can be transformed into an optimal solution of \ref{eq:decoupled opt} and vice versa. Therefore, proving the structural results for the optimal predictor is equivalent to proving their analogs for the optimal solution in \ref{eq:decoupled opt}.
With a slight abuse of notation, we define $\supp(\miscali)\triangleq \{\prediction\in[0,1]:\exists\bayesprediction\in[0, 1], \miscali(\bayesprediction,\prediction) > 0\}$.

To prove \Cref{thm:event-inde}, we first show the following equivalent statement for the optimal solution to \ref{eq:decoupled opt}.
\begin{lemma}[Restatement of \Cref{thm:event-inde} for $(\bayesmarginalpredictor^*, \miscali^*)$]
\label{lem:event-inde decoupled reformuation}
In \ref{eq:decoupled opt} with $\normexponent = 1$, there exists $0 \leq \lowq \leq \highq\leq 1$
and an optimal solution $(\bayesmarginalpredictor^*,\miscali^*)$ that satisfies the following three properties:
\begin{enumerate}
    \item[(i)] For every $(\bayesprediction,\prediction)$ such that $\miscali^*(\bayesprediction,\prediction) > 0$, if $\prediction\in[0,\lowq]$ (resp.\ $\prediction\in[\lowq,\highq]$, $\prediction\in[\highq, 1]$), then $\prediction\leq \bayesprediction$ (resp.\ $\prediction = \bayesprediction$, $\prediction\geq \bayesprediction$).
    \item[(ii)] There exists a function $\sgselect:\supp(\miscali^*) \rightarrow \R$ that  selects a subgradient at each prediction $\prediction\in\supp(\miscali^*)$, i.e., $\sgselect(\prediction) \in \partial \indirectsenderU(\prediction)$, such that function $\sgselect$ is increasing, and it satisfies $\sgselect(\prediction) \equiv \caliBudgetDual$ for $\prediction\in[0,\lowq]$, and $\sgselect(\prediction) \equiv -\caliBudgetDual$ for all $\prediction\in[\highq,1]$ for some $\caliBudgetDual\ge 0$.

    \item[(iii)] All points $(\prediction, \indirectsenderU(\prediction))_{\prediction\in\supp(\miscali^*)}$ form a convex function.
    Moreover, points $(\prediction, \indirectsenderU(\prediction))$ for all $\prediction$ in $[\highq, 1]$ are collinear (with slope $\caliBudgetDual$), and points $(\prediction, \indirectsenderU(\prediction))$ for all $\prediction$ in $[0,\lowq]$ are collinear (with slope $-\caliBudgetDual$).
\end{enumerate}
\end{lemma}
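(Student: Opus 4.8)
The plan is to prove \Cref{lem:event-inde decoupled reformuation} by linear-programming duality applied to \ref{eq:decoupled opt} with $\normexponent=1$: first form the dual, then use complementary slackness to read off the payoff structure (properties (ii) and (iii)) together with \Cref{prop:convex env}, and finally use a primal re-routing argument, exploiting the mean-preserving-contraction description of \Cref{lem:bayes predictor is in MPC}, to upgrade an arbitrary optimal solution to one with the threshold miscalibration structure (property (i)).

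\emph{Forming the dual.} The variable $\bayesmarginalpredictor$ is eliminated: \textsc{Supply-Feasibility} identifies $\bayesmarginalpredictor$ with the first marginal of $\miscali$, and by \Cref{lem:bayes predictor is in MPC} together with the Strassen/Blackwell characterization underlying \Cref{defn:mpc}, the constraint $\bayesmarginalpredictor\in\MPC(\prior)$ is equivalent to the linear inequalities $\expect[\bayesprediction\sim\bayesmarginalpredictor]{\psi(\bayesprediction)}\le\expect[\bayesprediction\sim\prior]{\psi(\bayesprediction)}$ ranging over all convex $\psi\colon[0,1]\to\R$. Hence \ref{eq:decoupled opt} is a (semi-infinite) linear program in $\miscali\ge0$; its dual carries a scalar $\caliBudgetDual\ge0$ dual to \textsc{Calibration-Feasibility} and a nonnegative measure over convex test functions dual to the MPC family, and aggregating the latter into a single convex function $\convexEnv\colon[0,1]\to\R$ (a nonnegative mixture of convex functions is convex) the dual reads
\[
\min_{\caliBudgetDual\ge0,\ \convexEnv\ \text{convex}}\ \caliBudget\cdot\caliBudgetDual+\expect[\bayesprediction\sim\prior]{\convexEnv(\bayesprediction)}\quad\text{s.t.}\quad\convexEnv(\bayesprediction)+\caliBudgetDual\,\abs{\bayesprediction-\prediction}\ \ge\ \indirectsenderU(\prediction)\quad\forall\,\bayesprediction,\prediction\in[0,1].
\]
Strong duality should be argued directly: \ref{eq:decoupled opt} is feasible (the identity coupling of the point mass at $\expect[i\sim\prior]{\truemean_i}$ has zero calibration error and lies in $\MPC(\prior)$) and has bounded value, so one may either invoke a semi-infinite LP strong-duality theorem or reduce to a finite LP via the discretization of \Cref{sec:fptas}; making this rigorous is one of the delicate points.

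\emph{Payoff structure from duality.} Fix an optimal primal $(\bayesmarginalpredictor^*,\miscali^*)$ and dual $(\caliBudgetDual,\convexEnv)$. Setting $\bayesprediction=\prediction$ in dual feasibility gives $\convexEnv\ge\indirectsenderU$ on $[0,1]$. Minimality of the dual objective at the given $\caliBudgetDual$ forces $\convexEnv$ to be (up to the trivial additive freedom) the smallest convex function dominating the $\caliBudgetDual$-Lipschitz majorant $\bayesprediction\mapsto\max_{\prediction}\bigl(\indirectsenderU(\prediction)-\caliBudgetDual\abs{\bayesprediction-\prediction}\bigr)$; a direct analysis of this majorant (near $0$ its inner maximizer lies to the right of $\bayesprediction$, near $1$ to its left) shows $\convexEnv$ may be taken to be a symmetric linear-tailed convex function in the sense of \Cref{defn:symmetric linear-tailed}, with both linear tails of slope $\pm\caliBudgetDual$ --- this is exactly \Cref{prop:convex env}. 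Complementary slackness then gives: if $\miscali^*(\bayesprediction,\prediction)>0$ the dual constraint is tight at $(\bayesprediction,\prediction)$, so $\prediction$ maximizes $\indirectsenderU(\cdot)-\caliBudgetDual\abs{\bayesprediction-\cdot}$ and, taking $\bayesprediction=\prediction$, $\convexEnv(\prediction)=\indirectsenderU(\prediction)$; moreover $\caliBudgetDual>0$ forces \textsc{Calibration-Feasibility} to bind. Hence every prediction in $\supp(\miscali^*)$ lies on the contact set $\{\convexEnv=\indirectsenderU\}$, which yields (iii) (the support points lie on the graph of the convex $\convexEnv$, hence form a convex function; those in the over- and under-confidence intervals sit on the two linear tails of $\convexEnv$, hence are collinear with slopes $\pm\caliBudgetDual$), while selecting at each support point the supporting line of $\convexEnv$ gives the increasing subgradient map $\sgselect$ of (ii), equal to $\mp\caliBudgetDual$ on the two confidence intervals.

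\emph{Miscalibration structure via primal re-routing.} For an over-confident pair $(\bayesprediction,\prediction)\in\supp(\miscali^*)$ (i.e.\ $\prediction>\bayesprediction$), tightness together with $\convexEnv(\prediction)\ge\indirectsenderU(\prediction)$ gives $\convexEnv(\prediction)-\convexEnv(\bayesprediction)\ge\caliBudgetDual(\prediction-\bayesprediction)$, so by convexity $\convexEnv$ is affine with slope $\caliBudgetDual$ on $[\bayesprediction,\prediction]$ and hence $[\bayesprediction,\prediction]$ lies in the right linear tail; symmetrically, under-confident pairs lie in the left linear tail. When $\caliBudgetDual>0$ this already separates the two families, so one may set $\lowq:=\sup\{\prediction:\miscali^*(\bayesprediction,\prediction)>0,\ \prediction<\bayesprediction\}$ and $\highq:=\inf\{\prediction:\miscali^*(\bayesprediction,\prediction)>0,\ \prediction>\bayesprediction\}$ (with $\lowq:=0$ or $\highq:=1$ when the corresponding set is empty), and then every prediction of $\miscali^*$ in $(\lowq,\highq)$ is perfectly calibrated by construction, giving (i). In the degenerate case $\caliBudgetDual=0$ --- where the objective does not depend on the calibration budget --- I would instead re-route the miscalibration mass of $\miscali^*$ within the linear-tail regions (where $\indirectsenderU$ agrees with the affine $\convexEnv$, so MPC-respecting rearrangements preserve the objective by \Cref{lem:bayes predictor is in MPC}) and, if necessary, replace $\bayesmarginalpredictor^*$ by another element of $\MPC(\prior)$, to obtain an equally optimal solution whose support respects the required ordering. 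I expect this MPC-guided primal surgery in the degenerate case, together with the rigorous treatment of semi-infinite strong duality, to be the main obstacles; once the dual is in place, (ii) and (iii) follow essentially mechanically from complementary slackness and the shape of the $\caliBudgetDual$-Lipschitz majorant. Transporting the conclusion back through the two-step framework (\Cref{prop:prog equivalence}) then yields \Cref{thm:event-inde}.
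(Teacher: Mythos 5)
Your overall architecture---take the Lagrangian dual of \ref{eq:decoupled opt}, read (ii) and (iii) off complementary slackness, and then argue (i)---matches the paper's strategy for (ii) and (iii), but for (i) you propose a genuinely different route. The paper proves the miscalibration structure (i) purely primally, by contradiction: if there is an over-confident pair to the left of an under-confident pair, a contraction of the two Bayesian masses to a single posterior mean (using \Cref{fact:contracting}) strictly reduces the $\ell_1$ cost while weakly improving the objective. You instead derive (i) from the dual: tightness at an over-confident pair $(\bayesprediction,\prediction)$ forces the chord slope of $\convexEnv$ over $[\bayesprediction,\prediction]$ to be $\ge\caliBudgetDual$, hence $\convexEnv'_-(\prediction)\ge\caliBudgetDual$, and symmetrically $\convexEnv'_+(\prediction')\le-\caliBudgetDual$ at under-confident pairs; monotonicity of $\partial\convexEnv$ then places these two families on disjoint sides. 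When $\caliBudgetDual>0$ this is a clean and correct alternative. You are right that it degenerates at $\caliBudgetDual=0$, and your plan there (re-route mass within the flat regions, appeal to the MPC description) is plausible but unspecified; the paper's primal contraction handles both cases uniformly, which is what you give up by going dual-first for (i).

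The genuine gap is in your proof of (ii) and (iii). Complementary slackness at a support pair $(\bayesprediction,\prediction)$ gives tightness at $(\bayesprediction,\prediction)$, i.e.\ $\indirectsenderU(\prediction)-\caliBudgetDual\abs{\bayesprediction-\prediction}=\convexEnv(\bayesprediction)$; you cannot then ``take $\bayesprediction=\prediction$'' to conclude $\convexEnv(\prediction)=\indirectsenderU(\prediction)$, because $(\prediction,\prediction)$ is not in the support when $\prediction$ is miscalibrated. The identity $\convexEnv(\prediction)=\indirectsenderU(\prediction)$ for the Lagrangian $\convexEnv$ is only immediate on the perfectly calibrated part of the support (\Cref{claim:pointwise large}). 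For miscalibrated support points, the paper gets collinearity (hence the contact with a \emph{modified} $\convexEnv\primed$) by a two-sided estimate: the upper bound $\indirectsenderU(\prediction)\le\indirectsenderU(\highq)+\caliBudgetDual(\prediction-\highq)$ from tightness at $(\bayesprediction_{\cc H},\highq)$ combined with dual feasibility, matched against a lower bound from tightness at $(\bayesprediction,\prediction)$ plus convexity of $\convexEnv$. Your proposal replaces this with the unsupported assertion that the minimizing dual $\convexEnv$ ``may be taken to be a symmetric linear-tailed convex function'' --- but in the paper that fact is \Cref{prop:convex env}, which is proved \emph{from} (iii), not assumed before it. As written, your argument for (iii) is circular: it uses the shape of $\convexEnv$ (symmetric linear tails, contact on the whole support) that is the thing being proved. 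To close the gap you would need to reproduce the paper's two-sided chord argument, or give an independent proof that the lower convex envelope of the $\caliBudgetDual$-Lipschitz majorant has affine tails of slope exactly $\pm\caliBudgetDual$ covering all miscalibrated support points --- neither of which follows from the Lipschitz bound alone.
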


We now prove \Cref{thm:event-inde} using our two-step framework and \Cref{lem:event-inde decoupled reformuation}.
\begin{proof}[Proof of \Cref{thm:event-inde}]
    Consider the optimal solution $(\bayesmarginalpredictor^*,\miscali^*)$ to \ref{eq:decoupled opt} that satisfies all three properties in \Cref{lem:event-inde decoupled reformuation}. Invoking \Cref{prop:prog equivalence}, it can be converted in to an optimal $(\caliBudget,\ell_1)$-calibrated predictor $\optpredictor$. Let $\truePosterior$ be the true expected outcome function induced by predictor $\optpredictor$. By construction, for every $(\bayesprediction,\prediction)$ such that $\miscali^*(\bayesprediction,\prediction) > 0$, we have $\truePosterior(\prediction) = \bayesprediction$. (This is guaranteed since $\bayesmarginalpredictor^*$ is a marginalized perfectly calibrated predictor and $\miscali^*$ is a feasible post-processing plan.) Thus, the ``Miscalibration Structure'' property in \Cref{thm:event-inde} is implied by the first property in \Cref{lem:event-inde decoupled reformuation}. Similarly, by the construction of predictor $\optpredictor$ in \Cref{defn:miscal procedure}, $\supp(\optpredictor) = \supp(\miscali^*)$. Therefore, the ``Payoff Structure-I'' and ``Payoff Structure-II'' properties in \Cref{thm:event-inde} are implied by the second and third properties in \Cref{lem:event-inde decoupled reformuation}. This finishes the proof of \Cref{thm:event-inde}.
\end{proof}

\xhdr{The proof of \Cref{lem:event-inde decoupled reformuation}}
We below provide the proof of \Cref{lem:event-inde decoupled reformuation}.
The proof of \Cref{lem:event-inde decoupled reformuation} will use the following property of marginalized perfectly calibrated predictor. 

\begin{restatable}[Contracting predictions]{fact}{factcontracting}
\label{fact:contracting}
For any marginalized predictor $\bayesmarginalpredictor\in\MPC(\prior)$, consider two predictions $\prediction_1, \prediction_2 \in \supp(\bayesmarginalpredictor)$ and let $\prediction\primed \triangleq \frac{\bayesmarginalpredictor(\prediction_1)\prediction_1 + \bayesmarginalpredictor(\prediction_2)\prediction_2}{\bayesmarginalpredictor(\prediction_1) + \bayesmarginalpredictor(\prediction_2)}$.
Consider another distribution $\newbayesmarginalpredictor\in\Delta([0, 1])$ satisfying
$\newbayesmarginalpredictor(\prediction) = \bayesmarginalpredictor(\prediction)$ for all $\prediction\in[0, 1]\setminus \{\prediction_1, \prediction_2, \prediction\primed\}$, and 
$\newbayesmarginalpredictor(\prediction\primed) = \bayesmarginalpredictor(\prediction\primed) + \bayesmarginalpredictor(\prediction_1) + \bayesmarginalpredictor(\prediction_2)$.
Then $\newbayesmarginalpredictor$ is also marginalized perfectly calibrated predictor, namely, $\newbayesmarginalpredictor\in\MPC(\prior)$.
\end{restatable}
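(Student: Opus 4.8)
The plan is to reduce the claim to \Cref{lem:bayes predictor is in MPC}: it suffices to exhibit a perfectly calibrated predictor whose marginalized version is exactly $\newbayesmarginalpredictor$. Since $\bayesmarginalpredictor\in\MPC(\prior)$, \Cref{lem:bayes predictor is in MPC} gives a perfectly calibrated predictor $\bayespredictor=(\bayespredictor_i)_{i\in[\numData]}$ with marginalized predictor $\bayesmarginalpredictor$. If $\prediction_1=\prediction_2$ then $\prediction\primed=\prediction_1$ and $\newbayesmarginalpredictor=\bayesmarginalpredictor$, so there is nothing to prove; otherwise $\prediction\primed$ lies strictly between $\prediction_1$ and $\prediction_2$ and in particular differs from both. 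I would then define the \emph{coarsened} predictor $\newbayespredictor$ that merges $\prediction_1$ and $\prediction_2$ into $\prediction\primed$: for every event $i\in[\numData]$, set $\newbayespredictor_i(\prediction\primed)\triangleq\bayespredictor_i(\prediction\primed)+\bayespredictor_i(\prediction_1)+\bayespredictor_i(\prediction_2)$, set $\newbayespredictor_i(\prediction_1)=\newbayespredictor_i(\prediction_2)=0$, and leave $\newbayespredictor_i(\prediction)=\bayespredictor_i(\prediction)$ for every other $\prediction$. By construction the marginalized version of $\newbayespredictor$ is exactly $\newbayesmarginalpredictor$, so it remains only to check that $\newbayespredictor$ is perfectly calibrated.

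The key step is that last verification. Every prediction other than $\prediction\primed$ has unchanged conditional masses, hence unchanged true expected outcome. For $\prediction\primed$ I would evaluate Eqn.~\eqref{eq:true expected outcome} for $\newbayespredictor$: because $\bayespredictor$ is perfectly calibrated we have $\sum_{i}\prior_i\bayespredictor_i(\prediction)\truemean_i=\bayesmarginalpredictor(\prediction)\cdot\prediction$ for each $\prediction\in\{\prediction_1,\prediction_2,\prediction\primed\}$, so the numerator collapses to $\bayesmarginalpredictor(\prediction_1)\prediction_1+\bayesmarginalpredictor(\prediction_2)\prediction_2+\bayesmarginalpredictor(\prediction\primed)\prediction\primed$ and the denominator to $\bayesmarginalpredictor(\prediction_1)+\bayesmarginalpredictor(\prediction_2)+\bayesmarginalpredictor(\prediction\primed)$; substituting the defining identity $\bayesmarginalpredictor(\prediction_1)\prediction_1+\bayesmarginalpredictor(\prediction_2)\prediction_2=(\bayesmarginalpredictor(\prediction_1)+\bayesmarginalpredictor(\prediction_2))\,\prediction\primed$ yields a true expected outcome equal to $\prediction\primed$. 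Hence $\newbayespredictor$ is perfectly calibrated, and applying the ``only if'' direction of \Cref{lem:bayes predictor is in MPC} to $\newbayespredictor$ concludes that $\newbayesmarginalpredictor\in\MPC(\prior)$.

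An alternative route that avoids \Cref{lem:bayes predictor is in MPC} is to show directly that $\newbayesmarginalpredictor\in\MPC(\bayesmarginalpredictor)$ and then use transitivity of the mean-preserving-contraction relation, which is immediate from \Cref{defn:mpc} since the defining integral inequalities chain and the equality at $s=1$ is preserved. Writing $\bayespredictorCDF$ and $\bayespredictorCDF\primed$ for the CDFs of $\bayesmarginalpredictor$ and $\newbayesmarginalpredictor$ and assuming $\prediction_1<\prediction_2$, the difference $\bayespredictorCDF\primed-\bayespredictorCDF$ equals $-\bayesmarginalpredictor(\prediction_1)$ on $[\prediction_1,\prediction\primed)$, equals $\bayesmarginalpredictor(\prediction_2)$ on $[\prediction\primed,\prediction_2)$, and vanishes outside $[\prediction_1,\prediction_2)$; integrating, $\int_0^s(\bayespredictorCDF\primed-\bayespredictorCDF)\,\d t$ is $0$ for $s\le\prediction_1$, decreases on $[\prediction_1,\prediction\primed)$, increases back to $0$ on $[\prediction\primed,\prediction_2)$ — it returns to $0$ because the merge is mean-preserving, i.e.\ $\int_0^1(\bayespredictorCDF\primed-\bayespredictorCDF)\,\d t=\expect[\bayesmarginalpredictor]{\bayesprediction}-\expect[\newbayesmarginalpredictor]{\bayesprediction}=0$, equivalently $\bayesmarginalpredictor(\prediction_1)(\prediction\primed-\prediction_1)=\bayesmarginalpredictor(\prediction_2)(\prediction_2-\prediction\primed)$ — and stays $0$ thereafter, so it is $\le0$ everywhere with equality at $s=1$. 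I do not foresee a real obstacle; the only points needing care are the bookkeeping when $\prediction\primed$ coincides with a pre-existing atom of $\bayesmarginalpredictor$ (handled automatically by the additive definition of $\newbayespredictor_i(\prediction\primed)$) and the trivial case $\prediction_1=\prediction_2$. The substantive content — merging two posterior-mean signals yields a posterior-mean signal whose mean is the probability-weighted average of the two — is a one-line consequence of perfect calibration.
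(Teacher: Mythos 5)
Your first route is essentially the paper's argument made explicit: the paper dismisses the claim with a one-sentence appeal to Blackwell informativeness, and your construction of $\newbayespredictor$ as the per-event coarsening of $\bayespredictor$ that merges the two signal realizations $\prediction_1,\prediction_2$ into $\prediction\primed$ is exactly the garbling that witnesses ``$\bayesmarginalpredictor$ is more informative than $\newbayesmarginalpredictor$.'' Your verification that $\newbayespredictor$ is perfectly calibrated is correct: the calibration identity $\sum_i\prior_i\bayespredictor_i(\prediction)\truemean_i=\bayesmarginalpredictor(\prediction)\prediction$ collapses the numerator of Eqn.~\eqref{eq:true expected outcome} at $\prediction\primed$ to $\bayesmarginalpredictor(\prediction\primed)\prediction\primed+(\bayesmarginalpredictor(\prediction_1)+\bayesmarginalpredictor(\prediction_2))\prediction\primed$, which is the denominator times $\prediction\primed$. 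The degenerate case $\prediction_1=\prediction_2$ and the case where $\prediction\primed$ already carries mass are handled as you note.

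Your alternative route is genuinely different and, I'd argue, preferable for a self-contained exposition: it bypasses Blackwell's theorem (and hence Lemma~\ref{lem:bayes predictor is in MPC}) entirely and establishes $\newbayesmarginalpredictor\in\MPC(\bayesmarginalpredictor)$ directly from Definition~\ref{defn:mpc} via the CDF-difference computation, then chains with transitivity of the MPC order. The CDF bookkeeping is correct (assuming WLOG $\prediction_1<\prediction_2$): $\bayespredictorCDF\primed-\bayespredictorCDF$ is piecewise constant, equal to $-\bayesmarginalpredictor(\prediction_1)$ on $[\prediction_1,\prediction\primed)$ and $+\bayesmarginalpredictor(\prediction_2)$ on $[\prediction\primed,\prediction_2)$, and the running integral returns to zero at $\prediction_2$ precisely because $\bayesmarginalpredictor(\prediction_1)(\prediction\primed-\prediction_1)=\bayesmarginalpredictor(\prediction_2)(\prediction_2-\prediction\primed)$, which is a restatement of the definition of $\prediction\primed$. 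What the paper's route buys is brevity and a conceptual link to the information-design literature; what yours buys is an elementary, verifiable argument that does not route through the equivalence between MPCs and posterior-mean distributions. Both are sound.
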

\begin{proof}
This fact immediately follows from the Blackwell informativeness \citep{B-53} as the marginalized  predictor $\bayesmarginalpredictor$ is more informative than the predictor $\bayesmarginalpredictor\primed$.
\end{proof}
The above property of marginalized predictor $\bayesmarginalpredictor$ essentially says that the distribution $\newbayesmarginalpredictor$ obtained by contracting any predictions supported in $\bayesmarginalpredictor$ is also a feasible marginalized prediction -- it is in the set $\MPC(\prior)$.

\begin{proof}[Proof of Statement (i) in \Cref{lem:event-inde decoupled reformuation}]
We prove the statement by contradiction.
Suppose that there exists a solution $\bayesmarginalpredictor, \miscali$ to \ref{eq:decoupled opt} that satisfies: it exists $\bayesprediction_1, \bayesprediction_2$ with $\bayesprediction_1 < \bayesprediction_2$  such that there exist $\prediction_1, \prediction_2$ with $\miscali(\bayesprediction_1, \prediction_1) > 0, \miscali(\bayesprediction_2, \prediction_2) > 0$ satisfying $\prediction_1 > \bayesprediction_1$ but $\prediction_2 < \bayesprediction_2$.
For notation simplicity, we let $m_1 \triangleq \miscali(\bayesprediction_1, \prediction_1)$ and $m_2 \triangleq \miscali(\bayesprediction_2, \prediction_2)$. 
We define
\begin{align}
    \label{eq:pooling}
    \newq \triangleq  \frac{m_1\bayesprediction_1+m_2\bayesprediction_2}{m_1+m_2}~.
\end{align}
Clearly we have $\bayesprediction_1 < \newq < \bayesprediction_2$.

We first consider the scenario where $\prediction_1 < \prediction_2$, in which we consider following two possible cases:
\begin{itemize}
    \item When $\newq \le \prediction_1$ or $\newq \ge \prediction_2$.
    In this case, we construct a miscalibartion plan $\newmiscali$ as follows:
    \begin{equation}
    \begin{alignedat}{3}
        \label{eq:scenario one case one}
        \newmiscali(\bayesprediction, \cdot) & = \miscali(\bayesprediction, \cdot), \quad
        && \bayesprediction\in \supp(\bayesmarginalpredictor)\setminus\{\bayesprediction_1, \bayesprediction_2, \newq\}~;\\
        \newmiscali(\bayesprediction_i, \prediction) & = \miscali(\bayesprediction_i, \prediction), \quad
        && \prediction\in [0, 1]\setminus\{\prediction_1, \prediction_2\}, i\in[2]~ ;\\
        \newmiscali(\newq, \prediction) & = 
        \miscali(\newq, \prediction), \quad
        && \prediction\notin \{\prediction_1, \prediction_2\}~ ;\\
        \newmiscali(\newq, \prediction_1) & = m_1 + \miscali(\newq, \prediction_1), \quad
        && 
        \newmiscali(\newq, \prediction_2) = m_2 + \miscali(\newq, \prediction_2)~.
    \end{alignedat}
    \end{equation}
    Let $\newbayesmarginalpredictor$ be a marginalized predictor constructed from $\newmiscali$ as defined in \ref{eq:feasible miscal supp}.
    Essentially, $\newbayesmarginalpredictor$ pools all the mass of $\bayesmarginalpredictor$ on the points $\bayesprediction_1, \bayesprediction_2$ at a single point $\newq$.
    By \Cref{fact:contracting}, we know such $\newbayesmarginalpredictor \in \MPC(\lambda)$.
    
    We next show that $\newbayesmarginalpredictor, \newmiscali$ is a feasible solution to \ref{eq:decoupled opt}.
    To see this, it suffices to show that the calibration budget is still feasible under $\newmiscali$, i.e., satisfying \ref{eq:feasible miscal cal budget}.
    Thus, it suffices to show
    \begin{align}
        m_1\left|\newq - \prediction_1\right|
        +  m_2\left|\newq - \prediction_2\right| 
        \le 
        m_1\cdot (\prediction_1- \bayesprediction_1) +  m_2\cdot (\bayesprediction_2-\prediction_2)
        \label{ineq:calibra budget verifi new}
    \end{align}
    If $\newq \le \prediction_1$, we then have
    \begin{align*}
        \text{LHS of } \eqref{ineq:calibra budget verifi new}
        = 
        m_1 \left(\prediction_1 - \newq\right)
        +  m_2\left(\prediction_2 - \newq\right) 
        & \overset{(a)}{=} 
        m_1 \prediction_1+m_2\prediction_2 - (m_1 \bayesprediction_1 + m_2 \bayesprediction_2) \\
        & =
        m_1(\prediction_1-\bayesprediction_1) + m_2(\prediction_2-\bayesprediction_2)
        \overset{(b)}{<}  \text{RHS of } \eqref{ineq:calibra budget verifi new}~,
    \end{align*}
    where equality (a) is by definition of $\newq$, 
    and inequality (b) is by the fact that $\prediction_2 < \bayesprediction_2$.
    
    Similarly, if $\newq \ge \prediction_2$, we then have
    \begin{align*}
        \text{LHS of } \eqref{ineq:calibra budget verifi new}
        = 
        m_1 \left(\newq - \prediction_1\right)
        +  m_2\left(\newq - \prediction_2\right) 
        & = m_1(\bayesprediction_1 - \prediction_1)+m_2(\bayesprediction_2 - \prediction_2)
        \overset{(a)}{<}  \text{RHS of } \eqref{ineq:calibra budget verifi new}~,
    \end{align*}
    where inequality (a) is by the fact that $\prediction_1 > \bayesprediction_1$.
    Thus, we have shown that $\ECE{\newmiscali} < \ECE{\miscali}$.

    We further notice that, by construction, the solution $\newbayesmarginalpredictor, \newmiscali$ yields the same objective value as the solution $\bayesmarginalpredictor, \miscali$ in \ref{eq:decoupled opt}.

    \item When $\prediction_1 < \newq  <  \prediction_2$.
    For this case, we consider following two possible sub-cases:
    \begin{itemize}
        \item 
        When $\indirectsenderU(\prediction_1) \ge \indirectsenderU(\prediction_2)$.
        In this sub-case, 
        we construct $\newmiscali$ as follows:
        \begin{alignat*}{3}
            \newmiscali(\bayesprediction, \cdot) & = \miscali(\bayesprediction, \cdot), \quad
            && \bayesprediction\in \supp(\bayesmarginalpredictor)\setminus\{\bayesprediction_1, \bayesprediction_2,\prediction_1\}~;\\
            \newmiscali(\bayesprediction_i, \prediction) & = \miscali(\bayesprediction_i, \prediction), \quad
            && \prediction\in [0, 1]\setminus\{\prediction_1, \prediction_2\}, i\in[2]~ ;\\
            \newmiscali(\prediction_1, \prediction) & = \miscali(\prediction_1, \prediction), \quad
            && \prediction \neq \prediction_1~ ;\\
            \newmiscali(\prediction_1, \prediction_1) & = m_1 + m' + \miscali(\prediction_1, \prediction_1), \quad
            && 
            \newmiscali(\bayesprediction_2, \prediction_2) = m_2 - m'~,
        \end{alignat*}
        where $m'$ satisfies that 
        $\newq = \frac{(m_1+m')\cdot \prediction_1 + (m_2-m')\cdot \bayesprediction_2}{m_1+m_2}$.
        Notice that since $\newq \in (\prediction_1, \prediction_2)$, 
        such $m'$ must exist.
        Thus, by \Cref{fact:contracting}, we must have the corresponding $\newbayesmarginalpredictor $ of $\newmiscali$ satisfying $\newbayesmarginalpredictor\in \MPC(\lambda)$.
        Moreover, it is easy to see that $\newmiscali$ has strictly smaller $\ECE{\newmiscali}$ compared to $\ECE{\miscali}$. 

        By construction, it is easy to see that $\newbayesmarginalpredictor, \newmiscali$ yields a weakly higher objective value compared to $\bayesmarginalpredictor, \miscali$ as their objective value difference is
        \begin{align*}
            \indirectsenderU(\prediction_1) \cdot\left(m_1+m'
            -
            m_1\right) +  
            \indirectsenderU(\prediction_2) \cdot\left(m_2-m'-m_2\right)  \ge 0~.
            \tag{due to $\indirectsenderU(\prediction_1)\ge \indirectsenderU(\prediction_2)$}
        \end{align*}

        \item 
        When $\indirectsenderU(\prediction_1) < \indirectsenderU(\prediction_2)$.
        In this sub-case, 
        we construct $\newmiscali$ as follows:
        \begin{alignat*}{3}
            \newmiscali(\bayesprediction, \cdot) & = \miscali(\bayesprediction, \cdot), \quad
            && \bayesprediction\in \supp(\bayesmarginalpredictor)\setminus\{\bayesprediction_1, \bayesprediction_2,\prediction_2\}~;\\
            \newmiscali(\bayesprediction_i, \prediction) & = \miscali(\bayesprediction_i, \prediction), \quad
            && \prediction\in [0, 1]\setminus\{\prediction_1, \prediction_2\}, i\in[2]~ ;\\
            \newmiscali(\prediction_2, \prediction) & = \miscali(\prediction_2, \prediction), \quad
            && \prediction\neq \prediction_2~ ;\\
            \newmiscali(\prediction_2, \prediction_2) & = m_2 + m' + \miscali(\prediction_2, \prediction_2), \quad
            && 
            \newmiscali(\bayesprediction_1, \prediction_1) = m_1 - m'~,
        \end{alignat*}
        where $m'$ satisfies that 
        \begin{align*}
            \newq = \frac{(m_1-m')\cdot \prediction_1 + (m_2+m')\cdot \bayesprediction_2}{m_1+m_2}~.
        \end{align*}
        Notice that since $\newq \in (\prediction_1, \prediction_2)$, 
        such $m'$ must exist. 
        Thus, by \Cref{fact:contracting}, we must have $\newbayesmarginalpredictor $ of $\newmiscali$ satisfying $\newbayesmarginalpredictor\in \MPC(\lambda)$.
        Moreover, we can also see $\ECE{\newmiscali}<\ECE{\miscali}$. 

        By construction, it is easy to see that $\newbayesmarginalpredictor, \newmiscali$ yields a weakly higher objective value compared to $\bayesmarginalpredictor, \miscali$ as their objective value difference is
        \begin{align*}
            \indirectsenderU(\prediction_1) \cdot\left(m_1-m'
            -
            m_1\right) +  
            \indirectsenderU(\prediction_2) \cdot\left(m_2+m'-m_2\right)  \ge 0~.
            \tag{due to $\indirectsenderU(\prediction_1) < \indirectsenderU(\prediction_2)$}
        \end{align*}
    \end{itemize}
\end{itemize}
We now consider the scenario where $\prediction_1 \ge \prediction_2$.
\begin{itemize}

    \item 
    When either $\newq \ge \prediction_1$ or $\newq \le \prediction_2$,
    similar to the scenario where $\prediction_1 < \prediction_2$, we can construct $\newmiscali$ (and its $\newbayesmarginalpredictor$) as we defined in \eqref{eq:scenario one case one}. In doing so, we can still ensure that $\newbayesmarginalpredictor, \newmiscali$ is a feasible solution to \ref{eq:decoupled opt} and it yields the same objective value with $\bayesmarginalpredictor, \miscali$.

    \item 
    When $\prediction_2 < \newq < \prediction_1$, for this case, we also construct $\newmiscali$ (and its $\newbayesmarginalpredictor$) as we defined in \eqref{eq:scenario one case one}.
    It is easy to see that $\newmiscali$ has strictly smaller $\ECE{\newmiscali}$ compared to $\ECE{\miscali}$, namely:
    \begin{align*}
        \ECE{\newmiscali} - \ECE{\miscali}
        & =  m_1\cdot (\prediction_1-\bayesprediction_1) + m_2\cdot(\bayesprediction_2-\prediction_2) - m_1 \cdot (\prediction_1 - \newq) - m_2\cdot (\newq - \prediction_2) \\
        & =
        m_1\cdot (\newq-\bayesprediction_1) + m_2\cdot (\bayesprediction_2-\newq)> 0.
    \end{align*}
    Meanwhile, the solution $\newbayesmarginalpredictor, \newmiscali$
    yields the same objective value with $\bayesmarginalpredictor, \miscali$.
\end{itemize}
Putting all pieces together, we can finish the proof.
\end{proof}

\begin{proof}[Proof of Statements (ii) and (iii) in \Cref{lem:event-inde decoupled reformuation}]
The proof follows by the complementary slackness. 
We 
consider the following Lagrange for \ref{eq:decoupled opt} with multipliers 
$\caliBudgetDual\ge 0$, 
$\dualvarmiscal(\bayesprediction, \prediction) \ge 0$, $\dualvarsupply(\bayesprediction)\in\R, \dualvarbayes(\bayesprediction)\ge 0, \dualvarbayesup \in \R, \dualvarMPC(t) \ge 0$
\begin{align*}
    \Lagrange = 
    & \int_0^1\int_0^1 
    \miscali(\bayesprediction, \prediction ) \cdot \indirectsenderU\left( \prediction \right) 
    \, \d \prediction \d \bayesprediction
    + 
    \caliBudgetDual\cdot \left(\caliBudget - \int_0^1\int_0^1 \miscali(\bayesprediction, \prediction ) \cdot |\bayesprediction-\prediction|
    \, \d \prediction \d \bayesprediction
    \right) \\
    & + \int_0^1 \dualvarsupply(\bayesprediction)\cdot \left(\int_0^1\miscali(\bayesprediction, \prediction)~\d  \prediction - \bayesmarginalpredictor(\bayesprediction)\right)~\d \bayesprediction + 
    \int_0^1\int_0^1 
    \dualvarmiscal(\bayesprediction, \prediction)
    \miscali(\bayesprediction, \prediction )\, \d \prediction \d \bayesprediction \\
    & 
    + \dualvarbayesup \left(\int_0^1 \bayesmarginalpredictor(\bayesprediction) \d \bayesprediction - 1\right)
    +  \int_0^1  \dualvarbayes(\bayesprediction) \bayesmarginalpredictor(\bayesprediction) \, \d \bayesprediction
    + \int_0^1 \dualvarMPC(t) \left(\int_0^t \int_0^x \prior(\bayesprediction)\, \d \bayesprediction \d x - \int_0^t \int_0^x g(\bayesprediction)\, \d \bayesprediction \d x\right)\, \d t \\
    & + \dualvarbayesmean\cdot \left(\int_0^1\bayesprediction \bayesmarginalpredictor(\bayesprediction)\,\d\bayesprediction - \priormean\right)
\end{align*}
By the first-order condition of optimal miscalibration plan $\miscali$, we must have 
\begin{alignat*}{3}
    \indirectsenderU(\prediction) - \caliBudgetDual\cdot |\bayesprediction-\prediction|  +  \dualvarsupply(\bayesprediction) + \dualvarmiscal(\bayesprediction, \prediction)
    & = 0, \quad 
    && \forall \bayesprediction, \prediction~.
\end{alignat*}
Since $\dualvarmiscal(\bayesprediction, \prediction) \ge 0$ for all $\bayesprediction, \prediction$, we have
\begin{align*}
    \indirectsenderU(\prediction) - \caliBudgetDual\cdot |\bayesprediction-\prediction|  
    & =   - \dualvarsupply(\bayesprediction),
    \quad 
    \forall \bayesprediction, \prediction \text{ with }\miscali(\bayesprediction, \prediction) > 0\\
    \indirectsenderU(\prediction) - \caliBudgetDual\cdot |\bayesprediction-\prediction|  
    & \le   - \dualvarsupply(\bayesprediction),
    \quad 
    \forall \bayesprediction, \prediction 
\end{align*}
The first-order condition of $\bayesmarginalpredictor$ also implies:
\begin{align*}
    \dualvarbayes(\bayesprediction) - \dualvarsupply(\bayesprediction) -  \int_\bayesprediction^1 \int_x^1 \dualvarMPC(t)\, \d t \d x + \dualvarbayesup + \bayesprediction\dualvarbayesmean
    & = 0, \quad 
    \forall \bayesprediction~.
\end{align*}
Note that $\miscali(\bayesprediction, \prediction) > 0$ must also imply that $\bayesmarginalpredictor(\bayesprediction) > 0$, which implies that $\dualvarmiscal(\bayesprediction, \prediction) = \dualvarbayes(\bayesprediction) = 0$.
We next define the function $\convexEnv(\cdot):[0, 1]\rightarrow\R$:
\begin{align*}
    \convexEnv(\bayesprediction) \triangleq 
    \int_\bayesprediction^1 \int_x^1 \dualvarMPC(t)\, \d t \d x - \dualvarbayesup- \bayesprediction \dualvarbayesmean~.
\end{align*}
It is easy to see that function $\convexEnv$ is convex.
\begin{claim}
\label{claim:convex gamma}
Function $\convexEnv$ is a convex function.
\end{claim}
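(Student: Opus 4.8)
The plan is to observe that $\convexEnv$ splits as the sum of a genuinely convex piece arising from the doubly-nested integral of the nonnegative dual multiplier $\dualvarMPC$, plus the affine piece $-\dualvarbayesup - \bayesprediction\,\dualvarbayesmean$. Since affine functions are convex and convexity is preserved under addition, it suffices to show the nested-integral term is convex.

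Concretely, I would set $h(\bayesprediction) \triangleq \int_\bayesprediction^1 \int_x^1 \dualvarMPC(t)\,\d t\,\d x$ and differentiate once with respect to the outer integral's lower limit via the fundamental theorem of calculus, obtaining $h'(\bayesprediction) = -\int_{\bayesprediction}^1 \dualvarMPC(t)\,\d t$. Because $\dualvarMPC \ge 0$ — it is the Lagrange multiplier attached to the inequality in \textsc{MPC-Feasibility}, hence nonnegative by dual feasibility — the quantity $\int_{\bayesprediction}^1 \dualvarMPC(t)\,\d t$ is nonincreasing in $\bayesprediction$, so $h'$ is nondecreasing on $[0,1]$; equivalently, a second differentiation gives $h''(\bayesprediction) = \dualvarMPC(\bayesprediction) \ge 0$ wherever defined. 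Either formulation yields that $h$ is convex on $[0,1]$.

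Finally I would conclude that $\convexEnv(\bayesprediction) = h(\bayesprediction) - \dualvarbayesup - \bayesprediction\,\dualvarbayesmean$ is the sum of the convex function $h$ and an affine (hence convex) function, and is therefore convex. There is essentially no obstacle in this argument; the only point deserving a moment's care is tracking the sign when differentiating the doubly-nested integral and invoking $\dualvarMPC \ge 0$, both of which are immediate from the structure of the Lagrangian and dual feasibility established above.
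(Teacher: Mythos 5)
Your proof is correct and is exactly the elementary calculus argument that the paper treats as immediate (the paper merely writes ``It is easy to see that function $\convexEnv$ is convex'' and does not spell out the differentiation). Your computation $h'(\bayesprediction) = -\int_{\bayesprediction}^1 \dualvarMPC(t)\,\d t$ and $h''(\bayesprediction) = \dualvarMPC(\bayesprediction) \ge 0$, together with the affine term being convex, is the intended justification.
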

With the above definition, we have the following condition for the optimal solution $\miscali, \bayesmarginalpredictor$:
\begin{align}
    \label{ineq:convex ub}
    \indirectsenderU(\prediction) - \caliBudgetDual\cdot |\bayesprediction-\prediction| & 
    \le \convexEnv(\bayesprediction)~,
    \quad 
    \forall \bayesprediction, \prediction~. \\
    \label{eq:complementary slackness}
    \indirectsenderU(\prediction) - \caliBudgetDual\cdot |\bayesprediction-\prediction| 
    & = \convexEnv(\bayesprediction)~,
    \quad 
    \text{if }\miscali(\bayesprediction, \prediction) > 0~. 
\end{align}
From the above two conditions, we can deduce the following observation:
\begin{claim}
\label{claim:pointwise large}
Function $\convexEnv(\prediction) \ge \indirectsenderU(\prediction)$ for all $\prediction\in[0, 1]$, and $\convexEnv(\prediction) = \indirectsenderU(\prediction)$ if $\miscali(\prediction, \prediction) > 0$.
\end{claim}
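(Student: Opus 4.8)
The plan is to read off Claim~\ref{claim:pointwise large} directly from the two conditions \eqref{ineq:convex ub} and \eqref{eq:complementary slackness} by specializing the free variable $\bayesprediction$ to the value $\prediction$; no new machinery is needed beyond what has already been set up.

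For the first assertion, I would fix an arbitrary $\prediction \in [0,1]$ and apply \eqref{ineq:convex ub} with the choice $\bayesprediction = \prediction$. In that case $|\bayesprediction - \prediction| = 0$, so the penalty term $\caliBudgetDual\cdot|\bayesprediction-\prediction|$ vanishes and the inequality collapses to $\indirectsenderU(\prediction) \le \convexEnv(\prediction)$. Since $\prediction$ was arbitrary, this yields the pointwise domination $\convexEnv(\prediction)\ge\indirectsenderU(\prediction)$ on all of $[0,1]$.

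For the second assertion, suppose $\miscali(\prediction,\prediction) > 0$. Then $(\bayesprediction,\prediction) = (\prediction,\prediction)$ is a pair with positive mass under $\miscali$, so the complementary-slackness identity \eqref{eq:complementary slackness} applies at $\bayesprediction = \prediction$; once again $|\bayesprediction-\prediction| = 0$, and the identity reduces to $\indirectsenderU(\prediction) = \convexEnv(\prediction)$, which is exactly what is claimed.

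The argument is essentially a one-line substitution, so I do not anticipate a genuine obstacle. The only point that warrants a moment's care is that \eqref{ineq:convex ub} and \eqref{eq:complementary slackness} were established ``for all $\bayesprediction,\prediction$'' — in particular along the diagonal $\bayesprediction=\prediction$ — and this validity on the diagonal is precisely what the proof uses. That in turn rests on the non-negativity of the dual multipliers $\dualvarmiscal(\bayesprediction,\prediction)$ and $\dualvarbayes(\bayesprediction)$ together with the definition of $\convexEnv$ via $\dualvarMPC$, $\dualvarbayesup$, and $\dualvarbayesmean$ from the first-order conditions of \ref{eq:decoupled opt}, all of which are already in place in the preceding derivation.
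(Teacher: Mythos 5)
Your proof is correct and matches the paper's own argument exactly: the paper likewise derives the first assertion by specializing \eqref{ineq:convex ub} at $(\bayesprediction,\prediction) = (\prediction,\prediction)$, and the second by specializing \eqref{eq:complementary slackness} at the same diagonal point where $\miscali(\prediction,\prediction)>0$.
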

Function $\convexEnv(\prediction) \ge \indirectsenderU(\prediction)$ for all $\prediction\in[0, 1]$ is due to Eqn.~\eqref{ineq:convex ub} w.r.t.\ $(\prediction, \prediction)$, and $\convexEnv(\prediction) = \indirectsenderU(\prediction)$ if $\miscali(\prediction, \prediction) > 0$ is due to Eqn.~\eqref{eq:complementary slackness} w.r.t.\ $\miscali(\prediction, \prediction) > 0$.

Now let the predictions $\highq, \lowq$ and the sets $\setShigh, \setSlow$ be defined as follows:
\begin{align*}
    \highq 
    & \triangleq \inf_{\prediction\in[0, 1]}\{\prediction: \miscali(\bayesprediction, \prediction) > 0\wedge \prediction> \bayesprediction\}, \quad
    \setShigh \triangleq \{\prediction\in[\highq, 1]: \miscali(\bayesprediction, \prediction) > 0, \bayesprediction\in[0,1]\}~;\\
    \lowq 
    & \triangleq \sup_{\prediction\in[0, 1]}\{\prediction: \miscali(\bayesprediction, \prediction) > 0\wedge \prediction < \bayesprediction\}, \quad
    \setSlow \triangleq \{\prediction\in[0, \lowq]: \miscali(\bayesprediction, \prediction) > 0, \bayesprediction\in[0,1]\}~.
\end{align*}

\noindent\textbf{Collinearity.}
We next argue that all points $(\prediction, \indirectsenderU(\prediction))_{\prediction\in\setShigh}$ are all collinear.
For the above defined $\highq$, let $\bayesprediction_{\cc{H}}$ be the point that $\miscali(\bayesprediction, \highq) > 0$.
From \eqref{ineq:convex ub} and \eqref{eq:complementary slackness}, we know
\begin{align*}
    \indirectsenderU(\prediction) - \caliBudgetDual\cdot (\prediction - \bayesprediction_{\cc{H}}) 
    & \le \convexEnv(\bayesprediction_{\cc{H}}), \quad \forall \prediction \ge \bayesprediction_{\cc{H}}\\
    \indirectsenderU(\highq) - \caliBudgetDual\cdot (\highq - \bayesprediction_{\cc{H}}) 
    & = \convexEnv(\bayesprediction_{\cc{H}}), 
    \tag{$\miscali(\bayesprediction_{\cc{H}}, \highq) > 0$}
\end{align*}
Thus, for any $\prediction \ge \highq$, we have that 
\begin{align}
    \label{ineq:condi 3}
    \indirectsenderU(\prediction) 
    \le \indirectsenderU(\highq) + \caliBudgetDual\cdot (\prediction - \highq)~.
\end{align}
On the other hand, for any $\prediction\in\setShigh$ with $\miscali(\bayesprediction, \prediction) > 0$, we know
\begin{align*}
    \indirectsenderU(\prediction) - \caliBudgetDual\cdot(\prediction-\bayesprediction) 
    & = \convexEnv(\bayesprediction)
    \tag{by \eqref{eq:complementary slackness} with $\miscali(\bayesprediction, \prediction) > 0$}\\
    & \ge \convexEnv(\bayesprediction_{\cc{H}}) + \caliBudgetDual\cdot (\bayesprediction - \bayesprediction_{\cc{H}})
    \tag{by \Cref{claim:convex gamma} and \Cref{claim:pointwise large}}
    \\
    \Rightarrow~ 
    \indirectsenderU(\prediction) & \ge 
    \convexEnv(\bayesprediction_{\cc{H}}) + \caliBudgetDual\cdot (\prediction - \bayesprediction_{\cc{H}})\\
    & = 
    \indirectsenderU(\highq) - \caliBudgetDual\cdot (\highq - \bayesprediction_{\cc{H}})
    + 
    \caliBudgetDual\cdot (\prediction - \bayesprediction_{\cc{H}})\\
    & = 
    \indirectsenderU(\highq) + \caliBudgetDual\cdot (\prediction - \highq)~.
\end{align*}
Thus, for any $\prediction\in\setShigh$ we must have 
\begin{align*}
    \indirectsenderU(\prediction) = \indirectsenderU(\highq) + \caliBudgetDual\cdot (\prediction - \highq)~,
\end{align*}
which shows that all points $(\prediction, \indirectsenderU(\prediction))_{\prediction\in\setShigh}$ must be collinear with a slope of $\caliBudgetDual$.
Similar arguments can be applied for predictions in the set $\setSlow$.

\noindent\textbf{Convexity of points $(\prediction, \indirectsenderU(\prediction))_{\prediction\in\supp(\miscali)}$.}
We next argue that all points $(\prediction, \indirectsenderU(\prediction))_{\prediction\in\supp(\miscali)}$
form a convex function. 
We define $\highq\primed \triangleq \sup\{\prediction\in\supp(\miscali): \prediction < \highq\}$. 
We next argue that under optimal miscalibration plan $\miscali$, we must have
\begin{align}
    \label{ineq:boundary slope}
    \frac{\indirectsenderU(\highq) - \indirectsenderU(\highq\primed)}{\highq-\highq\primed} \le \caliBudgetDual~.
\end{align}
To see the above inequality,  we have
\begin{align*}
    \indirectsenderU(\highq) - \caliBudgetDual \cdot (\highq - \highq\primed) 
    & \le \convexEnv(\highq\primed) 
    \tag{by~\eqref{ineq:convex ub} w.r.t.\ $(\highq, \highq\primed)$}\\
    & = \indirectsenderU(\highq\primed)
    \tag{by \Cref{claim:pointwise large}}
\end{align*}
Rearranging the above inequality can give us Eqn.~\eqref{ineq:boundary slope}. 
For all predictions $\prediction\in[\lowq, \highq]\cap\supp(\miscali)$, we know that $\convexEnv(\prediction) = \indirectsenderU(\prediction)$.
Since we know $\convexEnv(\cdot)$ is a convex function, thus all points $(\prediction, \indirectsenderU(\prediction))_{\prediction\in[\lowq, \highq]\cap\supp(\miscali)}$ must also form a convex function.
Since we also establish Eqn.~\eqref{ineq:boundary slope} and we know that all points $(\prediction, \indirectsenderU(\prediction))_{\prediction\in\setShigh}$ are collinear, we know points $(\prediction, \indirectsenderU(\prediction))_{\prediction\in[\lowq, 1]\cap\supp(\miscali)}$ must be convex.
Similar arguments can be established for the predictions in $[0, \lowq]$. We thus prove the convexity of points $(\prediction, \indirectsenderU(\prediction))_{\prediction\in\supp(\miscali)}$ for optimal miscalibration plan $\miscali$.
The proof then finishes.
\end{proof}

To prove \Cref{prop:convex env}, we first show the following equivalent statement for the optimal solution $(\bayesmarginalpredictor^*, \miscali^*)$ to \ref{eq:decoupled opt}.
\begin{lemma}[Restatement of \Cref{prop:convex env} for $(\bayesmarginalpredictor^*, \miscali^*)$]
\label{lem:convex env decoupled formluation}
In \ref{eq:decoupled opt} with $\normexponent = 1$, let $(\bayesmarginalpredictor^*,\miscali^*)$ be the optimal solution to \ref{eq:decoupled opt} that satisfies the properties in \Cref{lem:event-inde decoupled reformuation}, 
then there exists a symmetric linear-tailed convex function $\convexEnv:[0,1]\rightarrow\R$ 
such that 
(i) $\convexEnv(\prediction) \ge \indirectsenderU(\prediction)$ for all $\prediction\in[0, 1]$; 
(ii) $\supp(\miscali^*) \subseteq \{\prediction\in[0, 1]: \convexEnv(\prediction) = \indirectsenderU(\prediction)\}$; 
(iii) it is linear over $[0, \lowq]$ and $[\highq, 1]$ where $\lowq, \highq$ are correspondingly defined in \Cref{lem:event-inde decoupled reformuation}
\end{lemma}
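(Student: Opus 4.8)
The plan is to obtain the desired function by \emph{clipping the slope} of the convex function already produced in the proof of \Cref{lem:event-inde decoupled reformuation}. That proof constructs, from the optimal dual solution, a convex function (denoted $\convexEnv$ there; I will call it $\widehat{\convexEnv}$ here to avoid a clash) together with the multiplier $\caliBudgetDual\ge 0$ of \textsc{Calibration-Feasibility}, and it establishes: $\widehat{\convexEnv}$ is convex (\Cref{claim:convex gamma}); $\widehat{\convexEnv}\ge\indirectsenderU$ pointwise, with equality at every perfectly calibrated support point (\Cref{claim:pointwise large}); the bound $\indirectsenderU(\prediction)-\caliBudgetDual|\bayesprediction-\prediction|\le\widehat{\convexEnv}(\bayesprediction)$ for all $\bayesprediction,\prediction$ (Eqn.~\eqref{ineq:convex ub}); complementary slackness $\indirectsenderU(\prediction)-\caliBudgetDual|\bayesprediction-\prediction|=\widehat{\convexEnv}(\bayesprediction)$ whenever $\miscali^*(\bayesprediction,\prediction)>0$ (Eqn.~\eqref{eq:complementary slackness}); and that the points $(\prediction,\indirectsenderU(\prediction))$ over $\supp(\miscali^*)\cap[\highq,1]$ (resp.\ $\cap[0,\lowq]$) are collinear with slope $\caliBudgetDual$ (resp.\ $-\caliBudgetDual$). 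Since $\widehat{\convexEnv}'$ is monotone, the set $\{s:\widehat{\convexEnv}'(s)\in[-\caliBudgetDual,\caliBudgetDual]\}$ is an interval $[\lowx,\highx]$; I define $\convexEnv$ to equal $\widehat{\convexEnv}$ on $[\lowx,\highx]$ and to be affine of slope $-\caliBudgetDual$ on $[0,\lowx]$ and of slope $+\caliBudgetDual$ on $[\highx,1]$. By construction $\convexEnv$ is convex and has two linear tails with equal absolute slope $\caliBudgetDual$, hence it is symmetric linear-tailed; moreover clipping only lowers the function outside $[\lowx,\highx]$, so $\convexEnv=\widehat{\convexEnv}$ on $[\lowx,\highx]$ and $\convexEnv\le\widehat{\convexEnv}$ on the two tails. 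It remains to verify (i)–(iii).

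For (i): on $[\lowx,\highx]$, $\convexEnv=\widehat{\convexEnv}\ge\indirectsenderU$; on $[\highx,1]$, Eqn.~\eqref{ineq:convex ub} applied at $(\highx,\prediction)$ with $\prediction\ge\highx$ gives $\indirectsenderU(\prediction)\le\widehat{\convexEnv}(\highx)+\caliBudgetDual(\prediction-\highx)=\convexEnv(\prediction)$, and the left tail is symmetric. For (ii): I partition $\supp(\miscali^*)$ using the miscalibration structure \Cref{lem:event-inde decoupled reformuation}(i). A perfectly calibrated support point $\prediction$ has $\widehat{\convexEnv}(\prediction)=\indirectsenderU(\prediction)$, and if $\prediction\notin[\lowx,\highx]$ then $\indirectsenderU(\prediction)=\widehat{\convexEnv}(\prediction)>\convexEnv(\prediction)\ge\indirectsenderU(\prediction)$, a contradiction, so $\prediction\in[\lowx,\highx]$ and $\convexEnv(\prediction)=\indirectsenderU(\prediction)$. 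For an over-confident support point $\prediction\in[\highq,1]$, pick $\bayesprediction$ with $\miscali^*(\bayesprediction,\prediction)>0$ (so $\bayesprediction\le\prediction$); Eqn.~\eqref{eq:complementary slackness} gives $\indirectsenderU(\prediction)=\widehat{\convexEnv}(\bayesprediction)+\caliBudgetDual(\prediction-\bayesprediction)$. A short convexity argument forces $\bayesprediction\le\highx$ (otherwise $\widehat{\convexEnv}'\ge\caliBudgetDual$ on $(\highx,\bayesprediction]$ yields $\indirectsenderU(\prediction)\ge\convexEnv(\prediction)$ and, after (i), equality throughout, contradicting the definition of $\highx$); then monotonicity of $\widehat{\convexEnv}'$ on $[\bayesprediction,\highx]$ gives $\convexEnv(\prediction)\le\indirectsenderU(\prediction)$, hence equality with (i). Under-confident support points are handled symmetrically.

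For (iii): by the collinearity recalled above and property (ii), every point of $\supp(\miscali^*)\cap[0,\lowq]$ lies on both $\convexEnv$ and a fixed line $\ell$ of slope $-\caliBudgetDual$; since every subgradient of $\convexEnv$ is $\ge-\caliBudgetDual$, $\ell$ supports $\convexEnv$ at each such point, so $\convexEnv$ coincides with $\ell$ on the segment they span. By continuity of $\convexEnv$ and the definition of $\lowq$ as the supremum of the under-confident support points, $\convexEnv=\ell$ on all of $[0,\lowq]$, so $\lowx\ge\lowq$ and $\convexEnv$ is linear on $[0,\lowq]$; the argument on $[\highq,1]$ is symmetric, and the degenerate cases ($\lowq=0$, or no under-confident support point, and likewise on the right) make the relevant tail trivial. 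This proves \Cref{lem:convex env decoupled formluation}, and \Cref{prop:convex env} follows from the two-step equivalence (\Cref{prop:prog equivalence}), under which $\supp(\optpredictor)=\supp(\miscali^*)$ and $\truePosterior$ agrees with the $\bayesprediction$-coordinate of $\miscali^*$.

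The main obstacle I anticipate is exactly property (ii) at the over-/under-confident support points: the dual function $\widehat{\convexEnv}$ is in general \emph{strictly} above $\indirectsenderU$ there, so \Cref{claim:pointwise large} does not apply, and recovering $\convexEnv(\prediction)=\indirectsenderU(\prediction)$ hinges on the slope clipping together with careful convexity bookkeeping around $\lowx,\highx$ — pinning down $\bayesprediction\le\highx$, treating both $\prediction\ge\highx$ and $\prediction<\highx$, and handling edge configurations where a tail degenerates or the suprema defining $\lowq,\highq$ are not attained.
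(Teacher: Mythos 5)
Your proof is essentially the paper's approach — build the target function by keeping the dual convex function $\convexEnv$ from the proof of \Cref{lem:event-inde decoupled reformuation} in a middle interval and replacing it with linear tails of slope $\pm\caliBudgetDual$ — but with two genuine, if modest, differences in how the pieces are glued together. First, you place the breakpoints at the \emph{dual} locations $\lowx,\highx$ where the subgradient of $\convexEnv$ first enters and last leaves $[-\caliBudgetDual,\caliBudgetDual]$, whereas the paper breaks at the \emph{primal} points $\lowq,\highq$ and anchors the tails at $\bayesprediction_{\cc L},\bayesprediction_{\cc H}$ (the true expected outcomes paired with $\lowq,\highq$, which the complementary-slackness argument shows are the same $\lowx,\highx$ you find). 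Your choice makes convexity and continuity of the glued function immediate by construction, whereas the paper's piecewise formula requires a small additional argument that $\convexEnv$ already has slope $\pm\caliBudgetDual$ on $[\lowq,\lowx]$ and $[\highx,\highq]$; you sidestep that entirely. Second, for property (ii) at over/under-confident support points — which you correctly flag as the crux, since Claim~\ref{claim:pointwise large} only gives $\convexEnv(\prediction)=\indirectsenderU(\prediction)$ at perfectly calibrated points — you re-derive the needed equality directly from Eqn.~\eqref{eq:complementary slackness} plus the monotone slope bound on $[\bayesprediction,\highx]$, while the paper cites the collinearity in \Cref{lem:event-inde decoupled reformuation}(iii) and propagates the value from $\lowq$ (resp.\ $\highq$) along the line $\ell$. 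Both work; your version is a touch more self-contained but duplicates some of the work that Statement~(iii) of \Cref{lem:event-inde decoupled reformuation} already packages. The order of dependencies you set up (prove (i) and (ii) handling both $\prediction\ge\highx$ and $\prediction<\highx$ without presupposing $\highx\le\highq$, then obtain $\lowq\le\lowx,\highx\le\highq$ as part of (iii)) is non-circular and sound. The only points you should tighten in a final write-up are the degenerate configurations you already flag — in particular the case $\caliBudgetDual=0$ (tails reduce to a constant function and the slope-threshold set may be all of $[0,1]$), and the case where $\{s:\partial\convexEnv(s)\cap[-\caliBudgetDual,\caliBudgetDual]\neq\emptyset\}$ is a single point or where one of $\lowq,\highq$ is $0$ or $1$ — plus a precise choice of subgradient selection so that $[\lowx,\highx]$ is well-defined and the strict inequality $\widehat\convexEnv(\prediction)>\convexEnv(\prediction)$ for $\prediction\notin[\lowx,\highx]$ actually holds at the extremes. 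None of these affect the substance of the argument.
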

\begin{proof}[Proof of \Cref{prop:convex env}]
    Consider the optimal solution $(\bayesmarginalpredictor^*,\miscali^*)$ to \ref{eq:decoupled opt} that satisfies all properties in \Cref{lem:num predictions decoupled formluation}. 
    Invoking \Cref{prop:prog equivalence}, it can be converted in to an optimal $(\caliBudget,\ell_1)$-calibrated predictor $\optpredictor$. 
    Let $\truePosterior$ be the true expected outcome function induced by predictor $\optpredictor$. 
    By construction, for every $(\bayesprediction,\prediction)$ such that $\miscali^*(\bayesprediction,\prediction) > 0$, we have $\truePosterior(\prediction) = \bayesprediction$. (This is guaranteed since $\bayesmarginalpredictor^*$ is a marginalized perfectly calibrated predictor and $\miscali^*$ is a feasible post-processing plan.) 
    
    Thus, the Statement (i) in \Cref{prop:convex env} is implied by the Statement (i) in \Cref{lem:event-inde decoupled reformuation}, Statement (ii) and (iii) in \Cref{prop:convex env} is also implied accordingly as we have $\supp(\optpredictor) = \supp(\miscali^*)$.
\end{proof}
\xhdr{The proof of \Cref{lem:convex env decoupled formluation}}
We next provide proof for \Cref{lem:convex env decoupled formluation}. Its analysis is similar to the duality argument used in the proof of Statements (ii) and (iii) in \Cref{lem:event-inde decoupled reformuation}.
\begin{proof}[Proof of \Cref{lem:convex env decoupled formluation}]
To simplify the presentation, we shorthand the optimal solution $(\bayesmarginalpredictor^*,\miscali^*)$ as $(\bayesmarginalpredictor,\miscali)$.
Recall the convex function $\convexEnv$ and the Lagrange multiplier $\caliBudgetDual$ introduced in the proof of \Cref{lem:event-inde decoupled reformuation}. 
Conditions~\eqref{ineq:convex ub} and \eqref{eq:complementary slackness} are 
\begin{align*}
    \indirectsenderU(\prediction) - \caliBudgetDual\cdot |\bayesprediction-\prediction| & 
    \le \convexEnv(\bayesprediction)~,
    \quad 
    \forall \bayesprediction, \prediction~. \\
    \indirectsenderU(\prediction) - \caliBudgetDual\cdot |\bayesprediction-\prediction| 
    & = \convexEnv(\bayesprediction)~,
    \quad 
    \text{if }\miscali(\bayesprediction, \prediction) > 0~. 
\end{align*}
Define $\lowx \triangleq \bayesprediction_{\cc{L}}$ and $\highx\triangleq \bayesprediction_{\cc{H}}$ where $\bayesprediction_\cc{L}$ (namely it satisfies $\miscali(\bayesprediction_\cc{L}, \lowq) > 0$) and $\bayesprediction_\cc{H}$ (namely it satisfies $\miscali(\bayesprediction_\cc{H}, \highq) > 0$) are defined in the proof of \Cref{lem:event-inde decoupled reformuation}. 
Now we construct a symmetric linear-tailed convex function $\convexEnv\primed$ satisfying the all properties in \Cref{lem:convex env decoupled formluation} as follows:
\begin{align*}
    \convexEnv\primed(\prediction) \triangleq\left\{
    \begin{array}{ll}
     \convexEnv(\lowx) + \caliBudgetDual(\lowx-\prediction)    & \quad \text{if $\prediction\in[0,\lowq]$}
     \\
     \convexEnv(\prediction)    & \quad \text{if $\prediction\in[\lowq,\highq]$}
     \\
     \convexEnv(\highx) + \caliBudgetDual(\prediction-\highx)    & \quad \text{if $\prediction\in[\highq,1]$}
    \end{array}
    \right.
\end{align*}
We first verify that the constructed function $\convexEnv\primed$ is a symmetric linear-tailed convex function. By construction, function $\convexEnv\primed$ has symmetric linear tails and is convex for every $\prediction\in[\lowq,\highq]$. In addition, due to the definition of $\lowx,\highx$, the linear slope $\caliBudgetDual \geq -\convexEnv'(\lowx)$ and $\caliBudgetDual\geq \convexEnv'(\highx)$ and thus function $\convexEnv\primed$ is convex for all $\prediction\in[0, 1]$. 

We next verify Statement (i). By construction, for every prediction $\prediction\in[\lowq,\highq]$, $\convexEnv\primed(\prediction) = \convexEnv(\prediction) \geq \indirectsenderU(\prediction)$ due to condition~\eqref{ineq:convex ub}. For every prediction $\prediction\in[0, \lowq]$, $\convexEnv\primed(\prediction) = \convexEnv(\lowx) + \caliBudgetDual(\lowx-\prediction) \geq \indirectsenderU(\prediction)$ due to condition~\eqref{ineq:convex ub} at $\lowx$ and the collinearity in the third property in \Cref{lem:event-inde decoupled reformuation}. For every prediction $\prediction\in[\highq, 1]$, a symmetric argument applies.

We now verify Statement (ii). By construction, for every prediction $\prediction\in[\lowq,\highq]\cap\supp(\miscali)$, $\convexEnv\primed(\prediction) = \convexEnv(\prediction) = \indirectsenderU(\prediction)$ due to the first property in \Cref{lem:event-inde decoupled reformuation} and condition~\eqref{eq:complementary slackness}. Similarly, for every prediction $\prediction\in[0, \lowq]\cap\supp(\miscali)$,
$\convexEnv\primed(\prediction) = \convexEnv(\lowx) + \caliBudgetDual(\lowx-\prediction) = \indirectsenderU(\prediction)$, due to the collinearity in the third property in \Cref{lem:event-inde decoupled reformuation} and condition~\eqref{eq:complementary slackness}. For every prediction $\prediction\in[\highq, 1]\cap\supp(\miscali)$, a symmetric argument applies.

Finally, Statement~(iii) holds due to the definition of $\lowx,\highx$ and the first property in \Cref{lem:event-inde decoupled reformuation}, which implies $\lowx \geq \lowq$ and $\highx \leq \highq$. This finishes the proof of \Cref{lem:convex env decoupled formluation}.
\end{proof}

To prove \Cref{prop:num predictions}, we first show the following equivalent statement for the optimal solution $(\bayesmarginalpredictor^*, \miscali^*)$ to \ref{eq:decoupled opt}.
\begin{lemma}[Restatement of \Cref{prop:num predictions} for $(\bayesmarginalpredictor^*, \miscali^*)$]
\label{lem:num predictions decoupled formluation}
In \ref{eq:decoupled opt} with $\normexponent = 1$, let $(\bayesmarginalpredictor^*,\miscali^*)$ be the optimal solution to \ref{eq:decoupled opt} that satisfies the properties in \Cref{lem:event-inde decoupled reformuation}, and in addition,
\begin{enumerate}
    \item[(i)] 
    we have $|\supp(\miscali^*)| \le \numData + 2$.
    
    \item[(ii)]
    for each event $i\in[\numData]$,  we have $|\{\prediction\in [0, 1]: \bayespredictor^*_i(\bayesprediction) > 0 \text{ and } \miscali^*(\bayesprediction, \prediction) > 0, \bayesprediction\in[0, 1]\}| \le 4$.
    
    \item[(iii)]
    for each $q\in\supp(\bayesmarginalpredictor^*)$, we have $\left|\{\prediction \in[0, 1]: \miscali^*(\bayesprediction, \prediction ) > 0\}\right| \le 2$.
    Moreover, if the equality holds (i.e., $\left|\{\prediction \in[0, 1]: \miscali^*(\bayesprediction, \prediction ) > 0\}\right| = 2$), at least one prediction $\prediction$ belongs to the under-confidence $[0,\lowq]$ or over-confidence interval $[\highq,1]$, i.e., $\prediction\in [0,\lowq]\cup[\highq,1]$.
\end{enumerate}
\end{lemma}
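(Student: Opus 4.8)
The plan is to argue entirely in terms of \ref{eq:decoupled opt}: by \Cref{prop:prog equivalence} it suffices to produce an optimal solution $(\bayesmarginalpredictor^*,\miscali^*)$ with the three listed properties, and I would start from the solution supplied by \Cref{lem:event-inde decoupled reformuation} and \Cref{lem:convex env decoupled formluation}, so that the multiplier $\caliBudgetDual\ge 0$, the symmetric linear-tailed convex $\convexEnv$ (with $\convexEnv\ge\indirectsenderU$ pointwise, $\convexEnv=\indirectsenderU$ on $\supp(\miscali^*)$, and slopes $\pm\caliBudgetDual$ on the tails $[0,\lowq]$ and $[\highq,1]$), and the complementary-slackness relations \eqref{ineq:convex ub}--\eqref{eq:complementary slackness} are all available. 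For Statement~(iii) I would fix a source $\bayesprediction\in\supp(\bayesmarginalpredictor^*)$, put $c_{\bayesprediction}\triangleq\int_0^1\miscali^*(\bayesprediction,\prediction)\,|\bayesprediction-\prediction|\,\d\prediction$, and observe that by \eqref{eq:complementary slackness} every emitted $\prediction$ lies in $M_{\bayesprediction}\triangleq\{\prediction:\indirectsenderU(\prediction)-\caliBudgetDual|\bayesprediction-\prediction|=\convexEnv(\bayesprediction)\}$, on which $\indirectsenderU(\prediction)=\convexEnv(\bayesprediction)+\caliBudgetDual|\bayesprediction-\prediction|$. Hence one may replace the conditional law $\miscali^*(\bayesprediction,\cdot)$ by an extreme point of the set of nonnegative measures supported on $M_{\bayesprediction}$ with total mass $\bayesmarginalpredictor^*(\bayesprediction)$ and $|\bayesprediction-\cdot|$-mass $c_{\bayesprediction}$ (two linear equalities, so extreme points have at most two atoms), and this changes neither $\bayesmarginalpredictor^*$, nor the $\ell_1$-norm ECE, nor the objective, since the objective contributed by source $\bayesprediction$ equals $\convexEnv(\bayesprediction)\bayesmarginalpredictor^*(\bayesprediction)+\caliBudgetDual c_{\bayesprediction}$, a function of the two preserved quantities only. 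Doing this for every $\bayesprediction$ gives the cardinality bound; the refinement that a second prediction must lie in $[0,\lowq]\cup[\highq,1]$ then follows from Statement~(i) of \Cref{lem:event-inde decoupled reformuation}, because a source in $[\lowq,\highq]$ can keep at most the prediction $\prediction=\bayesprediction$ inside $[\lowq,\highq]$, while a source outside $[\lowq,\highq]$ emits only predictions in the tail intervals.

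For Statement~(ii) I would first use complementary slackness for the MPC constraint, which forces $\int_0^1\bayesmarginalpredictor^*(\bayesprediction)\convexEnv(\bayesprediction)\,\d\bayesprediction=\int_0^1\prior(\bayesprediction)\convexEnv(\bayesprediction)\,\d\bayesprediction$; since $\bayesmarginalpredictor^*\in\MPC(\prior)$ and $\convexEnv$ is convex, this means $\bayesmarginalpredictor^*$ contracts $\prior$ only within maximal affine pieces of $\convexEnv$. Combining this with a standard ironing/pooling exchange, I would take $\bayesmarginalpredictor^*$ to be interval-partitional with respect to the sorted $\truemean_1\le\cdots\le\truemean_{\numData}$: consecutive events are grouped into blocks, each deterministically sent to its block average, with any single event shared by at most its two neighbouring blocks. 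Then each $\bayespredictor^*_i$ is supported on at most two posterior means, and by Statement~(iii) each such posterior mean emits at most two predictions, giving $|\supp(\predictor_i)|\le 4$. The interval structure also yields $|\supp(\bayesmarginalpredictor^*)|\le\numData$, since $\numData$ events admit at most $\numData$ consecutive blocks under the at-most-one-shared-endpoint rule.

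For Statement~(i) I would additionally merge the tail predictions: if $\prediction_1<\prediction_2$ are both strictly over-confident, reroute all mass landing on either of them to their single $\miscali^*$-weighted average; because $\convexEnv$, hence $\indirectsenderU$ on this linear tail, is affine there the objective is unchanged, and because both the transported mass and the gaps $\prediction-\truePosterior(\prediction)$ aggregate linearly the ECE is unchanged, while $\bayesmarginalpredictor^*$ is untouched; iterating, and doing the same on $[0,\lowq]$, leaves a single strictly over-confident prediction $\prediction_+$ and a single strictly under-confident prediction $\prediction_-$. After this the remaining predictions form the perfectly-calibrated set $S\subseteq\supp(\bayesmarginalpredictor^*)\cap[\lowq,\highq]$, so $|\supp(\miscali^*)|\le|S|+2\le|\supp(\bayesmarginalpredictor^*)|+2\le\numData+2$ by the count from the previous paragraph.

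The main obstacle I anticipate is making the interval-partitional reduction of $\bayesmarginalpredictor^*$ fully rigorous while simultaneously preserving feasibility of $\miscali^*$ and the contact condition $\supp(\miscali^*)\subseteq\{\convexEnv=\indirectsenderU\}$ -- that is, showing the pooling exchange can be carried out jointly on the pair $(\bayesmarginalpredictor^*,\miscali^*)$ without re-introducing extra predictions -- and, relatedly, verifying that the tail-merging step used for the $\numData+2$ bound does not conflict with that structure. By contrast, Statement~(iii) and the tail-merging argument themselves are essentially mechanical once the dual certificate of \Cref{lem:convex env decoupled formluation} is in hand.
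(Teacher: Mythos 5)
Your Statement~(iii) argument is essentially the same as the paper's: the paper also fixes a source $\bayesprediction$ with three predictions $\prediction_1,\prediction_2,\prediction_3$ and reallocates the mass on the middle one to the two outer ones, keeping $\bayesmarginalpredictor^*$, the ECE, and the objective fixed via the collinearity from \Cref{lem:event-inde decoupled reformuation}; your extreme-point phrasing is a clean abstraction of that reallocation, and your derivation of the ``moreover'' clause from Statement~(i) of \Cref{lem:event-inde decoupled reformuation} is also how the paper gets it. Your Statement~(ii) route (MPC complementary slackness plus an ironing exchange) would re-derive, from scratch, the bi-pooling structure that the paper simply imports as \Cref{lem:bipooling} from \citet{ABSY-23}; it is a longer path to the same conclusion, and you do not actually carry it out, but there is no flaw of substance here so long as it is eventually combined with Statement~(iii) exactly as the paper does.

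The genuine gap is in your Statement~(i) argument. You claim that merging two strictly over-confident predictions $\prediction_1<\prediction_2$ to their $\miscali^*$-weighted average $\bar\prediction$ leaves the objective unchanged ``because $\convexEnv$, hence $\indirectsenderU$ on this linear tail, is affine there.'' But only $\convexEnv$ is affine on $[\highx,1]$; you only know $\indirectsenderU\le\convexEnv$ pointwise, with equality on $\supp(\miscali^*)$ (which is a discrete set). The point $\bar\prediction$ is generically not in $\supp(\miscali^*)$, so $\indirectsenderU(\bar\prediction)$ can be strictly below the line, and the merged solution then has strictly smaller objective --- it is suboptimal, and the bound you establish for it says nothing about the optimal solution. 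Separately, if you merge the predictions without also merging the sources, the LP-measured calibration error $\int\!\int\miscali(\bayesprediction,\prediction)|\bayesprediction-\prediction|$ (which is what \eqref{eq:feasible miscal cal budget} constrains, and which upper-bounds the true ECE of the induced predictor) can strictly \emph{increase}: if some source $\bayesprediction_2$ close to $\prediction_2$ gets re-pointed to a much smaller $\bar\prediction$, the gap $|\bar\prediction-\bayesprediction_2|$ flips sign and can be far larger than $|\prediction_2-\bayesprediction_2|$, rendering the new $\miscali$ LP-infeasible even though the predictor it generates is $\caliBudget$-calibrated. The paper avoids both problems: rather than collapsing the tail to a single prediction, it only shows that at most two sources can be ``split'' (emit more than one prediction), by reallocating the mass of a middle prediction $\prediction_2$ onto two \emph{already-supported} extreme predictions $\prediction_1,\prediction_3$ coming from possibly different sources $\bayesprediction_1<\bayesprediction_2$ (invoking the collinearity established in \Cref{lem:event-inde decoupled reformuation} so the objective is unchanged, and the explicit $t_i$-ratio so the LP-ECE is unchanged). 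That gives $|\supp(\miscali^*)|\le|\supp(\bayesmarginalpredictor^*)|+2\le\numData+2$ without ever leaving the contact set $\{\convexEnv=\indirectsenderU\}$. To repair your proof you would need to either restrict attention to merging predictions coming from the same source, or replace the merge with the paper's ``route the middle mass to the two outer supported predictions'' exchange.
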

\begin{proof}[Proof of \Cref{prop:num predictions}]
    Consider the optimal solution $(\bayesmarginalpredictor^*,\miscali^*)$ to \ref{eq:decoupled opt} that satisfies all properties in \Cref{lem:num predictions decoupled formluation}. 
    Invoking \Cref{prop:prog equivalence}, it can be converted in to an optimal $(\caliBudget,\ell_1)$-calibrated predictor $\optpredictor$. 
    Let $\truePosterior$ be the true expected outcome function induced by predictor $\optpredictor$. 
    By construction, for every $(\bayesprediction,\prediction)$ such that $\miscali^*(\bayesprediction,\prediction) > 0$, we have $\truePosterior(\prediction) = \bayesprediction$. (This is guaranteed since $\bayesmarginalpredictor^*$ is a marginalized perfectly calibrated predictor and $\miscali^*$ is a feasible post-processing plan.) 
    
    Thus, the Statement (i) in \Cref{prop:num predictions} is implied by the first property in \Cref{lem:event-inde decoupled reformuation}, as we have $\supp(\optpredictor) = \supp(\miscali^*)$. 
    Similarly, by the construction of predictor $\optpredictor$ in \Cref{defn:miscal procedure}, Statements (ii) and (iii) properties in \Cref{prop:num predictions} are implied by the second and third statements in \Cref{lem:event-inde decoupled reformuation}. This finishes the proof of \Cref{prop:num predictions}.
\end{proof}

\xhdr{The proof of \Cref{lem:num predictions decoupled formluation}}
We below provide the proof for \Cref{lem:num predictions decoupled formluation}.
Before proceeding to the proof, we first prove the following monotonicity property of the prediction $\prediction$ w.r.t.\ its corresponding true expected outcome $\bayesprediction$.
\begin{lemma}[Monotonicity of the prediction $\prediction$ w.r.t.\ its $\bayesprediction$ with $\miscali(\bayesprediction, \prediction) > 0$]
\label{lem:monotone devia}
It is without loss to consider a solution $\bayesmarginalpredictor, \miscali$ to \ref{eq:decoupled opt} such that for any $\bayesprediction_1 < \bayesprediction_2$, we have $\max\{\prediction\in[0, 1]: \miscali(\bayesprediction_1, \prediction)> 0\}  \le \min\{\prediction\in[0, 1]: \miscali(\bayesprediction_2, \prediction)> 0\}$. 
\end{lemma}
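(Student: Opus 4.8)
The plan is to establish the claim through a rearrangement (exchange) argument. Starting from any feasible solution $(\bayesmarginalpredictor,\miscali)$ of \ref{eq:decoupled opt} --- in particular an optimal one --- I would replace the post-processing plan $\miscali$ by the \emph{comonotone coupling} of its two marginals, and show that this operation preserves every constraint of \ref{eq:decoupled opt} as well as the objective value, while the resulting plan is monotone in exactly the sense required. Concretely, write $\nu(\prediction)\triangleq\int_0^1\miscali(\bayesprediction,\prediction)\,\d\bayesprediction$ for the $\prediction$-marginal of $\miscali$; by \ref{eq:feasible miscal supp} its $\bayesprediction$-marginal is exactly $\bayesmarginalpredictor$. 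Let $\bayespredictorCDF$ and $F_\nu$ be the CDFs of $\bayesmarginalpredictor$ and $\nu$, and let $\miscali^\dagger$ be the law of the pair $(\bayespredictorCDF^{-1}(U),F_\nu^{-1}(U))$ with $U\sim\text{Unif}[0,1]$, i.e.\ the coupling of $\bayesmarginalpredictor$ and $\nu$ induced by the quantile maps.

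The verification proceeds in four short steps. (1) Since $\miscali^\dagger$ has $\bayesprediction$-marginal $\bayesmarginalpredictor$ (unchanged), the constraint \ref{eq:feasible miscal supp} and the \textsc{MPC-Feasibility} constraint --- both of which only constrain $\bayesmarginalpredictor$ --- continue to hold. (2) The objective $\int_0^1\int_0^1\miscali(\bayesprediction,\prediction)\indirectsenderU(\prediction)\,\d\prediction\d\bayesprediction=\int_0^1\nu(\prediction)\indirectsenderU(\prediction)\,\d\prediction$ depends on $\miscali$ only through $\nu$, which is also unchanged, so the objective value is identical and $(\bayesmarginalpredictor,\miscali^\dagger)$ remains optimal. (3) By the classical description of one-dimensional optimal transport with cost $|\bayesprediction-\prediction|$, the comonotone coupling is cost-minimal among all couplings of $\bayesmarginalpredictor$ and $\nu$, hence $\int_0^1\int_0^1\miscali^\dagger(\bayesprediction,\prediction)|\bayesprediction-\prediction|\,\d\prediction\d\bayesprediction\le\int_0^1\int_0^1\miscali(\bayesprediction,\prediction)|\bayesprediction-\prediction|\,\d\prediction\d\bayesprediction\le\caliBudget$, so \textsc{Calibration-Feasibility} is preserved. (4) The comonotone coupling is supported on a nondecreasing set: if $\miscali^\dagger(\bayesprediction_1,\prediction_1)>0$ and $\miscali^\dagger(\bayesprediction_2,\prediction_2)>0$ with $\bayesprediction_1<\bayesprediction_2$, then $\prediction_1\le\prediction_2$, which is precisely $\max\{\prediction:\miscali^\dagger(\bayesprediction_1,\prediction)>0\}\le\min\{\prediction:\miscali^\dagger(\bayesprediction_2,\prediction)>0\}$.

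The elementary fact behind steps (3) and (4) is the rearrangement inequality $|\bayesprediction_1-\prediction_2|+|\bayesprediction_2-\prediction_1|\le|\bayesprediction_1-\prediction_1|+|\bayesprediction_2-\prediction_2|$ whenever $\bayesprediction_1<\bayesprediction_2$ and $\prediction_1>\prediction_2$: if $\miscali$ places positive mass on such a ``crossing'' pair, then moving $\delta\triangleq\min\{\miscali(\bayesprediction_1,\prediction_1),\miscali(\bayesprediction_2,\prediction_2)\}$ units of mass from $(\bayesprediction_1,\prediction_1)$ to $(\bayesprediction_1,\prediction_2)$ and from $(\bayesprediction_2,\prediction_2)$ to $(\bayesprediction_2,\prediction_1)$ leaves both marginals untouched (hence fixes the objective, \ref{eq:feasible miscal supp} and \textsc{MPC-Feasibility}) and does not increase the calibration cost --- this is exactly the local swap that the comonotone coupling carries out globally. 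I would phrase the proof via the comonotone coupling rather than via iterated local swaps because the main obstacle is that \ref{eq:decoupled opt} is infinite-dimensional: a single swap need not remove all crossings and an infinite sequence of finite swaps need not converge, whereas the one-dimensional optimal-transport characterization delivers the monotone plan in one stroke. (The same argument applies verbatim with cost $|\bayesprediction-\prediction|^\normexponent$ for any $\normexponent\ge1$, since $\normexponent$-th powers of $|\cdot|$ are convex and increasing on $[0,\infty)$, so the rearrangement inequality persists.)
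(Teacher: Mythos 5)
Your proof is correct and takes a genuinely different, more global route than the paper. The paper argues by a local pairwise exchange: for each ``crossing'' pair $(\bayesprediction_1,\prediction_1)$, $(\bayesprediction_2,\prediction_2)$ with $\bayesprediction_1<\bayesprediction_2$ and $\prediction_1>\prediction_2$, it transfers $\min\{\miscali(\bayesprediction_1,\prediction_1),\miscali(\bayesprediction_2,\prediction_2)\}$ units of mass to uncross them, observing that both marginals (and hence \textsc{Supply-Feasibility}, \textsc{MPC-Feasibility}, and the objective) are unchanged and the calibration cost does not go up --- that is precisely the local rearrangement inequality you state at the end. You instead note that since the objective and all constraints other than \textsc{Calibration-Feasibility} constrain $\miscali$ only through its two marginals $\bayesmarginalpredictor$ and $\nu$, one may freely re-couple the marginals, and the comonotone (quantile) coupling simultaneously minimizes the $L^1$ transport cost and is supported on a monotone set. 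This buys rigor: the paper performs one swap for one arbitrary pair and implicitly iterates, which in the infinite-dimensional setting does not obviously terminate or converge to an uncrossed plan, whereas the optimal-transport argument produces the monotone coupling directly. A minor remark on the paper's side: it asserts $\ECE{\newmiscali}=\ECE{\miscali}$ for the swapped plan, but when $\prediction_2\le\bayesprediction_1<\bayesprediction_2\le\prediction_1$ the rearrangement is strictly cost-improving, so the correct statement there is $\ECE{\newmiscali}\le\ECE{\miscali}$ --- which is all that is needed and is exactly what your step (3) gives. Your closing observation that the same argument handles $|\bayesprediction-\prediction|^{\normexponent}$ for any $\normexponent\ge 1$ (convex increasing cost) is also correct and matches the generality of the lemma's statement.
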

\begin{proof}[Proof of \Cref{lem:monotone devia}]
Fix a feasible solution $\bayesmarginalpredictor, \miscali$ to \ref{eq:decoupled opt}.
Fix any  $\bayesprediction_1 < \bayesprediction_2$.
Let $\prediction_1 \triangleq \max\{\prediction\in[0, 1]: \miscali(\bayesprediction_1, \prediction)> 0\}$ and $\prediction_2 \triangleq \min\{\prediction\in[0, 1]: \miscali(\bayesprediction_2, \prediction)> 0\}$.
For notation simplicity, we let $m_1 \triangleq \miscali(\bayesprediction_1, \prediction_1)$ and $m_2 \triangleq \miscali(\bayesprediction_2, \prediction_2)$. 

Let us suppose $\prediction_1 > \prediction_2$. We construct the following $\newmiscali$ (and its $\newbayesmarginalpredictor$):
\begin{alignat*}{3}
    \newmiscali(\bayesprediction, \cdot) & = \miscali(\bayesprediction, \cdot), \quad
    && \bayesprediction\in \supp(\miscali)\setminus\{\bayesprediction_1, \bayesprediction_2\}~;\\
    \newmiscali(\bayesprediction_i, \prediction) & = \miscali(\bayesprediction_i, \prediction), \quad
    && \prediction\in [0, 1]\setminus\{\prediction_1, \prediction_2\}, i\in[2]~ ;\\
    \newmiscali(\bayesprediction_1, \prediction_1) 
    & = (m_1 - m_2)\indicator{m_1 \ge m_2}, \quad
    && \newmiscali(\bayesprediction_1, \prediction_2) 
    = m_1 + \miscali(\bayesprediction_1, \prediction_2) - (m_1 - m_2)\indicator{m_1 \ge m_2}; \\
    \newmiscali(\bayesprediction_2, \prediction_2) 
    & = (m_2-m_1)\indicator{m_1 < m_2}, \quad
    && \newmiscali(\bayesprediction_2, \prediction_1) 
    = m_2 + \miscali(\bayesprediction_2, \prediction_1) - (m_2-m_1)\indicator{m_1 < m_2}.
\end{alignat*}
By construction, it is easy to see that $\newbayesmarginalpredictor\in \MPC(\lambda)$, and $\newmiscali$ has the same expected calibration error with $\miscali$:
$\ECE{\newmiscali} = \ECE{\miscali}$.
Thus, $\newbayesmarginalpredictor, \newmiscali$ is also a feasible solution to \ref{eq:decoupled opt}.
In the meantime, under $\newmiscali$, we have $\max\{\prediction\in[0, 1]: \newmiscali(\bayesprediction_1, \prediction)> 0\} \le \min\{\prediction\in[0, 1]: \newmiscali(\bayesprediction_2, \prediction)> 0\}$.
Moreover, it is easy see that 
\begin{align*}
    \newmiscali(\bayesprediction_1, \prediction_1) + \newmiscali(\bayesprediction_2, \prediction_1) 
    & = 
    \miscali(\bayesprediction_1, \prediction_1) + \miscali(\bayesprediction_2, \prediction_1) \\
    \newmiscali(\bayesprediction_1, \prediction_2) + \newmiscali(\bayesprediction_2, \prediction_2) 
    & = 
    \miscali(\bayesprediction_1, \prediction_2) + \miscali(\bayesprediction_2, \prediction_2)~.
\end{align*}
Thus, solution $\newbayesmarginalpredictor, \newmiscali$ yields the same objective value with $\bayesmarginalpredictor, \miscali$,
which finishes the proof.
\end{proof}
\begin{proof}[Proof of Statement (i) in \Cref{lem:num predictions decoupled formluation}]
Suppose that there exists an optimal solution $\bayesmarginalpredictor, \miscali$ to the program \ref{eq:decoupled opt} such that there exist predictions $\bayesprediction_1\le \prediction_1 <  \prediction_2 < \prediction_3$ with $\miscali(\bayesprediction_1, \prediction_1) > 0, \miscali(\bayesprediction_1, \prediction_2) > 0, \miscali(\bayesprediction_2, \prediction_3)> 0$
for some $\bayesprediction_1 < \bayesprediction_2$.
From \Cref{lem:monotone devia}, we can without loss consider $\prediction_3 > \prediction_2$.
Let $t_i \triangleq \prediction_i-\bayesprediction_1$ for $i\in[3]$.
From the Statement (iii) in \Cref{lem:event-inde decoupled reformuation}, we have
\begin{align}
    \label{eq:collinearity}
    \frac{\indirectsenderU(\prediction_2) - \indirectsenderU(\prediction_1)}{t_2 - t_1} 
    =
    \frac{\indirectsenderU(\prediction_3) - \indirectsenderU(\prediction_2)}{t_3-t_2}~.
\end{align}
We next construct $\newbayesmarginalpredictor, \newmiscali$ as follows:
\begin{alignat*}{3}
    \newmiscali(\bayesprediction, \cdot) & = \miscali(\bayesprediction, \cdot), \quad
    && \bayesprediction\in \supp(\bayesmarginalpredictor)\setminus\{\bayesprediction_1, \bayesprediction_3\}~;\\
    \newmiscali(\bayesprediction_1, \prediction) & = \miscali(\bayesprediction_1, \prediction), \quad
    && \prediction\in [0, 1]\setminus\{\prediction_1, \prediction_2\} ~ ;\\
    \newmiscali(\bayesprediction_1, \prediction_1) & = \miscali(\bayesprediction_1, \prediction_1)
    + 
    \miscali(\bayesprediction_1, \prediction_2) \cdot \frac{t_3-t_2}{t_3-t_1}
    &&\\
    \newmiscali(\bayesprediction_1, \prediction_3) & = 
    \miscali(\bayesprediction_1, \prediction_2) \cdot \frac{t_2-t_1}{t_3-t_1}
    &&
\end{alignat*}
The above construction essentially reallocates the mass $\miscali(\bayesprediction_1, \prediction_2)$ on prediction $\prediction_2$ to the predictions $\prediction_1$ and $\prediction_3$. 
Such reallocation ensures that $\ECE{\newmiscali} = \ECE{\miscali}$, which can be seen as follows:
\begin{align*}
    \ECE{\newmiscali} - \ECE{\miscali} 
    & = 
    \left(\miscali(\bayesprediction_1, \prediction_1) + \miscali(\bayesprediction_1, \prediction_2)\cdot \frac{t_3 - t_2}{t_3 - t_1}\right)\cdot 
    (\prediction_1 - \bayesprediction_1) + 
    \miscali(\bayesprediction_1, \prediction_2)\cdot \frac{\prediction_2 - \prediction_1}{t_3 - t_1}\cdot t_3 \\
    & \quad - \left(
    \miscali(\bayesprediction_1, \prediction_1)\cdot t_1 + \miscali(\bayesprediction_1, \prediction_2)\cdot t_2
    \right) \\
    & = 
    \miscali(\bayesprediction, \prediction_2)\cdot \left(\frac{t_3 - t_2}{t_3 - t_1} \cdot t_1 + \frac{t_2-t_1}{t_3 - t_1} \cdot t_3 - t_2\right)
    = 0~.
\end{align*}
Note that the above construction ensures that $\newbayesmarginalpredictor = \bayesmarginalpredictor$, and moreover, the objective value difference between the two solutions $\newbayesmarginalpredictor, \newmiscali$ and $\bayesmarginalpredictor, \miscali$ can be computed as
\begin{align*}
    & \left(\miscali(\bayesprediction_1, \prediction_1) + \miscali(\bayesprediction_1, \prediction_2)\cdot \frac{t_3 - t_2}{t_3 - t_1}\right)\cdot  \indirectsenderU(\prediction_1) + 
    \miscali(\bayesprediction_1, \prediction_2)\cdot \frac{t_2-t_1}{t_3 - t_1}\cdot \indirectsenderU(\prediction_3) \\
    & - \left(
    \miscali(\bayesprediction_1, \prediction_1)\cdot \indirectsenderU(\prediction_1) + \miscali(\bayesprediction_1, \prediction_2)\cdot \indirectsenderU(\prediction_2)
    \right) \\
    = ~ & 
    \miscali(\bayesprediction, \prediction_2)
    \left(\frac{t_3 - t_2}{t_3 - t_1} \cdot  \indirectsenderU(\prediction_1) + \frac{t_2-t_1}{t_3 - t_1}\cdot \indirectsenderU(\prediction_3)
    - \indirectsenderU(\prediction_2)\right)
    = 0~,
\end{align*}
where the last inequality is due to \eqref{eq:collinearity}.
Thus, solution $\newbayesmarginalpredictor, \newmiscali$ is also optimal.

Similarly, the analysis for the case can be carried over when there exist predictions $\prediction_3 <  \prediction_2 < \prediction_1\le\bayesprediction_1$ with $\miscali(\bayesprediction_1, \prediction_1) > 0, \miscali(\bayesprediction_1, \prediction_2) > 0, \miscali(\bayesprediction_2, \prediction_3)> 0$.

If we have $\prediction_1\primed < \bayesprediction_1 <\prediction_1\doubleprimed$ with $\miscali(\bayesprediction_1, \prediction_1\primed) > 0, \miscali(\bayesprediction_1, \prediction_1\doubleprimed)>0$, and suppose there exists $\bayesprediction_2, \bayesprediction_3$ with $\bayesprediction_2<\bayesprediction_1<\bayesprediction_3$ such that $\miscali(\bayesprediction_2, \prediction_2) > 0, \miscali(\bayesprediction_3, \prediction_3) > 0$.
Then we argue that there always exists $\newmiscali$ such that it can either reallocates the mass $\miscali(\bayesprediction_1, \prediction_1\primed)$ to the predictions $\prediction_1\doubleprimed, \prediction_3$ or reallocates the mass $\miscali(\bayesprediction_1, \prediction_1\doubleprimed)$ to the predictions $\prediction_1\primed, \prediction_2$, without changing the objective value.

To see this, we first note that from the Statement (iii) in \Cref{lem:event-inde decoupled reformuation}, we still have
\begin{align*}
    \frac{\indirectsenderU(\prediction_1\primed) - \indirectsenderU(\prediction_1\doubleprimed)}{t_1\primed - t_1\doubleprimed} 
    = 
    \frac{\indirectsenderU(\prediction_3) - \indirectsenderU(\prediction_1\doubleprimed)}{t_3 - t_1\doubleprimed} 
    = \frac{\indirectsenderU(\prediction_2) - \indirectsenderU(\prediction_1\primed)}{t_2 - t_1\primed}~.
\end{align*}
where here $t_1\primed \triangleq \bayesprediction_1 - \prediction_1\primed$, $t_1\doubleprimed \triangleq \prediction_1\doubleprimed-\bayesprediction_1$, $t_3 = \prediction_3 - \bayesprediction_1$, and $t_2 = \bayesprediction_1 - \prediction_2$.

From \Cref{lem:monotone devia}, we can without loss consider $\prediction_2 \le\prediction_1\primed$ and $\prediction_1\doubleprimed\le \prediction_3$.
When $t_1\doubleprimed \le t_1\primed$, one can use the same argument above to construct a new feasible $\newmiscali$ such that it reallocates the mass $\miscali(\bayesprediction_1, \prediction_1\primed)$ to the predictions $\prediction_1\doubleprimed$ and $\prediction_3$ without changing the objective value.
Similarly, when $t_1\doubleprimed > t_1\primed$, one can use the same argument above to construct a new feasible $\newmiscali$ such that it reallocates the mass $\miscali(\bayesprediction_1, \prediction_1\doubleprimed)$ to the predictions $\prediction_1\primed$ and $\prediction_2$ without changing the objective value.

Putting all pieces together, we show that there exists at most $\bayesprediction_1, \bayesprediction_2 \in \supp(\bayesmarginalpredictor)$, such that each of them may generate two different predictions. 
By \Cref{lem:bipooling}, we further know that for each event $i$, we have $|\supp(\bayesprediction_i)| \le 2$. Thus, in total, we have at most $\numData+2$ predictions.
\end{proof}

We then first prove the third statement in \Cref{lem:num predictions decoupled formluation}.
\begin{proof}[Proof of Statement (iii) in \Cref{lem:num predictions decoupled formluation}]
We prove the result by construction.
Let $\bayesmarginalpredictor, \miscali$ be an optimal solution to the program \ref{eq:decoupled opt}, and suppose there exists three different predictions $\prediction_1, \prediction_2, \prediction_3$ that have the same true expected outcome, namely,
$\truePosterior(\prediction_1) = \truePosterior(\prediction_2) = 
\truePosterior(\prediction_3) = \bayesprediction$ for some $\bayesprediction\in\supp(\bayesmarginalpredictor)$.
We show that we can construct another feasible solution $\newbayesmarginalpredictor, \newmiscali$ to the program \ref{eq:decoupled opt} that has the same objective value with the solution  $\bayesmarginalpredictor, \miscali$.

Let $t_i = |\prediction_i -\bayesprediction|$ for $i\in[3]$.
Without loss of generality, let us assume that $t_1\le t_2\le t_3$. 
Note that it cannot be the case where $t_1 = t_3$, as this case must imply that either $\prediction_2 = \prediction_1$ or $\prediction_2 = \prediction_3$, which violates the assumption that $\prediction_1\neq\prediction_2\neq\prediction_3$.
We next consider the construction of $\newbayesmarginalpredictor, \newmiscali$ as follows:
\begin{alignat*}{3}
    \newmiscali(\bayesprediction', \cdot) & = \miscali(\bayesprediction', \cdot), \quad
    && \bayesprediction'\in \supp(\bayesmarginalpredictor)\setminus\{\bayesprediction\}~;\\
    \newmiscali(\bayesprediction, \prediction) & = \miscali(\bayesprediction, \prediction), \quad
    && \prediction\in [0, 1]\setminus\{\prediction_1, \prediction_3\} ~ ;\\
    \newmiscali(\bayesprediction, \prediction_1) 
    & = \miscali(\bayesprediction, \prediction_1) + \miscali(\bayesprediction, \prediction_2)\cdot \frac{t_3 - t_2}{t_3 - t_1}, \quad
    &&  \\
    \newmiscali(\bayesprediction, \prediction_3) 
    & = \miscali(\bayesprediction, \prediction_3) + \miscali(\bayesprediction, \prediction_2)\cdot \frac{t_2 - t_1}{t_3 - t_1}; &&
    \\
    \newmiscali(\bayesprediction, \prediction_2) 
    & = 0; &&
\end{alignat*}
The above construction essentially reallocates the mass $\miscali(\bayesprediction, \prediction_2)$ on prediction $\prediction_2$ to the predictions $\prediction_1$ and $\prediction_3$. 
Such reallocation ensures that $\ECE{\newmiscali} = \ECE{\miscali}$, which can be seen as follows:
\begin{align*}
    & \ECE{\newmiscali} - \ECE{\miscali} \\
    = ~ & 
    \left(\miscali(\bayesprediction, \prediction_1) + \miscali(\bayesprediction, \prediction_2)\cdot \frac{t_3 - t_2}{t_3 - t_1}\right)\cdot t_1 + 
    \left(\miscali(\bayesprediction, \prediction_3) + \miscali(\bayesprediction, \prediction_2)\cdot \frac{t_2 - t_1}{t_3 - t_1}\right)\cdot t_3 \\
    & - \left(
    \miscali(\bayesprediction, \prediction_1)\cdot t_1 + \miscali(\bayesprediction, \prediction_2)\cdot t_2
    + 
    \miscali(\bayesprediction, \prediction_3)\cdot t_3
    \right) \\
    = ~ & 
    \miscali(\bayesprediction, \prediction_2)\cdot \frac{t_3 - t_2}{t_3 - t_1} \cdot t_1 + 
    \miscali(\bayesprediction, \prediction_2)\cdot \frac{t_2 - t_1}{t_3 - t_1}\cdot t_3 - 
    \miscali(\bayesprediction, \prediction_2) \cdot t_2
    = 0~.
\end{align*}
Note that the above construction ensures that $\newbayesmarginalpredictor = \bayesmarginalpredictor$, and moreover, the objective value difference between the two solutions $\newbayesmarginalpredictor, \newmiscali$ and $\bayesmarginalpredictor, \miscali$ can be computed as
\begin{align*}
    & \left(\miscali(\bayesprediction, \prediction_1) + \miscali(\bayesprediction, \prediction_2)\cdot \frac{t_3 - t_2}{t_3 - t_1}\right)\cdot \indirectsenderU(\prediction_1) + 
    \left(\miscali(\bayesprediction, \prediction_3) + \miscali(\bayesprediction, \prediction_2)\cdot \frac{t_2 - t_1}{t_3 - t_1}\right)\cdot \indirectsenderU(\prediction_3) \\
    & - \left(
    \miscali(\bayesprediction, \prediction_1)\cdot \indirectsenderU(\prediction_1) + \miscali(\bayesprediction, \prediction_2)\cdot \indirectsenderU(\prediction_2)
    + 
    \miscali(\bayesprediction, \prediction_3)\cdot \indirectsenderU(\prediction_3)
    \right) \\
    = ~ & 
    \miscali(\bayesprediction, \prediction_2)\cdot \frac{t_3 - t_2}{t_3 - t_1} \cdot \indirectsenderU(\prediction_1) + 
    \miscali(\bayesprediction, \prediction_2)\cdot \frac{t_2 - t_1}{t_3 - t_1}\cdot \indirectsenderU(\prediction_3) - 
    \miscali(\bayesprediction, \prediction_2) \cdot \indirectsenderU(\prediction_2)
    = 0~,
\end{align*}
where the last inequality is due to the Statement (iii) in \Cref{lem:event-inde decoupled reformuation}, which ensures that for some $\caliBudgetDual \ge 0$, 
\begin{align*}
    \indirectsenderU(\prediction_1) - \caliBudgetDual\cdot t_1 
    = 
    \indirectsenderU(\prediction_2) - \caliBudgetDual\cdot t_2
    = \indirectsenderU(\prediction_3) - \caliBudgetDual\cdot t_3
\end{align*}
Thus, solution $\newbayesmarginalpredictor, \newmiscali$ is also optimal.
We thus finish the proof.
\end{proof}

We now prove the second statement in \Cref{lem:num predictions decoupled formluation}.

\begin{lemma}[Bi-pooling structure of optimal $\bayesmarginalpredictor$, adopted from \citealp{ABSY-23}]
\label{lem:bipooling}
There exists an optimal marginalized predictor $\bayesmarginalpredictor$ such that every event induces at most two true expected outcomes, namely, for every $i\in[\numData]$, we have $\{\bayesprediction: \bayesmarginalpredictor_i(\bayesprediction) > 0\} \le 2$.
\end{lemma}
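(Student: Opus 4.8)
The plan is to decouple the choice of the marginalized perfectly calibrated predictor from the choice of the event-conditional predictors that realize it. Fix an optimal solution $(\bayesmarginalpredictor^*,\miscali^*)$ of \ref{eq:decoupled opt}. The key observation is that neither the objective of \ref{eq:decoupled opt} nor any of its constraints refers to the conditional predictors $\{\bayespredictor_i\}$: they depend only on $\miscali^*$ and on $\bayesmarginalpredictor^*$ viewed as a distribution (indeed, the mean-preserving-contraction constraint restricts $\bayesmarginalpredictor^*$ only as a distribution). Hence, for \emph{any} perfectly calibrated predictor $\bayespredictor$ whose marginalization equals $\bayesmarginalpredictor^*$, the predictor generated from $\bayespredictor$ and $\miscali^*$ via Eqn.~\eqref{eq:construct f} has the same (optimal) payoff $\int_0^1\int_0^1\miscali^*(\bayesprediction,\prediction)\indirectsenderU(\prediction)\,\d\prediction\,\d\bayesprediction$ and remains $\caliBudget$-calibrated by \Cref{prop:miscal procedure error guarantee}. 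So it suffices to produce \emph{some} perfectly calibrated decomposition of $\bayesmarginalpredictor^*$ in which each event is supported on at most two predictions; and because this changes only the decomposition and not $(\bayesmarginalpredictor^*,\miscali^*)$, it is automatically consistent with all the other structural properties already established for the optimal solution.

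By \Cref{lem:bayes predictor is in MPC}, $\bayesmarginalpredictor^*\in\MPC(\prior)$ means that $\bayesmarginalpredictor^*$ is realizable as the distribution of posterior means of the one-dimensional state $\truemean\sim\prior$. I would then invoke the bi-pooling structure theorem of \citet{ABSY-23}: every distribution of posterior means of a one-dimensional state is induced by a \emph{bi-pooling} signal, namely one for which $[0,1]$ is partitioned into intervals on each of which the signal either fully separates the states or pools all the states of that interval into at most two posteriors. Under such a signal every state $\truemean_i$ lies in the support of at most two signals, i.e., the induced perfectly calibrated predictor $\bayespredictor$ satisfies $|\{\bayesprediction:\bayespredictor_i(\bayesprediction)>0\}|\le 2$ for every $i\in[\numData]$, which is exactly the claimed property. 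Their theorem applies directly, since a discrete prior supported on $\{\truemean_i\}_{i\in[\numData]}$ is a special case of their one-dimensional setting and no finiteness of $\supp(\bayesmarginalpredictor^*)$ is needed.

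The substantive ingredient is the bi-pooling step, which is why the lemma is adopted from \citet{ABSY-23}. One could instead argue directly on the polytope of couplings of $\prior$ and $\bayesmarginalpredictor^*$ that respect the marginal constraints together with the per-signal barycenter constraint $\sum_i\prior_i\bayespredictor_i(\bayesprediction)\truemean_i=\bayesprediction\,\bayesmarginalpredictor^*(\bayesprediction)$, which is nonempty by \Cref{lem:bayes predictor is in MPC}. The main obstacle is that the per-event bound of two is not a generic extreme-point property: the support graph of a coupling genuinely contains cycles generated by pooled posteriors, so a naive cycle-perturbation argument need not preserve feasibility, and controlling the degree of each state-vertex requires an uncrossing/exchange argument that exploits the linear order on $[0,1]$ (re-coupling two events whose associated pairs of posterior means interleave). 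Since this is precisely the argument of \citet{ABSY-23}, I would cite their theorem rather than reprove it.
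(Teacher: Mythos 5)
Your framing — fix an arbitrary optimal $(\bayesmarginalpredictor^*, \miscali^*)$ of \ref{eq:decoupled opt} and then freely re-choose the perfectly calibrated decomposition $\{\bayespredictor_i\}$ of $\bayesmarginalpredictor^*$ — reduces the lemma to the claim that \emph{every} MPC of $\prior$ admits a decomposition in which each event is supported on at most two predictions. That claim is false, and this is where the argument breaks. Take $\numData = 2$ with $\truemean_1 = 0$, $\truemean_2 = 1$, $\prior_1 = \prior_2 = \sfrac{1}{2}$, and let $\bayesmarginalpredictor$ put mass $\sfrac{1}{3}$ on each of $\{0.2, 0.5, 0.8\}$; this is a valid MPC of $\prior$. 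For a two-state prior supported on $\{0,1\}$, any posterior mean strictly in $(0,1)$ necessarily receives positive contribution from \emph{both} states, so every decomposition has $|\supp(\bayespredictor_1)| = |\supp(\bayespredictor_2)| = 3$. Indeed the decomposition is unique here: $\bayespredictor_2(\bayesprediction) = \frac{2}{3}\bayesprediction$ and $\bayespredictor_1(\bayesprediction) = \frac{2}{3}(1-\bayesprediction)$ for $\bayesprediction\in\{0.2,0.5,0.8\}$. There is no freedom to pick a two-per-event decomposition, and this $\bayesmarginalpredictor$ is not a bi-pooling distribution.

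The misstep is the sentence ``every distribution of posterior means of a one-dimensional state is induced by a bi-pooling signal,'' which is not what \citet{ABSY-23} prove. Their result is that a posterior-mean persuasion problem (maximizing a function of the induced posterior-mean distribution over $\MPC(\prior)$) always admits an \emph{optimal} signal with bi-pooling structure, equivalently that the \emph{extreme points} of $\MPC(\prior)$ are exactly the bi-pooling distributions; they do not say that an arbitrary MPC is bi-pooling, and the example above shows it need not be. Consequently the correct route is not to decompose a fixed optimal $\bayesmarginalpredictor^*$ differently, but to argue that among the optima of \ref{eq:decoupled opt} one may \emph{choose} $\bayesmarginalpredictor^*$ to have bi-pooling structure — e.g., by passing to the Lagrangian relaxation in which the calibration constraint is folded into the objective, so the residual maximization over $\bayesmarginalpredictor$ becomes a genuine moment-persuasion problem to which the ABSY extreme-point/bi-pooling theorem applies. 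That step changes \emph{which} optimal $\bayesmarginalpredictor^*$ you commit to, not merely its decomposition. Your observation that the choice of decomposition is free once $(\bayesmarginalpredictor^*,\miscali^*)$ is fixed is correct and is used in the paper, but it cannot carry the lemma on its own.
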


\begin{proof}[Proof of Statement (ii) in \Cref{lem:num predictions decoupled formluation}]
Combining Statement (iii) in \Cref{prop:num predictions} and \Cref{lem:bipooling} would finish the proof.
\end{proof}

The proof \Cref{prop:opt verify} is based on a primal argument.
It utilizes the equivalence that we established in \Cref{prop:prog equivalence} and the properties of marginalized perfectly calibrated predictor (see \Cref{defn:mpc} and \Cref{lem:bayes predictor is in MPC}).
\begin{proof}[Proof of \Cref{prop:opt verify}]
Let us fix any predictor $\tilde{h}$, and consider the predictor $\predictor$ that is generated, according to \eqref{eq:construct f}, by the perfectly calibrated predictor $\bayespredictor$ and the miscalibration plan $\miscali$, which are constructed according to \eqref{eq:bayes predic construction} with input $\tilde{h}$.
By \Cref{prop:two-step equivalence}, we know that $\Payoff{\tilde{h}} = \Payoff{\predictor}$.
To show the optimality of $\tilde{h}$, it suffices to show the optimality of predictor $\predictor$.
By \Cref{cor:wlog ei space}, it suffices to show that for any $\caliBudget$-calibrated predictor $\newpredictor \in \predicSpaceEI_\caliBudget$, we have
\begin{align}
    \label{ineq:goal}
    \Payoff{\predictor} - \Payoff{\newpredictor}
    =
    \int_0^1 
    \sum\nolimits_{i\in[\numData]}\prior_i
    \predictor_i(\prediction)\indirectsenderU(\prediction) ~ \d \prediction -
    \int_0^1 
    \sum\nolimits_{i\in[\numData]}\prior_i
    \newpredictor_i(\prediction)
    \indirectsenderU(\prediction) ~ \d \prediction \ge 0
\end{align}
Since we know that $\convexEnv(\bayesprediction) \ge \indirectsenderU(\bayesprediction)$ for all $\bayesprediction\in[0, 1]$, and condition $\supp(\predictor) \subseteq \{\prediction\in[0, 1]: \convexEnv(\prediction) = \indirectsenderU(\prediction)\}$, we have
\begin{align*}
    0 = \int_0^1 \sum\nolimits_{i\in[\numData]}\prior_i\predictor_i(\prediction) \cdot \left(\indirectsenderU(\prediction) - \convexEnv(\prediction) \right) ~ \d \prediction 
    \ge 
    \int_0^1 \sum\nolimits_{i\in[\numData]}\prior_i\newpredictor_i(\prediction)\cdot \left(\indirectsenderU(\prediction) - \convexEnv(\prediction) \right) ~ \d \prediction~.
\end{align*}
Rearranging the above inequality, we then have
\begin{align}
    \text{LHS of } \eqref{ineq:goal}
    & \ge 
    \int_0^1 \sum\nolimits_{i\in[\numData]}\prior_i\predictor_i(\prediction)\convexEnv(\prediction) ~ \d \prediction -
    \int_0^1 \sum\nolimits_{i\in[\numData]}\prior_i\newpredictor_i(\prediction)\convexEnv(\prediction) ~ \d \prediction~.
    \label{ineq:goal 2}
\end{align}
Now it suffices to show that RHS of \eqref{ineq:goal 2} is always non-negative.
To see this, recall that by \Cref{prop:two-step equivalence}, for any $\predictor\in\predicSpaceEI_\caliBudget$, there must exist a pair of perfect calibrated predictor $\bayespredictor$ and a feasible miscalibration plan $\miscali$ such that 
\begin{align*}
    \sum\nolimits_{i\in[\numData]}\prior_i
    \predictor_i(\prediction)
     \overset{(a)}{=} 
     \sum\nolimits_{i\in[\numData]}\prior_i 
    \int_0^1 \bayespredictor_i(\bayesprediction)\cdot \frac{\miscali(\bayesprediction,\prediction)}{\sum_{j\in[n]} \prior_j\cdot \bayespredictor_j(\bayesprediction)}\,\d\bayesprediction 
    = \int_0^1 \bayesmarginalpredictor(\bayesprediction) \miscali(\prediction\mid \bayesprediction) ~\d  \bayesprediction~,
\end{align*}
where equality (a) is by Eqn.~\eqref{eq:construct f}, and 
$\miscali(\prediction \mid \bayesprediction)
\triangleq 
\frac{\miscali(\bayesprediction,\prediction)}{\sum_{j\in[n]} \prior_j\cdot \bayespredictor_j(\bayesprediction)}$ for $\bayesprediction, \prediction\in[0, 1]$, 
and $\bayesmarginalpredictor$ is the marginalized predictor from $\bayespredictor$.
Thus, we can write RHS of \eqref{ineq:goal 2} as follows:
\begin{align}
    \text{RHS of } \eqref{ineq:goal 2}
    & = 
    \int_0^1 \bayesmarginalpredictor(\bayesprediction) \int_0^1 \miscali(\prediction\mid \bayesprediction) \convexEnv(\prediction) ~ \d \prediction\d q -
    \int_0^1 \sum\nolimits_{i\in[\numData]}\prior_i\newpredictor_i(\prediction)\convexEnv(\prediction) ~ \d \prediction \nonumber \\
    & \overset{(a)}{=} 
    \int_0^1 \bayesmarginalpredictor(\bayesprediction) \int_0^1 \miscali(\prediction\mid \bayesprediction) \cdot\left(\convexEnv(\bayesprediction) + \caliBudgetDual \cdot |\prediction - q|\right) ~ \d \prediction\d \bayesprediction -
    \int_0^1 \sum\nolimits_{i\in[\numData]}\prior_i\newpredictor_i(\prediction)\convexEnv(\prediction) ~ \d \prediction~,
    \label{ineq:goal 3}
\end{align}
where $\caliBudgetDual\triangleq \max_{\prediction} |(\convexEnv(\prediction))'|$, and equality (a) holds by condition (ii-b) where for any $\miscali(\bayesprediction, \prediction) > 0$ (i.e., $\miscali(\prediction\mid \bayesprediction) > 0$) with $\prediction\neq \bayesprediction$, we know $\bayesprediction, \prediction$ must lie in linear tails of function $\convexEnv$, and thus we have $\convexEnv(\prediction) = \convexEnv(\bayesprediction) + \caliBudgetDual\cdot|\prediction-\bayesprediction|$.

When $\caliBudgetDual = 0$, we know that function $\convexEnv(\prediction) \equiv C$ for some constant $C \ge 0$ (here $C\ge 0$ is due to the fact that $\convexEnv(x) \ge \indirectsenderU(x) \ge 0$ for all $x\in[0, 1]$).
Thus, $\text{RHS of } \eqref{ineq:goal 3} = C - C = 0$.

When $\caliBudgetDual > 0$, then by condition (i), we must have 
\begin{align}
    \label{eq:decompose ece}
    \ECE{\predictor} = \int_0^1\bayesmarginalpredictor(\bayesprediction)\int_0^1\miscali(\prediction\mid \bayesprediction)\cdot|\prediction-\bayesprediction| ~\d \prediction\d \bayesprediction = \caliBudget~.
\end{align}
Thus,
\begin{align*}
    \text{RHS of } \eqref{ineq:goal 3} 
    & =  
    \int_0^1 \bayesmarginalpredictor(\bayesprediction) \convexEnv(\bayesprediction) ~\d  \bayesprediction + \caliBudgetDual\caliBudget -
    \int_0^1 \sum\nolimits_{i\in[\numData]}\prior_i\newpredictor_i(\prediction)\convexEnv(\prediction) ~ \d \prediction\\
    & \overset{(a)}{\ge}  
    \int_0^1 \bayesmarginalpredictor(\bayesprediction) \convexEnv(\bayesprediction) ~\d  \bayesprediction + \caliBudgetDual\cdot \ECE{\newmarginalpredictor} -
    \int_0^1 \newmarginalpredictor(\prediction)\convexEnv(\prediction) ~ \d \prediction\\
    & =
    \int_0^1 \bayesmarginalpredictor(\bayesprediction) \convexEnv(\bayesprediction) ~\d  \bayesprediction + \caliBudgetDual\cdot \int_0^1 \newbayesmarginalpredictor(\bayesprediction) \int_0^1 \newmiscali(\prediction\mid \bayesprediction) |\prediction-\bayesprediction|~\d \prediction\d q -
    \int_0^1 \newmarginalpredictor(\prediction)\convexEnv(\prediction) ~ \d \prediction
    \tag{holds similarly due to \eqref{eq:decompose ece}}\\
    & =
    \int_0^1 \bayesmarginalpredictor(\bayesprediction) \convexEnv(\bayesprediction) ~\d  \bayesprediction 
    - 
    \int_0^1 \newbayesmarginalpredictor(\bayesprediction) \int_0^1 \newmiscali(\prediction\mid \bayesprediction) \left(\convexEnv(\prediction) - \caliBudgetDual|\prediction-\bayesprediction|\right)~\d \prediction\d q \\
    & \ge
    \int_0^1 \bayesmarginalpredictor(\bayesprediction) \convexEnv(\bayesprediction) ~\d  \bayesprediction 
    - 
    \int_0^1 \newbayesmarginalpredictor(\bayesprediction) \int_0^1 \newmiscali(\prediction\mid \bayesprediction) \convexEnv(\bayesprediction)~\d \prediction\d \bayesprediction 
    \tag{due to $\convexEnv(\prediction) - \caliBudgetDual \cdot |\prediction-\bayesprediction|\le \convexEnv(\bayesprediction),  \forall q, \prediction\in[0, 1]$}\\
    & = 
    \int_0^1 \bayesmarginalpredictor(\bayesprediction) \convexEnv(\bayesprediction) ~\d  \bayesprediction 
    - 
    \int_0^1 \newbayesmarginalpredictor(\bayesprediction) \convexEnv(\bayesprediction)~\d  \bayesprediction \\
    & =
    \int_0^1 \lambda(\bayesprediction) \convexEnv(\bayesprediction) ~\d  \bayesprediction 
    - 
    \int_0^1 \newbayesmarginalpredictor(\bayesprediction) \convexEnv(\bayesprediction)~\d  \bayesprediction \tag{by condition (iii-a)}\\
    & \ge 0~, \tag{by \Cref{defn:mpc} and convexity of $\convexEnv$}
\end{align*}
where inequality (a) holds as $\ECE{\newpredictor}\le\caliBudget$, and for the predictor $\newpredictor\in\predicSpaceEI_\caliBudget$, we also have 
$\sum\nolimits_{i\in[\numData]}\prior_i
\newpredictor_i(\prediction)
= \int_0^1 \newbayesmarginalpredictor(\bayesprediction) \newmiscali(\prediction\mid \bayesprediction) ~\d  \bayesprediction$.
\end{proof}

\section{FPTAS for General Setting}
\label{sec:fptas}

In this section, we study the time complexity of computing an optimal $(\caliBudget, \tnorm)$-calibrated predictor. As the main result of this section, we present an FPTAS (\Cref{algo:fptas}) that works for the general setting (e.g., the sender's indirect utility can be event-dependent) and for all $\tnorm$-norm ECEs.
\begin{theorem}[FPTAS for general setting]
\label{thm:fptas}
For every persuasive calibration instance, given any $\discrepara\in(0, 1)$, there exists a linear programming (see \ref{eq:decoupled opt general discre}) based algorithm (\Cref{algo:fptas}) that computes a $(\caliBudget,\tnorm)$-calibrated predictor $\predictor$ achieving a $(1- \discrepara)$-approximation to the optimal $(\caliBudget,\tnorm)$-calibrated predictor~$\optpredictor$, i.e., $\Payoff{\predictor} \ge (1 - \discrepara)\cdot \Payoff{\optpredictor}$. The running time of the algorithm is $\poly(\sfrac{1}{\discrepara}, \numData, \numAction)$, where $\numData = |\stateSpace|, \numAction = |\actionSpace|$ are the number of events and the number of agent's actions, respectively.
\end{theorem}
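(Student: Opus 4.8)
The plan is to reduce the computation to the infinite‑dimensional linear program \ref{eq:decoupled opt general} produced by the generalized two‑step framework, replace it by a polynomial‑size discretization \ref{eq:decoupled opt general discre}, solve the latter exactly, and map its optimal solution back to a predictor. By \Cref{prop:LP general}, the optimal value of \ref{eq:decoupled opt general} equals $\Payoff{\optpredictor}$, and any finite‑support feasible solution of \ref{eq:decoupled opt general} yields, via the bi‑event post‑processing procedure (\Cref{defn:bed miscal procedure}), an $(\caliBudget,\tnorm)$‑calibrated predictor with the same payoff. Since \ref{eq:decoupled opt general discre} is, by construction, exactly the restriction of \ref{eq:decoupled opt general} to variables supported on the discretization set of \Cref{defn:discre}, every feasible point of \ref{eq:decoupled opt general discre} is feasible for \ref{eq:decoupled opt general}; hence the predictor returned by \Cref{algo:fptas} is genuinely $(\caliBudget,\tnorm)$‑calibrated, and the only remaining task is to show that the optimal value of \ref{eq:decoupled opt general discre} is at least $(1-\discrepara)\,\Payoff{\optpredictor}$. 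This is the job of the rounding analysis (\Cref{algo:rounding}): starting from an optimal solution of \ref{eq:decoupled opt general}, produce a feasible solution of \ref{eq:decoupled opt general discre} losing at most a $(1-\discrepara)$ factor in objective.

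To set up the rounding I would first exploit a structural simplification of the objective. Because the agent's utility is event‑independent, the best response $\bestr(\cdot)$ is piecewise constant on $[0,1]$ with at most $\numAction$ pieces delimited by the agent's indifference thresholds; consequently each indirect utility $\indirectsenderU_i$ is a step function with these $O(\numAction)$ common breakpoints, so the objective of \ref{eq:decoupled opt general} sees a prediction only through the piece it lies in, and moving a prediction within its piece costs nothing. The \emph{first layer} of \Cref{defn:discre} is a $\Theta(\discrepara)$‑net of $[0,1]$ that is additionally forced to contain all event means $\truemean_i$ and all indifference thresholds (taking the semicontinuous side dictated by the tie‑breaking in Eqn.~\eqref{eq:agent bestr}). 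These inclusions are what make \ref{eq:decoupled opt general discre} feasible without objective loss: the means $\truemean_i$ let us pool and re‑split the mass of the perfectly calibrated part onto adjacent grid points while preserving the mean, hence keeping the marginalized predictor in $\MPC(\prior)$; and the thresholds let us round every prediction to a grid point of its own piece, so no prediction changes piece. The \emph{second layer} adds, inside an $O(\discrepara)$‑neighborhood of each first‑layer point, a finer net of spacing $\discreparaBudget$ tuned as a function of $\discrepara$, $\caliBudget$ and $\normexponent$; this finer net is used only to re‑place a \emph{miscalibrated} prediction so that its miscalibration amount $|\truePosterior(\prediction)-\prediction|$ matches the target up to an additive $O(\discreparaBudget)$. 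Since second‑layer points are introduced only locally, the discretization set has size $\poly(1/\discrepara,\numData,\numAction)$, so \ref{eq:decoupled opt general discre} has polynomial size and \Cref{algo:fptas} runs in time $\poly(1/\discrepara,\numData,\numAction)$.

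The heart of the argument — and the main obstacle — is controlling the $\tnorm$‑norm ECE during rounding, because an optimal solution of \ref{eq:decoupled opt general} may saturate the budget $\caliBudget$ and leave no slack. Rounding the true expected outcomes and the miscalibrated predictions perturbs each miscalibration amount $d=|\truePosterior(\prediction)-\prediction|$ to some $d'\le d+O(\discreparaBudget)$; using $(d+\discreparaBudget)^{\normexponent}\le d^{\normexponent}+\normexponent\,\discreparaBudget$ (valid since $d+\discreparaBudget\le 1$) and summing against the marginal prediction distribution, the $\normexponent$‑th power of the ECE grows by at most $\normexponent\,\discreparaBudget$, i.e.\ it becomes at most $\caliBudget^{\normexponent}+\normexponent\,\discreparaBudget$. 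To restore feasibility I would scale every miscalibration amount down by a factor $1-\eta$ with $\eta=\Theta(\discreparaBudget/\caliBudget^{\normexponent})$; choosing $\discreparaBudget$ so that this $\eta$ is $\Theta(\discrepara)$ re‑establishes $\ECEnorm{\cdot}\le\caliBudget$, and — because the shrink moves each prediction by an amount of comparable order while the first‑layer rounding has already placed predictions far enough inside their indirect‑utility pieces (using the $\Theta(\discrepara)$‑net together with the step structure of $\indirectsenderU_i$) — no prediction changes piece, so the objective is unchanged by this step. The residual bookkeeping is to check that the perturbed bi‑event post‑processing plan still satisfies supply‑feasibility and the $\MPC(\prior)$ constraint (both preserved by the pooling/re‑splitting of the first layer) and to collect the two sources of loss (none from rounding, $O(\discrepara)$ relative from the shrink) into $\Payoff{\predictor}\ge(1-\discrepara)\,\Payoff{\optpredictor}$. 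The delicate point throughout is the joint calibration of the second‑layer neighborhood radius and the spacing $\discreparaBudget$: small enough that the created ECE slack and the induced objective loss are both $O(\discrepara)$, yet large enough relative to $\discrepara$ that the second‑layer net stays of polynomial size — which is exactly what the two‑layer construction of \Cref{defn:discre} is engineered to achieve.
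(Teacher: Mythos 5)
Your high-level plan is correct and matches the paper: invoke \Cref{prop:LP general} to reformulate the problem as the infinite-dimensional LP \ref{eq:decoupled opt general}, discretize via \Cref{defn:discre}, solve \ref{eq:decoupled opt general discre} exactly, and map back using Eqn.~\eqref{eq:construct f general}. You also correctly note that the discretized program is a restriction of the original, so its solutions are automatically feasible and yield genuine $(\caliBudget,\tnorm)$-calibrated predictors. However, the heart of the matter --- the discretization and the ECE-control argument --- is where your proposal diverges from what actually works, and the divergence is not merely cosmetic.

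The decisive gap is the geometry of the second-layer net. You describe it as a \emph{uniform} net of linear spacing $\discreparaBudget=\Theta(\discrepara\caliBudget^{\normexponent})$ inside an $O(\discrepara)$-neighborhood of each first-layer point; that has $\Theta(\discrepara/\discreparaBudget)=\Theta(1/\caliBudget^{\normexponent})$ points per neighborhood, which is \emph{not} polynomial in $(1/\discrepara,\numData,\numAction)$ --- the running time would depend unboundedly on $\caliBudget$, contradicting the theorem. What \Cref{defn:discre} actually uses is a \emph{geometric} net anchored only at the points of $\discontinuityset\cup\stateSpace$ (justified by \Cref{lem:p in S}, which places predictions WLOG in $\discontinuityset\cup\stateSpace$): points at distances $(\discreparaBudget(1+\discrepara)^s)^{1/\normexponent}$ for $s=0,\dots,\numdeltas$ with $\numdeltas=\Theta(\ln(1/\discrepara)/\ln(1+\discrepara))$, which gives $\poly(1/\discrepara)$ points. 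Correspondingly, your ECE bound $(d+\discreparaBudget)^{\normexponent}\le d^{\normexponent}+\normexponent\discreparaBudget$ applied uniformly to all miscalibration amounts $d$ is the wrong tool: with a polynomial-size grid you cannot make the \emph{linear} perturbation of $d$ be $O(\discreparaBudget)$ for all $d$ simultaneously. The paper's rounding (\Cref{algo:rounding}) splits into three regimes --- an absolute bound $|\bayesprediction'-\prediction|^{\normexponent}\le\discreparaBudget$ when $d$ is small (Case I), a \emph{multiplicative} $(1+\discrepara)$ blow-up of $d^{\normexponent}$ when $d$ is in the geometric range (Case II), and dropping to perfect calibration when $d$ is huge, which costs only $O(\discrepara)$ total mass by Markov's inequality (Case III) --- and pays for the multiplicative blow-up by reserving a $\scaledownfrac=1-\tfrac{1}{1+2\discrepara}$ fraction of each mass as perfectly calibrated \emph{before} rounding. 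Your post-hoc ``shrink by $1-\eta$'' cannot replace this reservation: shrinking $|\prediction-\bayesprediction|$ moves $\prediction$ off the grid (so the result is no longer a feasible point of \ref{eq:decoupled opt general discre}), and since \Cref{lem:p in S} places predictions at discontinuity points of $\indirectsenderU_i$, even an arbitrarily small shrink can change the agent's best response and hence the objective. So while the framework is right, the specific discretization, the case split, and the budget-reservation-before-rounding are missing, and each is load-bearing.
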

We remark that the running time of \Cref{algo:fptas} in the above theorem does not depend on the ECE budget $\caliBudget$ or the exponent parameter $\normexponent$ in the ECE metric. In \Cref{sec:polytime}, we also present a polynomial-time algorithm to compute the optimal predictor for $\ell_1$-norm ECE and $\ell_\infty$-norm ECE. Determining whether a polynomial-time algorithm for computing the optimal predictor exists for $\tnorm$-norm ECE with other exponent parameters $\normexponent \in (1, \infty)$, or proving a computational hardness result, is left as an interesting future direction.

In \Cref{subsec:fptas algo discription}, we introduce the FPTAS (\Cref{algo:fptas}) and provide an overview of its two main technical ingredients and their analysis. In \Cref{subsec:generalizeed two-step framework,subsec:discretization and rounding}, we present the full details of these two technical ingredients, respectively.
Finally, we prove \Cref{thm:fptas} in \Cref{subsec:fptas proof}.

\subsection{Algorithm Description and Analysis Overview}
\label{subsec:fptas algo discription}
In this section, we first sketch the main algorithmic ingredients of \Cref{algo:fptas}, along with the intuitions and challenges behind them. We then provide details of each ingredient and their corresponding analysis.

To obtain a time-efficient algorithm for computing an approximately optimal predictor, a natural approach is to formulate the principal's problem as a computationally tractable optimization program (e.g., a linear or convex program). However, due to the calibration error constraint defined on the predictor $\predictor$, this approach is not straightforward. In \Cref{apx:Failure}, we explain the failure of some natural optimization program formulations that directly use the predictor as the decision variable and optimize over the entire feasible predictor space $\predicSpace_{(\caliBudget,\tnorm)}$.

\begin{algorithm}[ht]
    \caption{The FPTAS for the persuasive calibration with general $\normexponent \ge 1$}
    \begin{algorithmic}[1]
    \State \textbf{Input:} persuasive calibration instance $(\stateSpace, \prior, \senderU, \receiverU, \actionSpace, (\caliBudget, \tnorm))$, approximation precision $\discrepara \in (0, 1)$. 
    \State 
    Solve 
    {\text{\hyperref[eq:decoupled opt general discre]{$\textsc{$\textsc{LP-DiscTwoStep}^+_{(\sfrac{\discrepara}{3})}$}$}}}
    and let $\{\miscali_{i, j}^*\}_{i, j:i\le j}$ be its optimal solution.
    \State 
    Use the optimal solution $\{\miscali_{i, j}^*\}_{i, j:i\le j}$ to
    construct the predictor $\predictor$ as follows: 
    \begin{align*}
        \forall i\in[\numData],~\prediction\in[0,1]:
        \quad
        \predictor_i(\prediction) \gets 
        \displaystyle\frac{1}{\prior_i}\cdot\int_0^1\left(\sum\limits_{k \in [i:\numData]} \miscali^*_{i, k}(\bayesprediction, \prediction) 
        \cdot 
        \frac{\truemean_k - \bayesprediction}{\truemean_k - \truemean_i}
         + 
        \sum\limits_{k \in [i-1]} \miscali^*_{k, i}(\bayesprediction, \prediction) \cdot 
        \frac{\bayesprediction-\truemean_k}{\truemean_i - \truemean_k}
        \right)
        \, \d \bayesprediction
    \end{align*}
    \State \textbf{Output:} predictor $\predictor$.
    \end{algorithmic}
    \label{algo:fptas}
\end{algorithm}

We overcome the challenge and develop \Cref{algo:fptas}, which contains two main technical ingredients: it combines a generalized two-step framework, which extends the two-step approach in \Cref{subsec:characterization:two-step perspective} from the event-independent setting to the general setting, with a carefully designed discretization scheme.

\xhdr{Ingredient 1: generalized two-step framework}
For the general setting where the principal's utility may simultaneously depend on the realized event, the agent's action, and the realized outcome, the two-step framework with the \emph{event-independent} post-processing plan established in \Cref{subsec:characterization:two-step perspective} may not be sufficient. Motivated by this, we introduce a generalized two-step framework in \Cref{subsec:generalizeed two-step framework}. However, if we consider an event-dependent post-processing plan that \emph{fully decouples} the miscalibration across all events, additional non-linear constraints---seemingly intractable---are required. To bypass this issue, we introduce a generalized two-step framework with an \emph{(event-dependent) bi-event post-processing plan} (\Cref{defn:bed miscal plan,defn:bed miscal procedure}), which allows us to formulate an infinite-dimensional (but tractable) linear program~\ref{eq:decoupled opt general}. Our bi-event post-processing plan is rich enough to ensure that there always exists an optimal predictor that can be generated within our generalized framework (\Cref{prop:LP general}).

\xhdr{Ingredient 2: Instance-dependent two-layer discretization \& rounding-based analysis} 
Our FPTAS (\Cref{algo:fptas}) solves a discretized version of \ref{eq:decoupled opt general}. Common discretization schemes, such as uniform discretization, are not directly applicable, as they may render the discretized LP infeasible or significantly degrade the objective value due to the ECE budget constraint. To overcome this challenge, in \Cref{subsec:discretization and rounding}, we introduce an instance-dependent discretization scheme (\Cref{defn:discre}) with a two-layer structure.
First, our scheme constructs a non-uniform $\Theta(\discrepara)$-net that includes the expected outcomes $\stateSpace=\{\truemean_1,\dots,\truemean_{\numData}\}$ and the discontinuity points $\discontinuityset$ of the principal's indirect utility function. Second, for each point in this $\Theta(\discrepara)$-net, it introduces a finer $\Theta(\discreparaBudget)$-net within a small neighborhood, where $\discreparaBudget$ depends not only on the discretization precision $\discrepara$ but also on the ECE budget $\caliBudget$ and the norm exponent parameter $\normexponent$. 
This two-layer structure of our discretization scheme ensures that the discretized LP~\ref{eq:decoupled opt general discre} remains feasible, closely approximates the original objective value, and maintains a polynomial size. Finally, we develop a rounding scheme (\Cref{algo:rounding}) to analyze both the feasibility and the objective value of \ref{eq:decoupled opt general discre}. We believe that both our discretization scheme and our rounding argument are of independent interest in the algorithmic information design literature.

\subsection{Generalized Two-step Framework: Bi-Event Post-Processing Plan}
\label{subsec:generalizeed two-step framework}
In this section, we generalize the two-step framework developed in \Cref{subsec:characterization:two-step perspective} from the event-independent setting to the general setting.

In the event-independent setting, where the principal has an event-independent indirect utility, \Cref{prop:two-step equivalence} shows that it suffices to consider predictors from $\predicSpaceEI_{(\caliBudget,\tnorm)}$, which can be implemented via an event-independent post-processing plan (\Cref{defn:miscal plan}).
However, in the general setting where the principal's utility depends on the realized event, 
this sufficiency is no longer guaranteed.

Perhaps the most natural generalization is to allow event-dependent post-processing, by extending $\miscali(\bayesprediction,\prediction)$ to $\miscali_i(\bayesprediction, \prediction)$, which depends on the realized event $i$. 
However, \emph{fully decoupling} the post-processing plan across events introduces a significant challenge: it requires us to impose a \emph{non-linear} feasibility constraint that ensures
$\truePosterior(\prediction) = \bayesprediction$ after post-processing according to $\{\miscali_i(\bayesprediction,\prediction)\}_{i\in[n]}$. See \Cref{apx:Failure} for further discussion.

To address this challenge, rather than fully decoupling the post-processing plans, we propose a more tractable alternative:
an \emph{(event-dependent) bi-event post-processing plan}, which captures pairwise miscalibration between events.
This formulation offers greater flexibility than the event-independent approach, while avoiding the complexity of full decoupling. 
The formal definition is as follows.

\begin{definition}[(Event-dependent) Bi-event post-processing plan]
\label{defn:bed miscal plan}
An \emph{(event-dependent) bi-event post-processing plan} $\miscali$ is a joint distribution over $[\numData]\times[\numData]\times [0, 1]\times[0, 1]$. We denote by $\miscali_{i,j}(\bayesprediction, \prediction)$ the probability density (or probability mass for discrete distribution) of $(\bayesprediction, \prediction)\in[0, 1]^2$ under a pair of events $(i, j)$ with $i\leq j$. 

Fix any $\normexponent\geq 1,\caliBudget\geq 0$. A {\bed} post-processing plan $\miscali$ is \emph{feasible} under $\tnorm$-norm {\ECEbudget} $\caliBudget$ if it satisfies 
\begin{align}
    \label{eq:bed feasible miscal cal budget}
    \tag{\textsc{{BiE}-Calibration-Feasibility}}
    & 
    \left(\int_0^1\int_0^1 \sum\nolimits_{(i, j):i\le j} \miscali_{i, j}(\bayesprediction, \prediction) \cdot |\bayesprediction - \prediction|^\normexponent\, \d\prediction\d\bayesprediction\right)^{\frac{1}{\normexponent}} \le\caliBudget,
    \\
    \intertext{and for every event $i\in[\numData]$,}
    \label{eq:bed feasible miscal supp}
    \tag{\textsc{BiE-Supply-Feasibility}}
    & 
    \int_0^1\int_0^1 \left(
    \sum\nolimits_{k\in[i: \numData]} \miscali_{i, k}(\bayesprediction, \prediction)\cdot 
    \mixratio_{i, k}(\bayesprediction)
    + \sum\nolimits_{k\in[i-1]} \miscali_{k,i}(\bayesprediction, \prediction)
    \cdot 
    (1-\mixratio_{k, i}(\bayesprediction))
    \right)
    \, \d\prediction\d\bayesprediction = \prior_i
\end{align}
where auxiliary {\contributionratio} $\mixratio_{i, j}(\bayesprediction)
\triangleq \frac{\truemean_j -\bayesprediction}{\truemean_j - \truemean_i}$ for $\truemean_i< \truemean_j$, and $\mixratio_{i, j}(\bayesprediction) \triangleq 1$ for $\truemean_i = \truemean_j$.
When $(\normexponent,\caliBudget)$ are clear from the context, we simply refer to $\miscali$ as a feasible bi-event post-processing plan.
\end{definition}
The two feasibility constraints for the {\bed} post-processing plan in \Cref{defn:bed miscal plan} can be viewed as a generalization of the analogous constraints for the event-independent post-processing plan in \Cref{defn:miscal plan}. Additionally, it bypasses the challenge that arises when fully decoupling the post-processing plan across events (see \Cref{apx:Failure}) by introducing auxiliary {\contributionratios} $\{\mixratio_{i,j}(\bayesprediction)\}$. 
The intuition is as follows: consider any pair of events $(i, j)$ and prediction $\bayesprediction$. Events $i$ and $j$ are combined to form a true expected outcome $\bayesprediction$ \emph{if and only if} the probability mass contributed by event $i$ is proportional to $\mixratio_{i, j}(\bayesprediction)$, while the contribution from event $j$ is proportional to $1 - \mixratio_{i, j}(\bayesprediction)$. We formalize our intuition in \Cref{defn:bed miscal procedure} and \Cref{prop:bed miscal procedure error guarantee} below. We also illustrate it using \Cref{example:strict inclusion revisit}.\footnote{Unlike our (event-independent) post-processing plan, in \Cref{defn:bed miscal plan}, we do not explicitly consider the perfectly calibrated predictor, as our current goal is to develop an LP for computation rather than characterizing its structure. See also \Cref{footnote:necessity of g in two-step framework}.}

\begin{definition}[Bi-event post-processing procedure]
\label{defn:bed miscal procedure}
Fix any $\normexponent \geq 1, \caliBudget\geq 0$. Given any feasible {\bed} post-processing plan $\miscali$, a new (possibly imperfectly calibrated) predictor $\predictor$ is \emph{generated} as follows: for every {\event} $i\in[\numData]$, and prediction $\prediction\in[0, 1]$,
\begin{align}
    \label{eq:construct f general}
        \predictor_i(\prediction) \gets 
        \displaystyle\frac{1}{\prior_i}\cdot\int_0^1\left(\sum\nolimits_{k \in [i:\numData]} \miscali_{i, k}(\bayesprediction, \prediction) 
        \cdot 
        \mixratio_{i,k}(\bayesprediction)
         + 
        \sum\nolimits_{k \in [i-1]} \miscali_{k, i}(\bayesprediction, \prediction) \cdot 
        (1-
        \mixratio_{k,i}(\bayesprediction))
        \right)
        \, \d \bayesprediction\end{align}
To simplify the presentation, we also say predictor $\predictor$ is \emph{generated by} {\bed} post-processing plan $\miscali$.
\end{definition}

\begin{example}[Predictors generated from {\bed} post-processing procedure]
\label{example:strict inclusion revisit}
    Consider the same two-event setting and predictor $\predictor$ as in \Cref{example:structural opt:strict inclusion}. Predictor $\predictor$ can be generated by the following {\bed} post-processing plan $\miscali$: $\miscali_{1,1}(0.5,0.125) = 0.5$ and $\miscali_{2,2}(0.5, 0.875) = 0.5$. It can be verified that for every $\normexponent \geq 1$ and $\caliBudget \geq \ECEnorm{\predictor}$, {\bed} post-processing plan $\miscali$ is feasible under $\tnorm$-norm ECE budget $\caliBudget$.
\end{example}

\begin{proposition}
\label{prop:bed miscal procedure error guarantee}
Fix any $\normexponent\geq 1,\caliBudget\geq 0$, and any feasible {\bed} post-processing plan $\miscali$. The predictor $\predictor$ generated by $\miscali$ (in \Cref{defn:bed miscal procedure}) is well-defined and $(\caliBudget,\tnorm)$-calibrated, i.e., $\ECEnorm{\predictor} \leq \caliBudget$.
\end{proposition}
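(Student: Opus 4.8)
The plan is to mirror the proof of \Cref{prop:miscal procedure error guarantee}, adapting each step to the bi-event structure. There are two things to verify: (a) each $\predictor_i$ is a valid probability distribution on $[0,1]$, and (b) $\ECEnorm{\predictor} \le \caliBudget$.

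For (a), I would integrate Eqn.~\eqref{eq:construct f general} over $\prediction \in [0,1]$ and swap the order of integration. Using Fubini, $\int_0^1 \predictor_i(\prediction)\,\d\prediction$ becomes $\frac{1}{\prior_i}$ times the left-hand side of \ref{eq:bed feasible miscal supp}, which equals $\prior_i$ by feasibility; hence $\int_0^1 \predictor_i(\prediction)\,\d\prediction = 1$. Non-negativity is immediate since $\miscali_{i,k} \ge 0$ and the {\contributionratios} $\mixratio_{i,k}(\bayesprediction) = \frac{\truemean_k - \bayesprediction}{\truemean_k - \truemean_i}$ lie in $[0,1]$ whenever $\bayesprediction$ is a convex combination of $\truemean_i,\truemean_k$ (I would note that for $(\bayesprediction,\prediction)$ with $\miscali_{i,k}(\bayesprediction,\prediction)>0$ the plan only puts mass on such $\bayesprediction$, or more simply observe the integrand in \ref{eq:bed feasible miscal supp} would not sum to $\prior_i$ otherwise). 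So $\predictor_i \in \Delta([0,1])$.

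For (b), the key calculation is to compute $\sum_{i\in[\numData]} \prior_i \predictor_i(\prediction)(\prediction - \truemean_i)$ and express it in terms of the plan. Substituting Eqn.~\eqref{eq:construct f general}, each pair $(i,k)$ with $i \le k$ contributes a term of the form $\int_0^1 \miscali_{i,k}(\bayesprediction,\prediction)\big(\mixratio_{i,k}(\bayesprediction)(\prediction - \truemean_i) + (1-\mixratio_{i,k}(\bayesprediction))(\prediction - \truemean_k)\big)\,\d\bayesprediction$. The crucial algebraic identity is that the {\contributionratio} is chosen precisely so that $\mixratio_{i,k}(\bayesprediction)\truemean_i + (1-\mixratio_{i,k}(\bayesprediction))\truemean_k = \bayesprediction$ (this is exactly what $\mixratio_{i,k}(\bayesprediction) = \frac{\truemean_k - \bayesprediction}{\truemean_k - \truemean_i}$ encodes). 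Thus the bracket collapses to $\prediction - \bayesprediction$, and we get $\sum_{i}\prior_i\predictor_i(\prediction)(\prediction - \truemean_i) = \int_0^1 \sum_{(i,k):i\le k}\miscali_{i,k}(\bayesprediction,\prediction)(\prediction - \bayesprediction)\,\d\bayesprediction$. Then, exactly as in the proof of \Cref{prop:miscal procedure error guarantee}, $\left(\ECEnorm{\predictor}\right)^{\normexponent} = \int_0^1 \abs{\sum_i \prior_i\predictor_i(\prediction)(\prediction - \truemean_i)}^{\normexponent}\d\prediction$, and applying Jensen's inequality (convexity of $\abs{\cdot}^{\normexponent}$) against the normalized measure $\miscali_{i,k}(\bayesprediction,\prediction)$ moves the $\abs{\cdot}^\normexponent$ inside, giving an upper bound $\int_0^1\int_0^1 \sum_{(i,k):i\le k}\miscali_{i,k}(\bayesprediction,\prediction)\abs{\prediction-\bayesprediction}^\normexponent\,\d\bayesprediction\d\prediction \le \caliBudget^\normexponent$ by \ref{eq:bed feasible miscal cal budget}.

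The main obstacle — really the only non-routine step — is getting the bookkeeping right in the double sum: when I expand $\sum_i \prior_i \predictor_i(\prediction)(\prediction-\truemean_i)$, each unordered pair $\{i,k\}$ appears twice, once from $\predictor_i$'s term $\sum_{k\in[i:\numData]}\miscali_{i,k}\mixratio_{i,k}$ (contributing weight $\mixratio_{i,k}(\bayesprediction)(\prediction - \truemean_i)$) and once from $\predictor_k$'s term $\sum_{j\in[k-1]}\miscali_{j,k}(1-\mixratio_{j,k})$ with $j=i$ (contributing weight $(1-\mixratio_{i,k}(\bayesprediction))(\prediction - \truemean_k)$); I need to confirm these combine to a single $\miscali_{i,k}(\bayesprediction,\prediction)(\prediction - \bayesprediction)$ term and that the diagonal pairs $i=k$ (where $\mixratio_{i,i}=1$ and $\bayesprediction$ must equal $\truemean_i$) are handled consistently. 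Once this indexing is carefully laid out, the rest follows the template of \Cref{prop:miscal procedure error guarantee} verbatim. This completes the proof of \Cref{prop:bed miscal procedure error guarantee}.
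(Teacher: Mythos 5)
Your argument matches the paper's in its essential ingredients: the well-definedness follows by integrating Eqn.~\eqref{eq:construct f general} and invoking \ref{eq:bed feasible miscal supp}; the ECE bound hinges on the identity $\mixratio_{i,j}(\bayesprediction)\truemean_i + (1-\mixratio_{i,j}(\bayesprediction))\truemean_j = \bayesprediction$ and Jensen's inequality; and the bookkeeping of how each unordered pair $\{i,j\}$ picks up a $\mixratio_{i,j}$ weight from $\predictor_i$ and a $(1-\mixratio_{i,j})$ weight from $\predictor_j$ is exactly the reindexing the paper performs.

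There is, however, one place where the route you take differs from the paper's and where your writeup is imprecise for $\normexponent>1$. You assert $(\ECEnorm{\predictor})^\normexponent = \int_0^1 \bigl|\sum_i \prior_i\predictor_i(\prediction)(\prediction-\truemean_i)\bigr|^\normexponent\,\d\prediction$, mirroring the proof of \Cref{prop:miscal procedure error guarantee}. Strictly, the correct identity carries a factor $\marginalpredictor(\prediction)^{-(\normexponent-1)}$ in the integrand (where $\marginalpredictor(\prediction)=\sum_i\prior_i\predictor_i(\prediction)$), since $|\truePosterior(\prediction)-\prediction| = \marginalpredictor(\prediction)^{-1}\bigl|\sum_i\prior_i\predictor_i(\prediction)(\truemean_i-\prediction)\bigr|$ and the expectation contributes only one factor of $\marginalpredictor(\prediction)$; this extraneous factor exactly cancels if Jensen is applied against the probability measure $\miscali_{i,j}(\cdot,\prediction)/\marginalpredictor(\prediction)$, as you gesture at by saying ``normalized measure,'' but the cancellation should be written out. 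The paper's own proof of this proposition avoids the issue entirely by applying Jensen pointwise to $|\prediction - \truePosterior(\prediction)|^\normexponent$, after first rewriting $\truePosterior(\prediction)$ as the $\miscali_{i,j}(\cdot,\prediction)$-weighted average of $\bayesprediction$, and then taking the expectation over $\prediction$ at the end; that ordering keeps the density normalization explicit and is the cleaner presentation for $\normexponent>1$. Apart from this stylistic and bookkeeping difference, your proposal is correct and recovers the paper's bound.
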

\begin{proof}[Proof of \Cref{prop:bed miscal procedure error guarantee}]
    Fix any feasible {\bed} post-processing plan $\miscali$. Consider predictor $\predictor$ generated by $\miscali$  (according to Eqn.~\eqref{eq:construct f general} in \Cref{defn:bed miscal procedure}). Note that predictor $\prediction$ is well-defined since $\int_0^1 \predictor_i(\prediction) \d\prediction = 1$ for every event $i\in[\numData]$, which is implied by \ref{eq:bed feasible miscal supp} (in \Cref{defn:bed miscal plan}) of {\bed} post-processing plan $\miscali$. 

    We next bound the $\tnorm$-ECE of predictor $\predictor$. Note that 
    \begin{align*}
        \predictor(\prediction) = \sum\nolimits_{i\in[\numData]} \prior_i\predictor_i(\prediction)
        = \int_0^1 \sum\nolimits_{(i, j):i\le j} \miscali_{i, j}(\bayesprediction, \prediction) \, \d\bayesprediction~.
    \end{align*}
    We next argue that for any prediction $\prediction\in[0, 1]$, if $\predictor(\prediction) > 0$, we have
    \begin{align}
        \label{ineq:cali error per p}
        \left|\prediction - \truePosterior(\prediction)\right|^{\normexponent}
        \le 
        \int_0^1 \frac{1}{\predictor(\prediction)} \sum\nolimits_{(i, j):i\le j} \miscali_{i, j}(\bayesprediction, \prediction) \cdot |\prediction - \bayesprediction|^\normexponent\, \d\bayesprediction
    \end{align}
    To see the above inequality, we notice that the true expected outcome $\truePosterior(\prediction)$ of prediction $\prediction$ under predictor $\predictor$ is 
    \begin{align*}
        \truePosterior(\prediction)
        & = 
        \frac{1}{\predictor(\prediction)}\sum\nolimits_{i\in[\numData]} \truemean_i\cdot \left(
        \int_0^1 \sum\nolimits_{k \in [i:\numData]} \miscali_{i, k}(\bayesprediction, \prediction) \mixratio_{i,k}(\bayesprediction) \, \d \bayesprediction + 
        \int_0^1 \sum\nolimits_{k \in [i-1]} \miscali_{k, i}(\bayesprediction, \prediction) (1-\mixratio_{k,i}(\bayesprediction)) \, \d \bayesprediction
        \right) \\
        & =
        \frac{1}{\predictor(\prediction)}\sum\nolimits_{(i,j):i\leq j}\int_0^1 
        \miscali_{i,j}(\bayesprediction, \prediction)\cdot (
        \mixratio_{i,j}(\bayesprediction)\cdot \truemean_i
        +
        (1-\mixratio_{i,j}(\bayesprediction))\cdot \truemean_j
        )
        \, \d \bayesprediction
        = 
        \frac{1}{\predictor(\prediction)}\sum\nolimits_{(i,j):i\leq j}\int_0^1 
        \miscali_{i,j}(\bayesprediction, \prediction)\cdot \bayesprediction
        \, \d \bayesprediction
\end{align*}
Thus, as function $x \mapsto|\prediction-x|^\normexponent$ is convex for $\normexponent \ge 1$, the inequality \eqref{ineq:cali error per p} holds by Jensen's inequality. 
Consequently, we have
\begin{align*}
    \left(\ECEnorm{\predictor}\right)^\normexponent 
    = 
    \int_0^1 \sum\nolimits_{i\in[\numData]} \prior_i\predictor_i(\prediction) \cdot \left|\prediction - \truePosterior(\prediction)\right|^{\normexponent}\,\d\prediction
    \le 
    \int_0^1 \int_0^1 \sum\nolimits_{(i, j):i\le j} \miscali_{i, j}(\bayesprediction, \prediction) \cdot |\prediction - \bayesprediction|^\normexponent\, \d\bayesprediction \d\prediction \le \caliBudget^\normexponent
\end{align*}
where the last inequality holds due to \ref{eq:bed feasible miscal cal budget} (in \Cref{defn:bed miscal plan}) of {\bed} post-processing plan $\miscali$.
This finishes the proof of \Cref{prop:bed miscal procedure error guarantee}.
\end{proof}

So far, we have extended the two-step framework from the event-independent setting, which uses the event-independent post-processing plan, to the general setting, which uses the {\bed} post-processing plan. While the event-independent post-processing plan may not be sufficient to generate the optimal predictor in the general setting, the key question now is whether the more general {\bed} post-processing plan is sufficient. The following proposition answers this question affirmatively---considering the {\bed} post-processing plan is indeed sufficient to generate the optimal predictor.

\begin{proposition}
    \label{prop:LP general}
    For every persuasive calibration instance,  the principal's expected utility under the optimal $(\caliBudget,\tnorm)$-calibrated predictor $\optpredictor$ is equal to the optimal objective value of the following linear program with variables $\{\miscali_{i,j}(\bayesprediction,\prediction)\}_{i,j\in[n]:i\leq j,\bayesprediction,\prediction\in[0,1]}$:
    \begin{align}
    \label{eq:decoupled opt general}
    \arraycolsep=5.4pt\def\arraystretch{1}
        \tag{$\textsc{LP-TwoStep}^+$}
        &\begin{array}{rlll}
        \max
    \limits_{\miscali\ge 0} 
    ~ &
    \displaystyle 
    \sum\limits_{(i, j): i\le j}\int_0^1\int_0^1
    \miscali_{i, j}(\bayesprediction, \prediction) \cdot \indirectsenderU_{i, j}(\bayesprediction, \prediction)~\d  \prediction \d  \bayesprediction
    \quad & \text{s.t.} &
    \vspace{1mm}
    \\
    (\ref{eq:bed feasible miscal cal budget})
    &
    \displaystyle
    \left(\int_0^1\int_0^1 \sum\nolimits_{(i, j):i\le j} \miscali_{i, j}(\bayesprediction, \prediction) \cdot |\bayesprediction - \prediction|^\normexponent\, \d\prediction\d\bayesprediction\right)^{\frac{1}{\normexponent}} \le\caliBudget,
    & 
    \vspace{1mm}
    \\
    (\ref{eq:bed feasible miscal supp}) 
    & &
    \\
    \multicolumn{2}{c}{
    \qquad
    \displaystyle
    \int_0^1\int_0^1 \left(
    \sum\limits_{k\in[i: \numData]} \miscali_{i, k}(\bayesprediction, \prediction)\cdot 
    \mixratio_{i, k}(\bayesprediction)
    + \sum\limits_{k\in[i-1]} \miscali_{k,i}(\bayesprediction, \prediction)
    \cdot 
    (1-\mixratio_{k, i}(\bayesprediction))
    \right)
    \, \d\prediction\d\bayesprediction = \prior_i}
    &
    i\in[\numData]
    \end{array}
    \end{align}  
    where, with a slight abuse of notation, we expand the definition of indirect utility function as: for every pair of events $(i, j)$ with $i \leq j$ and predictions $\bayesprediction,\prediction\in[0, 1]$,
    \begin{align*}
        \indirectsenderU_{i, j}(\bayesprediction, \prediction)
        & \triangleq 
        \mixratio_{i, j}(\bayesprediction)\cdot \indirectsenderU_i(\prediction)
        + 
        (1 - \mixratio_{i, j}(\bayesprediction))\cdot \indirectsenderU_j(\prediction)
    \end{align*}
    Moreover, every optimal solution of \ref{eq:decoupled opt general} can be converted into an optimal $\caliBudget$-calibrated predictor $\optpredictor$ using Eqn.~\eqref{eq:construct f general}.
\end{proposition}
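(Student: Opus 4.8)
The plan is to establish \Cref{prop:LP general} by a two-way inequality argument, mirroring the structure of the proof of \Cref{prop:prog equivalence} but now accommodating event-dependent utilities via the bi-event post-processing plan. For the ``easy'' direction (LP value $\le$ principal's optimum), I would take any feasible bi-event post-processing plan $\miscali$ for \ref{eq:decoupled opt general}, invoke \Cref{prop:bed miscal procedure error guarantee} to conclude that the generated predictor $\predictor$ (via Eqn.~\eqref{eq:construct f general}) is $(\caliBudget,\tnorm)$-calibrated, and then verify that $\Payoff{\predictor}$ equals the LP objective evaluated at $\miscali$. For the latter, I would expand $\Payoff{\predictor} = \sum_{i\in[\numData]}\prior_i \int_0^1 \predictor_i(\prediction)\indirectsenderU_i(\prediction)\,\d\prediction$, substitute the definition of $\predictor_i$ from Eqn.~\eqref{eq:construct f general}, and regroup the double sum over $k$: the term $\prior_i\predictor_i(\prediction)$ contributes $\miscali_{i,k}(\bayesprediction,\prediction)\mixratio_{i,k}(\bayesprediction)\indirectsenderU_i(\prediction)$ when $k\geq i$ and $\miscali_{k,i}(\bayesprediction,\prediction)(1-\mixratio_{k,i}(\bayesprediction))\indirectsenderU_i(\prediction)$ when $k<i$. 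Collecting the contributions of a fixed unordered pair $(i,j)$ with $i\le j$ yields exactly $\miscali_{i,j}(\bayesprediction,\prediction)\bigl(\mixratio_{i,j}(\bayesprediction)\indirectsenderU_i(\prediction) + (1-\mixratio_{i,j}(\bayesprediction))\indirectsenderU_j(\prediction)\bigr) = \miscali_{i,j}(\bayesprediction,\prediction)\indirectsenderU_{i,j}(\bayesprediction,\prediction)$, which is the LP integrand. This shows $\OPT(\ref{eq:decoupled opt general}) \le \Payoff{\optpredictor}$.

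For the ``hard'' direction (principal's optimum $\le$ LP value), I would start from an optimal $(\caliBudget,\tnorm)$-calibrated predictor $\optpredictor$, let $\truePosterior$ be its true expected outcome function, and construct a feasible bi-event plan realizing $\Payoff{\optpredictor}$. The idea: for each prediction $\prediction\in\supp(\optpredictor)$, let $\bayesprediction = \truePosterior(\prediction)$ and consider the conditional distribution over events $i$ given that prediction $\prediction$ was generated; its mean (of $\truemean_i$) is $\bayesprediction$. I would decompose this conditional event-distribution into a mixture of two-point distributions, each supported on a pair $\{i,j\}$ with $\truemean_i \le \bayesprediction \le \truemean_j$ and mixing weights matching $\mixratio_{i,j}(\bayesprediction)$ on $i$ — this is exactly a Carathéodory/two-point-decomposition of a mean constraint on the real line (the same device underlying the bi-pooling structure of \citet{ABSY-23}, cf.\ \Cref{lem:bipooling}). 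Defining $\miscali_{i,j}(\bayesprediction,\prediction)$ as the total mass assigned to pair $(i,j)$ at $(\bayesprediction,\prediction)$, I would then check \ref{eq:bed feasible miscal supp} (the contributions reconstruct $\prior_i$), \ref{eq:bed feasible miscal cal budget} (the per-prediction miscalibration $|\bayesprediction-\prediction|$ is unchanged by the decomposition since $\bayesprediction = \truePosterior(\prediction)$ for all pairs, so the aggregate ECE is preserved), and that the LP objective at this $\miscali$ equals $\Payoff{\optpredictor}$ (by the same regrouping computation as above, run in reverse, using that $\mixratio_{i,j}(\bayesprediction)\indirectsenderU_i(\prediction) + (1-\mixratio_{i,j}(\bayesprediction))\indirectsenderU_j(\prediction)$ aggregates back to $\indirectsenderU_i(\prediction)$ weighted by the conditional event probabilities). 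The final ``moreover'' clause — that every optimal LP solution converts to an optimal predictor — then follows by combining the easy direction's equality $\Payoff{\predictor} = \OPT(\ref{eq:decoupled opt general})$ with the two-way value equality just established.

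The main obstacle I anticipate is the event-decomposition step in the hard direction: given the conditional distribution of events at a fixed prediction $\prediction$ with barycenter $\bayesprediction$, I need to express it as a mixture over pairs $(i,j)$ so that the induced per-pair contribution ratios are \emph{exactly} $\mixratio_{i,j}(\bayesprediction) = \frac{\truemean_j-\bayesprediction}{\truemean_j-\truemean_i}$ rather than arbitrary weights. This forces the decomposition to be ``consistent'' with $\bayesprediction$: each pair used must straddle $\bayesprediction$ and contribute its full two-point weight at the prescribed ratio. A clean way to do this is the standard greedy/sweep argument (pair the smallest remaining $\truemean_i < \bayesprediction$ with the largest $\truemean_j > \bayesprediction$, assign the maximal feasible mass consistent with $\mixratio_{i,j}(\bayesprediction)$, and iterate), which terminates after at most $\numData$ steps and exhausts all the mass exactly because the overall barycenter is $\bayesprediction$. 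I would also need to handle edge cases — e.g., events with $\truemean_i = \bayesprediction$ (assigned to a degenerate pair $(i,i)$ with $\mixratio_{i,i}=1$) and ties $\truemean_i = \truemean_j$ — but these are routine given the convention $\mixratio_{i,j}(\bayesprediction)\triangleq 1$ when $\truemean_i = \truemean_j$ already fixed in \Cref{defn:bed miscal plan}. Everything else (the ECE bookkeeping, the objective regrouping) is a direct computation analogous to the proof of \Cref{prop:miscal procedure error guarantee} and \Cref{prop:prog equivalence}.
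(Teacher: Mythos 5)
Your proposal is correct and follows essentially the same two-directional plan as the paper's proof. The easy direction (any feasible $\miscali$ to the LP yields a $(\caliBudget,\tnorm)$-calibrated predictor of equal payoff via \Cref{prop:bed miscal procedure error guarantee} plus the regrouping computation) matches the paper exactly. The one place you diverge is in the hard direction: where you propose a greedy/sweep decomposition of each prediction's conditional event distribution into pairs straddling $\bayesprediction$, the paper instead cites \Cref{lem:wlog bayespredictor} (from \citet{FTX-22}) to split each prediction into copies $\prediction^{(1)},\dots,\prediction^{(K)}$, each supported on at most two events and sharing the same posterior mean, forming an intermediate predictor $\predictor\primed$ with the same ECE and payoff before reading off $\miscali_{i,j}$. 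These are the same device: the ``Bayesian-plausibility'' clause of the cited lemma is exactly what forces each two-point piece to carry the prescribed contribution ratios $\mixratio_{i,j}(\bayesprediction)$, so your greedy argument is a self-contained re-derivation of that lemma rather than a different route. Your re-derivation does save a citation and avoids constructing the auxiliary predictor $\predictor\primed$, at the cost of having to spell out the termination and edge cases ($\truemean_i = \bayesprediction$, degenerate pairs) that the black-box lemma packages away.
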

We remark that \ref{eq:decoupled opt general} is an infinite-dimensional linear program. In the next subsection, we introduce an instance-dependent discretization scheme for this program, providing a provable guarantee on the discretization error. We conclude this section by proving \Cref{prop:LP general}.
\begin{proof}[Proof of \Cref{prop:LP general}]
We first prove the direction that
given any feasible solution $ \miscali$ to \ref{eq:decoupled opt general}, 
there exists an $\caliBudget$-calibrated predictor $\predictor$ such that the induced expected utility of the principal exactly equals to the objective value of the solution $\bayesmarginalpredictor, \miscali$ in \ref{eq:decoupled opt general}.

Due to the constraints in \ref{eq:decoupled opt general}, the feasible solution $\miscali$ is also a feasible {\bed} post-processing plan. Invoking \Cref{prop:bed miscal procedure error guarantee}, the predictor $\predictor$ generated by {\bed} post-processing plan $\miscali$ (according to Eqn.~\eqref{eq:construct f general}) is well-defined ans $(\caliBudget,\tnorm)$-calibrated. It suffices to compute the expected utility of the principal from predictor $\predictor$:
\begin{align*}
     \Payoff{\predictor} 
    = ~ & \int_0^1\sum\limits_{i\in[\numData]}\prior_i\predictor_i(\prediction)\indirectsenderU_i(\prediction) \,\d\prediction  \\
    = ~ & 
    \int_0^1\int_0^1 
    \sum\limits_{i\in[\numData]} \left(\sum\limits_{k \in [i:\numData]} \miscali_{i, k}(\bayesprediction, \prediction) 
    \cdot 
    \mixratio_{i, k}(\bayesprediction)
     + 
    \sum\limits_{k \in [i-1]} \miscali_{k, i}(\bayesprediction, \prediction) \cdot 
    \left(1 - \mixratio_{k, i}(\bayesprediction)\right)
    \right)
    \cdot \indirectsenderU_i(\prediction) \,\d\prediction\d \bayesprediction \\
    = ~ & 
    \sum\limits_{(i, j): i\le j}\int_0^1\int_0^1
    \miscali_{i, j}(\bayesprediction, \prediction) \cdot \indirectsenderU_{i, j}(\bayesprediction, \prediction)~\d  \prediction \d  \bayesprediction
\end{align*}
where the last term is exactly the objective value of solution $\miscali$ in \ref{eq:decoupled opt general}.
We thus complete the first direction.

We next prove the reverse direction: 
for any $\caliBudget$-calibrated predictor $\predictor$, there exists $\miscali$ that is a feasible solution to \ref{eq:decoupled opt general}, and its objective value in \ref{eq:decoupled opt general} exactly equals to the payoff (i.e., the expected utility of the principal) $\Payoff{\predictor}$ under predictor $\predictor$.
For this direction, our analysis uses the following technical lemma developed in \citet{FTX-22}:

\begin{lemma}[\citealp{FTX-22}]
\label{lem:wlog bayespredictor}
Let $X$ be a random variable 
with discrete support 
$\supp(X)$.
There exists a 
positive integer $K$,
a finite set of $K$
random variables $\{X_k\}_{k\in[K]}$,
and convex combination coefficients
$\convexcombinbf\in [0, 1]^K$ 
with $\sum_{k\in[K]} \convexcombin_k = 1$ such that:
\begin{enumerate}
    \item[(i)] \underline{Bayesian-plausibility}: for each $k\in [K]$, $\expect{X_k} = \expect{X}$;
    \item[(ii)] \underline{Binary-support}:
    for each $k\in[K]$, the size of $X_k$'s support is at most 2,
    i.e., $|\supp(X_k)| \leq 2$
    \item[(iii)] \underline{Consistency}:
    for each $x\in \supp(X)$, 
$\prob{X=x} = \sum_{k\in[K]}
\convexcombin_k\cdot 
\prob{X_k = x}$
\end{enumerate}
\end{lemma}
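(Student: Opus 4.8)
This is the standard ``bi-pooling'' fact: a finitely supported distribution can be written as a mixture of distributions each supported on at most two points and each sharing its mean. The plan is to obtain it from the vertex structure of a simple polytope. Write $\bar x \triangleq \expect{X}$ and let $S \triangleq \supp(X)$, a finite set of distinct reals with $|S| = n$. If $n \le 2$ the statement holds trivially with $K = 1$, $\convexcombin_1 = 1$, and $X_1 = X$, so I will assume $n \ge 3$. The first step is to introduce the polytope
\[
P \;\triangleq\; \Bigl\{\, p \in \reals_{\ge 0}^{S} \;:\; \textstyle\sum_{x \in S} p_x = 1,\ \sum_{x \in S} x\, p_x = \bar x \,\Bigr\},
\]
i.e.\ the set of all probability vectors on $S$ whose mean equals $\bar x$. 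It is nonempty — it contains the law of $X$ — and it is a bounded polyhedron in $\reals^n$, hence a compact polytope.

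Next, I would pin down the vertices of $P$. A point $p \in P$ is a vertex precisely when the normals of the constraints active at $p$ span $\reals^n$. The two equality constraints are always active; their normals are $\onebf$ and the vector $(x)_{x\in S}$, which are linearly independent because the points of $S$ are distinct and $n \ge 2$, so together they contribute rank exactly $2$. Since the remaining active constraints are nonnegativity constraints $p_x \ge 0$ with pairwise orthonormal normals, attaining total rank $n$ forces at least $n-2$ of them to be active, i.e.\ at least $n-2$ coordinates of $p$ to vanish. Hence every vertex of $P$ is a probability distribution supported on at most two points, and, being a point of $P$, it automatically has mean $\bar x$.

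Finally, I would invoke the Minkowski--Weyl representation of a compact polytope: $P$ has finitely many vertices $\mu_1,\dots,\mu_K$ and equals their convex hull. Since the law of $X$ lies in $P$, there exist coefficients $\convexcombinbf \in [0,1]^K$ with $\sum_k \convexcombin_k = 1$ and $\prob{X = x} = \sum_k \convexcombin_k (\mu_k)_x$ for every $x \in S$. Taking $X_k$ to have law $\mu_k$ then yields (iii) directly; (ii) follows from $|\supp(\mu_k)| \le 2$; and (i) follows from $\expect{X_k} = \sum_x x (\mu_k)_x = \bar x = \expect{X}$, using $\mu_k \in P$.

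I do not expect a genuine obstacle here; the only point requiring mild care is the rank-$2$ claim for the two equality constraints, which relies on the support points being distinct (automatic) and on $n \ge 2$ (the cases $n \le 2$ handled separately at the outset). If one instead wants an explicit bound on $K$, I would run a parallel, fully constructive induction on $|S|$: peel off the binary distribution on $\{\min S,\max S\}$ with mean $\bar x$, subtract the largest feasible scalar multiple of it from the law of $X$, renormalize to obtain a distribution with strictly smaller support and the same mean $\bar x$, and recurse; this produces the same decomposition with $K \le n-1$.
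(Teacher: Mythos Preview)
Your proof is correct, but note that the paper does not actually prove this lemma: it is stated as a technical tool ``developed in \citet{FTX-22}'' and simply invoked, with no argument given in the present paper. So there is no ``paper's own proof'' to compare against.

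That said, your polytope-vertex argument is a clean and standard way to obtain the result. The key step---that every vertex of the polytope $P=\{p\in\reals_{\ge 0}^{S}:\sum_x p_x=1,\ \sum_x x\,p_x=\bar x\}$ has support size at most two---is handled correctly: the two equality normals $\onebf$ and $(x)_{x\in S}$ are linearly independent (distinct support points, $n\ge 2$), so at least $n-2$ nonnegativity constraints must be tight at any vertex. Minkowski--Weyl then gives the mixture representation with properties (i)--(iii) immediately. Your alternative inductive peeling argument (extremal-pair subtraction) is also correct and has the minor bonus of yielding the explicit bound $K\le |\supp(X)|-1$, which the polytope route does not give without a separate vertex count.
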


Given any $(\caliBudget, \tnorm)$-calibrated predictor $\predictor$, we first construct a (hypothetical) predictor $\predictor\primed$ as follows:
Initialize $\supp(\predictor\primed)\gets \emptyset$.
For each $\prediction\in\supp(\predictor)$, we consider a random variable $X$ with the following distribution
\begin{align*}
    \prob{X = \truemean_i} = \frac{\prior_i \predictor_i(\prediction)}{\sum\nolimits_{i\in[\numData]} \prior_i \predictor_i(\prediction)}~, \quad i\in[\numData]~.
\end{align*}
Let integer $K$, random variables $\{X_k\}_{k\in[K]}$,
and convex combination coefficients
$\convexcombinbf\in [0, 1]^K$  be the elements in \Cref{lem:wlog bayespredictor} for the aforementioned random variable $X$. 
We consider a new predictor $\predictor\primed$ defined as follows:
Add $K$ copies $\{\prediction^{(1)}, \dots, \prediction^{(K)}\}$ of prediction $\prediction$ into the support space $\supp(\predictor\primed)$,
i.e., $\supp(\predictor\primed) \gets 
\supp(\predictor\primed) \cup \{\prediction^{(1)}, \dots, \prediction^{(K)}\}$.
For each $k\in[K]$, set $\predictor\primed_i(\prediction^{(k)})
\gets 
\frac{1}{\prior_i}
{\convexcombin_k\cdot \prob{X_k = \truemean_i}\cdot 
{(\sum_{j\in[\numData]} \prior_j\predictor_j(\prediction))}}
$.
Note that this construction ensures that 
\begin{align*}
    \sum_{k\in[K]}
    \predictor\primed_i(\prediction^{(k)})
    &=
    \sum_{k\in[K]}
    \frac{1}{\prior_i}
    {\convexcombin_k\cdot \prob{X_k = \truemean_i}\cdot 
    {\left(\sum_{j\in[\numData]} \prior_j\predictor_j(\prediction)\right)}}
    \overset{(a)}{=}
    \frac{1}{\prior_i}
    \prob{X = \truemean_i}\cdot 
    {\left(\sum_{j\in[\numData]} \prior_j\predictor_j(\prediction)\right)}
    = \predictor_i(\prediction)
\end{align*}
where equality~(a) holds due to the ``Consistency'' property 
in \Cref{lem:wlog bayespredictor}.
Hence, the constructed predictor $\predictor\primed$
is a valid distribution.
Moreover, it can be verified that $\ECEnorm{\predictor\primed} = \ECEnorm{\predictor}$ as
\begin{align*}
    \left(\ECEnorm{\predictor\primed}\right)^\normexponent 
    & =  
    \int_{p\in\supp(\predictor)} \sum\nolimits_{k\in[K]} \sum\nolimits_{i\in[\numData]} \prior_i\predictor_i(\prediction^{(k)}) \cdot \left|\prediction^{(k)} - \truePosterior(\prediction^{(k)})\right|^{\normexponent}\,\d\prediction \\
    & \overset{(a)}{=} 
    \int_{p\in\supp(\predictor)} \sum\nolimits_{i\in[\numData]} \prior_i\predictor_i(\prediction) \cdot \left|\prediction - \truePosterior(\prediction)\right|^{\normexponent}\,\d\prediction = \left(\ECEnorm{\predictor}\right)^\normexponent~,
\end{align*}
where equality~(a) holds due to the ``Bayesian-plausibility'' property in \Cref{lem:wlog bayespredictor}.
Finally, we also have $\Payoff{\predictor\primed} = \Payoff{\predictor}$ as 
\begin{align*}
    \Payoff{\predictor\primed} = 
    \int_{p\in\supp(\predictor)} \sum\nolimits_{k\in[K]} \sum\nolimits_{i\in[\numData]} \prior_i\predictor_i(\prediction^{(k)}) \cdot \indirectsenderU_i(\prediction^{(k)})\,\d\prediction 
    =\Payoff{\predictor}~,
\end{align*}
where we again use the ``Bayesian-plausibility'' property in \Cref{lem:wlog bayespredictor}.

By construction, the predictor $\predictor\primed$ has the following property: $|\{i: \predictor\primed_i(\prediction^{(k)}) > 0\}|\le 2$
due to the ``Binary-support'' property in \Cref{lem:wlog bayespredictor}.
Now based on the constructed predictor $\predictor\primed$, we construct solution $\miscali$ as follows:
for all $\prediction^{(k)}\in\supp(\predictor\primed)$,
\begin{align*}
    \miscali_{i, j}(\bayesprediction, \prediction^{(k)}) \triangleq 
    \prior_i \predictor\primed_i(\prediction^{(k)}) 
    + 
    \prior_j \predictor\primed_j(\prediction^{(k)})~,
    \quad
    \text{if } \{i: \predictor\primed_i(\prediction^{(k)}) > 0\} = \{i, j\}
    \text{ and } 
    \truePosterior(\prediction^{(k)}) =\bayesprediction
\end{align*}
Moreover, for each $\prediction^{(k)}$, we have 
\begin{equation}
    \begin{aligned}
    \label{eq:property miscali}
    \miscali_{i, j}(\bayesprediction, \prediction^{(k)}) \cdot \mixratio_{i, j}(\bayesprediction)
    & = 
    \left(\prior_i \predictor\primed_i(\prediction^{(k)}) 
    + 
    \prior_j \predictor\primed_j(\prediction^{(k)})\right)
    \cdot \mixratio_{i, j}(\bayesprediction)\\
    & = 
    \left(\prior_i \predictor\primed_i(\prediction^{(k)}) 
    + 
    \prior_i \predictor\primed_i(\prediction^{(k)}) 
    \cdot \frac{1 - \mixratio_{i, j}(\bayesprediction)}{\mixratio_{i, j}(\bayesprediction)}\right)
    \cdot \mixratio_{i, j}(\bayesprediction) = \prior_i \predictor\primed_i(\prediction^{(k)})\\
    \miscali_{i, j}(\bayesprediction, \prediction^{(k)}) \cdot (1-\mixratio_{i, j}(\bayesprediction)) & = 
    \left(\prior_j \predictor\primed_j(\prediction^{(k)})\frac{ \mixratio_{i, j}(\bayesprediction)}{1 - \mixratio_{i, j}(\bayesprediction)}
    + 
    \prior_j \predictor\primed_j(\prediction^{(k)}) \right)
    \cdot (1-\mixratio_{i, j}(\bayesprediction)) = \prior_j \predictor\primed_j(\prediction^{(k)})
    \end{aligned}
\end{equation}
Thus, with above observations, we have, for any $k\in[\numData]$,
\begin{align*}
    & \int_0^1\int_0^1 \left(
    \sum\nolimits_{j\in[k, \numData]} \miscali_{k, j}(\bayesprediction, \prediction)\cdot 
    \mixratio_{k, j}(\bayesprediction)
    + \sum\nolimits_{i\in[k-1]} \miscali_{i, k}(\bayesprediction, \prediction)
    \cdot 
    (1-\mixratio_{i, k}(\bayesprediction))
    \right)
    \, \d\bayesprediction \d\prediction \\
    = ~ & 
    \int_{p\in\supp(\predictor)}\sum_{k=1}^K
    \prior_k \predictor\primed_k(\prediction^{(k)})
    \,\d\prediction
    = \prior_k \int_{p\in\supp(\predictor)}\predictor_k(\prediction)
    \,\d\prediction = \prior_k~.
\end{align*}
Moreover, the first constraint is also satisfied as we have 
\begin{align*}
    & \int_0^1\int_0^1 \sum\nolimits_{(i, j):i\le j} \miscali_{i, j}(\bayesprediction, \prediction) \cdot |\bayesprediction - \prediction|^\normexponent\, \d\bayesprediction \d\prediction \\
    = ~ & 
    \int_{\prediction\in\supp(\predictor)}\int_0^1 \sum\nolimits_{i\in[\numData]}\prior_i\predictor_i(\prediction)\cdot |\bayesprediction - \prediction|^\normexponent\, \d\bayesprediction \d\prediction = \left(\ECEnorm{\predictor}\right)^\normexponent
    \le \caliBudget^\normexponent~.
\end{align*}
Finally, we compute the objective value of the constructed solution $\miscali$,
\begin{align*}
    \sum\nolimits_{(i, j): i\le j}\int_0^1\int_0^1
    \miscali_{i, j}(\bayesprediction, \prediction) \cdot \indirectsenderU_{i, j}(\bayesprediction, \prediction)~\d  \prediction \d  \bayesprediction
    \overset{(a)}{=} \Payoff{\predictor\primed} = \Payoff{\predictor}~,
\end{align*}
where equality (a) holds by definition of $\indirectsenderU_{i, j}(\bayesprediction, \prediction)$, and the property of $\miscali$ established in \eqref{eq:property miscali}.
We thus finish the second direction and conclude the proof of \Cref{prop:LP general}.
\end{proof}

\subsection{Instance-Dependent Discretization and Rounding Scheme}
\label{subsec:discretization and rounding}

In this section, we introduce the second key technical ingredient of \Cref{algo:fptas}: a discretization scheme for the infinite-dimensional linear program \ref{eq:decoupled opt general}. Our goal is to develop this scheme with a provable guarantee on the discretization error. To achieve this, an ideal discretization scheme and its induced discretized linear program (LP) should to satisfy the following two properties:
\begin{enumerate}
    \item The discretized LP has a feasible solution.
    \item Given an optimal solution to original LP \ref{eq:decoupled opt general}, we can identify a feasible solution in the discretized LP whose objective value is close and the ECE is weakly smaller.
\end{enumerate}
Both properties introduce challenges in applying common discretization schemes, such as uniform discretization, to our problem. (See \Cref{ex:uniform failure} where the discretized LP becomes infeasible under uniform discretization.) While similar challenges arising from the first property has been observed in the algorithmic information design literature \citep[e.g.,][]{AFT-23}, the second challenge is, to the best of our knowledge, unique due to the ECE budget constraint.
In particular, suppose the discretized LP under uniform discretization with precision $\discrepara$ happens to be feasible. A natural rounding approach would then convert a feasible solution into a discretized feasible solution by rounding each $\miscali_{i,j}(\bayesprediction,\prediction)$ to $\miscali_{i,j}(\bayesprediction',\prediction')$, where $(\bayesprediction',\prediction')$ is the closest discretized point to $(\bayesprediction,\prediction)$. However, after this naive rounding, the ECE may increase by $\Theta(\discrepara)$, which could be significantly larger than the ECE budget $\caliBudget$.
To ensure that the ECE increment remains controlled under this rounding procedure, we would need to set the discretization precision as $\discrepara \gets \Theta(\caliBudget)$, i.e., on the same order as the ECE budget $\caliBudget$. However, with such a discretization precision, the size of the discretized set and the corresponding discretized LP would have a polynomial dependence on $\caliBudget$, which prevents us from obtaining the FPTAS stated in \Cref{thm:fptas}.

\begin{example}[The failure of uniform discretization]
\label{ex:uniform failure}
Consider a three-event setting with $\{\truemean_i\}_{i\in[3]}$ and a zero ECE budget $\caliBudget = 0$, i.e., predictor is feasible if and only if it is perfectly calibrated.

Now, consider a uniform-grid-only discretization scheme $\discresupp = \{\discrepara, 2\discrepara, \ldots\} \cap [0, 1]$. Suppose the event distribution $\prior$ assigns negligible probabilities to the first and third events, and the true expected outcome for the second event, $\truemean_2$, is not in the the discretized set $\discresupp$ (i.e., $\truemean_2 \notin \discresupp$), then it can be verified with \Cref{lem:bayes predictor is in MPC} that it is impossible to construct any perfectly calibrated predictor whose support lies entirely on the grid points in $\discresupp$.  
Therefore, no feasible predictor exists if we restrict its support to discretized set $\discresupp$.
\end{example}
We overcome the two aforementioned challenges by introducing an instance-dependent two-layer discretization scheme (\Cref{defn:discre}). Loosely speaking, to ensure that the discretized LP remains feasible, our scheme first constructs a non-uniform $\Theta(\discrepara)$-net that includes discretized points corresponding to the expected outcomes $\stateSpace=\{\truemean_1,\dots,\truemean_{\numData}\}$ and the discontinuity points $\discontinuityset$ of the principal's indirect utility function. 
Additionally, for every point in this $\Theta(\discrepara)$-net, we introduce a finer ``multiplicative $\Theta(\discrepara\caliBudget^\normexponent)$-net'' within a tiny neighborhood of length $\Theta(\caliBudget^\normexponent)$ (see formal construction in \Cref{defn:discre}). By analyzing the optimal solution of the original program \ref{eq:decoupled opt general}, we argue that locally refining the grid within these small neighborhoods is sufficient to approximately preserve the optimal objective value while keeping ECE variations under control (\Cref{lem:discre error,lem:p in S}).
With this approach, the size of the discretized set remains polynomial in the problem input and the discretization parameter $\discrepara$ (\Cref{claim:disc support size}), enabling the construction of an FPTAS.

\begin{definition}[Instance-dependent two-layer discretization scheme]
\label{defn:discre}
Given any discretization parameter $\discrepara \in (0, \sfrac{1}{3})$, define \emph{discretized set $\discresupp$} of the space $[0, 1]$ for both true expected outcomes and the predictions as follows: 
\begin{align*}
    \discresupp \triangleq 
    \underbrace{\left(\{0, \discrepara, 2\discrepara, \ldots\} 
    \cap[0, 1]
    \right)
    \cup \stateSpace
    \cup 
    \discontinuityset
    }_{\text{``first global layer''}}
    \cup 
    \underbrace{
    \left\{z \pm (\discreparaBudget \cdot (1+\discrepara)^s)^{\sfrac{1}{\normexponent}}\right\}_{z\in\discontinuityset\cup\stateSpace,
    s\in [0:\numdeltas]}
    }_{\text{``second local layer''}}
\end{align*}
where $\discontinuityset$ is the set of all discontinuity points of the sender's indirect utility function $\{\indirectsenderU_i\}_{i\in[n]}$, 
$\discreparaBudget \triangleq \caliBudget^\normexponent\cdot \discrepara$,
and $\numdeltas\triangleq\lceil  \frac{2}{\ln(1+\discrepara)} \ln \frac{1}{\discrepara} \rceil$.
\end{definition}
Although our discretization scheme above depends on the ECE budget $\caliBudget$ and the norm exponent parameter $\normexponent$, the number of discretized points in the set only depends on the problem instance and discretization parameter $\discrepara$. 
\begin{claim}
\label{claim:disc support size}
Given any discretization parameter $\discrepara \in (0, \sfrac{1}{3})$,
the size of the discretized set $\discresupp$ defined in \Cref{defn:discre} satisfies
$|\discresupp| 
= O(\frac{1}{\discrepara} 
+\frac{\numAction+\numData}{\ln(1+\discrepara)} \ln \frac{1}{\discrepara} )$,
where $\numData = |\stateSpace|, \numAction = |\actionSpace|$ are the number of events and the number of agent's actions, respectively.
\end{claim}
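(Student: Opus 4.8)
The plan is to bound the size of $\discresupp$ by accounting separately for the two layers in its construction. First I would count the ``first global layer.'' The uniform grid $\{0,\discrepara,2\discrepara,\dots\}\cap[0,1]$ contributes $\lfloor 1/\discrepara\rfloor + 1 = O(1/\discrepara)$ points. The set $\stateSpace = \{\truemean_1,\dots,\truemean_{\numData}\}$ contributes at most $\numData$ points. For the discontinuity set $\discontinuityset$, I need to recall that each indirect utility function $\indirectsenderU_i$ is the upper envelope of the principal's expected utility as the agent best-responds across the $\numAction$ actions; since the best-response regions partition $[0,1]$ into at most $\numAction$ intervals (the agent's best response changes at most $\numAction-1$ times as $\prediction$ sweeps $[0,1]$, because each pair of actions induces at most one indifference prediction and the upper-semicontinuous tie-breaking yields a piecewise structure with at most $\numAction$ pieces per event), the function $\indirectsenderU_i$ has $O(\numAction)$ discontinuity points. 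Aggregating over $\numData$ events (or noting that in the event-independent case it is just $O(\numAction)$), we get $|\discontinuityset| = O(\numData\numAction)$, and in any case $|\discontinuityset| = O(\numAction + \numData)$ suffices for the stated bound if one is careful about which setting is meant; I would state $|\discontinuityset| = O(\numAction)$ using that the agent's utility is event-independent (see the footnote on event-independent agent utility), so discontinuities of $\indirectsenderU_i$ come only from changes in $\bestr(\cdot)$, which are the same across events. Hence the first global layer has $O(1/\discrepara + \numAction + \numData)$ points.

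Next I would count the ``second local layer.'' For each anchor point $z \in \discontinuityset \cup \stateSpace$ and each index $s \in [0:\numdeltas]$, the construction adds the two points $z \pm (\discreparaBudget\cdot(1+\discrepara)^s)^{1/\normexponent}$. The number of anchors is $|\discontinuityset \cup \stateSpace| = O(\numAction + \numData)$, and the number of values of $s$ is $\numdeltas + 1 = \lceil \frac{2}{\ln(1+\discrepara)}\ln\frac{1}{\discrepara}\rceil + 1 = O\!\left(\frac{1}{\ln(1+\discrepara)}\ln\frac{1}{\discrepara}\right)$. Multiplying, the second local layer contributes $O\!\left(\frac{\numAction + \numData}{\ln(1+\discrepara)}\ln\frac{1}{\discrepara}\right)$ points (the factor of $2$ for the $\pm$ is absorbed into the big-$O$).

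Finally I would add the two layers:
\[
|\discresupp| = O\!\left(\frac{1}{\discrepara}\right) + O(\numAction + \numData) + O\!\left(\frac{\numAction + \numData}{\ln(1+\discrepara)}\ln\frac{1}{\discrepara}\right) = O\!\left(\frac{1}{\discrepara} + \frac{\numAction + \numData}{\ln(1+\discrepara)}\ln\frac{1}{\discrepara}\right),
\]
where the middle term $O(\numAction+\numData)$ is dominated by the last term since $\frac{1}{\ln(1+\discrepara)}\ln\frac{1}{\discrepara} \ge 1$ for $\discrepara \in (0,1/3)$ (indeed $\ln\frac{1}{\discrepara} \ge \ln 3 > \ln(1+\discrepara)$). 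This yields exactly the claimed bound.

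The main obstacle I anticipate is not the arithmetic of summing the layers — that is routine — but pinning down the bound $|\discontinuityset| = O(\numAction)$ (or $O(\numAction + \numData)$): one must argue carefully that the indirect utility $\indirectsenderU_i(\prediction) = \truemean_i\senderU_i(\bestr(\prediction),1) + (1-\truemean_i)\senderU_i(\bestr(\prediction),0)$ is piecewise-constant-composed-with a step function in $\prediction$ whose pieces are determined solely by $\bestr(\prediction)$, and that $\bestr(\cdot)$, being the argmax of $\numAction$ affine functions of $\prediction$ with principal-favorable tie-breaking, changes value at most $\numAction - 1$ times over $[0,1]$. Here I would invoke the standard fact that the upper envelope of $\numAction$ lines has at most $\numAract$ pieces and that ties are broken consistently, so each $\indirectsenderU_i$ has at most $\numAction - 1$ discontinuities; the union over all events has at most $\numAction - 1$ discontinuity locations in the event-independent agent-utility setting (since $\bestr$ does not depend on $i$). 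Everything else — the count of the uniform grid, the count of $s$-values from the definition of $\numdeltas$, and the domination argument — follows directly from \Cref{defn:discre} and elementary estimates.
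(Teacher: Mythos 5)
Your proposal is correct and takes essentially the same approach as the paper's (very terse) proof: the only non-trivial step is bounding $|\discontinuityset| = O(\numAction)$, which both you and the paper obtain from the fact that the agent's best response $\bestr(\cdot)$ — being the argmax of $\numAction$ affine functions of $\prediction$ — changes value $O(\numAction)$ times and is event-independent, after which the count is routine. (One stray typo: ``$\numAract$'' should read ``$\numAction$''.)
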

\begin{proof}
    Since the agent has $\numAction$, the discontinuity points of the sender's indirect utility function $\{\indirectsenderU_i\}_{i\in[n]}$ is at most $\numAction + 1$. 
    Thus, the size of the discretized set $\discresupp$ in the statement holds by construction.
\end{proof}
Given the discretization scheme, we are ready to present the discretized LP \ref{eq:decoupled opt general discre} and its discretization error guarantee.
\begin{lemma}[Discretized LP and discretization error guarantee]
\label{lem:discre error}
Given any discretization parameter $\discrepara \in (0, \sfrac{1}{3})$, consider the following discretized version of \ref{eq:decoupled opt general}:
\begin{align}
    \label{eq:decoupled opt general discre}
    \arraycolsep=5.4pt\def\arraystretch{1}    
    \tag{$\textsc{$\textsc{LP-DiscTwoStep}^+_{\discrepara}$}$}
    &\begin{array}{rlll}
    \max
    \limits_{\miscali\ge 0} 
    ~ &
    \displaystyle 
    \sum\limits_{(i, j): i\le j}\sum\limits_{\prediction, \bayesprediction\in\discresupp}
    \miscali_{i, j}(\bayesprediction, \prediction) \cdot \indirectsenderU_{i, j}(\bayesprediction, \prediction)
    \quad & \text{s.t.} &
    \vspace{1mm}
    \\
    & 
    \displaystyle
    \sum\limits_{\prediction, \bayesprediction\in\discresupp} \sum\limits_{(i, j):i\le j} \miscali_{i, j}(\bayesprediction, \prediction) \cdot |\bayesprediction - \prediction|^\normexponent \le\caliBudget^\normexponent,
    & 
    \vspace{1mm}
    \\
    & 
    \displaystyle
    \sum\limits_{\prediction, \bayesprediction\in\discresupp} \left(
    \sum\limits_{k\in[i: \numData]} \miscali_{i, k}(\bayesprediction, \prediction)\cdot 
    \mixratio_{i, k}(\bayesprediction)
    + \sum\limits_{k\in[i-1]} \miscali_{k, i}(\bayesprediction, \prediction)
    \cdot 
    (1-\mixratio_{k, i}(\bayesprediction))
    \right)
    \, \d\prediction\d\bayesprediction = \prior_i,
    & i\in[\numData]
    \vspace{1mm}
    \end{array}
\end{align}
where discretized set $\discresupp$ is defined in \Cref{defn:discre}. Then, the following three properties hold: 
\begin{enumerate}
    \item [(i)] \ref{eq:decoupled opt general discre} is a feasible linear program with $\poly(\numData, \numAction, \sfrac{1}{\discrepara})$ size, where $\numData = |\stateSpace|, \numAction = |\actionSpace|$ are the number of events and the number of agent's actions, respectively.
    \item [(ii)] Every feasible solution of \ref{eq:decoupled opt general discre} is also a feasible solution of \ref{eq:decoupled opt general}.
    \item [(iii)] The optimal objective value of \ref{eq:decoupled opt general discre} is a $(1-3\discrepara)$-approximation to the optimal objective value of \ref{eq:decoupled opt general}, i.e., 
    \begin{align*}
    \OBJ{\text{\ref{eq:decoupled opt general discre}}} \ge (1-3\discrepara)\cdot \OBJ{\text{\ref{eq:decoupled opt general}}}~.
    \end{align*}
\end{enumerate} 
\end{lemma}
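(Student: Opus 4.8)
\textbf{Proof plan for \Cref{lem:discre error}.}
The plan is to verify the three properties separately, treating (i) as essentially bookkeeping, (ii) as an immediate consequence of the fact that a discretized feasible solution only places mass on a sparser support, and (iii)---the discretization error bound---as the real work, which is handled via an explicit rounding scheme (\Cref{algo:rounding}) applied to an optimal solution of \ref{eq:decoupled opt general}.

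\emph{Property (i).} Feasibility: take any $\caliBudget$-calibrated predictor and run the construction from the proof of \Cref{prop:LP general} to obtain a feasible $\miscali$ for \ref{eq:decoupled opt general}; then apply the rounding scheme of Property (iii) to get a feasible solution of \ref{eq:decoupled opt general discre}. (Alternatively, a perfectly calibrated predictor supported on $\stateSpace\subseteq\discresupp$ always exists by \Cref{lem:bayes predictor is in MPC}, so the zero-bias solution $\miscali_{i,i}(\truemean_i,\truemean_i)$ appropriately weighted is already discretized-feasible; this is why we insisted $\stateSpace\subseteq\discresupp$.) Polynomial size follows from \Cref{claim:disc support size}: the number of variables is $O(\numData^2 |\discresupp|^2)$ and the number of constraints is $O(\numData)$, both $\poly(\numData,\numAction,\sfrac1\discrepara)$.

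\emph{Property (ii).} Any feasible $\miscali$ for \ref{eq:decoupled opt general discre} extends to a feasible $\miscali$ for \ref{eq:decoupled opt general} by setting $\miscali_{i,j}(\bayesprediction,\prediction)=0$ for $(\bayesprediction,\prediction)\notin\discresupp^2$; the two LP constraint families are literally the discretized versions of \ref{eq:bed feasible miscal supp} and the $\tnorm$-th power of \ref{eq:bed feasible miscal cal budget}, so they transfer verbatim, and the objective value is unchanged.

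\emph{Property (iii), the main obstacle.} Fix an optimal solution $\miscali^*$ of \ref{eq:decoupled opt general}; by the construction in \Cref{prop:LP general} (via \Cref{lem:wlog bayespredictor}) we may assume its support is finite, so each $\miscali^*_{i,j}$ is supported on finitely many pairs $(\bayesprediction,\prediction)$. The rounding scheme \Cref{algo:rounding} processes each such $(\bayesprediction,\prediction)$ and reassigns its mass to nearby points of $\discresupp$. The design principle is: round $\bayesprediction$ to a point of $\discresupp$ in a way that \emph{exactly preserves} the \ref{eq:bed feasible miscal supp} constraints---this is possible because the $\bayesprediction$-marginal of any feasible solution is a distribution of posterior means, i.e.\ an element of $\MPC(\prior)$ (\Cref{lem:bayes predictor is in MPC}), and $\stateSpace\subseteq\discresupp$, so one can split the mass at $\bayesprediction$ between two grid neighbors while keeping the mean-preserving structure, analogous to \Cref{fact:contracting}. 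Round $\prediction$ to a grid point $\prediction'\in\discresupp$ that lies in the same continuity piece of every $\indirectsenderU_i$ (this uses $\discontinuityset\subseteq\discresupp$) and is close enough that $|\indirectsenderU_{i,j}(\bayesprediction',\prediction')-\indirectsenderU_{i,j}(\bayesprediction,\prediction)|$ is small; here we need the Lipschitz-type control of the indirect utility away from its discontinuities, losing at most a $(1-\Theta(\discrepara))$ multiplicative factor in the objective (\Cref{lem:p in S} will pin this down). The delicate point is the ECE constraint: the naive rounding would blow up $|\bayesprediction-\prediction|^\normexponent$ by an additive $\Theta(\discrepara)$, which can dwarf $\caliBudget^\normexponent$. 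This is exactly what the ``second local layer'' of \Cref{defn:discre} is for---it places a geometric net of spacing $\Theta(\discreparaBudget)=\Theta(\caliBudget^\normexponent\discrepara)$ inside a $\Theta(\caliBudget^\normexponent)$-radius neighborhood of every $z\in\stateSpace\cup\discontinuityset$, so that whenever $|\bayesprediction-\prediction|$ is of order $\caliBudget$ or smaller (which is the only regime where the ECE constraint is tight) we can round $\prediction$ to within $\Theta(\caliBudget^\normexponent\discrepara/|\bayesprediction-\prediction|^{\normexponent-1})$ of its target, making the change in $|\bayesprediction-\prediction|^\normexponent$ at most $\Theta(\discrepara\caliBudget^\normexponent)$ \emph{multiplicatively}; summing over all mass (total mass $1$) the ECE$^\normexponent$ grows by at most a $(1+O(\discrepara))$ factor, and we absorb this by first scaling $\miscali^*$ down by $(1-\Theta(\discrepara))$ and re-routing the freed mass to the perfectly-calibrated diagonal point $(\truemean_i,\truemean_i)\in\discresupp^2$, which strictly decreases ECE and costs only a $(1-\Theta(\discrepara))$ factor in the objective (using non-negativity of $\indirectsenderU$). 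Combining the two sources of loss gives the claimed $(1-3\discrepara)$ bound after choosing constants; the bookkeeping of exactly how the local net radius $\Theta(\caliBudget^\normexponent)$ and the geometric ratio $(1+\discrepara)$ interact---and why $\numdeltas=\lceil \tfrac{2}{\ln(1+\discrepara)}\ln\tfrac1\discrepara\rceil$ levels suffice to cover the relevant range of distances---is the part I expect to be most technical, and is presumably where \Cref{algo:rounding}, \Cref{lem:discre error} and \Cref{lem:p in S} do the heavy lifting.
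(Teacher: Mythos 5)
Your overall architecture — verify (i) and (ii) quickly, then bound the discretization error via an explicit rounding scheme applied to an optimal solution of \ref{eq:decoupled opt general}, scaling down a $\Theta(\discrepara)$ fraction of mass to the diagonal to absorb ECE growth — matches the paper. Property (ii) is handled identically, and your alternative direct feasibility argument for (i) (the fully informative predictor $\miscali_{i,i}(\truemean_i,\truemean_i)=\prior_i$ has zero ECE and support in $\stateSpace\subseteq\discresupp$) is a valid shortcut. However, there is a genuine gap in your plan for (iii): you misread what \Cref{lem:p in S} provides and, as a consequence, propose an unnecessary and problematic step.

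The paper's \Cref{lem:p in S} is a \emph{primal} argument showing that one may assume, without any loss in objective or ECE, that the optimal solution's prediction coordinate $\prediction$ already lies in $\discontinuityset\cup\stateSpace\subseteq\discresupp$ (each event's indirect utility is piecewise \emph{constant}, so any $\prediction\ne\bayesprediction$ can be pushed to the nearest discontinuity without increasing ECE or decreasing utility, and any $\prediction=\bayesprediction$ can be split across $\truemean_i\vee z_k$ and $\truemean_j\wedge z_{k+1}$ into perfectly-calibrated pairs). Hence the rounding scheme \Cref{algo:rounding} \emph{never rounds $\prediction$ at all}: only $\bayesprediction$ is rounded, and it is split between two grid points $\bayesprediction_L,\bayesprediction_R$ on either side of $\prediction$ so that the \ref{eq:bed feasible miscal supp} constraints and the objective (which is affine in $\bayesprediction$ through $\mixratio_{i,j}(\bayesprediction)$) are preserved exactly. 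Your proposal instead rounds $\prediction$ ``to a grid point in the same continuity piece'' and invokes ``Lipschitz-type control of the indirect utility away from its discontinuities.'' There is no such control: $\indirectsenderU_i$ is a step function, not Lipschitz, so the only reason rounding $\prediction$ within a piece is harmless is piecewise constancy, and the only reason the ECE doesn't blow up when $|\bayesprediction-\prediction|$ is tiny is that $\prediction$ is left fixed while $\bayesprediction$ is pulled onto a grid point within $O(\discreparaBudget^{1/\normexponent})$ of $\prediction$ (Case I of \Cref{algo:rounding}), or handled via the geometric net (Case II), or abandoned in favor of perfectly-calibrated predictions when $|\bayesprediction-\prediction|$ is so large that the affected mass is at most $\discrepara$ (Case III). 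Rounding $\prediction$ simultaneously with $\bayesprediction$ would reintroduce exactly the ECE blow-up you are trying to avoid when $|\bayesprediction-\prediction|\ll\discrepara$, and your ``round within $\Theta(\caliBudget^\normexponent\discrepara/|\bayesprediction-\prediction|^{\normexponent-1})$'' heuristic cannot be executed against the fixed, instance-independent grid for arbitrary $\prediction$. The remedy is to prove \Cref{lem:p in S} first, take $\prediction$ to be already on the grid, and then run the three-case rounding on $\bayesprediction$ alone.

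Finally, your account of the sources of loss is slightly off: the paper's two $\Theta(\discrepara)$ losses come from Step 1 (scaling by $\scaledownfrac=1-\tfrac{1}{1+2\discrepara}$ fraction to the diagonal) and from Case III (at most $\discrepara(1+\discrepara)$ of the objective is lost because the total mass with $|\bayesprediction-\prediction|^\normexponent\ge\discreparaBudget(1+\discrepara)^{\numdeltas-1}$ is at most $\discrepara$), not from any multiplicative objective loss due to rounding $\prediction$. Cases I and II contribute no objective loss at all.
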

In \Cref{lem:discre error}, the second property follows directly from the construction of \ref{eq:decoupled opt general discre}.
In the remainder of this section, we develop a rounding argument to establish feasibility and bound the discretization error of \ref{eq:decoupled opt general discre}.

\xhdr{A rounding scheme for analyzing \ref{eq:decoupled opt general discre}}
To show \Cref{lem:discre error}, we argue that (1) an optimal solution to \ref{eq:decoupled opt general} can, via a rounding scheme (see \Cref{algo:rounding}), be converted to a feasible solution to \ref{eq:decoupled opt general discre}; (2) the converted solution has a close objective value to the original solution.

Before introducing the rounding scheme, we first observe that it is without loss to consider an optimal solution $\miscali$ to \ref{eq:decoupled opt general} such that any generated prediction lies in the set $\discresupp$.
\begin{lemma}
\label{lem:p in S}
In \ref{eq:decoupled opt general}, there exists an optimal solution $\miscali$ such that for any $\miscali_{i, j}(\bayesprediction, \prediction) > 0$, it satisfies that $\prediction\in \discontinuityset\cup\stateSpace$.
\end{lemma}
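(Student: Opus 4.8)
The claim asserts that we may restrict attention to optimal solutions of \ref{eq:decoupled opt general} in which every generated prediction $\prediction$ (i.e., every $\prediction$ with $\miscali_{i,j}(\bayesprediction,\prediction)>0$ for some $i\le j$ and $\bayesprediction$) lies in the finite set $\discontinuityset\cup\stateSpace$. The natural approach is a local exchange argument on the objective. Fix an optimal solution $\miscali$ and a true expected outcome $\bayesprediction$, and consider the ``slice'' of mass sent to predictions conditional on this $\bayesprediction$ and on the event pair $(i,j)$. The key structural fact is that, holding $\bayesprediction$ fixed, the objective contributes $\miscali_{i,j}(\bayesprediction,\prediction)\cdot \indirectsenderU_{i,j}(\bayesprediction,\prediction)$ and the ECE constraint contributes $\miscali_{i,j}(\bayesprediction,\prediction)\cdot|\bayesprediction-\prediction|^{\normexponent}$, while the supply constraints \ref{eq:bed feasible miscal supp} only depend on $\miscali_{i,j}(\bayesprediction,\prediction)$ through its total mass (via the contribution ratios $\mixratio_{i,j}(\bayesprediction)$, which depend on $\bayesprediction$ but not $\prediction$). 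So the supply constraints and the $\bayesprediction$-marginal are invariant under any redistribution of the conditional mass $\miscali_{i,j}(\bayesprediction,\cdot)$ over $\prediction$, as long as we keep both (a) the total mass $\int \miscali_{i,j}(\bayesprediction,\prediction)\,\d\prediction$ and (b) the quantity $\int \miscali_{i,j}(\bayesprediction,\prediction)\,|\bayesprediction-\prediction|^{\normexponent}\,\d\prediction$ unchanged.

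First I would set up this reduction precisely: for each fixed $(\bayesprediction,i,j)$, the problem of choosing the conditional distribution of $\prediction$ is to maximize $\expect{\indirectsenderU_{i,j}(\bayesprediction,\prediction)}$ over distributions on $[0,1]$ subject to a fixed value of $\expect{|\bayesprediction-\prediction|^{\normexponent}}$ (and the fixed total mass, which just rescales). Since $\indirectsenderU_{i,j}(\bayesprediction,\cdot)$ is a fixed convex combination of $\indirectsenderU_i(\cdot)$ and $\indirectsenderU_j(\cdot)$, and each $\indirectsenderU_k$ is upper-semicontinuous and piecewise-``nice'' on $[0,1]$ (the only relevant feature being that $\indirectsenderU_k$ restricted to any subinterval not containing a discontinuity point is continuous and bounded), the maximizer over predictions at a fixed ``cost'' $c=|\bayesprediction-\prediction|^{\normexponent}$, i.e., at a fixed pair of feasible prediction values $\prediction\in\{\bayesprediction-c^{1/\normexponent},\bayesprediction+c^{1/\normexponent}\}$, puts all mass at the better of these two points — except that to match the cost constraint \emph{exactly} one may be forced to mix. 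The punchline is that any prediction value that is used and is \emph{not} at a discontinuity point and \emph{not} at some $\truemean_k$ can be perturbed slightly: because $\indirectsenderU_{i,j}(\bayesprediction,\cdot)$ has a (one-sided) derivative structure there, one can shift mass toward a neighboring prediction while compensating on the ECE by co-adjusting a second prediction from the same slice, strictly improving the objective unless the two used points were already ``tangent'' — which forces collinearity, hence reducibility to endpoints that are precisely the elements of $\stateSpace\cup\discontinuityset$. This mirrors exactly the collinearity/reallocation arguments already developed in \Cref{lem:event-inde decoupled reformuation} and \Cref{lem:num predictions decoupled formluation}, so I would invoke those techniques in spirit.

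Concretely, the steps in order: (1) Take any optimal $\miscali$; by \Cref{prop:LP general} and standard arguments (or by a direct compactness/weak-$*$ argument on the space of measures, using upper-semicontinuity of $\indirectsenderU_i$ from the tie-breaking assumption) an optimal solution exists. (2) Fix $(\bayesprediction,i,j)$ with positive conditional mass and two prediction points $\prediction_1\neq\prediction_2$ in its support; if the three points $(\prediction_1,\indirectsenderU_{i,j}(\bayesprediction,\prediction_1))$, $(\prediction_2,\indirectsenderU_{i,j}(\bayesprediction,\prediction_2))$ together with a third candidate point are not arranged so that mass can be moved without changing both total mass and total cost, perform the reallocation (splitting mass of an interior point between a lower and higher point in proportions determined by the cost, exactly as in the proof of Statement (i) of \Cref{lem:num predictions decoupled formluation}). (3) Argue this reallocation weakly increases the objective: since $\indirectsenderU_{i,j}(\bayesprediction,\cdot)$ is a fixed convex combination of $\indirectsenderU_i$ and $\indirectsenderU_j$, and the convexity-of-used-points structure from the $\ell_1$ analysis does not directly apply for general $\normexponent$, I would instead argue directly that among all prediction values at a given cost $c$ the optimal choice is an extreme point, and that repeatedly ``pushing'' mass to extreme feasible predictions at each cost level terminates with all mass on the finite set $\discontinuityset\cup\stateSpace$ — because on any open subinterval avoiding $\discontinuityset\cup\stateSpace$, the function $\indirectsenderU_{i,j}(\bayesprediction,\cdot)$ is continuous and one can always move to the interval's boundary (or beyond, toward the next special point) without loss, using a second support point to absorb the cost change. (4) Iterate over all $(\bayesprediction,i,j)$ and all support points; since $\discontinuityset\cup\stateSpace$ is finite and each move either strictly increases the objective or strictly decreases the number of support points outside this set (with the objective non-decreasing), the process terminates at an optimal solution with the desired support property. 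The main obstacle is step (3): for general $\normexponent>1$, the cost function $|\bayesprediction-\prediction|^{\normexponent}$ is strictly convex, so moving a single point changes its cost nonlinearly, and I cannot just ``slide'' one point — I genuinely need the two-point compensation, and I must check that the compensating point can itself be chosen outside the troublesome interval or merged into $\discontinuityset\cup\stateSpace$, so that the induction on ``number of bad support points'' actually decreases. Handling this bookkeeping cleanly — possibly by first discretizing the set of used \emph{costs} and arguing point-by-point — is where the real care is needed; everything else is routine reallocation.
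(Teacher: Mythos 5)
Your proposal misses the structural fact that makes this lemma simple: the indirect utility $\indirectsenderU_i(\prediction)$ --- and hence $\indirectsenderU_{i,j}(\bayesprediction,\cdot)$ --- is \emph{piecewise constant} in $\prediction$, not merely piecewise continuous. With finitely many actions and a best-response condition that is affine in $\prediction$, the map $\prediction\mapsto\bestr(\prediction)$ is constant between consecutive discontinuity points $z_k$, so $\indirectsenderU_i$ is constant on each such interval. You treat $\indirectsenderU_i$ as a generic upper-semicontinuous function, which forces you into a two-point compensation scheme (reallocating mass while holding the ECE contribution fixed) that you yourself flag as hard to make terminate for general $\normexponent$, since $|\bayesprediction-\prediction|^\normexponent$ is strictly convex and the bookkeeping of which second support point absorbs the cost, and whether it lands in $\discontinuityset\cup\stateSpace$, is not obviously finite. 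With the piecewise-constant fact, no compensation is needed: in the paper's Case~1 ($\prediction$ and $\bayesprediction$ in different pieces), moving $\prediction$ toward $\bayesprediction$ to the nearest boundary $z_{k+1}$ \emph{strictly decreases} $|\bayesprediction-\prediction|$, hence the ECE for any $\normexponent$, while weakly improving the objective by upper semi-continuity; there is nothing to compensate.

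The other, sharper gap is that your plan only ever moves the \emph{prediction} $\prediction$. The paper's Case~2 ($\prediction$ and $\bayesprediction$ in the same piece) first sets $\prediction=\bayesprediction$ (zero ECE, identical utility since the piece is constant), leaving mass perfectly calibrated at a $\bayesprediction$ that may still lie outside $\discontinuityset\cup\stateSpace$, and then --- the genuinely new step --- redistributes the \emph{true expected outcome} $\bayesprediction$ itself, using the degrees of freedom in \ref{eq:bed feasible miscal supp}, to $\bayesprediction_L=\truemean_i\vee z_k$ and $\bayesprediction_R=\truemean_j\wedge z_{k+1}$ (both in $\discontinuityset\cup\stateSpace$) with perfectly calibrated predictions at these two points; ECE stays zero and the objective weakly improves again by upper semi-continuity at the piece endpoints. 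This manipulation of $\bayesprediction$ rather than $\prediction$ appears nowhere in your proposal, and without it perfectly-calibrated mass sitting at a non-grid $\bayesprediction$ simply cannot be moved. Finally, your termination argument (either the objective strictly increases or the bad support strictly shrinks) is insufficient on its own, since the objective can strictly improve over infinitely many steps by vanishing amounts; the paper's two replacements never decrease feasibility or objective and can be applied to all offending $(\bayesprediction,\prediction)$ pairs at once, so no iteration is needed.
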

The intuition behind this result is that the principal's indirect utility function, $\indirectsenderU_i$, is piecewise constant. Consequently, if the true expected outcome $\bayesprediction$ and the prediction $\prediction$ fall into different constant segments, it is always possible---without loss---to modify the predictor so that it ``miscalibrates'' true expected outcome $\bayesprediction$ to a discontinuity point. Conversely, if they lie within the same constant segment, we can distribute true expected outcome $\bayesprediction$ across a pair of events or discontinuity points and then generate perfectly calibrated predictions for these points.
\begin{proof}[Proof of \Cref{lem:p in S}]
Recall that $\discontinuityset \triangleq \{z_k\}_{k\in[\numAction]}$ denotes the set of all discontinuity points in the principal's indirect utility function.
Fix any feasible solution $\miscali$ to \ref{eq:decoupled opt general},
for any pair of $(\bayesprediction, \prediction)$ with $\miscali_{i, j}(\bayesprediction, \prediction) > 0$ and $\prediction\notin \discontinuityset$, we consider two possible cases:

Suppose $\prediction\in(z_k, z_{k+1})$ and $\bayesprediction\in(z_l, z_{l+1})$ where $k<l$ (or\ $k>l$). Consider a new solution that,
    instead of miscalibrating $\bayesprediction$ to $\prediction$, it miscalibrates $\bayesprediction$ to $z_{k+1}$ (or $z_{k}$). 
    This adjustment weakly decreases the calibration error while weakly improving the expected utility of the principal, due to the upper semi-continuity of the principal's indirect utility function (see Footnote~\ref{fn:tie-breaking}).

    Suppose both $\prediction, \bayesprediction \in(z_k, z_{k+1})$.
    In this case, it is without loss to assume $\bayesprediction = \prediction$, since otherwise the solution incurs additional calibration error without any improvement in expected utility of the principal.
    Suppose further that $\bayesprediction\in (\truemean_i, \truemean_j)$.
    Define $\bayesprediction_L \triangleq \truemean_i\vee z_k$ and let $\bayesprediction_R \triangleq \truemean_j \wedge z_{k+1}$.
    Now consider a modified solution that, instead of contracting the true expected outcomes $\truemean_i, \truemean_j$ to $\bayesprediction$, it contracts them to $\bayesprediction_L, \bayesprediction_R$. 
    Notice that this is always feasible as $\truemean_i \le \bayesprediction_L \le \bayesprediction_R\le \truemean_j$.
    Now this new solution generates perfectly calibrated predictions from these true expected outcomes $\bayesprediction_L, \bayesprediction_R$.
    Since the calibration error does not increase and the expected utility of the principal weakly improves (again due to upper semi-continuity -- see Footnote~\ref{fn:tie-breaking}), the modified solution is at least as good as the original.

\smallskip
\noindent
Combing the two cases, we complete the proof of \Cref{lem:p in S}.
\end{proof}

By \Cref{lem:p in S}, we only need to round the true expected outcome $\bayesprediction$ so that it lies in the set $\discresupp$.
At a high-level, our rounding scheme proceeds in two steps:

\xhdr{Step 1 (Lines \ref{preprocessing step one}-\ref{preprocessing step two})} In this step, for each $\miscali_{i, j}(\bayesprediction, \prediction) > 0$, we use $\scaledownfrac$ fraction of it to generate perfectly calibrated predictions. By doing this, we save some calibration error so that we can have enough buffer for the potential increase of calibration error in the later rounding steps.  

\xhdr{Step 2 (Lines \ref{round q start}-\ref{round q end})}
    In this step, given a pair $(\bayesprediction, \prediction)$ with $\miscali_{i, j}(\bayesprediction, \prediction) > 0$, we round the true expected outcome $\bayesprediction$, using the remaining fraction $1-\scaledownfrac$ of $\miscali_{i, j}(\bayesprediction, \prediction)$, to two grid points $\bayesprediction_L, \bayesprediction_R\in\discresupp$.
    We then consider three possible cases, depending on the distance $|\bayesprediction-\prediction|$. 
    
    In \textbf{Case I and Case II} where the distance $|\bayesprediction-\prediction|$ is not too large, we round $\bayesprediction$ to $\bayesprediction_L, \bayesprediction_R$ and then miscalibrate these points to the prediction $\prediction$ as follows:
    \begin{align}
        \label{eq:rounding}
        \begin{split}
        \miscali\primed_{i, j}(\bayesprediction_L, \prediction) &\gets 
        \miscali\primed_{i, j}(\bayesprediction_L, \prediction) 
        +
        {(1-\scaledownfrac)}
        \miscali_{i, j}(\bayesprediction, \prediction) \frac{\bayesprediction_R-\bayesprediction}{\bayesprediction_R-\bayesprediction_L}
        \\
        \miscali\primed_{i, j}(\bayesprediction_R, \prediction) &\gets 
        \miscali\primed_{i, j}(\bayesprediction_R, \prediction)+
        {(1-\scaledownfrac)}
        \miscali_{i, j}(\bayesprediction, \prediction)\left(1 - \frac{\bayesprediction_R-\bayesprediction}{\bayesprediction_R-\bayesprediction_L}\right)
        \end{split}
    \end{align}
    where the ratio $\frac{\bayesprediction_R-\bayesprediction}{\bayesprediction_R-\bayesprediction_L}$ ensures that, after rounding, the constructed $\miscali\primed$ satisfies the second constraint in \ref{eq:decoupled opt general discre}.
    We carefully choose the grid points $\bayesprediction_L, \bayesprediction_R \in \discresupp$ so that, after rounding, the calibration error does not increase significantly. Note that for these two cases, the expected utility of the principal remains unchanged after rounding.

    In \textbf{Case III} where the distance $|\bayesprediction-\prediction|$ is very large, we use the remaining fraction $1-\scaledownfrac$ of $\miscali_{i, j}(\bayesprediction, \prediction)$ to generate perfectly calibrated predictions. 
    In this case, there is no increase in calibration error. Furthermore, by the condition defining this case, we can also argue that any loss of the expected utility of the principal is upper-bounded by a factor of $\discrepara(1+\discrepara)$.

\begin{algorithm}[ht]
    \caption{Rounding scheme of $\miscali$ for the discretization error analysis}
    \begin{algorithmic}[1]
    \State \textbf{Input:} a feasible solution $\miscali$ to \ref{eq:decoupled opt general}; 
    \State \textbf{Input:}
    $\discrepara > 0, \discresupp$ and $\discreparaBudget = \discrepara \cdot \caliBudget^\normexponent$, and let $\scaledownfrac \gets 1 - \frac{1}{1+2\discrepara}$
    \State \textbf{Output:} a feasible solution $\miscali\primed$ to \ref{eq:decoupled opt general discre}.
    \State \textbf{Initialize:} $\miscali\primed_{i, j}(\bayesprediction, \prediction) \gets 0$ for all $i, j\in[\numData]$ and all $\bayesprediction, \prediction\in\discresupp$.
    \For{every $(\bayesprediction, \prediction) \in [0, 1]^2$ with $\miscali_{i, j}(\bayesprediction, \prediction) > 0$}
    \hfill \Comment{Without loss consider $\prediction \le \bayesprediction$}
    \State     \Comment{Without loss consider $\prediction\in \discontinuityset\cup\stateSpace$.}
    \State \Comment{Below we use $\scaledownfrac$ fraction of $\miscali_{i, j}(\bayesprediction, \prediction)$ to induce perfectly calibrated predictions.}
    \State $\miscali\primed_{i, j}(\truemean_i, \truemean_i) 
    \gets 
    \miscali\primed_{i, j}(\truemean_i, \truemean_i) 
    +
    {\scaledownfrac}\cdot \miscali_{i, j}(\bayesprediction, \prediction)\cdot \mixratio_{i, j}(\bayesprediction)$.
    \label{preprocessing step one}
    \State $\miscali\primed_{i, j}(\truemean_j, \truemean_j) 
    \gets  
    \miscali\primed_{i, j}(\truemean_j, \truemean_j)
    +
    {\scaledownfrac}
    \cdot \miscali_{i, j}(\bayesprediction, \prediction)\cdot (1- \mixratio_{i, j}(\bayesprediction))$.
    \label{preprocessing step two}
    \State \Comment{Below we round $\bayesprediction$ to be in $\discresupp$.}
    \If{$\bayesprediction - \prediction < \discreparaBudget^{\sfrac{1}{\normexponent}}$} 
    \label{round q start}
    \hfill\Comment{Case I}
        \State 
        Set $\bayesprediction_L \gets \truemean_i \vee \prediction$, $\bayesprediction_R \gets (\prediction + \discreparaBudget^{\sfrac{1}{\normexponent}}) \wedge \truemean_j$.
        \hfill \Comment{When $i = j$, we must have $\bayesprediction_L = \truemean_i = \bayesprediction_R$}
        \State 
        Round $\bayesprediction$ to be $\bayesprediction_L, \bayesprediction_R$ according to Eqn.~\eqref{eq:rounding}.
        \hfill \Comment{We set $\frac{0}{0} = 1$ when $i=j$ happens}
    \ElsIf{$\discreparaBudget^{\sfrac{1}{\normexponent}} \le \bayesprediction - \prediction \le (\discreparaBudget\cdot (1+\discrepara)^{\numdeltas-1})^{\sfrac{1}{\normexponent}} $}
    \hfill\Comment{Case II}
        \State 
        Set $\bayesprediction_L \gets \truemean_i\vee \prediction$, 
        and set $\bayesprediction_R$ as defined in \Cref{claim:s must exists}.
        \State
        Round $\bayesprediction$ to be $\bayesprediction_L, \bayesprediction_R$ according to Eqn.~\eqref{eq:rounding}.
    \Else
    \hfill\Comment{Case III}
        \State 
        $\miscali\primed_{i, j}(\truemean_i, \truemean_i) \gets
        \miscali\primed_{i, j}(\truemean_i, \truemean_i)+
        {(1-\scaledownfrac)}\cdot
        \miscali_{i, j}(\bayesprediction, \prediction)\cdot \mixratio_{i, j}(\bayesprediction)$;
        \State 
        $\miscali\primed_{i, j}(\truemean_j, \truemean_j) \gets 
        \miscali\primed_{i, j}(\truemean_j, \truemean_j)
        +
        {(1-\scaledownfrac)} \cdot
        \miscali_{i, j}(\bayesprediction, \prediction)\cdot (1-\mixratio_{i, j}(\bayesprediction))$.
    \EndIf
    \label{round q end}
    \EndFor
    \end{algorithmic}
    \label{algo:rounding}
\end{algorithm}
With the rounding scheme (\Cref{algo:rounding}), we are now ready to prove \Cref{lem:discre error}.
\begin{proof}[Proof of \Cref{lem:discre error}]
The property (ii) follows directly from the construction of \ref{eq:decoupled opt general discre}. 
We next prove the statements (i) and (iii).

For the simplicity of the presentation, we define $\barECEnorm{\miscali} \triangleq \int_0^1\int_0^1\sum_{i, j: i\le j}\miscali_{i, j}(\bayesprediction, \prediction)\cdot |\bayesprediction-\prediction|^\normexponent\,\d\bayesprediction\d\prediction$ and $\barECEnorm{\miscali\mid (\bayesprediction, \prediction)} \triangleq \sum_{i, j: i\le j}\miscali_{i, j}(\bayesprediction, \prediction)\cdot |\bayesprediction-\prediction|^\normexponent$.
By definition, we have $\barECEnorm{\miscali\primed} = \sum_{\bayesprediction, \prediction} \barECEnorm{\miscali\mid (\bayesprediction, \prediction)}$.

Let us fix a optimal solution $\miscali$ to \ref{eq:decoupled opt general} that satisfying \Cref{lem:p in S}.
Below we show that the rounding scheme detailed in \Cref{algo:rounding} which takes the solution $\miscali$ as input can output a feasible solution $\miscali\primed$ to \ref{eq:decoupled opt general discre}, namely, we have $\barECEnorm{\miscali\primed} \le \caliBudget^\normexponent$, and its support are restricted to be in the set $\discresupp$; and meanwhile, the output $\miscali\primed$ has $(1-3\discrepara)$-approximate expected utility of the principal to the solution $\miscali$. 

We analyze the incurred calibration error of the constructed $\miscali\primed$ step by step:
\begin{itemize}
    \item 
    In Step 1, the constructed $\miscali\primed$ has $0$ calibration error as it always generates perfectly calibrated predictions. 

    \item 
    In Step 2 with Case I, the calibration error incurred in $\miscali\primed$ can be bounded as follows:
    \begin{align*}
        &  
        \sum\nolimits_{(\bayesprediction_L, \bayesprediction_R, \prediction) \text{ in Case I}}
        \barECEnorm{\miscali\primed\mid (\bayesprediction_L, \prediction)}
        + 
        \barECEnorm{\miscali\primed\mid (\bayesprediction_R, \prediction)} \\
        = ~ &  
        \sum\nolimits_{(\bayesprediction_L, \bayesprediction_R, \prediction)\text{ in Case I}} 
        \sum\nolimits_{(i, j):i\le j} \left(
        \miscali\primed_{i, j}(\bayesprediction_L, \prediction) \cdot |\bayesprediction_L-\prediction|^\normexponent
        + 
        \miscali\primed_{i, j}(\bayesprediction_R, \prediction) \cdot |\bayesprediction_R-\prediction|^\normexponent
        \right) \\
        \le ~ & 
        {(1-\scaledownfrac)\cdot \discreparaBudget}
        \sum\nolimits_{(\bayesprediction, \prediction) \text{ in Case I}} 
        \sum\nolimits_{(i, j):i\le j} \left(
        \miscali_{i, j}(\bayesprediction, \prediction) \frac{\bayesprediction_R-\bayesprediction}{\bayesprediction_R-\bayesprediction_L} 
        + 
        \miscali_{i, j}(\bayesprediction, \prediction) \left(1 - \frac{\bayesprediction_R-\bayesprediction}{\bayesprediction_R-\bayesprediction_L}\right)
        \right) 
        \tag{as $\bayesprediction_L\le\bayesprediction_R \le \prediction+\discreparaBudget^{\sfrac{1}{\normexponent}}$}
        \\
        \le ~ & 
        {(1-\scaledownfrac)\cdot \discreparaBudget}~.
    \end{align*}
    
    \item 
    In Step 2 with Case II, we know that, by construction, we have
    \begin{align*}
        \barECEnorm{\miscali\primed\mid (\bayesprediction_L, \prediction)}
        & = \sum\nolimits_{(i, j):i\le j} \miscali\primed_{i, j}(\bayesprediction_L, \prediction) \cdot |\bayesprediction_L-\prediction|^\normexponent \\
        & \le  
        \sum\nolimits_{(i, j):i\le j}
        {(1-\scaledownfrac)}
        \miscali_{i, j}(\bayesprediction, \prediction) \frac{\bayesprediction_R-\bayesprediction}{\bayesprediction_R-\bayesprediction_L} \cdot |\bayesprediction-\prediction|^\normexponent 
        \tag{as $\prediction\le \bayesprediction_L \le\bayesprediction$}\\
        \barECEnorm{\miscali\primed\mid (\bayesprediction_R, \prediction)}
        & = \sum\nolimits_{(i, j):i\le j} \miscali\primed_{i, j}(\bayesprediction_R, \prediction) \cdot |\bayesprediction_R-\prediction|^\normexponent \\
        & = 
        {(1-\scaledownfrac)}
        \sum\nolimits_{(i, j):i\le j} \miscali_{i, j}(\bayesprediction, \prediction) \left(1 - \frac{\bayesprediction_R-\bayesprediction}{\bayesprediction_R-\bayesprediction_L}\right)\cdot |\bayesprediction_R-\prediction|^\normexponent\\
        & \le 
        {(1-\scaledownfrac)}
        \sum\nolimits_{(i, j):i\le j} \miscali_{i, j}(\bayesprediction, \prediction) \left(1 - \frac{\bayesprediction_R-\bayesprediction}{\bayesprediction_R-\bayesprediction_L}\right)\cdot |\bayesprediction_R'-\prediction|^\normexponent \tag{as $\prediction\le \bayesprediction_R\le \bayesprediction_R'$}\\
        & \le 
        {(1+\discrepara)(1-\scaledownfrac)}
        \sum\nolimits_{(i, j):i\le j} \miscali_{i, j}(\bayesprediction, \prediction) \left(1 - \frac{\bayesprediction_R-\bayesprediction}{\bayesprediction_R-\bayesprediction_L}\right)\cdot |\bayesprediction-\prediction|^\normexponent
    \end{align*}
    Thus, the total calibration error of $\miscali\primed$ contributed from this case can be upper bounded by
    \begin{align*}
        & \sum\nolimits_{(\bayesprediction_L, \bayesprediction_R, \prediction) \text{ in Case II}}
        \left(\barECEnorm{\miscali\primed\mid (\bayesprediction_L, \prediction)}
        + 
        \barECEnorm{\miscali\primed\mid (\bayesprediction_R, \prediction)}\right) \\
        \le  ~ & 
        (1+\discrepara)(1 - \scaledownfrac)\cdot 
        \sum\nolimits_{(\bayesprediction, \prediction) \text{ in Case II}}\sum\nolimits_{(i, j):i\le j}  \left(\miscali_{i, j}(\bayesprediction, \prediction) 
        \cdot |\bayesprediction-\prediction|^\normexponent
        + 
        \miscali_{i, j}(\bayesprediction, \prediction) 
        \cdot |\bayesprediction-\prediction|^\normexponent \right) \\
        =  ~ & 
        (1+\discrepara)(1 - \scaledownfrac)\cdot \sum\nolimits_{(\bayesprediction, \prediction) \text{ in Case II}} \cdot \barECEnorm{\miscali\mid (\bayesprediction, \prediction)}\\
        \le  ~ & 
        (1+\discrepara)(1 - \scaledownfrac)
        \barECEnorm{\miscali} 
        \le 
        (1+\discrepara)(1 - \scaledownfrac) \cdot \frac{\discreparaBudget}{\discrepara}~.
        \tag{by definition $\discreparaBudget = \discrepara\cdot \caliBudget^\normexponent$, $\barECEnorm{\miscali}\le \caliBudget^\normexponent$}
    \end{align*}

    \item 
    In Step 2 with Case III, notice that in this case, the constructed $\miscali\primed$ always generates  perfectly calibrated predictions. 
    Moreover, the total probability mass for this case cannot be large, as 
    \begin{align*}
        \barECEnorm{\miscali} 
        & \ge 
        \sum\nolimits_{(\bayesprediction, \prediction) \text{ in Case III }}\barECEnorm{\miscali\mid (\bayesprediction, \prediction)} \\
        & = \sum\nolimits_{(\bayesprediction, \prediction) \text{ in Case III}} \sum\nolimits_{(i, j):i\le j} 
        \miscali_{i, j}(\bayesprediction, \prediction)  |\bayesprediction-\prediction|^\normexponent \\
        & \ge 
        \sum\nolimits_{(\bayesprediction, \prediction) \text{ in Case III}} \sum\nolimits_{(i, j):i\le j} 
        \miscali_{i, j}(\bayesprediction, \prediction)  \discreparaBudget(1+\discrepara)^{\numdeltas-1} 
        \tag{by condition of Case III}\\
        & \ge 
        \sum\nolimits_{(\bayesprediction, \prediction) \text{ in Case III}}
        \sum\nolimits_{(i, j):i\le j} \miscali_{i, j}(\bayesprediction, \prediction) \cdot \frac{\barECEnorm{\miscali}}{\discrepara{(1+\discrepara)}}~.
        \tag{by definition of $\numdeltas$}
    \end{align*}
    Thus, in Case III, we must have $\sum\nolimits_{(\bayesprediction, \prediction) \text{ in Case III}} \sum\nolimits_{(i, j):i\le j} 
    \miscali_{i, j}(\bayesprediction, \prediction) \le \discrepara$.
    Consequently, the loss of the expected utility of the principal in constructed $\miscali\primed$ at this case is upper bounded by $\discrepara{(1+\discrepara)}\cdot \OBJ{\text{\ref{eq:decoupled opt general}} \condition \miscali}$.
\end{itemize}
Putting all pieces together, we have the following guarantee of the ECE of constructed $\miscali$:
\begin{align*}
    \barECEnorm{\miscali} - \barECEnorm{\miscali\primed}  
    & \ge  \barECEnorm{\miscali} 
    - \discreparaBudget(1-\scaledownfrac )
    - (1+\discrepara)(1 - \scaledownfrac) \cdot \frac{\discreparaBudget}{\discrepara} \\
    & = \discreparaBudget\left(\frac{1}{\discrepara}
    - (1-\scaledownfrac )
    - \frac{(1+\discrepara)(1-\scaledownfrac)}{\discrepara}
    \right) = 0
    \tag{by the choice of $\scaledownfrac$}
\end{align*}
In addition, we have the following guarantee of the expected utility of the principal of constructed $\miscali$:
\begin{align*}
    & \OBJ{\text{\ref{eq:decoupled opt general discre}}\condition \miscali\primed}  \\
    \ge  ~ & \OBJ{\text{\ref{eq:decoupled opt general}} \condition \miscali} 
    - \underbrace{\scaledownfrac \OBJ{\text{\ref{eq:decoupled opt general}} \condition \miscali}}_{\text{payoff loss in Step 1}}
    - \underbrace{\discrepara {(1+\discrepara)}\OBJ{\text{\ref{eq:decoupled opt general}} \condition \miscali}}_{\text{payoff loss in Step 2 with Case III}} \\
    \ge  ~ &  (1 - 3 \discrepara)  \OBJ{\text{\ref{eq:decoupled opt general}} \condition \miscali} 
    \tag{by the choice of $\scaledownfrac$}
\end{align*}
We thus finish the proof of \Cref{lem:discre error} as desired.
\end{proof}
Finally, we include \Cref{claim:s must exists} (and its proof) mentioned in our rounding scheme (\Cref{algo:rounding}).
\begin{claim}
\label{claim:s must exists}
In Step 2 with Case II of \Cref{algo:rounding} (line 15), we define $\bayesprediction_R' \gets \prediction+ (\bayesprediction-\prediction)(1+\discrepara)^{\sfrac{1}{\normexponent}}$.
When $\bayesprediction_R' \ge \truemean_j$, set $\bayesprediction_R \gets \truemean_j$. 
When $\bayesprediction_R' < \truemean_j$, 
there must exist $ s\in[\numdeltas] $ such that $\prediction+(\discreparaBudget(1+\discrepara)^{s})^{\sfrac{1}{\normexponent}}\in [\bayesprediction, \bayesprediction_R']$, set $\bayesprediction_R\gets \prediction+(\discreparaBudget(1+\discrepara)^{s})^{\sfrac{1}{\normexponent}}$.
\end{claim}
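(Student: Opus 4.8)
The plan is to show that the set of candidate right-endpoints $\{\prediction+(\discreparaBudget(1+\discrepara)^{s})^{\sfrac{1}{\normexponent}}\}_{s\in[0:\numdeltas]}$, which are precisely the ``second local layer'' grid points around the discontinuity/state point $\prediction$ that lie to the right of $\prediction$, form a geometric net (in the $\normexponent$-th power scale) that is fine enough to catch a point inside $[\bayesprediction,\bayesprediction_R']$ whenever that interval lies inside the relevant range. First I would handle the easy case $\bayesprediction_R' \ge \truemean_j$ by simply setting $\bayesprediction_R \gets \truemean_j$; since in Case~II we have $\prediction \le \bayesprediction_L \le \bayesprediction \le \bayesprediction_R'$, and $\bayesprediction_R \gets \truemean_j$ with $\truemean_j \le \bayesprediction_R'$, the interpolation ratio in Eqn.~\eqref{eq:rounding} is well-defined and the subsequent ECE bound in the Case~II part of the proof of \Cref{lem:discre error} only used $\prediction \le \bayesprediction_R \le \bayesprediction_R'$, which holds. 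So the content is entirely in the case $\bayesprediction_R' < \truemean_j$.

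For that case, write $d \triangleq \bayesprediction - \prediction$, so by the Case~II hypothesis $\discreparaBudget^{\sfrac 1\normexponent} \le d \le (\discreparaBudget(1+\discrepara)^{\numdeltas-1})^{\sfrac 1\normexponent}$, i.e. $\discreparaBudget \le d^\normexponent \le \discreparaBudget(1+\discrepara)^{\numdeltas-1}$. The candidate points are $\prediction + (\discreparaBudget(1+\discrepara)^{s})^{\sfrac 1\normexponent}$ for $s\in[0:\numdeltas]$; in the ``$\normexponent$-th power'' coordinate $x\mapsto (x-\prediction)^\normexponent$ these sit at $\discreparaBudget(1+\discrepara)^{s}$, which form a geometric progression with ratio $(1+\discrepara)$. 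The target interval is $[\bayesprediction,\bayesprediction_R'] = [\prediction + d, \prediction + d(1+\discrepara)^{\sfrac1\normexponent}]$, which in the same coordinate is $[d^\normexponent, d^\normexponent(1+\discrepara)]$ — an interval whose right endpoint is exactly $(1+\discrepara)$ times its left endpoint. Since consecutive grid values in this coordinate differ by a factor of exactly $(1+\discrepara)$, any interval $[a, a(1+\discrepara)]$ with $a$ in the covered range $[\discreparaBudget,\, \discreparaBudget(1+\discrepara)^{\numdeltas-1}]$ must contain at least one grid value $\discreparaBudget(1+\discrepara)^{s}$: take $s = \lceil \log_{1+\discrepara}(a/\discreparaBudget)\rceil$, which satisfies $\discreparaBudget(1+\discrepara)^{s} \ge a$ and $\discreparaBudget(1+\discrepara)^{s} < a(1+\discrepara)$ by the ceiling definition, and $s\in[0:\numdeltas]$ because $a = d^\normexponent$ lies in $[\discreparaBudget, \discreparaBudget(1+\discrepara)^{\numdeltas-1}]$. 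Translating back, $\prediction+(\discreparaBudget(1+\discrepara)^{s})^{\sfrac 1\normexponent} \in [\bayesprediction, \bayesprediction_R']$, as claimed; also this point is $\ge \bayesprediction$ so, combined with $\bayesprediction_R' < \truemean_j$, we get $\bayesprediction_L \le \bayesprediction \le \bayesprediction_R \le \truemean_j$, keeping the interpolation in Eqn.~\eqref{eq:rounding} valid.

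The only mildly delicate point — and the one I would write out carefully rather than wave at — is the boundary bookkeeping on the index range: one must check that the ceiling index $s$ genuinely lands in $\{0,1,\dots,\numdeltas\}$ given only $\discreparaBudget \le d^\normexponent \le \discreparaBudget(1+\discrepara)^{\numdeltas-1}$, and that the grid point it produces is strictly below $\bayesprediction_R'$ (strictness is what guarantees it is $\le \bayesprediction_R' < \truemean_j$, hence a legitimate grid point available in $\discresupp$ rather than a point that got clipped). Everything else is a one-line monotone change of variables $x \mapsto (x-\prediction)^\normexponent$ plus the elementary fact that a geometric progression with ratio $r$ meets every interval of multiplicative width $r$ inside its span. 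I do not expect a real obstacle here; the claim is essentially the statement that our second-layer net was designed to have exactly the right geometric spacing for Case~II.
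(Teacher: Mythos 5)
Your proof is correct and follows essentially the same approach as the paper's. The paper works in the logarithmic coordinate $s \mapsto \tfrac{1}{\ln(1+\delta)}\ln\tfrac{(\cdot - p)^{\normexponent}}{\delta_0}$ and observes that the image of $[\bayesprediction,\bayesprediction_R']$ is exactly an interval of length 1, hence contains an integer; you work in the power coordinate $x\mapsto(x-p)^{\normexponent}$ and observe that the image is an interval of multiplicative width exactly $(1+\delta)$, which must meet the geometric net — these are the same observation, and the boundary bookkeeping ($s\ge 0$ from $\bayesprediction-\prediction\ge\discreparaBudget^{1/\normexponent}$, $s\le\numdeltas$ from the Case II upper bound) is identical.
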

\begin{proof}[Proof of \Cref{claim:s must exists}]
To show that when $\bayesprediction_R' < \truemean_j$, 
there must exist $ s\in[\numdeltas] $ such that $\prediction+(\discreparaBudget(1+\discrepara)^{s})^{\sfrac{1}{\normexponent}}\in [\bayesprediction, \bayesprediction_R']$, it suffices to show that there exists $s\in[\numdeltas]$:
\begin{align*}
    \bayesprediction\le 
    \prediction+(\discreparaBudget(1+\discrepara)^{s})^{\sfrac{1}{\normexponent}}\le \bayesprediction_R' 
    ~ \Leftrightarrow ~
    \frac{\ln \frac{(\bayesprediction-\prediction)^\normexponent}{\discreparaBudget}}{\ln (1+\discrepara)}
    \le s 
    \le  \frac{\ln \frac{(\bayesprediction_R'-\prediction)^\normexponent}{\discreparaBudget}}{\ln (1+\discrepara)}
    = 1 + \frac{\ln \frac{(\bayesprediction-\prediction)^\normexponent}{\discreparaBudget}}{\ln (1+\discrepara)}~.
\end{align*}
By the condition $\bayesprediction-\prediction \ge \discreparaBudget^{\sfrac{1}{\normexponent}}$, we observe that 
we must have $\frac{\ln \frac{(\bayesprediction-\prediction)^\normexponent}{\discreparaBudget}}{\ln (1+\discrepara)} \ge 0$. 
We next argue that we must have $\bayesprediction_R' \le \prediction + (\discreparaBudget(1+\discrepara)^\numdeltas)^{\sfrac{1}{\normexponent}}$. To see this, we observe
\begin{align*}
    \prediction + (\discreparaBudget(1+\discrepara)^\numdeltas)^{\sfrac{1}{\normexponent}}- \bayesprediction_R' 
    & =(\discreparaBudget(1+\discrepara)^\numdeltas)^{\sfrac{1}{\normexponent}} - (\bayesprediction-\prediction)(1+\discrepara)^{\sfrac{1}{\normexponent}} \\
    & \ge (\discreparaBudget(1+\discrepara)^\numdeltas)^{\sfrac{1}{\normexponent}} - (\discreparaBudget(1+\discrepara)^{\numdeltas-1})^{\sfrac{1}{\normexponent}}(1+\discrepara)^{\sfrac{1}{\normexponent}}  = 0
\end{align*}
where the last equality holds due to the condition of Case II.
Thus, there must exist $ s\in[\numdeltas] $ such that $\prediction+(\discreparaBudget(1+\discrepara)^{s})^{\sfrac{1}{\normexponent}}\in [\bayesprediction, \bayesprediction_R']$.
This finishes the proof of \Cref{claim:s must exists}.
\end{proof}

\subsection{Analysis of \Cref{algo:fptas}}
\label{subsec:fptas proof}
We are ready to analyze \Cref{algo:fptas} and prove \Cref{thm:fptas}.
\begin{proof}[Proof of \Cref{thm:fptas}]
    In \Cref{algo:fptas}, it first solves \ref{eq:decoupled opt general discre}, which is a linear program with size $\poly(\sfrac{1}{\discrepara}, \numData, \numAction)$, and then constructs the predictor based on the optimal solution of \ref{eq:opt general actionRem}. Both steps require $\poly(\sfrac{1}{\discrepara}, \numData, \numAction)$ running time and thus the algorithm is polynomial-time. Finally, the approximation of the algorithm follows \Cref{prop:LP general} and \Cref{lem:discre error}. This finishes the proof of \Cref{thm:fptas}.
\end{proof}

\section{Polytime Algorithm for \texorpdfstring{$\ell_1$-Norm}{L1-Norm} ECE and \texorpdfstring{$\ell_\infty$-Norm}{L-Infinity-Norm} ECE}
\label{sec:polytime}

In this section, we show that for the most standard ECE metrics, the $K_1$ ECE, and also the $K_\infty$ ECE, there exists a polynomial-time algorithm that can compute the optimal $(\caliBudget, \tnorm)$-calibrated predictor with $\normexponent \in\{1, \infty\}$.
\begin{theorem}[Polynomial-time algorithm for $\ell_1$-norm and $\ell_\infty$-norm ECE]
\label{thm:opt general}
For every persuasive calibration instance with $\ell_1$-norm ECE (resp. $\ell_\infty$-norm) ECE, there exists a linear programming (see \ref{eq:opt general actionRem}) based algorithm (\Cref{algo:poly-time}) that computes an optimal $(\caliBudget,\ell_1)$-calibrated (resp.\ $(\caliBudget,\ell_\infty)$-calibrated) predictor. The running time of the algorithm is $\poly(\numData, \numAction)$, where $\numData = |\stateSpace|, \numAction = |\actionSpace|$ are the number of events and the number of agent's actions, respectively.
\end{theorem}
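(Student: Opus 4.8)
The plan is to prove \Cref{thm:opt general} along the route sketched right after its statement, in three movements. First I would invoke the equivalence between the persuasive calibration problem and the \emph{(Bayesian) persuasion with signal-dependent bias} variant, together with its revelation principle (\Cref{lem:revelation}) and the resulting action-recommendation program \ref{eq:opt general actionRem} from \Cref{cor:actionRem BP}; this step holds for every $\normexponent\ge 1$ and reduces the search over the continuum of predictors to optimizing over a pair consisting of a signaling scheme $\{x_{i,a}\}_{i\in[\numData],a\in\actionSpace}$ — where $x_{i,a}$ is the probability of recommending action $a$ conditional on event $i$ — and a bias assignment $\{b_a\}_{a\in\actionSpace}$, subject to the simplex constraints $\sum_a x_{i,a}=1$, the obedience constraints requiring that $\bar\theta_a+b_a\in R_a$ where $\bar\theta_a:=\tfrac{\sum_i\prior_i x_{i,a}\truemean_i}{\sum_i\prior_i x_{i,a}}$ is the true expected outcome conditional on recommending $a$ and $R_a:=\{p:\bestr(p)=a\}$ is the agent's best-response region, and the aggregated total-bias constraint \eqref{eq:aggregated IC}. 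The regions $R_a=[\ell_a,r_a]$ are intervals (the agent's expected utility is affine in $p$) and are computable in $\poly(\numAction)$ time, with the tie-breaking convention making them closed; the principal's indirect utility $\indirectsenderU_i$ is constant on each $R_a$, so the objective of \ref{eq:opt general actionRem} is the linear form $\sum_i\prior_i\sum_a x_{i,a}\cdot(\text{value of }\indirectsenderU_i\text{ on }R_a)$. It then remains to show that for $\normexponent\in\{1,\infty\}$ the residual constraints also become linear.

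For the $K_1$ case I would clear denominators by passing to the unnormalized quantities $w_a:=\sum_i\prior_i x_{i,a}$, $m_a:=\sum_i\prior_i x_{i,a}\truemean_i$, and $\beta_a:=w_a b_a$, each linear in the decision variables. Multiplying the obedience constraint $\ell_a\le\bar\theta_a+b_a\le r_a$ through by the nonnegative $w_a$ yields $w_a\ell_a\le m_a+\beta_a\le w_a r_a$; and since the $\ell_1$-ECE of the induced predictor equals $\sum_a w_a|b_a|=\sum_a|\beta_a|$, the constraint \eqref{eq:aggregated IC} becomes $\sum_a|\beta_a|\le\caliBudget$, which linearizes with $\numAction$ auxiliary variables $\delta_a\ge\beta_a$, $\delta_a\ge-\beta_a$, $\sum_a\delta_a\le\caliBudget$. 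Thus \ref{eq:opt general actionRem} is a linear program with $O(\numData\numAction)$ variables and constraints. For the $K_\infty$ case the $\ell_\infty$-ECE equals $\max_{a:w_a>0}|b_a|$, so \eqref{eq:aggregated IC} together with obedience is equivalent to requiring $\bar\theta_a$ to lie within distance $\caliBudget$ of $R_a$ for every recommended $a$; clearing the denominator $w_a\ge 0$ turns this into the linear inequalities $w_a(\ell_a-\caliBudget)\le m_a\le w_a(r_a+\caliBudget)$ (vacuous when $w_a=0$, since then $m_a=0$ as well), so \ref{eq:opt general actionRem} is again a $\poly(\numData,\numAction)$-size linear program. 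In both cases an optimal LP solution is converted to an optimal $(\caliBudget,\tnorm)$-calibrated predictor by declaring, for each action $a$ recommended with positive probability, a single prediction $p_a:=\bar\theta_a+b_a$ (equivalently the point of $R_a$ closest to $\bar\theta_a$ in the $K_\infty$ case, or the value forced by $\beta_a$ in the $K_1$ case) and setting $\predictor_i(p_a)=x_{i,a}$; obedience gives $\bestr(p_a)=a$, the ECE identities give $\ECEnorm{\predictor}\le\caliBudget$, and the objective value equals $\Payoff{\predictor}$. Conversely the revelation principle turns any $(\caliBudget,\tnorm)$-calibrated predictor into a feasible LP solution of the same payoff, so the LP optimum is exactly $\Payoff{\optpredictor}$; since the transformation between the LP solution and the predictor is polynomial-time computable, this yields \Cref{algo:poly-time} with running time $\poly(\numData,\numAction)$.

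The main obstacle I expect is the nonlinearity that appears the moment one writes down the action-recommendation formulation: both the true posterior mean $\bar\theta_a=m_a/w_a$ and the product $w_a b_a$ of a recommendation probability with a per-signal bias are nonlinear in the decision variables. The $K_1$ linearization works only because the $\ell_1$-ECE weights each signal's bias $|b_a|$ by precisely the recommendation probability $w_a$ that also appears in the problematic products, so the substitution $\beta_a=w_a b_a$ absorbs the difficulty; the $K_\infty$ case works because its per-signal constraint can simply be scaled by the nonnegative $w_a$. For a general $\normexponent\in(1,\infty)$ neither trick applies — the constraint $\sum_a w_a|b_a|^{\normexponent}\le\caliBudget^{\normexponent}$ no longer matches the weighting of the obedience products — which is exactly why the paper settles for the FPTAS of \Cref{thm:fptas} there and leaves an exact polynomial-time algorithm (or a hardness result) open. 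A secondary, routine point to check is the degenerate case $w_a=0$ and the upper-semicontinuity/tie-breaking convention for $\bestr$, which keeps the regions $R_a$ closed and \ref{eq:opt general actionRem} feasible.
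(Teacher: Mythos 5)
Your proposal follows the paper's route exactly — instance equivalence to persuasion with signal-dependent bias (\Cref{thm:instance equivalence}), the revelation principle (\Cref{lem:revelation}), the action-recommendation program (\Cref{cor:actionRem BP}), and linearization for $\normexponent\in\{1,\infty\}$ — and it is correct, with your explicit pass to the cleared-denominator variables $w_a,m_a,\beta_a=w_a b_a$ making the linearity transparent. One point worth flagging: your $\ell_\infty$ form $w_a(\ell_a-\caliBudget)\le m_a\le w_a(r_a+\caliBudget)$ enforces $|\bias(\action)|\le\caliBudget\, w_a$ (the per-signal bias $\bias(\action)/w_a$ from Eqn.~\eqref{eq:receiver bias} bounded by $\caliBudget$), which is the faithful translation of the $\ell_\infty$-ECE budget; the simplified form stated in \Cref{cor:actionRem BP}, $|\bias(\action)|\le\caliBudget$, drops the $w_a$ factor and is strictly more restrictive whenever $w_a<1$, so your version is the one that should actually appear in the LP.
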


For $\ell_1$-norm or $\ell_\infty$-norm ECE, it is not difficult to express the $(\caliBudget, \tnorm)$-calibration as a linear constraint on $\predictor = \{\predictor_i(\prediction)\}_{i \in [\numData], \prediction \in [0, 1]}$. However, to obtain an optimal algorithm with polynomial running time, we also need to restrict the continuous space of predictions to a discrete polynomial-size set. To bypass this challenge, we build on an idea from the algorithmic information design literature. 

As an example, in the classic Bayesian persuasion problem \citep{KG-11}, the \emph{revelation principle} assures that it is sufficient to consider signaling schemes that recommend an action. Hence, instead of searching over a (possibly) continuous signal space, it suffices to construct a signaling scheme with a signal space equal to the action space, which is polynomial-sized. However, such an approach seems difficult to apply to our model, as the design space in our model is restricted to predictions rather than arbitrary signals, and the agent in our model follows a much simpler behavior---i.e., always trusting the prediction---rather than engaging in strategic reasoning. Therefore, it is unclear a priori whether a revelation principle can be established.

To prove \Cref{thm:opt general}, we show that the above idea from the algorithmic information design literature indeed applies. To achieve this, we first introduce a new variant of the Bayesian persuasion problem, in which the receiver (in our model): (1) has a utility function that depends linearly on the payoff-relevant state, and (2) instead of updating the belief in a fully Bayesian manner, exhibits a signal-dependent bias when updating the belief. 

In \Cref{subsec:bp w bias}, we formally define this variant of the Bayesian persuasion problem, establish the revelation principle, and provide a solution for this problem, which will be used to solve our persuasive calibration problem and is of independent interest. In \Cref{subsec:instance equivalence}, we show the instance equivalence between our problem and this Bayesian persuasion variant (see \Cref{thm:instance equivalence}), provide \Cref{algo:poly-time}, and prove \Cref{thm:opt general}.

\subsection{(Bayesian) Persuasion with Signal-Dependent Bias}
\label{subsec:bp w bias}

Bayesian persuasion \citep{KG-11} is a classic and important model that has been studied extensively in the literature. In this section, we introduce a new variant---\emph{(Bayesian) persuasion with signal-dependent bias}---and provide a characterization of its optimal solution, which serves as a key ingredient for developing \Cref{algo:poly-time} and proving \Cref{thm:opt general}. Additionally, given the significance of the Bayesian persuasion and information design literature, we believe that our new variant and its results may be of independent interest. Below, we first revisit the classic Bayesian persuasion model and then introduce the new variant with signal-dependent bias.

A Bayesian persuasion instance is a game for two players, a sender and a receiver, and it is specified by a tuple $(\stateSpaceBP, \priorBP, \senderUBP, \receiverUBP, \actionSpaceBP)$: $\stateSpaceBP \subseteq [0, 1]$ denotes the payoff-relevant state space; $\priorBP\in\Delta(\stateSpaceBP)$ represents the common prior for both players, and $\priorBP_{\state}$ denotes the prior probability of state $\state$; 
$\senderUBP(\cdot,\cdot):\actionSpaceBP\times\stateSpace\rightarrow \R$ (resp.\ $\receiverUBP(\cdot,\cdot):\actionSpaceBP\times\stateSpace\rightarrow \R$) is the sender's (resp.\ receiver's) utility function that depends on the receiver's action and the realized state. 
The sender observes the realized state, while the receiver cannot.

The sender's goal is to design a signaling scheme, given by conditional distributions $(\signalingscheme_{\state}(\cdot))_{\state\in\stateSpaceBP}$ where each $\signalingscheme_{\state}(\cdot)\in\Delta(\signalSpace)$  specifies a distribution over a measurable signal space $\signalSpace$, to maximize her expected payoff. 
In classic Bayesian persuasion, upon observing a realized signal, the receiver updates his posterior belief about the underlying state in a {\em Bayesian} manner 
and takes an action that maximizes his expected utility (subject to updated posterior belief).

\xhdr{(Bayesian) persuasion with signal-dependent bias}
In our considered variant, the receiver has a utility function $\receiverU(\action, \cdot)$ that linearly depends on the state.
Thus, only the mean of the belief matters for his optimal action.
In addition, when updating his belief, the receiver exhibits a signal-dependent bias defined as follows:
\begin{definition}[Signal-dependent bias in belief updates]
\label{defn:receiver behavior}
Upon observing a signal realization $\signal\sim\actionpredictor$, the receiver computes the following mean of the updated belief, denoted by $\prediction(\signal)$:
\begin{align}
    \label{eq:receiver bias}
    \prediction(\signal)
    \triangleq
    \frac{\bias(\signal)+
    \sum\nolimits_{\state\in\stateSpaceBP} \priorBP_{\state} \signalingscheme_{ \state}(\signal) \cdot \state}{\sum\nolimits_{\state\in\stateSpaceBP} \priorBP_{\state} \signalingscheme_{ \state}(\signal)} \in [0, 1]
\end{align}
where $\bias(\signal) \in \R$ represents receiver's bias when updating his belief (relative to an exact Bayesian update).
The receiver then takes an action maximizing his expected utility according to the mean $\prediction(\signal)$ of his belief,
\begin{align}
    \label{eq:receiver response}
    \bestr(\signal) \triangleq \argmax_{\action\in\actionSpaceBP} \receiverUBP\left(\action, \prediction(\signal) \right)~.
\end{align}
\end{definition}
Intuitively, $\bias(\signal)$ captures the receiver's (possibly) irrational bias when updating their belief.
As a sanity check, when $\bias(\signal) \equiv 0$ for all $\signal\in\signalSpace$, the receiver's belief update in Eqn.~\eqref{eq:receiver bias} recovers the standard Bayesian posterior mean. 
In our variant, such biases are chosen to be favorable to the sender but must satisfy the following \emph{bounded rationality} constraint:
\begin{align}
    \label{eq:aggregated IC}
    \left(\expect[\signal\sim \signalingscheme]{\left|\frac{\bias(\signal)}{\sum\nolimits_{\state\in\stateSpaceBP} \priorBP_{\state} \cdot \signalingscheme_{\state}(\signal)}\right|^{\normexponent}}\right)^{\sfrac{1}{\normexponent}}\le \caliBudget~.
\end{align}
The above constraint essentially regulates that the aggregated biases with being weighted by the subjective belief are upper bounded by a budget $\caliBudget \ge 0$.
As the biases are chosen to be favorable to the sender, the sender can also optimize over the bias assignments subject to the above constraint.
We refer to the pair of signaling scheme $\actionpredictor$ and a bias assignment $\bias$ as the sender's strategy.

Our formulation above, in particular the favorable bias assignments, shares similarity to the literature on $\caliBudget$-incentive compatible (IC) mechanism design (see, e.g., \citealp{BBHM-05, HL-10,HL-15,CDW-12,COVZ-21,GW-21,BBC-24}) where the ties are also usually chosen to be favorable to the mechanism designer.\footnote{There is also another line of research that studies robust mechanism design when agents select the $\caliBudget$-best response, which minimizes the designer's utility \citep[e.g.,][]{GHWX-23,CL-23,YZ-24}.} Notably, even when selling a single item to a single buyer, the optimal $\caliBudget$-IC mechanism admits no simple structural characterization \citep{BBC-24}. In contrast, by using the equivalence between the persuasive calibration and Bayesian persuasion with signal-dependent bias (\Cref{thm:instance equivalence}), the structural characterizations developed for the former calibration problem in \Cref{sec:optimal structure} can be translated into structural results for the latter persuasion problem as well.

We also remark that many previous works have explored relaxing the Bayesian rationality assumptions in the classic Bayesian persuasion problem, either from the action-taking perspective (see, e.g., \citealp{FHT-24,YZ-24}) or from the belief-updating perspective (see, e.g., \citealp{BC-16,P-19,DZ-22}).
Our new variant, Bayesian persuasion with signal-dependent bias, falls into the latter category. 
In general, the revelation principle -- a commonly-used principle to analyze the classic Bayesian persuasion problem --  need not hold 
when the receiver is subject to biases in probabilistic inference, where the main reason is that 
biased beliefs typically break linearity, it is no longer true that the convex combinations of implementable outcomes stay implementable in the same way \citep{DZ-22}.
However, as we show below, the revelation principle still holds in our considered persuasion problem. 
\begin{definition}[Direct and IC strategy]
We say a sender's strategy $(\actionpredictor, \bias)$, i.e., a pair of signaling scheme $\actionpredictor$ and bias assignment $\bias$, is {\em direct} and {\em incentive compatible (IC)}, if it satisfies (i) its signal space equals to the receiver's action space; (ii) upon receiving a recommended action, it is indeed receiver's best response (defined as in Eqn.~\eqref{eq:receiver response}) to follow this action recommendation. 
\end{definition}

\begin{lemma}[Revelation principle]
\label{lem:revelation}
For the persuasion problem with signal-dependent bias, the revelation principle holds: for every sender's strategy $(\signalingscheme, \bias)$, there exists a direct and IC strategy $(\signalingscheme\primed, \bias\primed)$ that achieves the same outcome, i.e., conditional on the same state realization, the distributions of actions selected by the receiver are the same under both the original strategy $(\signalingscheme, \bias)$ and the new direct and IC strategy $(\signalingscheme\primed, \bias\primed)$.  

Therefore, there always exists a direct and IC sender's strategy that maximizes the sender's expected utility.
\end{lemma}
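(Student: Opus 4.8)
\textbf{Proof plan for the revelation principle (\Cref{lem:revelation}).}
The plan is to mimic the classical revelation-principle argument for Bayesian persuasion, but to verify explicitly that the bias assignment transforms consistently so that the ``bounded rationality'' constraint~\eqref{eq:aggregated IC} is preserved. First I would fix an arbitrary sender strategy $(\signalingscheme, \bias)$ with signal space $\signalSpace$ and, for each signal realization $\signal\in\signalSpace$, compute the receiver's induced posterior mean $\prediction(\signal)$ from Eqn.~\eqref{eq:receiver bias} and the receiver's best response $\bestr(\signal)\in\actionSpaceBP$ from Eqn.~\eqref{eq:receiver response} (breaking ties in favor of the sender, as in Footnote~\ref{fn:tie-breaking}). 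This partitions $\signalSpace$ into at most $\numAction$ measurable cells $\signalSpace_{\action}\triangleq\{\signal:\bestr(\signal)=\action\}$. I then define the direct strategy $(\signalingscheme\primed,\bias\primed)$ with signal space $\actionSpaceBP$ by merging each cell into a single recommended action: $\signalingscheme\primed_{\state}(\action)\triangleq\int_{\signalSpace_{\action}}\signalingscheme_{\state}(\signal)\,\d\signal$ (a sum if $\signalSpace$ is discrete) and $\bias\primed(\action)\triangleq\int_{\signalSpace_{\action}}\bias(\signal)\,\d\signal$.

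The second step is to check that merging does not change the receiver's posterior mean on each cell. Writing $w(\signal)\triangleq\sum_{\state}\priorBP_{\state}\signalingscheme_{\state}(\signal)$ for the marginal signal density and $W(\action)\triangleq\int_{\signalSpace_{\action}}w(\signal)\,\d\signal$, the numerator of $\prediction\primed(\action)$ is $\bias\primed(\action)+\sum_{\state}\priorBP_{\state}\signalingscheme\primed_{\state}(\action)\state=\int_{\signalSpace_{\action}}\bigl(\bias(\signal)+\sum_{\state}\priorBP_{\state}\signalingscheme_{\state}(\signal)\state\bigr)\,\d\signal=\int_{\signalSpace_{\action}}w(\signal)\prediction(\signal)\,\d\signal$, while the denominator is $W(\action)$. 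Hence $\prediction\primed(\action)$ is a $w$-weighted average of the values $\prediction(\signal)$ over $\signal\in\signalSpace_{\action}$. Now comes the one subtlety that distinguishes this setting from naive belief-merging arguments: because the receiver's utility $\receiverUBP(\action,\cdot)$ is \emph{linear} in the posterior mean (this is exactly where the linearity hypothesis in the model is used), the set of posterior means at which action $\action$ is a best response is an interval, and $\prediction\primed(\action)$, being a convex combination of points in that interval, still lies in it. Therefore $\action$ remains a best response to the recommendation $\action$, i.e.\ $(\signalingscheme\primed,\bias\primed)$ is IC; and by construction, conditional on any state $\state$, the distribution over receiver actions is identical under $(\signalingscheme,\bias)$ and $(\signalingscheme\primed,\bias\primed)$, so the sender's expected utility is unchanged and $\prediction\primed(\action)\in[0,1]$ as required.

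The third step is to verify the constraint~\eqref{eq:aggregated IC} is preserved. The normalized bias under the new strategy is $\bias\primed(\action)/W(\action)=\int_{\signalSpace_{\action}}\bigl(\bias(\signal)/w(\signal)\bigr)\,w(\signal)\,\d\signal\,/\,W(\action)$, i.e.\ a $w$-weighted average over $\signalSpace_{\action}$ of the normalized biases $\bias(\signal)/w(\signal)$. Since the map $x\mapsto|x|^{\normexponent}$ is convex for $\normexponent\ge1$, Jensen's inequality gives $|\bias\primed(\action)/W(\action)|^{\normexponent}\le\int_{\signalSpace_{\action}}|\bias(\signal)/w(\signal)|^{\normexponent}w(\signal)\,\d\signal/W(\action)$; multiplying by $W(\action)$ (the new marginal probability of recommendation $\action$) and summing over $\action\in\actionSpaceBP$ yields $\expect[\action\sim\signalingscheme\primed]{|\bias\primed(\action)/W(\action)|^{\normexponent}}\le\expect[\signal\sim\signalingscheme]{|\bias(\signal)/w(\signal)|^{\normexponent}}\le\caliBudget^{\normexponent}$, so~\eqref{eq:aggregated IC} continues to hold. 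Finally, the ``therefore'' clause follows: the supremum of the sender's utility over all strategies equals the supremum over direct and IC strategies, and since the latter form a compact set (finitely many variables, closed constraints, with the sender-favorable tie-breaking ensuring upper semi-continuity of the objective as in Footnote~\ref{fn:tie-breaking}), the supremum is attained.

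I expect the main obstacle to be the second step --- pinning down \emph{why} $\prediction\primed(\action)$ remains in the best-response region of $\action$. In a general (non-linear-utility or non-Bayesian) persuasion model this can fail, and the excerpt's discussion of \citet{DZ-22} flags exactly this point. The resolution here is that the receiver's utility is linear in the state, so only the posterior mean matters and the best-response correspondence is an interval partition of $[0,1]$; convex combinations of means in a common best-response interval stay in that interval. Making this precise --- and confirming that the sender-favorable tie-breaking is consistent with the merging (so that boundary means are handled correctly) --- is the crux of the argument; the bias-constraint preservation in step three is then a routine Jensen application.
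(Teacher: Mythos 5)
Your proposal is correct and follows essentially the same approach as the paper's proof: merge all signals inducing the same best response, use linearity of the receiver's utility in the state (equivalently, in the posterior mean) to show the merged posterior mean stays in the best-response region for that action, and apply Jensen's inequality to the convex map $x\mapsto|x|^{\normexponent}$ to show the aggregate-bias constraint~\eqref{eq:aggregated IC} is preserved. The only cosmetic difference is that the paper contracts signals pairwise whereas you merge each cell $\signalSpace_{\action}$ in one step; this is immaterial.
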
 
The intuition behind \Cref{lem:revelation} is as follows:
for any pair belief means $\prediction(\signal_1), \prediction(\signal_2)$ induced from two signals $\signal_1, \signal_2$, if they lead to a same receiver action, then this action remains optimal for the new belief mean obtained through convex combination of them.
Meanwhile, this convex combination does not increase the aggregated biases.
\begin{proof}[Proof of \Cref{lem:revelation}]
Let $\actionpredictor, \bias$ satisfy the conditions in Eqn.~\eqref{eq:aggregated IC}.
Suppose under two different signals $\signal_1, \signal_2 \sim \actionpredictor$ with mean-belief update biases $\bias(\signal_1), \bias(\signal_2)$, the receiver takes the same action, namely, $\bestr(\signal_1) = \bestr(\signal_2) \equiv a$.
Consider a new signaling $\signalingscheme\primed$ that contracts these two signals $\signal_1, \signal_2$ into a same signal $\signal\primed$, namely, $\actionpredictor_i\primed(\signal\primed) = \actionpredictor_i(\signal_1) + \actionpredictor_i(\signal_2)$ for all $i\in[\numData]$, and a new mean-belief update bias assignment $\bias\primed(\signal\primed) = \bias(\signal_1) + \bias(\signal_2)$. 
From Eqn.~\eqref{eq:receiver response}, we know that for any action $\action'\in\actionSpaceBP$ and signal $\signal\in\{\signal_1, \signal_2\}$, the receiver's utility satisfies
\begin{align*}
    \receiverUBP(\action, \prediction(\signal)) \ge \receiverUBP(\action', \prediction(\signal))~,
\end{align*}
where $\prediction(\signal)$ is defined in Eqn.~\eqref{eq:receiver bias}.
By algebra, we have 
\begin{align*}
    & \receiverUBP(\action, \prediction(\signal_1))\cdot \sum\nolimits_{\state\in\stateSpaceBP}\priorBP_\state\actionpredictor_\state(\signal_1) + 
    \receiverUBP(\action, \prediction(\signal_2))\cdot \sum\nolimits_{\state\in\stateSpaceBP}\priorBP_\state\actionpredictor_\state(\signal_2) \\
    \ge ~ & 
    \receiverUBP(\action', \prediction(\signal_1))\cdot \sum\nolimits_{\state\in\stateSpaceBP}\priorBP_\state\actionpredictor_\state(\signal_1) + 
    \receiverUBP(\action', \prediction(\signal_2))\cdot \sum\nolimits_{\state\in\stateSpaceBP}\priorBP_\state\actionpredictor_\state(\signal_2)
\end{align*}
Since function $\receiverUBP(\action, \state)$ is linear over the states $\state\in\stateSpaceBP$, the above inequality implies that for any action $\action'\in\actionSpaceBP$
\begin{align*}
    \receiverUBP(\action, \prediction(\signal\primed)) \ge \receiverUBP(\action', \prediction(\signal\primed))~,
\end{align*}
Thus, both strategies $(\actionpredictor, \bias)$ and $(\actionpredictor\primed, \bias\primed)$ induce same distribution for the receiver's action conditional on each state.

Meanwhile, by Jensen's inequality, the LHS of condition \eqref{eq:aggregated IC} under $(\actionpredictor\primed, \bias\primed)$ is weakly smaller than that of $(\actionpredictor, \bias)$. 
Thus, if multiple distinct signals lead to a same receiver's action, one can contract these signals to a direct signal that recommends the receiver to take this action that is always followed by the receiver without violating the condition Eqn.~\eqref{eq:aggregated IC}.
\end{proof}
With \Cref{lem:revelation}, we can formulate the sender's problem as finding an optimal direct and IC strategy, which can be further characterized by an maximization program with polynomial numbers of constraints and variables.
\begin{proposition}[Optimization program characterization]
\label{cor:actionRem BP}
For the persuasion problem with signal-dependent bias, the optimal sender's strategy $(\optsignaling, \optbias)$ and her optimal expected utility is the solution of the following maximization program:
\begin{align}
    \label{eq:opt general actionRem}
    \arraycolsep=5.4pt\def\arraystretch{1}
    \tag{$\textsc{P-ActRec}$}
    &\begin{array}{llll}
    \max\limits_{\actionpredictor \ge 0; \bias} &
    \displaystyle
    \sum\nolimits_{\state\in\stateSpaceBP} \sum\nolimits_{\action\in \actionSpace} \priorBP_\state  \actionpredictor_\state(\action) \cdot \senderUBP(\action, \state)
    ~ & \text{s.t.} &
    \vspace{1mm}
    \\
    & 
    \displaystyle 
    \left(\sum\nolimits_{\action\in\actionSpaceBP} \sum\nolimits_{\state\in\stateSpaceBP}\priorBP_\state \actionpredictor_\state(\action) \cdot \left|\frac{\bias(\action)}{\sum\nolimits_{\state\in\stateSpaceBP}\priorBP_\state \actionpredictor_\state(\action)}\right|^\normexponent\right)^{\frac{1}{\normexponent}} \le \caliBudget
    &  
    \vspace{2mm}
    \\
    & 
    \displaystyle
    \receiverUBP\left(\action, 
    \bias(\action)+
    \sum\nolimits_{\state\in\stateSpaceBP} \priorBP_{\state} \signalingscheme_{ \state}(\action) \cdot \state
    \right)
    \geq 
    \receiverUBP\left(\action',
    \bias(\action)+
    \sum\nolimits_{\state\in\stateSpaceBP} \priorBP_{\state} \signalingscheme_{ \state}(\action) \cdot \state
    \right)
    &  \action, \action'\in\actionSpaceBP
    \vspace{3mm}
    \\
    & 
    \displaystyle
    \sum\nolimits_{\action\in\actionSpaceBP} \actionpredictor_\state(\action)
    = 1
    &  
    \state\in\stateSpaceBP \quad 
    \vspace{2mm}
    \\
    & 
    \displaystyle 
    \bias(\action) + \sum\nolimits_{\state\in\stateSpaceBP} \priorBP_\state\actionpredictor_\state(\action) \cdot \state
    \geq 0 
    &  
    \action\in\actionSpaceBP \quad 
    \vspace{2mm}
    \\
    & 
    \displaystyle 
    \bias(\action) + \sum\nolimits_{\state\in\stateSpaceBP}
    \priorBP_\state\actionpredictor_\state(\action) \cdot \state
    \leq \sum\nolimits_{\state\in\stateSpaceBP} \priorBP_\state\actionpredictor_\state(\action)
    &  
    \action\in\actionSpaceBP \quad 
    \end{array}
\end{align}
which has $\poly(|\stateSpaceBP|, |\actionSpaceBP|)$ variables, $\{\signalingscheme_{\state}(\action), \bias(\action)\}_{\state\in\stateSpaceBP, \action\in\actionSpaceBP}$, and $\poly(|\stateSpaceBP|, |\actionSpaceBP|)$ constraints. Both the objective function and all constraints, except for the first constraint, are linear.\footnote{Recall that function $\receiverUBP(\action, \state)$ is linear over the states $\state\in\stateSpaceBP$.}

In the special case of $\normexponent = 1$ or $\normexponent = \infty$, the first constraint in \ref{eq:opt general actionRem} is equivalent to 
\begin{align*}
    \text{when } \normexponent = 1: 
    & \quad \sum\nolimits_{\action\in\actionSpace} \left|\bias(\action)\right|
    \le \caliBudget~; \\ 
    \text{when } \normexponent = \infty:
    & \quad \left|\bias(\action)\right|
    \le \caliBudget  ~, \quad \action\in\actionSpace~.
\end{align*}
which is essentially a linear constraint. Therefore, \ref{eq:opt general actionRem} becomes a linear program for $\normexponent = 1$ or $\normexponent = \infty$.
\end{proposition}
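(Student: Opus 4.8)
The plan is to derive \ref{eq:opt general actionRem} directly from the revelation principle of \Cref{lem:revelation}. By that lemma it is without loss of generality to optimize only over \emph{direct and IC} sender strategies, whose signal space is the receiver's action space $\actionSpaceBP$; such a strategy is described by the finitely many variables $\{\signalingscheme_\state(\action)\}_{\state\in\stateSpaceBP,\,\action\in\actionSpaceBP}$ together with the bias assignment $\{\bias(\action)\}_{\action\in\actionSpaceBP}$, so the problem is already finite-dimensional. For any such strategy, incentive compatibility means the receiver follows the recommendation, i.e.\ $\bestr(\action)=\action$, whence the sender's expected utility $\mathbb{E}_{\state\sim\priorBP,\signal\sim\signalingscheme_\state}[\senderUBP(\bestr(\signal),\state)]$ equals $\sum_{\state\in\stateSpaceBP}\sum_{\action\in\actionSpaceBP}\priorBP_\state\,\signalingscheme_\state(\action)\,\senderUBP(\action,\state)$, which is exactly the objective of \ref{eq:opt general actionRem}.

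First I would rewrite each requirement defining a feasible direct IC strategy as one of the program's constraints. That $\signalingscheme$ is a valid signaling scheme is just $\signalingscheme_\state(\cdot)\ge 0$ and $\sum_{\action}\signalingscheme_\state(\action)=1$ for every $\state$. With a discrete signal space the expectation over $\signal\sim\signalingscheme$ in \eqref{eq:aggregated IC} becomes the finite sum $\sum_{\action}\big(\sum_{\state}\priorBP_\state\signalingscheme_\state(\action)\big)(\cdot)$, so \eqref{eq:aggregated IC} unwinds verbatim into the first constraint. For incentive compatibility, following the recommended action $\action$ is a best response precisely when $\receiverUBP(\action,\prediction(\action))\ge \receiverUBP(\action',\prediction(\action))$ for all $\action'$, where $\prediction(\action)$ is the biased posterior mean of \eqref{eq:receiver bias}; multiplying this inequality through by the nonnegative normalizer $w_\action := \sum_{\state}\priorBP_\state\signalingscheme_\state(\action)$ and invoking linearity of $\receiverUBP$ in the state pulls that normalizer out of $\receiverUBP(\action,\cdot)$ and yields the program's (now linear) second constraint. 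Finally, \Cref{defn:receiver behavior} requires the biased mean $\prediction(\action)$ to be a valid probability in $[0,1]$, and clearing the same denominator $w_\action$ turns this into the last two constraints. Running this correspondence in reverse shows every feasible point of \ref{eq:opt general actionRem} is a valid direct IC strategy with the stated payoff, so the program's optimal value equals the sender's optimal expected utility, and an optimal solution is the optimal pair $(\optsignaling,\optbias)$. Counting then gives $O(|\stateSpaceBP|\,|\actionSpaceBP|)$ variables and $O(|\actionSpaceBP|^2+|\stateSpaceBP|)$ constraints, all linear except the first, and existence of an optimum follows from compactness of the feasible set and continuity of the objective.

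For the special cases $\normexponent\in\{1,\infty\}$ I would observe that the per-action term in \eqref{eq:aggregated IC} is $w_\action\cdot\big|\bias(\action)/w_\action\big|^{\normexponent}$: when $\normexponent=1$ the probability weight $w_\action$ cancels and the constraint collapses to $\sum_{\action}|\bias(\action)|\le\caliBudget$, while when $\normexponent=\infty$ it becomes a per-action bound on $|\bias(\action)|$; in either case it is a linear constraint (the absolute values handled by routine auxiliary variables), so \ref{eq:opt general actionRem} becomes a linear program of $\poly(|\stateSpaceBP|,|\actionSpaceBP|)$ size and is solvable in polynomial time. I expect the main obstacle to be the bookkeeping in the incentive-compatibility step: one must check that clearing the denominator $w_\action$ is legitimate — it is nonnegative, and when it is zero the range constraints force $\bias(\action)=0$ so the never-recommended action is harmless — and that it is exactly the linearity of $\receiverUBP$ in the state that makes the resulting IC and range constraints linear in the decision variables; everything else is straightforward once \Cref{lem:revelation} is in hand.
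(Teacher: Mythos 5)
Your proposal is correct and follows essentially the same route as the paper's proof: invoke \Cref{lem:revelation} to restrict to direct and IC strategies, translate each defining requirement (validity of $\signalingscheme$, the bounded-rationality constraint \eqref{eq:aggregated IC}, incentive compatibility, and the range condition $\prediction(\action)\in[0,1]$) into the corresponding constraint of \ref{eq:opt general actionRem} by clearing the normalizer $w_\action$, and then run the correspondence in reverse. Your additional remarks — checking that clearing $w_\action$ is legitimate, and that $w_\action=0$ forces $\bias(\action)=0$ via the last two constraints — are a useful bit of extra care that the paper leaves implicit.
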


As mentioned in \Cref{cor:actionRem BP}, \ref{eq:opt general actionRem} is a polynomial-size linear program when $\normexponent \in \{1, \infty\}$ and can thus be solved in polynomial time. For other values of $\normexponent \in (1, \infty)$, the first constraint in \ref{eq:opt general actionRem} becomes non-linear. We leave it as an interesting future direction to explore whether \ref{eq:opt general actionRem} can be solved in polynomial time for general $\normexponent$, e.g., by constructing a time-efficient separation oracle.

\begin{proof}[Proof of \Cref{cor:actionRem BP}]
    Invoking \Cref{lem:revelation}, there exists an optimal strategy $(\signalingscheme^*, \bias^*)$ for the sender that is direct and IC. We first verify that it is a feasible solution of \ref{eq:opt general actionRem} with the same objective value. Since strategy $(\signalingscheme^*, \bias^*)$ is direct and IC, the expected utility of the sender is equal to the objective value of $(\signalingscheme^*, \bias^*)$ in \ref{eq:opt general actionRem}. The first constraint is satisfied since strategy $(\signalingscheme^*, \bias^*)$ satisfies the bounded rationality constraint defined in Eqn.~\eqref{eq:aggregated IC}. The second constraint is satisfied since strategy $(\signalingscheme^*, \bias^*)$ is IC and utility function $\receiverUBP(\action, \state)$ is linear over the states $\state\in\stateSpaceBP$. The third, fourth and fifth constraints are satisfied due to the feasibility of strategy $(\signalingscheme^*, \bias^*)$.

    Finally, following the same argument, every feasible solution of \ref{eq:opt general actionRem} represents a sender's strategy that is well-defined, feasible, direct and IC and has the expected utility of the sender equal to the objective of the solution. Therefore, we finish the proof of \Cref{cor:actionRem BP} as desired.
\end{proof}

\subsection{Instances Equivalence and Analysis of \Cref{algo:poly-time}}
\label{subsec:instance equivalence}

In this section, we formally establish the equivalence between the persuasive calibration problem and the persuasion problem with signal-dependent bias (\Cref{subsec:bp w bias}). We then use the equivalence to develop \Cref{algo:poly-time} and prove \Cref{thm:opt general}.

At first glance, the two problems might seem different, as the agent in the persuasive calibration model always trusts the prediction, while the agent in the persuasion model performs a biased Bayesian update to form their posterior belief. However, the ECE budget constraint in the former and the bounded rationality constraint (Eqn.~\eqref{eq:aggregated IC}) in the latter model enable us to establish the an equivalence between the two problems. The proof of \Cref{thm:instance equivalence} is at the end of this section.

\begin{theorem}[Instance equivalence]
\label{thm:instance equivalence}
Fix any instance of the persuasive calibration problem, specified by a tuple $(\stateSpace, \prior, \senderU, \receiverU, \actionSpace, (\caliBudget, \tnorm))$.
Consider the following instance $(\stateSpace, \prior, \senderUBP, \receiverUBP, \actionSpace, (\caliBudget, \tnorm))$ of the (Bayesian) persuasion problem with signal-dependent bias:
\begin{itemize}
    \item[(i)] the sender's/receiver's utility functions are given as:
    \begin{alignat*}{3}
        \senderUBP(\action, \state_i) 
        & \gets (1 - \state_i) \cdot \senderU_i(\action, \outcome = 0) + \state_i \cdot \senderU_i(\action, \outcome = 1), 
        && \quad \action\in\actionSpace, i\in[\numData]~; \\
        \receiverUBP(\action, \state_i) 
        & \gets (1 - \state_i) \cdot \receiverU(\action, \outcome = 0) + \state_i \cdot \receiverU(\action, \outcome = 1), 
        && \quad \action\in\actionSpace, i\in[\numData]~;
    \end{alignat*}
    \item[(ii)] the sender's strategy follows the bounded rationality constraint defined in Eqn.~\eqref{eq:aggregated IC}.
\end{itemize}
Then an optimal $(\caliBudget, \tnorm)$-calibrated predictor $\optpredictor$ in the persuasive calibration instance can be converted as an optimal sender's strategy $(\optsignaling, \optbias)$ 
in the above defined instance of the (Bayesian) persuasion problem with signal-dependent bias, and vice versa. Specifically, given an optimal sender's strategy $(\optsignaling, \optbias)$ that is direct and IC, an optimal $(\caliBudget, \tnorm)$-calibrated predictor $\optpredictor$ can be constructed as 
\begin{align}
    \label{eq:construct f opt}
    \optpredictor_i(\prediction) \gets 
    \sum\nolimits_{\action\in\actionSpace} \optsignaling_{\state_i}(\action)\cdot\indicator{\frac{\sum\nolimits_{i\in[\numData]} \prior_i\optsignaling_{\state_i}(\action) \cdot \truemean_i + \optbias(\action)}{\sum\nolimits_{i\in[\numData]} \prior_i\optsignaling_{\state_i}(\action)} = \prediction}, \quad
    i\in[\numData], ~~ \prediction\in[0,1]~.
\end{align}
Given an optimal $(\caliBudget, \tnorm)$-calibrated predictor $\optpredictor$, an optimal sender's strategy $(\optsignaling, \optbias)$ that is direct and IC can be constructed as 
\begin{align*}
    \optsignaling_{\state_i}(\action) 
    & \gets \sum\nolimits_{\prediction\in[0, 1]: \bestr(\prediction) = \action} \optpredictor_i(\prediction), \quad 
    && \action\in\actionSpace, i\in[\numData] \\
    \optbias(\action) 
    & \gets \sum\nolimits_{\prediction\in[0, 1]: \bestr(\prediction) = \action} \sum\nolimits_{i\in[\numData]} \prior_i \optpredictor_i(\prediction) (\prediction - \truemean_i), \quad 
    && \action\in\actionSpace
\end{align*}
where $\bestr(\prediction)$ is defined in Eqn.~\eqref{eq:agent bestr}.
\end{theorem}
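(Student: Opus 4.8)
The plan is to prove \Cref{thm:instance equivalence} by exhibiting an explicit, payoff-preserving, and constraint-preserving bijection between the two problems' solution spaces, then invoking the revelation principle (\Cref{lem:revelation}) on the persuasion side to reduce to direct-and-IC strategies. First I would set up the dictionary between the two models: a prediction $\prediction$ in the calibration problem plays the role of the (biased) posterior mean $\prediction(\signal)$ in the persuasion problem, and the true expected outcome $\truePosterior(\prediction)$ plays the role of the exact Bayesian posterior mean $\frac{\sum_i \prior_i \signalingscheme_{\state_i}(\signal)\truemean_i}{\sum_i \prior_i \signalingscheme_{\state_i}(\signal)}$; the gap between them is exactly the (normalized) bias $\frac{\bias(\signal)}{\sum_i \prior_i \signalingscheme_{\state_i}(\signal)}$. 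Under the utility translation in part (i), the linearity $\senderUBP(\action,\state_i) = (1-\state_i)\senderU_i(\action,0) + \state_i\senderU_i(\action,1)$ means $\senderUBP(\action,\state_i)$ is precisely the principal's expected utility $\indirectsenderU$-ingredient conditional on event $i$ and action $\action$, so the sender's objective $\sum_{\state,\action}\priorBP_\state\signalingscheme_\state(\action)\senderUBP(\action,\state)$ equals $\Payoff{\predictor}$ once the agent's action is pinned down; and similarly $\receiverUBP(\action,\prediction) = (1-\prediction)\receiverU(\action,0)+\prediction\receiverU(\action,1) = \expect[\outcome\sim\Bern(\prediction)]{\receiverU(\action,\outcome)}$, so the receiver's best response $\argmax_\action \receiverUBP(\action,\prediction(\signal))$ coincides with the agent's $\bestr(\prediction)$ from Eqn.~\eqref{eq:agent bestr}.

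Next I would verify the two directions of the construction. For the forward direction (persuasion $\to$ calibration): given a direct and IC optimal strategy $(\optsignaling,\optbias)$, define $\optpredictor$ by Eqn.~\eqref{eq:construct f opt}. Here, because the strategy is direct, the signal space is $\actionSpace$, and for each action $\action$ the induced posterior mean is $\prediction(\action) = \frac{\sum_i \prior_i\optsignaling_{\state_i}(\action)\truemean_i + \optbias(\action)}{\sum_i\prior_i\optsignaling_{\state_i}(\action)}$, which the indicator in Eqn.~\eqref{eq:construct f opt} identifies with a prediction value $\prediction$. I need to check: (a) $\optpredictor_i$ is a valid distribution for each $i$ — this follows because $\sum_\action \optsignaling_{\state_i}(\action) = 1$; (b) the true expected outcome of a generated prediction $\prediction$ equals the persuasion posterior mean — a direct computation using Bayes' rule \eqref{eq:true expected outcome} and the fact that multiple actions may map to the same $\prediction$ (here the aggregation across actions with a common posterior mean must be handled, using that $\truePosterior(\prediction)$ pools all such actions and the bias enters linearly); (c) the ECE of $\optpredictor$ equals the LHS of Eqn.~\eqref{eq:aggregated IC} — because $\ECEnorm{\optpredictor}^\normexponent = \sum_\prediction \predictor(\prediction)|\prediction - \truePosterior(\prediction)|^\normexponent$ and $\predictor(\prediction)|\prediction-\truePosterior(\prediction)|^\normexponent$ matches $\sum_\state\priorBP_\state\optsignaling_\state(\action)\cdot\bigl|\frac{\optbias(\action)}{\sum_\state\priorBP_\state\optsignaling_\state(\action)}\bigr|^\normexponent$ summed over the actions mapping to $\prediction$; and (d) the payoffs agree. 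For the reverse direction (calibration $\to$ persuasion): given an optimal $\optpredictor$, define $(\optsignaling,\optbias)$ by grouping predictions according to the agent's best response $\bestr(\prediction)$. I'd check that this strategy is direct and IC — IC holds because, for predictions $\prediction$ pooled into action $\action$, the agent's best response is $\action$ for each, and linearity of $\receiverUBP$ in the state preserves optimality under the weighted average (this is exactly the argument inside the proof of \Cref{lem:revelation}) — and that it satisfies Eqn.~\eqref{eq:aggregated IC} with value $\ECEnorm{\optpredictor} \le \caliBudget$, and yields the same sender payoff.

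Finally, I would combine these with an optimality transfer argument: the forward map sends feasible direct-IC strategies to feasible $\caliBudget$-calibrated predictors preserving payoff, and the reverse map sends feasible predictors to feasible (not necessarily direct) strategies preserving payoff; composing with \Cref{lem:revelation}, which says an optimal strategy can be taken direct and IC, closes the loop and shows both optima coincide. Then \Cref{algo:poly-time} just solves \ref{eq:opt general actionRem} (which by \Cref{cor:actionRem BP} is a polynomial-size LP when $\normexponent\in\{1,\infty\}$) and applies Eqn.~\eqref{eq:construct f opt}; polynomial running time of both steps gives \Cref{thm:opt general}.

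The main obstacle I anticipate is bookkeeping in direction (b)/(c) of the forward construction, namely the ``many actions, one prediction'' pooling: several recommended actions $\action$ may induce the identical posterior mean $\prediction$, and I must show that the aggregate bias $\sum_{\action:\prediction(\action)=\prediction}\optbias(\action)$, the aggregate signal mass, and the true expected outcome all combine consistently so that $\truePosterior(\prediction)$ equals the common posterior mean and the per-prediction ECE contribution matches the per-action bias contributions — this requires carefully tracking normalizations ($\sum_\state\priorBP_\state\signalingscheme_\state(\action)$ factors) and making sure the convexity used in \eqref{eq:aggregated IC} is applied in the right direction. A secondary subtlety is the tie-breaking convention (Footnote~\ref{fn:tie-breaking}) and the upper-semicontinuity of $\indirectsenderU_i$, which I need to ensure is respected on both sides so that the optimal strategy's IC constraints and the predictor's best-response map are genuinely compatible.
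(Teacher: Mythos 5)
Your proposal tracks the paper's proof closely: the same dictionary (prediction $\leftrightarrow$ biased posterior mean, ECE $\leftrightarrow$ aggregated normalized bias), the same two constructions (the forward map grouping actions into predictions, the reverse map grouping predictions by $\bestr(\prediction)$), and the same closing move via the revelation principle (\Cref{lem:revelation}) and \Cref{cor:actionRem BP}. So in outline you have it right.

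The ``main obstacle'' you flag is genuine, and you should carry it through rather than leave it as a worry; here is how it gets resolved. In the forward direction, when several recommended actions $\action$ with the same biased posterior mean $\prediction(\action)=\prediction$ get pooled into a single prediction $\prediction$, one cannot expect the pointwise chain of \emph{equalities} $\ECEnorm{\optpredictor}^{\normexponent} = \sum_{\action} (\cdot)$; the pooled calibration gap $\prediction - \truePosterior(\prediction)$ is the $\predictor(\action)$-weighted average of the per-action normalized biases $\bias(\action)/\sum_j\prior_j\actionpredictor_j(\action)$, so by a single application of Jensen's inequality to the convex map $x\mapsto |x|^\normexponent$ you only get $\ECEnorm{\optpredictor}^\normexponent \le \sum_{\action}\predictor(\action)\left|\frac{\bias(\action)}{\sum_j\prior_j\actionpredictor_j(\action)}\right|^\normexponent \le \caliBudget^\normexponent$. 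That inequality is all feasibility needs. Symmetrically, in the reverse direction the pooled per-action bias is a $\predictor(\prediction)$-weighted average of the per-prediction calibration gaps, and the same Jensen step gives the bounded-rationality constraint with value at most $\ECEnorm{\optpredictor}^\normexponent \le \caliBudget^\normexponent$. (Be careful here: the naive ``triangle inequality followed by a $\normexponent$-power'' decomposition does \emph{not} give $|\sum_i c_i|^\normexponent \le \sum_i |c_i|^\normexponent$ for $\normexponent>1$; the single-step Jensen argument with the correct probability weights is the right tool.) For the payoff, once both feasibility maps are established, optimality transfer closes the gap: the forward map sends a feasible strategy to a feasible predictor with weakly larger payoff (by the favorable tie-break in $\bestr$), the reverse map preserves payoff exactly, and since each map is applied to an optimum of its side, sandwiching forces both optima to coincide, which in turn forces equality of the individual payoffs at the optimum even in the degenerate many-to-one case. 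Filling in that Jensen step and the sandwiching argument turns your plan into a complete proof matching the paper's.
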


We remark that in the corresponding Bayesian persuasion instance constructed in \Cref{thm:instance equivalence}, the receiver's utility $\receiverU(\action, \state)$ is a linear function over the states $\state\in\stateSpaceBP$, while the sender's utility could be general. 
Combining \Cref{thm:instance equivalence} and \Cref{cor:actionRem BP}, we are ready to provide \Cref{algo:poly-time} and prove \Cref{thm:opt general}. 

\begin{algorithm}[ht]
    \caption{LP-based polynomial-time algorithm for solving optimal predictor with $\normexponent \in \{1, \infty\}$}
    \begin{algorithmic}[1]
    \State \textbf{Input:} persuasive calibration instance $(\stateSpace, \prior, \senderU, \receiverU, \actionSpace, (\caliBudget, \tnorm))$ with $\normexponent\in\{1, \infty\}$. 
    \State 
    Solve linear program \ref{eq:opt general actionRem} using the persuasion instance defined in \Cref{thm:instance equivalence} and let $(\optsignaling, \optbias)$ be its solution.
    \State 
    Use the optimal solution $(\optsignaling, \optbias)$ to 
    construct the predictor $\optpredictor$ based on Eqn.~\eqref{eq:construct f opt}.

    \State \textbf{Output:} predictor $\optpredictor$.
    \end{algorithmic}
    \label{algo:poly-time}
\end{algorithm}

\begin{proof}[Proof of \Cref{thm:opt general}]
In \Cref{algo:poly-time}, it first solves \ref{eq:opt general actionRem}, which is a linear program with size $\poly(\numData, \numAction)$, and then constructs the predictor based on the optimal solution of \ref{eq:opt general actionRem}. Both steps require $\poly(\numData, \numAction)$ running time and thus the algorithm is polynomial-time. Finally, the correctness of the algorithm follows \Cref{cor:actionRem BP} and \Cref{thm:instance equivalence}. This finishes the proof of \Cref{thm:opt general}.
\end{proof}
We conclude this section by presenting the proof for \Cref{thm:instance equivalence}.

\begin{proof}[Proof of \Cref{thm:instance equivalence}]
In this proof, we use $\Payoff{\actionpredictor, \bias}$ to denote the sender's payoff from the strategy $(\actionpredictor, \bias)$.
To prove the theorem statement, we show that: (1) every direct and IC strategy $(\actionpredictor, \bias)$  that satisfies the condition in Eqn.~\eqref{eq:aggregated IC} can be converted into an $(\caliBudget, \tnorm)$-calibrated $\predictor$ whose principal's payoff equals to the original sender's payoff;
(2) 
every $(\caliBudget, \tnorm)$-calibrated $\predictor$ can be converted into a direct and IC sender's strategy $(\actionpredictor, \bias)$ 
whose sender's payoff equals to original principal's payoff from $\predictor$.
Since $\stateSpaceBP = \stateSpace$, we will simplify the notation $\actionpredictor_{\truemean_i} = \actionpredictor_{i}$ for $i\in[\numData]$.

\xhdr{From sender's strategy $(\actionpredictor, \bias)$ to principal's predictor $\predictor$}
By \Cref{lem:revelation}, we consider a direct and IC strategy $(\actionpredictor, \bias)$.
Now given a pair $(\actionpredictor, \bias)$ satisfying the condition \eqref{eq:aggregated IC}, we define a prediction $\actionpredictortoPos(\action)$ for the action $a\in\supp(\actionpredictor)$ and construct predictor $\predictor$ as follows:
\begin{align*}
    \actionpredictortoPos(\action) \triangleq \frac{\sum\nolimits_{i\in[\numData]} \prior_i\actionpredictor_i(\action) \cdot \truemean_i + \bias(\action)}{\sum\nolimits_{i\in[\numData]} \prior_i\actionpredictor_i(\action)}~;
    \quad
    \predictor_i(\prediction) \gets 
    \sum\nolimits_{\action\in\actionSpace}\actionpredictor_i(\action)\cdot\indicator{\actionpredictortoPos(\action) = \prediction}, \quad
    i\in[\numData], ~~ \prediction\in[0, 1]
\end{align*}
Clearly, by condition \eqref{eq:receiver response}, we know that $\actionpredictortoPos(\action)\in[0, 1]$ for all $\action\in\actionSpace$.
By construction, it can be verified that $\predictor_i(\cdot)$ is a valid distribution over $[0, 1]$.
We next argue that the constructed $\predictor$ is $(\caliBudget, \tnorm)$-calibrated and it satisfies that $\Payoff{\predictor} = \Payoff{\actionpredictor, \bias}$. 
To see $\predictor$ is indeed $(\caliBudget, \tnorm)$-calibrated, we observe 
\begin{align*}
    \left(\ECEnorm{\predictor}\right)^\normexponent
    & = \sum\nolimits_{\prediction\in\supp(\predictor)} \sum\nolimits_{i\in[\numData]} \prior_i\predictor_i(\prediction) \cdot \left|\prediction - \frac{\sum\nolimits_{k\in[\numData]} \prior_k\predictor_k(\prediction)\truemean_k}{\sum\nolimits_{j\in[\numData]} \prior_j\predictor_j(\prediction)}\right|^{\normexponent}\\
    & = 
    \sum\nolimits_{\action\in\actionSpace} \sum\nolimits_{i\in[\numData]} \prior_i \actionpredictor_i(\action) \cdot \left|\actionpredictortoPos(\action) - \frac{\sum\nolimits_{k\in[\numData]} \prior_k\actionpredictor_k(a)\truemean_k}{\sum\nolimits_{j\in[\numData]} \prior_j\actionpredictor_j(a)}\right|^{\normexponent} \\
    & = 
    \sum\nolimits_{\action\in\actionSpace} \sum\nolimits_{i\in[\numData]} \prior_i \actionpredictor_i(\action) \cdot \left|\frac{\bias(\action)}{\sum\nolimits_{j\in[\numData]} \prior_j\actionpredictor_j(a)}\right|^{\normexponent}
    \le \caliBudget^\normexponent~.
\end{align*}
To see the equal payoff, we first notice that for every event $i\in[n]$, $\indirectsenderU_i(\actionpredictortoPos(\action)) = \senderUBP(\action,\state_i)$ for all $\action\in\actionSpace$. 
This is by the condition \eqref{eq:receiver response}.
Consequently, we have
\begin{align*}
    \Payoff{\predictor}
    = \sum\nolimits_{i\in[\numData]} \prior_i \cdot \sum\nolimits_{\prediction\in [0, 1]} \predictor_i(\prediction) \indirectsenderU_i(\prediction) 
    & = 
    \sum\nolimits_{i\in[\numData]} \prior_i \cdot \sum\nolimits_{\action\in\actionSpace} \predictor_i(\actionpredictortoPos(\action)) \indirectsenderU_i(\actionpredictortoPos(\action)) \\
    & = 
    \sum\nolimits_{i\in[\numData]} \prior_i \cdot \sum\nolimits_{\action\in\actionSpace} \sum\nolimits_{\action'\in\actionSpace:\actionpredictortoPos(\action') = \actionpredictortoPos(\action)}\actionpredictor_i(\action) \senderUBP(\action, \truemean_i)\\
    & = 
    \sum\nolimits_{i\in[\numData]} \prior_i \cdot \sum\nolimits_{\action\in\actionSpace} \actionpredictor_i(\action) \senderUBP(\action, \truemean_i) 
     = \Payoff{\actionpredictor, \bias}
\end{align*}

\xhdr{From principal's predictor $\predictor$ to sender's strategy $(\actionpredictor, \bias)$}
Given an $(\caliBudget, \tnorm)$-calibrated predictor $\predictor$, for any $\prediction\in \supp(\predictor)$, we let $\bestr(\prediction) \in \actionSpace$ denote the receiver's optimal action upon receiving a prediction $\prediction$.
We consider a signal space $\signalSpace \gets \actionSpace$, and define $\actionpredictor_i(\cdot)\in\Delta(\actionSpace)$ for $i\in[\numData]$ and  $\bias(\cdot): \actionSpace\rightarrow \R$ as follows
\begin{alignat*}{3}
    \actionpredictor_i(\action) 
    & \gets \sum\nolimits_{\prediction\in[0, 1]: \bestr(\prediction) = \action} \predictor_i(\prediction), \quad 
    && \action\in\actionSpace, i\in[\numData] \\
    \bias(\action) 
    & \gets \sum\nolimits_{\prediction\in[0, 1]: \bestr(\prediction) = \action} \sum\nolimits_{i\in[\numData]} \prior_i \predictor_i(\prediction) (\prediction - \truemean_i), \quad 
    && \action\in\actionSpace
\end{alignat*}
We next show that by construction, the above $\actionpredictor, \bias$ satisfy the condition \eqref{eq:aggregated IC}, and moreover, they yield same payoff. 
It is easy to see that, by construction, $\actionpredictor_i$ is a valid distribution over $\signalSpace$:
\begin{align*}
    \sum\nolimits_{\action\in\actionSpace} \actionpredictor_i(\action) = 
    \sum\nolimits_{\action\in\actionSpace} 
    \sum\nolimits_{\prediction\in[0, 1]: \bestr(\prediction) = \action} \predictor_i(\prediction)
    = \sum\nolimits_{\prediction\in[0, 1]} \predictor_i(\prediction) = 1, \quad i\in[\numData]~.
\end{align*}
We next argue the Eqn.~\eqref{eq:aggregated IC} is satisfied for $\actionpredictor, \bias$. 
\begin{align*}
    & ~ \sum\nolimits_{\action\in\actionSpace} \sum\nolimits_{i\in[\numData]}\prior_i \actionpredictor_i(\action) \cdot \left|\frac{\bias(\action)}{\sum\nolimits_{j\in[\numData]}\prior_j \actionpredictor_j(a)}\right|^\normexponent \\
    & =
    \sum\nolimits_{\action\in\actionSpace} \predictor(\action) \cdot \frac{1}{\predictor(\action)^\normexponent} \left|\sum\nolimits_{\prediction\in[0, 1]: \bestr(\prediction) = \action} \sum\nolimits_{k\in[\numData]} \prior_k \predictor_k(\prediction) (\prediction - \truemean_k)\right|^\normexponent\\
    & \le
    \sum\nolimits_{\action\in\actionSpace} \predictor(\action) \cdot \frac{1}{\predictor(\action)^\normexponent} \sum\nolimits_{\prediction\in[0, 1]: \bestr(\prediction) = \action} 
    \left|\sum\nolimits_{k\in[\numData]} \prior_k \predictor_k(\prediction) (\prediction - \truemean_k)\right|^\normexponent
    \tag{due to Triangle inequality}\\
    & \le
    \sum\nolimits_{\action\in\actionSpace}  \sum\nolimits_{\prediction\in[0, 1]: \bestr(\prediction) = \action} 
    \predictor(\prediction) \cdot \frac{1}{\predictor(\prediction)^\normexponent}
    \left|\sum\nolimits_{k\in[\numData]} \prior_k \predictor_k(\prediction) (\prediction - \truemean_k)\right|^\normexponent
    \tag{due to $\predictor(\action)\ge \predictor(\prediction)$}\\
    & = 
    \sum\nolimits_\prediction \sum\nolimits_{i\in[\numData]} \prior_i\predictor_i(\prediction) 
    \left|\prediction - \frac{\sum\nolimits_{i\in[\numData]} \prior_i \predictor_i(\prediction)  \truemean_i}{\sum\nolimits_{i\in[\numData]} \prior_i \predictor_i(\prediction)  }\right|^{\normexponent}
    \le \caliBudget^\normexponent~.
\end{align*}
where $\predictor(\action) \triangleq \sum\nolimits_{\prediction\in[0, 1]: \bestr(\prediction) = \action} \sum\nolimits_{i\in[\numData]} \prior_i \predictor_i(\prediction) \ge 
\sum\nolimits_{i\in[\numData]}
\prior_i \predictor_i(\prediction) \triangleq \predictor(\prediction)$ for all $\prediction$ with $\bestr(\prediction) = a$.
To see the mean-belief $\prediction(a) \in [0, 1]$,
we note that for any $\action\in\actionSpace$, 
\begin{align*}
    & \sum\nolimits_{i\in[\numData]} \prior_i\actionpredictor_i(\action) \cdot \truemean_i + \bias(\action) \\
    = ~ & 
    \sum\nolimits_{i\in[\numData]} \prior_i\truemean_i \sum\nolimits_{\prediction\in[0, 1]: \bestr(\prediction) = \action} \predictor_i(\prediction) + \sum\nolimits_{\prediction\in[0, 1]: \bestr(\prediction) = \action} \sum\nolimits_{i\in[\numData]} \prior_i \predictor_i(\prediction) (\prediction - \truemean_i) \\
    \overset{(a)}{=} ~ & 
    \sum\nolimits_{i\in[\numData]}\prior_i\cdot\sum\nolimits_{\prediction\in[0, 1]: \bestr(\prediction) = \action} \predictor_i(\prediction)\prediction
    \overset{(b)}{\le} 
    \sum\nolimits_{i\in[\numData]}\prior_i\cdot\actionpredictor_i(\action)~,
\end{align*}
where from equality (a), we know that $\sum\nolimits_{i\in[\numData]} \prior_i\actionpredictor_i(\action) \cdot \truemean_i + \bias(\action) \ge 0$ as we always have $q\ge 0$; and inequality (b) utilizes the fact that $\prediction\in[0, 1]$ and the definition of $\actionpredictor_i(\action)$.

To see the strategy $(\actionpredictor, \bias)$ is IC,
we recall that for any prediction $\prediction\in\{\prediction\in[0, 1]: \bestr(\prediction) = a\}$ where the receiver's optimal decision is taking action $\action\in\actionSpace$, we must have 
\begin{align*}
    \prediction \cdot \receiverU(\action, \outcome = 1) + 
    (1- \prediction) \cdot \receiverU(\action, \outcome = 0) \ge 
    \prediction \cdot \receiverU(\action', \outcome = 1) + 
    (1- \prediction) \cdot \receiverU(\action', \outcome = 0), \quad \forall \action'\in\actionSpace~.
\end{align*}
Rearranging, multiplying on both sides with $\prior_i\predictor_i(\prediction)$ and taking the summation on both sides over all $i\in[\numData]$, we then have for any $\prediction\in\{\prediction\in[0, 1]: \bestr(\prediction) = a\}$
\begin{align*}
    \sum\nolimits_{i\in[\numData]}\prior_i \predictor_i(\prediction) \prediction \cdot \receiverU_1(\action, \action') \ge 
    \sum\nolimits_{i\in[\numData]}\prior_i \predictor_i(\prediction) (\prediction- 1) \cdot \receiverU_0(\action, \action'), \quad \forall \action'\in\actionSpace
\end{align*}
Taking the summation on both sides over all $\prediction\in\{\prediction\in[0, 1]: \bestr(\prediction) = a\}$, we then have for any $\action'\in\actionSpace$,
\begin{align}
    \sum\limits_{i\in[\numData]}\prior_i \sum\limits_{\prediction\in[0, 1]: \bestr(\prediction) = \action} \predictor_i(\prediction) \prediction \cdot \receiverU_1(\action, \action') \ge 
    \sum\limits_{i\in[\numData]}\prior_i \sum\limits_{\prediction\in[0, 1]: \bestr(\prediction) = \action} \predictor_i(\prediction) (\prediction- 1) \cdot \receiverU_0(\action, \action')~. \label{ineq:IC interim}
\end{align}
By construction, we have
\begin{align*}
    \text{LHS of } \eqref{ineq:IC interim} & = 
    \left(\sum\nolimits_{i\in[\numData]} \prior_i\actionpredictor_i(\action) \cdot \truemean_i + \bias(\action)\right) \cdot \receiverU_1(\action, \action') \\
    \text{RHS of } \eqref{ineq:IC interim} & = 
    \left(\sum\nolimits_{i\in[\numData]} \prior_i\actionpredictor_i(\action) \cdot \truemean_i + \bias(\action) -  \sum\nolimits_{i\in[\numData]}\prior_i \sum\nolimits_{\prediction\in[0, 1]: \bestr(\prediction) = \action} \predictor_i(\prediction)\right) \cdot \receiverU_0(\action, \action') \\
    & = 
    \left(\bias(\action) - \sum\nolimits_{i\in[\numData]} \prior_i\actionpredictor_i(\action) \cdot (1-\truemean_i)\right) \cdot \receiverU_0(\action, \action')~,
\end{align*}
which implies the condition \eqref{eq:receiver response} for the receiver receiving a signal $\action$.
Moreover, we have
\begin{align*}
    \Payoff{\actionpredictor, \bias}
    & = \sum\nolimits_{i\in[\numData]} \prior_i \cdot \sum\nolimits_{\action\in \actionSpace} \actionpredictor_i(\action) \senderUBP(\action, \truemean_i) \\
    & = 
    \sum\nolimits_{i\in[\numData]} \prior_i \cdot \sum\nolimits_{\action\in \actionSpace} \sum\nolimits_{\prediction\in[0, 1]: \bestr(\prediction) = \action} \predictor_i(\prediction) \senderUBP(\action, \truemean_i)\\
    & = 
    \sum\nolimits_{i\in[\numData]} \prior_i \cdot \sum\nolimits_{\prediction\in[0, 1]} \predictor_i(\prediction) \indirectsenderU_i(\prediction) = \Payoff{\predictor}~.
\end{align*}
We thus finish the proof of \Cref{thm:instance equivalence}.
\end{proof}

\bibliography{mybib}

\appendix

\section{A Win-Win Example}
\label{apx:win-win}
\label{apx:win-win example}
As the ECE budget $\caliBudget$ increases, the principal's expected utility under the optimal $\caliBudget$-calibrated predictor weakly increases, since the space of feasible predictors expands. However, from the agent's perspective, the prediction becomes less calibrated, which may negatively impact their expected utility. Despite this, due to the strategic interaction between the principal and the agent, the intuition that the agent's expected utility under the optimal $\caliBudget$-calibrated predictor weakly decreases as $\caliBudget$ increases is not necessarily correct. In this section, we present a ``win-win'' instance of our persuasive calibration problem, where both the principal's and the agent's expected utility under the optimal $\caliBudget$-calibrated predictor increase within a certain range of the ECE budget and are strictly higher than their expected utilities when the ECE budget is set to zero. In this section, we focus on $\ell_1$-norm ECE.

\begin{example}[Win-win instance]
\label{example:win win example}
    There are $\numData = 3$ events, with expected outcomes $\truemean_1 = 0.10001$, $\truemean_2 = 0.85$, $\truemean_3 = 1$, and event prior $\prior_1 = 0.25$, $\prior_2 = 0.5$, $\prior_3 = 0.25$. The agent has $\numAction = 4$ actions, whose induced utilities $\receiverU(\action,\outcome)$ for each outcome $\outcome\in\outcomeSpace$ satisfy 
        \begin{table}[H]
            \centering
            \begin{tabular}{c|c|c|c|c}
              & $\action = 1$   &  $\action = 2$ & $\action = 3$ & $\action = 4$ \\
              \hline
              $\outcome = 1$ & -9.9999  & 0 & 0.1 & 10 \\
              $\outcome = 0$ & 0.0001  & 0 & -0.9 & $-\infty$ \\
            \end{tabular}
            \caption{Agent's utility $\receiverU(\action, \outcome)$ in \Cref{example:win win example}.}
        \end{table}
        \noindent
        The principal's utility $\senderU_i(\action,\outcome)$ only depends on the agent's action. We shorthand it using $\senderU(\action)$ and define it as 
        \begin{table}[H]
            \centering
            \begin{tabular}{c|c|c|c}
             $\action = 1$   &  $\action = 2$ & $\action = 3$ & $\action = 4$ \\
              \hline
              5  & 0 & 1 & 2 \\
            \end{tabular}
            \caption{Principal's event-independent utility $\senderU(\action)$ in \Cref{example:win win example}.}
        \end{table}
        \noindent 
        By construction, the principal's indirect utility $\indirectsenderU(\prediction)$ of a given prediction $\prediction$ is 
        \begin{table}[H]
            \centering
            \begin{tabular}{c|c|c|c}
             $0\leq\prediction \leq \prediction\primed$   &  $\prediction\primed<\prediction<\prediction\doubleprimed$ & $\prediction\doubleprimed\leq \prediction< 1$ & $\prediction = 1$ \\
              \hline
              5  & 0 & 1 & 2 \\
            \end{tabular}
            \caption{Principal's indirect utility $\indirectsenderU(\prediction)$ in \Cref{example:win win example}.}
        \end{table}
        \noindent
        where $\prediction\primed = 0.00001$ (resp.\ $\prediction\doubleprimed = 0.9$) is the prediction such that the agent is indifferent between actions 1 and 2 (resp.\ 2 and 3), respectively.
\end{example}
We first provide some intuition behind the construction of \Cref{example:win win example}. In this instance, the agent has four actions. The second action is a safe action that generates zero utility for both outcome realizations. The first and fourth actions are risky actions, such that the agent will select the first action only when outcome zero ($\outcome = 1$) is predicted to occur almost for sure and the fourth action only when outcome one ($\outcome = 1$) is predicted to occur for sure. Otherwise, the agent selects either the second or third action. Additionally, the fourth action yields significantly higher utility under outcome one ($\outcome = 1$) than any other action-outcome pair. Hence, the agent achieves good utility if he occasionally receive a prediction $\prediction$ equal to one, and when he does, it is perfectly calibrated.

On the other hand, the principal prefers the agent to take the first action, which yields an induced utility of $\senderU(1) = 5$, 2.5 times larger than her second favorite action (i.e., the fourth action) with $\senderU(4) = 2$. Moreover, she also seeks to avoid the agent selecting the second action, which has an induced utility of zero.

We next describe the optimal $\caliBudget$-calibrated predictor and its induced expected utilities for two players under different ECE budgets. Their optimality can be verified analytically using \Cref{prop:opt verify} or numerically using the algorithm developed in \Cref{thm:opt general}, and is omitted. 

\begin{itemize}

\item \textbf{Case 1. ECE budget $\caliBudget\in[0,0.025]$} In this range of ECE budget, the optimal $\caliBudget$-calibrated predictor $\optpredictor$ has three possible predictions. When the realized event is 2 or 3, a deterministic prediction $\prediction = \prediction\doubleprimed$ is generated. When the realized event is 1, predictions $\prediction = \prediction\primed$ and $\prediction = \truemean_1$ are generated with probability $40\caliBudget$ and $1-40\caliBudget$, respectively.\footnote{Both definitions of $\prediction\primed$ and $\prediction\doubleprimed$ are given in \Cref{example:win win example}.} Specifically, $\optpredictor_1(\prediction\primed) = 40\caliBudget$, $\optpredictor_1(\truemean_1) = 1 - 40\caliBudget$, $\optpredictor_2(\prediction\doubleprimed) = \optpredictor_3(\prediction\doubleprimed) = 1$. By construction, only prediction $\prediction = \prediction\primed$ induces positive calibration error. The principal and agent's expected utilities are $50 \caliBudget + 0.75$ and $-9.999\caliBudget$, respectively. 

\item \textbf{Case 2. ECE budget $\caliBudget\in[0.025, 0.05]$} In this range of ECE budget, the optimal $\caliBudget$-calibrated predictor $\optpredictor$ has three possible predictions. When the realized event is 1, a deterministic prediction $\prediction = \prediction\primed$ is generated. When the realized event is 2, a deterministic prediction $\prediction = \prediction\doubleprimed$ is generated.
When the realized event is 3, predictions $\prediction = \prediction\doubleprimed$ and $\prediction = 1$ are generated with probability $2-40\caliBudget$ and $40\caliBudget-1$, respectively. Specifically, $\optpredictor_1(\prediction\primed) = 1$, $\optpredictor_2(\prediction\doubleprimed) = 1$, $ \optpredictor_3(\prediction\doubleprimed) = 2-40\caliBudget$, and $\optpredictor_3(1) = 40\caliBudget-1$. By construction, predictions $\prediction = \prediction\primed$ and $\prediction=\prediction\doubleprimed$ induce positive calibration error. The principal and agent's expected utilities are $10\caliBudget + 1.75$ and $99\caliBudget-2.72498$, respectively.

\item \textbf{Case 3. ECE budget $\caliBudget\in(0.05, 0.1]$} In this range of ECE budget, the optimal $\caliBudget$-calibrated predictor $\optpredictor$ has three possible predictions. When the realized event is 1, a deterministic prediction $\prediction = \prediction\primed$ is generated. When the realized event is 2, predictions $\prediction = \prediction\doubleprimed$ and $\prediction = 1$ are generated with probability $2-20\caliBudget$ and $20\caliBudget-1$, respectively.
When the realized event is 3, a deterministic prediction $\prediction = 1$ is generated. Specifically, $\optpredictor_1(\prediction\primed) = 1$, $\optpredictor_2(\prediction\doubleprimed) = 2-20\caliBudget$, $ \optpredictor_2(1) = 20\caliBudget-1$, and $\optpredictor_3(1) = 1$. By construction, all three predictions induce positive calibration error. The principal and agent's expected utilities are $10\caliBudget + 1.75$ and $-\infty$, respectively. 

\item \textbf{Case 4. ECE budget $\caliBudget\in(0.1,0.45]$} In this range of ECE budget, the optimal $\caliBudget$-calibrated predictor $\optpredictor$ has two possible predictions. When the realized event is 1, a deterministic prediction $\prediction = \prediction\primed$ is generated. When the realized event is 2, predictions $\prediction = \prediction\primed$ and $\prediction = \prediction= 1$ are generated with probability $2.85718\caliBudget-0.28572$ and $1.28572 - 2.85718 \caliBudget$, respectively.\footnote{In Cases 4 and 5, we round the number after the fifth decimal place.} When the realized event is 3, a deterministic prediction $\prediction = 1$ is generated. Specifically, $\optpredictor_1(\prediction\primed) = 1$, $ \optpredictor_2(\prediction\primed) = 1.28572 - 2.85718 \caliBudget$,
$\optpredictor_2(1) = 2.85718\caliBudget-0.28572$, and 
$\optpredictor_3(1) = 1$. By construction, both two predictions induce positive calibration error. The principal and agent's expected utilities are $4.28578\caliBudget+2.32142$ and $-\infty$, respectively. 

\item \textbf{Case 5. ECE budget $\caliBudget\in[0.45,0.7)$} In this range of ECE budget, the optimal $\caliBudget$-calibrated predictor $\optpredictor$ has two possible predictions. When the realized event is 1 or 2, a deterministic prediction $\prediction = \prediction\primed$ is generated. When the realized event is 3, predictions $\prediction = \prediction\primed$ and $\prediction = \prediction= 1$ are generated with probability $4\caliBudget-1.8$ and $2.8- 4\caliBudget$, respectively.  Specifically, $\optpredictor_1(\prediction\primed) = \optpredictor_2(\prediction\primed) = 1$, $ \optpredictor_3(\prediction\primed) = 4\caliBudget-1.8$, and  
$\optpredictor_3(1) = 2.8-4\caliBudget$. By construction, prediction $\prediction=\prediction\primed$ induces positive calibration error. The principal and agent's expected utilities are $3\caliBudget+2.9$ and $-\infty$, respectively. 

\item \textbf{Case 6. ECE budget $\caliBudget\in[0.7,\infty)$} In this range of ECE budget, the optimal $\caliBudget$-calibrated predictor $\optpredictor$ generates prediction $\prediction =\prediction\primed$ regardless of the realized event. Specifically, $\optpredictor_1(\prediction\primed) = \optpredictor(\prediction\primed) = \optpredictor_3(\prediction\primed) = 1$. The principal and agent's expected utilities are $5$ and $-\infty$, respectively. 
\end{itemize}

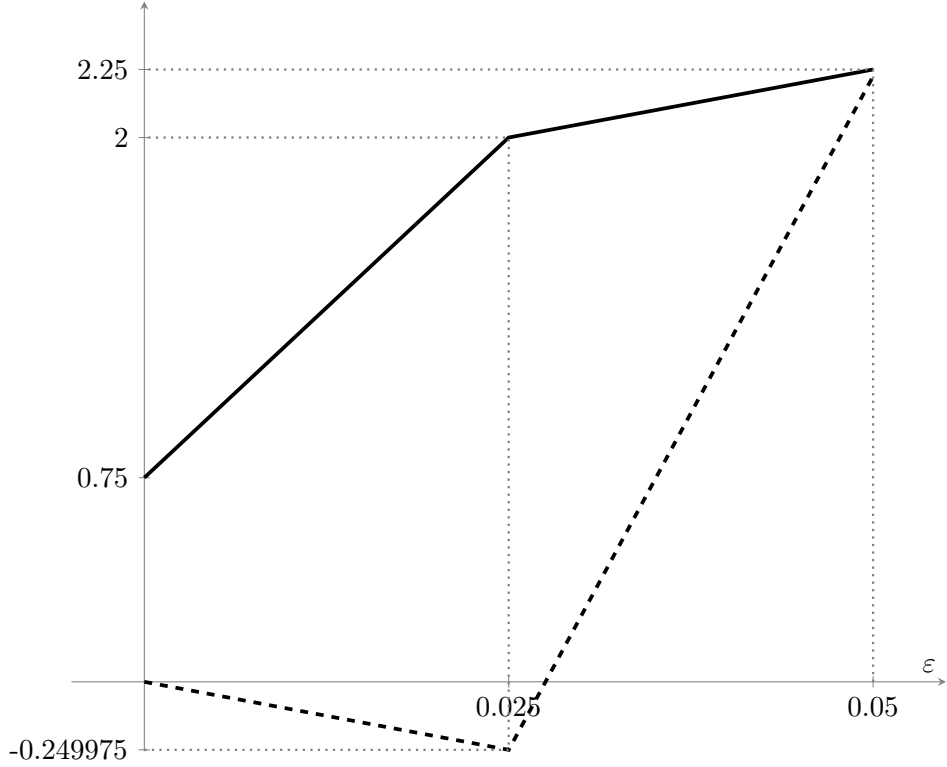
\begin{figure}
    \centering
    \begin{tikzpicture}
\begin{axis}[
axis line style=gray,
axis lines=middle,
        ytick = {-0.249975,0.75, 2, 2.25},
        yticklabels = {-0.249975, 0.75, 2, 2.25},
        xtick = {0.025, 0.05},
        xticklabels = {0.025, 0.05},
xlabel = {$\caliBudget$},
    scaled x ticks = false, 
xmin=-0.005,xmax=0.055,ymin=-0.255,ymax=2.5,
width=0.8\textwidth,
height=0.7\textwidth]

\addplot[line width=0.5mm, dashed] coordinates {
(0.0, 0)
(0.025, -0.2499775)
(0.05, 2.2250225)
};

\addplot[line width=0.5mm] coordinates {
(0.0, 0.75)
(0.025, 2.0)
(0.05, 2.25)
};

\addplot[dotted, gray, line width=0.3mm] coordinates {(0, -0.249975) (0.025, -0.249975) (0.025, 2) (0, 2)};
\addplot[dotted, gray, line width=0.3mm] coordinates {(0.05, 0) (0.05, 2.25) (0, 2.25)};

\end{axis}

\end{tikzpicture}
    \caption{The solid (resp.\ dashed) line is the expected utility of the principal (resp.\ agent) under the optimal $\caliBudget$-calibrated predictor with ECE budget $\caliBudget\in[0, 0.05]$ in \Cref{example:win win example}.}
    \label{fig:win-win example}
\end{figure}

In summary, under the optimal $\caliBudget$-calibrated predictor, while the principal's expected utility is always weakly increasing in the ECE budget $\caliBudget$, the agent's expected utility first decreases for $\caliBudget \in [0, 0.025]$, then increases for $\caliBudget \in [0.025, 0.05]$, and finally drops to negative infinity. Among all ECE budgets $\caliBudget$, the highest expected utility of the agent as well as the highest expected welfare are attained at $\caliBudget = 0.05$. 
Also see \Cref{fig:win-win example} for an illustration.

The non-monotonicity aligns with the aforementioned intuition behind the construction of \Cref{example:win win example}. In particular, when the ECE budget $\caliBudget$ is smaller than 0.025, the optimal predictor only miscalibrates the prediction under the first event while fully mixing the second and third events with a calibrated prediction. Since the second and third events are mixed, the agent selects the third action and never chooses the fourth action, which is risky but high-payoff.  
In contrast, when the ECE budget $\caliBudget$ is between 0.025 and 0.05, the optimal predictor not only miscalibrates the prediction under the first event but also miscalibrates the prediction under the second event. Moreover, when the third event is realized, there is some probability that the predictor generates a calibrated prediction $\prediction = 1$, in which case the agent knows that outcome one occurs for sure, takes the fourth action, and enjoys its high payoff.  
Finally, when the ECE budget $\caliBudget$ exceeds 0.05, the prediction $\prediction = 1$ generated by the optimal predictor becomes non-calibrated, causing the agent who takes the fourth action to receive a negative infinite payoff.

\section{The Failure of Two Natural Programs for FPTAS Construction}
\label{apx:Failure}

In this section, we present the failure of some natural attempts to formulate the principal's problem as a tractable program. 

\xhdr{Optimizing over the space  $\predicSpace_{(\caliBudget, \tnorm)}$}
First, we can focus on program that only involves the predictor $\predictor$ as the variables:
\begin{align}
    \arraycolsep=5.4pt\def\arraystretch{1}
    &\begin{array}{rlll}
    \max
    \limits_{\predictor\ge 0} 
    ~ &
    \displaystyle 
    \sum\nolimits_{i\in[\numData]}\prior_i \cdot 
    \int_0^1\predictor_i(\prediction)\indirectsenderU_i(\prediction)~\d  \prediction
    \quad & \text{s.t.} &
    \vspace{1mm}
    \\
    & 
    \displaystyle
    \left(\int_0^1 \sum\nolimits_{i\in[\numData]} \prior_i \predictor_i(\prediction) \left|\prediction - \frac{\sum\nolimits_{i\in[\numData]} \prior_i \predictor_i(\prediction) \truemean_i}{\sum\nolimits_{i\in[\numData]} \prior_i \predictor_i(\prediction)} \right|^\normexponent\, \d\prediction\right)^{\frac{1}{\normexponent}} \le\caliBudget,
    & 
    \vspace{1mm}
    \\
    & 
    \displaystyle
    \int_0^1 \predictor_i(\prediction)\, \d\prediction = 1,
    & i\in[\numData]
    \vspace{1mm}
    \end{array}
\end{align}
where the first constraint is due to the definition of $\tnorm$-norm ECE.
It is easy to see that the above program is infinite-dimensional and it is a non-linear program. 
It remains unclear if this program is convex and if there exists efficient algorithm that can solve the above program. 

\xhdr{Optimizing over fully decoupled event-dependent post-processing plan}
The next natural program is considering the following  event-dependent post-processing plan $\miscali_i(\bayesprediction, \prediction)$ that miscalibrates a true expected outcome $\bayesprediction$ to a prediction $\prediction$.
However, to ensure that $\miscali_i(\bayesprediction, \prediction)$ indeed represents miscalibrating the true expected outcome $\bayesprediction$ to $\prediction$, we need to ensure that for every $\miscali_i(\bayesprediction, \prediction) > 0$, we must have $\truePosterior(\prediction) = \bayesprediction$. 
That is, we must have the following condition: for every $\miscali_i(\bayesprediction, \prediction) > 0$, we have 
\begin{align*}
    \bayesprediction = 
    \frac{ \sum\nolimits_{i\in[\numData]} \prior_i \int_0^1\miscali_i(\bayesprediction', \prediction)\,\d\bayesprediction' \cdot \truemean_i}{  \sum\nolimits_{i\in[\numData]} \prior_i \int_0^1\miscali_i(\bayesprediction', \prediction)\,\d\bayesprediction'}~.
\end{align*}
Equivalently, we can express this condition as follows: for every $\bayesprediction, \prediction\in[0, 1]$, we have
\begin{align*}
    \left(\bayesprediction - 
    \frac{ \sum\nolimits_{i\in[\numData]} \prior_i \int_0^1\miscali_i(\bayesprediction', \prediction)\,\d\bayesprediction' \cdot \truemean_i}{  \sum\nolimits_{i\in[\numData]} \prior_i \int_0^1\miscali_i(\bayesprediction', \prediction)\,\d\bayesprediction'}\right)
    \cdot \sum\nolimits_{i\in[\numData]} \prior_i\miscali_i(\bayesprediction, \prediction) = 0~.
\end{align*}
With the above observation, we can generalize \ref{eq:decoupled opt} to the following program that involves with variables $(\miscali_i(\cdot, \cdot))_{i\in[\numData]}$:
\begin{align}
    \arraycolsep=5.4pt\def\arraystretch{1}
    &\begin{array}{rlll}
    \max
    \limits_{\miscali_i \ge 0} 
    ~ &
    \displaystyle 
    \sum\nolimits_{i\in[\numData]}\prior_i\cdot 
    \int_0^1\int_0^1
    \miscali_i(\bayesprediction, \prediction) \cdot \indirectsenderU_i(\prediction)~\d  \prediction \d  \bayesprediction
    \quad & \text{s.t.} &
    \vspace{1mm}
    \\
    & 
    \displaystyle 
    \sum\nolimits_{i\in[\numData]}\prior_i\cdot \int_0^1\int_0^1
    \miscali_i(\bayesprediction, \prediction) \cdot |\bayesprediction-\prediction|^\normexponent~ \d\prediction \d  \bayesprediction
    \le \caliBudget^\normexponent
    & 
    \vspace{1mm}
    \\
    & 
    \displaystyle 
    \left(\bayesprediction - 
    \frac{ \sum\nolimits_{i\in[\numData]} \prior_i \int_0^1\miscali_i(\bayesprediction', \prediction)\,\d\bayesprediction' \cdot \truemean_i}{  \sum\nolimits_{i\in[\numData]} \prior_i \int_0^1\miscali_i(\bayesprediction', \prediction)\,\d\bayesprediction'}\right)
    \cdot \sum\nolimits_{i\in[\numData]} \prior_i\miscali_i(\bayesprediction, \prediction) = 0
    & \bayesprediction, \prediction\in[0, 1], 
    \vspace{1mm}
    \\
    & 
    \displaystyle 
    \int_0^1\int_0^1
    \miscali_i(\bayesprediction, \prediction) ~\d  \prediction \d  \bayesprediction = 1
    & i\in[\numData], 
    \vspace{1mm}
    \end{array}
\end{align}
Clearly, the second constraint here is not a linear constraint, and it remains unclear if there exists an efficient solution to solve the above program.

\end{document}